\documentclass{article}

\usepackage{microtype}
\usepackage{graphicx}
\usepackage{subcaption}
\usepackage{booktabs} 

\usepackage{hyperref}

\usepackage[preprint]{icml2026}

\usepackage{amsmath}
\usepackage{amssymb}
\usepackage{mathtools}
\usepackage{amsthm}
\usepackage{amsfonts} 
\usepackage{thmtools}
\usepackage{bm}
\usepackage{enumitem}
\usepackage{tcolorbox} 

\usepackage[capitalize,noabbrev]{cleveref}

\theoremstyle{plain}
\newtheorem{theorem}{Theorem}[section]
\newtheorem{proposition}[theorem]{Proposition}
\newtheorem{lemma}[theorem]{Lemma}
\newtheorem{corollary}[theorem]{Corollary}
\theoremstyle{definition}
\newtheorem{definition}[theorem]{Definition}
\newtheorem{assumption}[theorem]{Assumption}
\theoremstyle{remark}
\newtheorem{remark}[theorem]{Remark}





\usepackage[textsize=tiny]{todonotes}

\icmltitlerunning{Uniform Spectral Growth and Convergence of Muon in LoRA-Style Matrix Factorization}

\newcommand{\bA}{\mathbf{A}}
\newcommand{\bB}{\mathbf{B}}
\newcommand{\bC}{\mathbf{C}}
\newcommand{\bD}{\mathbf{D}}
\newcommand{\bE}{\mathbf{E}}

\newcommand{\bG}{\mathbf{G}}

\newcommand{\bI}{\mathbf{I}}

\newcommand{\bK}{\mathbf{K}}
\newcommand{\bL}{\mathbf{L}}
\newcommand{\bM}{\mathbf{M}}
\newcommand{\bN}{\mathbf{N}}

\newcommand{\bQ}{\mathbf{Q}}
\newcommand{\bR}{\mathbf{R}}
\newcommand{\bS}{\mathbf{S}}

\newcommand{\bU}{\mathbf{U}}
\newcommand{\bV}{\mathbf{V}}
\newcommand{\bW}{\mathbf{W}}
\newcommand{\bX}{\mathbf{X}}
\newcommand{\bY}{\mathbf{Y}}
\newcommand{\bZ}{\mathbf{Z}}

\newcommand{\bc}{\mathbf{c}}
\newcommand{\bd}{\mathbf{d}}

\newcommand{\bp}{\mathbf{p}}

\newcommand{\bu}{\mathbf{u}}
\newcommand{\bv}{\mathbf{v}}
\newcommand{\bw}{\mathbf{w}}
\newcommand{\bx}{\mathbf{x}}
\newcommand{\by}{\mathbf{y}}
\newcommand{\bz}{\mathbf{z}}

\newcommand{\cE}{\mathcal{E}} 

\newcommand{\cH}{\mathcal{H}}
\newcommand{\cL}{\mathcal{L}}

\newcommand{\cM}{\mathcal{M}}
\newcommand{\cK}{\mathcal{K}}

\newcommand{\bbR}{\mathbb{R}} 

\newcommand{\rd}{\mathrm{d}} 
\newcommand{\cC}{\mathcal{C}}

\newcommand{\cI}{\mathcal{I}}
\newcommand{\cN}{\mathcal{N}}

\newcommand{\cS}{\mathcal{S}}
\newcommand{\cT}{\mathcal{T}}
\newcommand{\cV}{\mathcal{V}}

\newcommand{\sR}{\mathbb{R}}

\DeclarePairedDelimiter{\abs}{\lvert}{\rvert}
\DeclarePairedDelimiter{\norm}{\lVert}{\rVert}

\newcommand{\roundb}[1]{\left(#1\right)}
\newcommand{\curlyb}[1]{\left\{#1\right\}}
\newcommand{\squarb}[1]{\left[#1\right]}
\newcommand{\angleb}[1]{\left\langle#1\right\rangle}
\newcommand{\normF}[1]{\norm*{#1}_\mathrm{F}}
\newcommand{\Off}{\mathrm{Off}}

\newcommand{\ddt}{\frac{\mathrm{d}}{\mathrm{d}t}}

\newcommand{\Diag}{\mathrm{Diag}}
\newcommand{\diag}{\mathrm{diag}} 
\newcommand{\bSigma}{\bm \Sigma} 

\newtcolorbox{highlight}[1][]{
    colback=gray!10,
    colframe=gray!30,
    boxrule=0.5pt,
    arc=2pt,
    leftrule=3pt,
    rightrule=3pt,
    toprule=1pt,
    bottomrule=1pt,
    #1
}

\begin{document}

\twocolumn[
  \icmltitle{Uniform Spectral Growth and Convergence of Muon \\ in LoRA-Style Matrix Factorization}



  \icmlsetsymbol{equal}{*}

  \begin{icmlauthorlist}
    \icmlauthor{Changmin Kang}{equal,kaist}
    \icmlauthor{Jihun Yun}{equal,krafton}
    \icmlauthor{Baekrok Shin}{kaist}
    \icmlauthor{Yeseul Cho}{kaist}
    \icmlauthor{Chulhee Yun}{kaist}
  \end{icmlauthorlist}

  \icmlaffiliation{kaist}{Kim Jaechul Graduate School of Artificial Intelligence, KAIST, Seoul, South Korea}
  \icmlaffiliation{krafton}{KRAFTON, Seoul, South Korea}

  \icmlcorrespondingauthor{Chulhee Yun}{chulhee.yun@kaist.ac.kr}


  \vskip 0.3in
]



\printAffiliationsAndNotice{\icmlEqualContribution}

\begin{abstract}
    Spectral gradient descent (SpecGD) orthogonalizes the matrix parameter updates and has inspired practical optimizers such as Muon. 
    They often perform well in large language model (LLM) training, but their dynamics remain poorly understood. In the low-rank adaptation (LoRA) setting, where weight updates are parameterized as a product of two low-rank factors, we find a distinctive spectral phenomenon under Muon in LoRA fine-tuning of LLMs: singular values of the LoRA product show near-uniform growth across the spectrum, despite orthogonalization being performed on the two factors separately. Motivated by this observation, we analyze spectral gradient flow (SpecGF)---a continuous-time analogue of SpecGD---in a simplified LoRA-style matrix factorization setting and prove ``equal-rate'' dynamics: all singular values grow at equal rates up to small deviations.
    Consequently, smaller singular values attain their target values earlier than larger ones, sharply contrasting with the largest-first stepwise learning observed in standard gradient flow. Moreover, we prove that SpecGF in our setting converges to global minima from almost all initializations, provided the factor norms remain bounded; with $\ell_2$ regularization, we obtain global convergence. Lastly, we corroborate our theory with experiments in the same setting.
\end{abstract}

\section{Introduction}

Classical optimization algorithms for training neural networks, such as stochastic gradient descent~\citep{robbins1951stochastic} and Adam(W)~\citep{kingma2015adam,loshchilov2018decoupled} update network parameters in a coordinate-wise manner, ignoring the structure of parameters. In contrast, several modern approaches in deep learning exploit the structure of matrix parameters, focusing on \emph{how} or \emph{what} to update during training.

One line of work follows the former direction, and the Muon optimizer~\citep{jordan2024muon} is one of representative examples. At each iteration $t$, Muon minimizes a loss function $f(\bW)$ in the weight matrix $\bW$ as
{
\setlength{\abovedisplayskip}{6pt}
\setlength{\belowdisplayskip}{6pt}
\begin{align}\label{eq:muon_update}
\begin{split}
    \bM_t &= \nabla f(\bW_t) + \mu \bM_{t-1},\\
    \bW_{t+1} &= \bW_t - \eta_t \cT(\bM_t),
\end{split}
\end{align}
}\noindent 
where $\mu \in[0, 1]$ tunes the momentum, $\eta_t$ is the step size, and $\cT$ orthogonalizes the momentum (see~\cref{eq:T_def}). In practice, Newton-Schulz iteration approximates $\cT$ by mapping most singular values of normalized $\bM_t$ to an interval $[0.7, 1.3]$. If $\mu = 0$, the update follows spectral gradient descent (SpecGD,~\citet{bernstein2024old,ravi2024implicit}).
Orthogonalized updates induce the spectrum of matrices to be more isotropic~\citep{wang2025muon,vasudeva2025muon}. By leveraging such geometric structure, Muon has gained significant attention for training large language models (LLM), often outperforming Adam(W)~\citep{liu2025muon,wang2025muon}.

Another line of work targets what parameters are updated. This category includes Low-rank adaptation (LoRA,~\citealp{hu2022lora}), which freezes the pretrained model and injects a low-rank update. LoRA parameterizes a weight update $\bW \in \sR^{m\times n}$ as a product of low-rank factor matrices $\bA\bB$ where $\bA \in \sR^{m\times r}$ and $\bB \in \sR^{r\times n}$ with $r \ll \min\{m, n\}$. This substantially reduces the number of trainable parameters while preserving strong fine-tuning performance.

Combining these two intriguing approaches modifies both the parameterization and optimizer, potentially leading to distinct training dynamics and implicit biases. While recent studies combining Muon-style optimizers with LoRA have also reported promising empirical results~\citep{ahn2025dion,anonymous2026taming}, theoretical explanations about their dynamics and implicit bias remain absent.

In \Cref{sec:empirical_observation}, we fine-tune language models using the Muon optimizer together with the LoRA parameterization to examine the dynamics, and we observe the Muon's unique feature persists even after applying LoRA. Although each factor receives its own orthogonalized updates, all spectral components of the product $\bA\bB$ evolve nearly uniformly. On the contrary, the dynamics of linear models with multiple layers trained under gradient descent exhibit incremental learning caused by saddle-to-saddle dynamics---the model learns singular values in a stepwise manner, from largest to smallest~\citep{arora2019implicit,gidel2019implicit,li2020towards}.

Motivated by this empirical observation, in subsequent sections, we deliver rigorous analyses of the dynamics induced from the orthogonalized updates under the LoRA setting. To this end, we model the matrix factorization objective induced by LoRA fine-tuning through the lens of \emph{spectral gradient flow (SpecGF)}, a continuous-time model of SpecGD.

We summarize our contributions as follows:

\noindent\textbf{Uniform Spectral Growth.} Even though SpecGF orthogonalizes the gradients of the LoRA components $\bA$ and $\bB$ rather than their product, we show that the singular values of $\bA\bB$ still evolve at a uniform rate under SpecGF~(\Cref{thm:uniform_growth_main}). The dynamics make smaller singular values converge before larger ones. This sharply contrasts with standard gradient flow which exhibits largest-first stepwise learning dynamics~\citep{arora2019implicit,gidel2019implicit}.

\noindent\textbf{Convergence Guarantee.} We prove that if SpecGF converges, it almost surely converges to global minima provided that the factor norms remain bounded~(\Cref{thm:main_specgf_conv}); with $\ell_2$ regularization, SpecGF globally converges. In addition, every global minimum is Lyapunov stable, and SpecGF converges to global minima exponentially fast~(\Cref{prop:main_conv_rate}).

\noindent\textbf{Empirical Validation.} On matrix factorization, we confirm that SpecGF exhibits uniform growth and smallest-first convergence order, in contrast to the largest-first behavior of standard gradient flow.
\section{Related Works}

\noindent\textbf{Orthogonalized Optimizers.} A line of research on the optimizers for matrix parameters, rather than vectorized ones, has been spotlighted for its empirical success. This includes Muon~\citep{jordan2024muon} and its variants, such as Dion~\citep{ahn2025dion} and MuonBP~\citep{khaled2025muonbp} that enhanced the efficiency, as well as preconditioner-equipped ones like Shampoo~\citep{gupta2018shampoo} and SOAP~\citep{vyas2025soap}. Muon normalizes the singular values of the matrix-valued updates, leading to a faster convergence and higher performance~\citep{liu2025muon,wang2025muon} compared to Adam~\citep{kingma2015adam}.

Recent studies have begun to analyze orthogonalized gradients through the lens of spectral gradient descent (SpecGD,~\citet{bernstein2024old}), which can be viewed as Muon without momentum. Specifically, \citet{fan2025implicit} study implicit bias and max-margin behavior under spectral descent, while \citet{vasudeva2025muon} analyze generalization benefits in imbalanced classification. These results are informative but limited to data assumptions.

\noindent\textbf{Low-Rank Adaptation (LoRA).} Low-rank adaptation (LoRA) is an efficient approach for fine-tuning large language models (LLM). This approach injects trainable low-rank factors into weight updates while keeping the base model fixed~\citep{hu2022lora}, motivated by the observation that fine-tuning updates often lie in a low intrinsic-dimensional subspace~\citep{aghajanyan2021intrinsic}. LoRA reduces trainable parameters from this low-rank structure without sacrificing much downstream performance.

Several recent studies report improved performance when applying Muon-style optimizers in LoRA fine-tuning~\citep{ahn2025dion,anonymous2026taming}. However, theoretical guarantees for convergence and training dynamics in this combined setting also remain largely unexplored. 
\looseness=-1

\noindent\textbf{Training Dynamics.} For deep linear models with standard gradient flow, large singular values are typically learned earlier than smaller ones~\citep{arora2019implicit,gidel2019implicit}, aligning with broader notions of spectral bias~\citep{cao2021towards,xu2019frequency}. Gradient-based methods can also exhibit saddle-to-saddle trajectories in matrix factorization problems~\citep{jacot2021saddle}.

Orthogonalized optimizers such as Muon or SpecGD induce different dynamics. \citet{vasudeva2025muon} prove that, for linear and bilinear models under classification tasks with MSE loss, every singular value of the parameters increases at the same rate until saturation. Such a uniform growth across all spectral components leads to a better generalization of SpecGD compared to gradient descent under data imbalance. However, their analysis is limited to a strong initialization such as spectral initialization~\citep{zhang2025lora}. \citet{qiu2025reparameterized} demonstrate that Muon encourages a more uniform spectrum than AdamW in the trainable matrices.
\looseness=-1
\begin{figure*}[t]
	\centering
    \begin{subfigure}[b]{0.24\textwidth}
        \includegraphics[width=\linewidth]{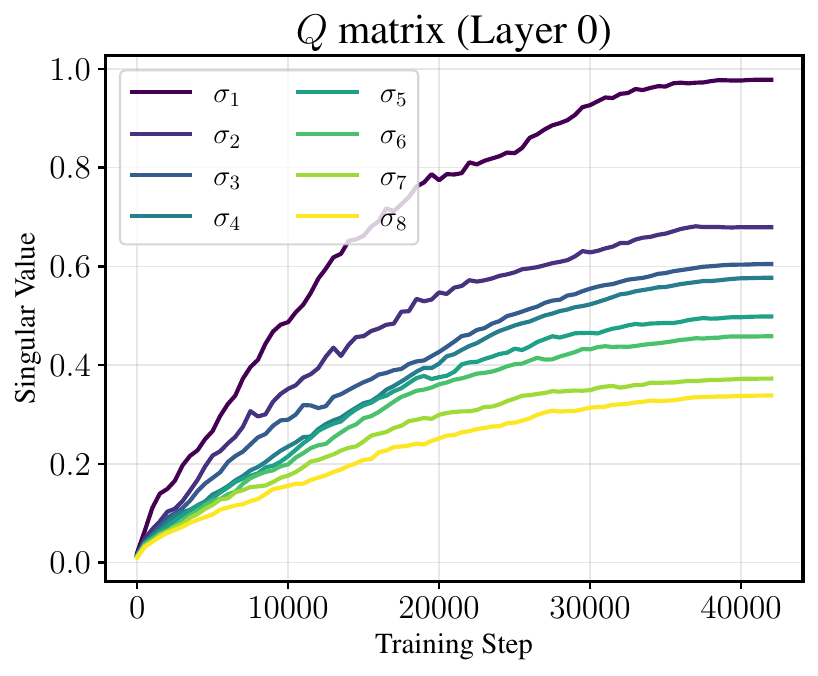}
        \caption{RoBERTa-Base with Muon}\label{fig:rbert_muon}
	\end{subfigure}
    \begin{subfigure}[b]{0.24\textwidth}
        \includegraphics[width=\linewidth]{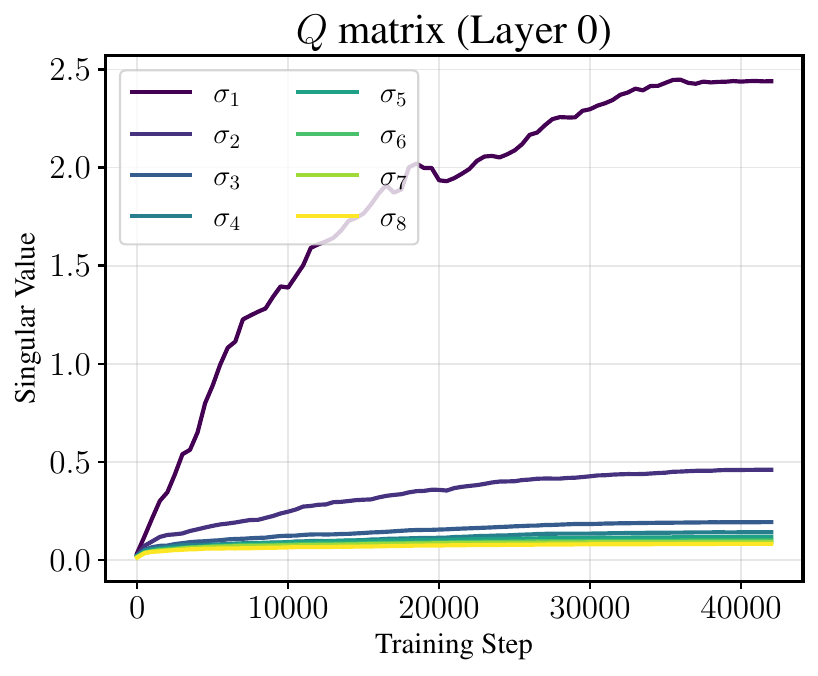}
        \caption{RoBERTa-Base with AdamW}\label{fig:rbert_adam}
    \end{subfigure}
	\begin{subfigure}[b]{0.24\textwidth}
        \includegraphics[width=\linewidth]{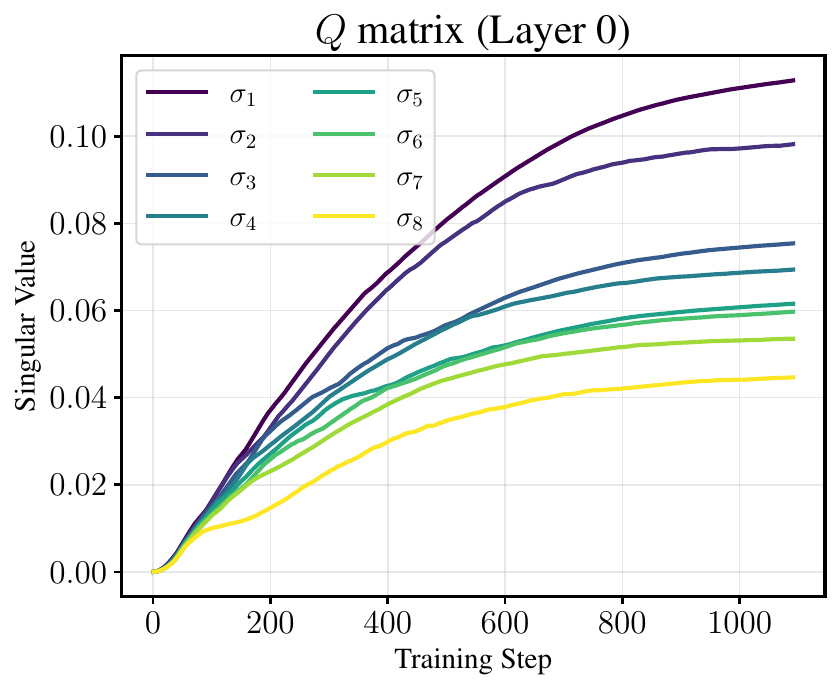}
        \caption{LLaMA-3.2-1B with Muon}\label{fig:llama_muon}
	\end{subfigure}
    \begin{subfigure}[b]{0.24\textwidth}
        \includegraphics[width=\linewidth]{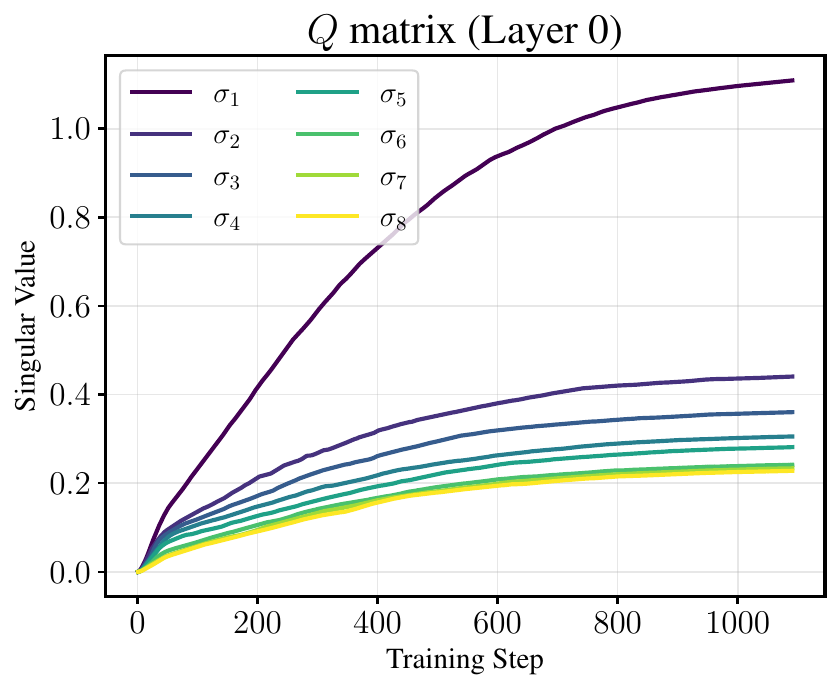}
        \caption{LLaMA-3.2-1B with AdamW}\label{fig:llama_adam}
    \end{subfigure}
    \caption{Evolution of the singular values of the LoRA $\bA\bB$ adapter applied to the query matrix in the first self-attention layer.}
	\label{fig:llm_observation}
    \vspace{-0.2in}
\end{figure*}

\section{Empirical Observation: Uniform Growth in LoRA with Muon}
\label{sec:empirical_observation}

We begin with an empirical observation that motivates our theoretical analysis. 
Given pretrained weight $\bW_0 \in \mathbb{R}^{m \times n}$, we consider the fine-tuning problem
{
\setlength{\abovedisplayskip}{6pt}
\setlength{\belowdisplayskip}{6pt}
\begin{equation}\label{eq:FT_setting}
    \min_{\bW\in\sR^{m\times n}} \ell(\bW_0 + \bW),
\end{equation}
}\noindent
where $\ell: \mathbb{R}^{m \times n} \to \mathbb{R}$ denotes the loss function and $\bW$ represents the trainable weight update.
The LoRA approach reparameterizes $\bW$ as $\bA\bB$ with low-rank factors $\bA \in \sR^{m\times r}$ and $\bB \in \sR^{r\times n}$, where $r \leq \min\{m, n\}$. The factors are initialized as $\bA(0) = \bm 0$, and $\bB(0)$ is a small random matrix~\citep{hu2022lora,hayou2024impact}.

When training LoRA adapters with Muon, we observe that the singular values of $\bA\bB$ grow considerably more uniformly across the spectrum.
This is a non-trivial phenomenon because Muon orthogonalizes the updates of $\bA$ and $\bB$ separately. 
Such a spectral growth in the product $\bA\bB$ suggests some form of alignment between the matrices $\bA$ and $\bB$, established during the training process.

\vspace{-0.1cm}
\paragraph{Experimental Setup.}

We fine-tune RoBERTa-Base~\citep{liu2019robertarobustlyoptimizedbert} on SST-2 datasets from the GLUE benchmark~\citep{wang2018glue}. For a larger-scale experiment, we train LLaMA-3.2-1B~\citep{grattafiori2024llama} on the Alpaca dataset~\citep{taori2023stanford}. For both cases, LoRA is applied to the query and value matrices with rank $r = 8$.
We compare Muon and AdamW optimizers, tracking the singular values of the LoRA product throughout training. Additional training details are described in \Cref{sec:LLM_expm}.

\vspace{-0.1cm} 
\paragraph{LLM Fine-tuning Results.} \Cref{fig:llm_observation} illustrates the evolution of the singular values of the LoRA $\bA\bB$ adapter applied to the query matrix in the first self-attention layer. \Cref{fig:llama_muon,fig:rbert_muon} show results with the Muon optimizer: all singular values exhibit near-uniform evolution, maintaining parallel trajectories throughout training.
In contrast, \Cref{fig:llama_adam,fig:rbert_adam} show that the AdamW optimizer does not necessarily attend to every spectral component equally; it may focus on larger singular values, aligning with the ``largest-first'' learning dynamics observed in standard gradient descent.

The complete singular value trajectories and the effective rank~\citep{roy2007effective} for all matrices are provided in \Cref{sec:LLM_expm}. Across all layers, Muon promotes all singular values of LoRA $\bA\bB$ adapter to evolve near-uniformly; thus, their effective rank is consistently close to the LoRA rank $r=8$. Such dynamics are not always observed for AdamW across all matrices; the consistency is degraded. This empirical evidence confirms qualitatively different learning behavior of Muon from conventional optimizers.

\vspace{-0.1cm}
\paragraph{Modeling for Theory.} The near-uniform evolution of singular values for all LoRA $\bA\bB$ adapters stems from Muon, but such interactions are highly intractable due to the intricate interactions among the components of LLM. To rigorously analyze the dynamics from Muon with LoRA, we construct and study a simplified setting that captures the LoRA approach and orthogonalized matrix updates.

\begin{figure}[t]
	\centering
    \begin{subfigure}[b]{0.23\textwidth}
        \includegraphics[width=\linewidth]{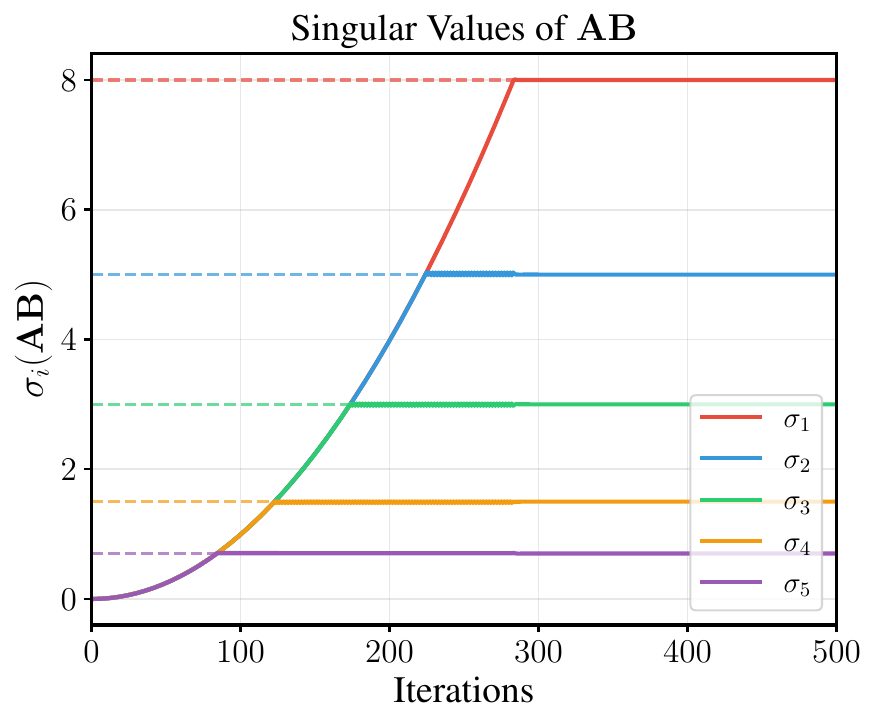}
        \caption{SpecGF}\label{fig:toy_specgf}
	\end{subfigure}
    \begin{subfigure}[b]{0.23\textwidth}
        \includegraphics[width=\linewidth]{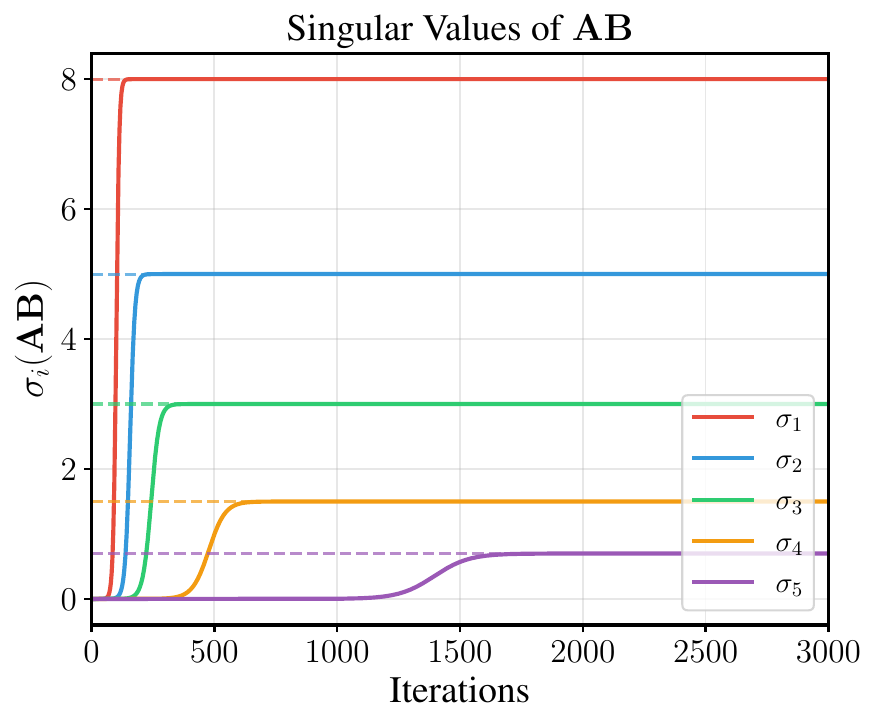}
        \caption{Vanilla GF}\label{fig:toy_gf}
	\end{subfigure}
    \caption{Comparison of singular value evolutions. While SpecGF induces uniform growth of spectrum of $\bA\bB$, GF induces the largest-first dynamics.}
	\label{fig:toy_observation}
    \vspace{-1em}
\end{figure}

Recall that \Cref{eq:FT_setting} aims to find the optimal update $\bW^\star$ during fine-tuning. A second-order Taylor expansion of $\ell$ around $\bW_0 + \bW^\star$ yields:
{
\setlength{\abovedisplayskip}{6pt}
\setlength{\belowdisplayskip}{6pt}
\begin{align*}
&\phantom{=}\ell(\bW_0 + \bW) - \ell(\bW_0 + \bW^\star)\\
&\approx \frac{1}{2}\angleb{\bW - \bW^\star,
\nabla^2\ell(\bW_0 + \bW^\star)(\bW - \bW^\star)}.
\end{align*}
}\noindent 
Therefore, the fine-tuning objective can be approximated as the minimization of the quadratic term. To theoretically understand the dynamics of LoRA, we constrain the problem by relaxing the Hessian $\nabla^2\ell(\bW_0 + \bW^\star)$ to an identity:
{
\setlength{\abovedisplayskip}{6pt}
\setlength{\belowdisplayskip}{6pt}
\begin{equation}\label{eq:obj}
    \min_{\bA, \bB} \Big\{\cL(\bA, \bB) \coloneqq \frac12 \normF{\bA\bB - \bY}^2\Big\},
\end{equation}
}\noindent
where $\bA\bB$ and $\bY\in \sR^{m\times n}$ respectively take the place of $\bW$ and $\bW^\star$. 

We train $(\bA, \bB)$ with SpecGD~(\Cref{eq:muon_update} with $\mu = 0$) with loss function in \Cref{eq:obj}. We set $(m,n,r) = (60,70,5)$ and then construct $\bY = \bU_r \bSigma \bV_r^\top$ where $\bU_r \in \mathbb{R}^{m \times r}$ and $\bV_r \in \mathbb{R}^{n \times r}$ are random orthonormal matrices, and $\bSigma = \Diag(8, 5, 3, 1.5, 0.7)$. We initialize $\bA(0) = \mathbf{0}$ and $\bB(0)$ with i.i.d. Gaussian entries of scale $\gamma = 10^{-3}$. We run SpecGD with a learning rate $\eta = 0.01$.

Notably, the singular values of $\bA\bB$ grow at a \textbf{uniform} rate; see \Cref{fig:toy_observation}. On the contrary, plain gradient descent exhibits incremental learning.
It is expected that the singular values of a single matrix uniformly grow under orthogonalized updates. However, the surprising finding is that such uniform growth is also observed for the \emph{product} $\bA\bB$, even though $\bA$ and $\bB$ are updated via orthogonalization separately.

Our observation naturally leads to the following question: 

\begin{highlight}
    \paragraph{Key Question.}
    \emph{Why do the singular values of $\bA\bB$ evolve uniformly although the gradient of $\bA$ and $\bB$ are orthogonalized separately?}
\end{highlight}

To address this question, we study \Cref{eq:obj} with spectral gradient methods. Our theoretical framework, developed in the following section, explains the observed uniform growth and the smallest-first convergence order.
\section{Theoretical Setup}

\paragraph{Notation.}
We use bold uppercase letters (e.g., $\bX$) for real matrices, bold lowercase letters (e.g., $\bx$) for real vectors, and lowercase letters (e.g., $x$) for real scalars.
Let $\mathbf{1}\in\bbR^{r}$ denote the all-ones vector and $\bI_d$ the $d\times d$ identity matrix.
We write $\mathsf{GL}(r)$ for the set of invertible $r\times r$ real matrices, and $\Diag:\sR^{d}\to\sR^{d\times d}$ for the diagonal operator. We denote the Frobenius, spectral, and nuclear norms by $\normF{\cdot}$, $\norm{\cdot}_2$, and $\norm{\cdot}_*$, respectively. For matrices $\bM$ and $\bN$, we write $\angleb{\bM,\bN}\coloneqq\mathrm{Tr}(\bM\bN^\top)$ and denote the pseudoinverse of $\bM$ by $\bM^\dagger$.
The $k$-th largest singular value of $\bM$ is denoted by $\sigma_k(\bM)$, and $[k]\coloneqq\{1,\dots,k\}$.

\subsection{Problem Setup}

To align with the LoRA setting, we assume $r \leq \mathrm{rank}(\bY)$ to solve \Cref{eq:obj}. If $\bA$ and $\bB$ depend on time $t$, we may write $\cL(t) \coloneqq \cL(\bA(t), \bB(t))$. 
We initialize each factor as $\bA(0)=\bm{0}$ and $\bB(0)$ at random, which coincides with the standard LoRA initialization where each entry of $\bB(0)$ is sampled from a Gaussian distribution~\citep{hu2022lora,hayou2024impact}.

We analyze the continuous-time analog of SpecGD, \textit{spectral gradient flow (SpecGF)}, defined by:
{
\setlength{\abovedisplayskip}{6pt}
\setlength{\belowdisplayskip}{6pt}
\begin{align}\label{eq:AB_update}
\begin{split}
    \dot{\bA}(t) &= -\cT\roundb{\nabla_\bA \cL(t)}\\ &= -\cT\roundb{(\bA(t)\bB(t) - \bY)\bB(t)^\top},\\
    \dot{\bB}(t) &= -\cT\roundb{\nabla_\bB \cL(t)}\\ &= -\cT\roundb{\bA(t)^\top(\bA(t)\bB(t) - \bY)},
\end{split}
\end{align}
}\noindent 
where $\cT$ is an orthogonalization operator applied to the matrix gradients. For a matrix $\bM \in \sR^{m\times n}$ with compact SVD $\bU_\bM\bS_\bM\bV_\bM^\top$ and $\bS_\bM \coloneqq \Diag\{\sigma_1, \ldots, \sigma_k\} \in \bbR^{k \times k}$, the exact orthogonalization is written as
\begin{equation}\label{eq:T_def}
    \cT(\bM) \coloneqq \big((\bM\bM^\top)^\dagger\big)^{1/2}\!\bM = \bU_\bM\bV_\bM^\top.
\end{equation}
Notice that $\cT$ normalizes all the nonzero singular values to $1$. However, $\cT$ is not analytic at rank-deficient points since it involves discontinuous singular-vector selections. 
To obtain an analytic vector field, we introduce a smoothed operator $\cT_\beta$ with $\beta > 0$:
{
\setlength{\abovedisplayskip}{6pt}
\setlength{\belowdisplayskip}{6pt}
\begin{equation}\label{eq:Tb_def}
    \cT_\beta(\bM) \coloneqq \roundb{\bM\bM^\top + \beta \bI_m}^{-1/2}\!\bM.
\end{equation}
}\noindent 

Adding $\beta \bI$ makes the inverse square root well-defined, and makes SpecGF analytic everywhere; see \Cref{cor:T_analytic} for details. We can write $\cT_\beta(\bM)$ as
\begin{equation*}
    \cT_\beta(\bM) = \bU_\bM\bS_\beta\bV_\bM^\top, \; \bS_\beta \coloneqq \Diag\left\{ \frac{\sigma_i}{\sqrt{\sigma_i^2+\beta}} \right\}_{i\in[k]}.
\end{equation*}
Thus, $\cT_\beta$ maps all positive inputs to values near $1$, similar to NS iterations, and boils down to $\cT$ as $\beta \to 0$.
Throughout the paper, ``SpecGF with $\cN$'' refers to the flow where both $\dot\bA$ and $\dot\bB$ use the operator $\cN \in \{\cT, \cT_{\beta}\}$ for orthogonalization.

We use the compact SVD of $\bY$. Let $r^\star = \mathrm{rank}(\bY)$. Then $\bY = \bU_{r^\star}\bm\Sigma\bV_{r^\star}^\top$,
where $\bm \Sigma \in \bbR^{r^\star \times r^\star}$ contains the nonzero singular values in non-increasing order and $\bU_{r^\star}$, $\bV_{r^\star}$ consist of the corresponding singular vectors.
\section{Uniform Growth of Singular Values}\label{sec:uniform_growth_sng}

We analyze how the singular values of $\bW(t) = \bA(t)\bB(t)$ evolve from SpecGF with $\cT_\beta$ under LoRA initialization. A key finding is that, all active singular values grow at the same rate, a phenomenon we call equal-rate dynamics. This synchronized behavior implies that the smallest singular value reaches its target first.

\subsection{Alignment Yields Decoupled Dynamics}

We start with the simple case---rank-1 $\bY$ and $r=1$---that leads to the decoupling of the dynamics of the product $\bA(t)\bB(t)$ into those of scalars. The matrices reduce to
\begin{equation*}
    \bY = \sigma \bu\bv^\top, \quad \bA \in \sR^{m\times 1}, \quad \bB \in \sR^{1\times n},
\end{equation*}
where $\bu$ and $\bv$ are unit vectors in $\sR^m$ and $\sR^n$, respectively. Given a random unit vector $\bw$ in $\sR^n$ and small $\gamma > 0$, LoRA training starts at $\bA(0) = \bm 0, \bB(0) = \gamma \bw^\top.$
Notice that $\angleb{\bv, \bw} \ne 0$ almost surely. This simple case reduces the dynamics of two vectors to those of scalars:

\begin{theorem}[Informal]\label{thm:main_sync_rk1}
    Assume $\angleb{\bv, \bw} \ne 0$ and decompose $\bw$ into $\bv$ and $\bz$ with $\angleb{\bv, \bz} = 0$. Then, for all $t \geq 0$,
    \begin{equation*}
        \bA(t) = a(t)\bu, \quad \bB(t) = b(t)\bv^\top + c(t)\bz^\top.
    \end{equation*}
    If $\gamma > 0$ is sufficiently small, then under SpecGF with $\cT_\beta$,
    \begin{equation*}
        a(t)b(t) \to \sigma, \quad c(t) \to 0.
    \end{equation*}
    Moreover, as $\gamma \to 0$, both $|a(t) - b(t)|$ and $\abs{\dot a(t) - \dot b(t)}$ vanish to 0 at any $t \geq 0$.
\end{theorem}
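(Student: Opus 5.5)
The plan is to reduce the matrix ODE \Cref{eq:AB_update} with $\cT_\beta$ to a closed three-dimensional ODE in $(a,b,c)$. I would then study that ODE with a Lyapunov argument and continuous dependence on initial data.

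\textbf{Step 1 (invariance of the ansatz).} Write $\bw=\alpha\bv+\bz'$ with $\alpha=\angleb{\bv,\bw}\neq 0$, and set $\bz=\bz'/\norm{\bz'}$. If $\bz'=\bm 0$, take $c\equiv 0$. Without loss of generality $\alpha>0$ (flip $\bv,\bu$). The initial values are then $a(0)=0$, $b(0)=\gamma\alpha$, $c(0)=\gamma\norm{\bz'}$. Plugging $\bA=a\bu$ and $\bB=b\bv^\top+c\bz^\top$ into the gradients gives
\[
\nabla_\bA\cL = g\,\bu,\qquad g=(ab-\sigma)b+ac^2,
\]
and
\[
\nabla_\bB\cL = a\big[(ab-\sigma)\bv^\top+ac\,\bz^\top\big].
\]
For a single column or row $\bx$ one has $\cT_\beta(\bx)=\bx/\sqrt{\norm{\bx}^2+\beta}$. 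This yields the closed system
\[
\dot a=-\frac{g}{\sqrt{g^2+\beta}},\qquad
\dot b=-\frac{a(ab-\sigma)}{D},\qquad
\dot c=-\frac{a^2c}{D},
\]
where $D=\sqrt{a^2((ab-\sigma)^2+a^2c^2)+\beta}$. A solution of this system produces a solution of \Cref{eq:AB_update}. Since the vector field is analytic (\Cref{cor:T_analytic}), uniqueness gives $\bA(t)=a(t)\bu$ and $\bB(t)=b(t)\bv^\top+c(t)\bz^\top$ for all $t\ge0$.

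\textbf{Step 2 (convergence).} The loss is $\cL=\tfrac12[(ab-\sigma)^2+a^2c^2]$. It satisfies $\dot\cL=-\angleb{\nabla\cL,\cT_\beta(\nabla\cL)}\le 0$, with equality only at stationary points.

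The stationary points are of two kinds. The first is $\{a=b=0\}$, where $\cL=\sigma^2/2$. The second is $\{ab=\sigma,\ c=0\}$, where $\cL=0$. No other kind exists: $a^2c=0$ and $a(ab-\sigma)=0$ with $a\neq0$ force $c=0$ and $ab=\sigma$; if instead $a=0$, then $g=-\sigma b=0$ forces $b=0$.

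The initial point has $\cL(0)=\sigma^2/2$. However, $\nabla_\bA\cL(0)=-\sigma\gamma\alpha\bu\neq0$, so $\cL$ drops strictly below $\sigma^2/2$ at once. This rules out the spurious set as an $\omega$-limit.

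I would next establish sign and order invariants. First, $c$ keeps its sign and $|c|$ is nonincreasing. Second, $a,b>0$ for $t>0$. Third, the region $\{ab\le\sigma\}$ is forward invariant: on $ab=\sigma$ we have $\dot b=0$ and $\dot a=-ac^2/\sqrt{\cdot}\le 0$. Given a bound on the trajectory, the limit set is a connected set of stationary points with $\cL=0$. Because the field is analytic and $\cL$ is a strict Lyapunov function, a Łojasiewicz argument then gives convergence to a single point with $ab=\sigma$ and $c=0$.

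\textbf{Step 3 ($\gamma\to0$ behaviour).} With $c=0$, the reduced system is symmetric under the swap $a\leftrightarrow b$. Its solution from symmetric data therefore keeps $a=b$. I would compare the $\gamma>0$ trajectory with the symmetric one started from $(0,0)$, or from a time-shifted symmetric point. On any compact time interval, continuous dependence on initial data for the analytic ODE then gives $|a(t)-b(t)|\to0$. Since $\dot a-\dot b$ is a continuous function of the state, $|\dot a(t)-\dot b(t)|\to0$ as well.

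\textbf{Main obstacle.} The hardest part is boundedness of $(a,b)$, which Step 2 needs. The constraint $ab\le\sigma$ bounds the product but not each factor: $a$ could in principle grow while $b$ shrinks. I would first try to control the ratio $a/b$ by comparing $\dot a$ and $\dot b$. Both equal $\operatorname{sign}(\sigma-ab)$ up to corrections of order $c^2$ and $\beta$. Since $c\le\gamma\norm{\bz'}$, this should show that $|a-b|$ stays $O(\gamma)$ uniformly in time once $\gamma$ is small enough. That uniform bound is exactly where the hypothesis ``$\gamma$ sufficiently small'' enters.
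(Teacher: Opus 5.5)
Your Step 1 is fine, and the uniqueness argument is cleaner than an inductive one. Step 2, however, has a genuine gap at exactly the point you flag: boundedness of $(a,b)$ is never proved. You defer it to a uniform-in-time bound $|a-b|=O(\gamma)$ that you only expect ``should'' hold. The heuristic behind that bound is also off. Here $\beta>0$ is fixed while $\gamma\to0$, and near the origin the gradients are $O(\gamma)\ll\sqrt\beta$, so $\cT_\beta$ acts like multiplication by $\beta^{-1/2}$, not like a sign.

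The missing idea is that $g=(ab-\sigma)b+ac^2=\angleb{a\bB-\sigma\bv^\top,\bB}=a\normF{\bB}^2-\sigma b$ stays negative. Since $g(0)=-\sigma\gamma\alpha<0$, let $\tau$ be a first zero of $g$. At $\tau$ we have $\dot a(\tau)=0$, and $\frac{\mathrm{d}}{\mathrm{d}t}\normF{\bB}^2=-2ag/D$ also vanishes, so $\dot g(\tau)=-\sigma\dot b(\tau)$. Moreover $(ab-\sigma)b=-ac^2\le 0$ at $\tau$, and $ab=\sigma$ would force $c=0$, i.e.\ an equilibrium reached in finite time, which uniqueness forbids. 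Hence $\dot b(\tau)>0$ and $\dot g(\tau)<0$, contradicting that $g$ rises to $0$ from below.

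Consequently $\dot a>0$, $ab<\sigma$ and $\dot b>0$ for all $t>0$. Both factors increase while their product stays below $\sigma$, which gives $a\le\sigma/(\gamma\alpha)$ and $b\le\sigma/a(t_0)$ for $t\ge t_0>0$. Your invariant $ab\le\sigma$ already yields $\dot b\ge 0$ and the bound on $a$. What it cannot give is a positive lower bound on $a$, and that is precisely $\dot a>0$. With boundedness in hand, your limit-set/\L{}ojasiewicz argument goes through, and convergence holds for every $\gamma>0$; smallness of $\gamma$ plays no role there. In the paper, the order is the reverse of yours: the uniform bound on $|a-b|$ is proved \emph{after}, and by means of, the monotonicity of $a$.

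Step 3 does not deliver the ``moreover'' claim in any meaningful sense. At $\gamma=0$ the initial point is the origin, an equilibrium. So continuous dependence on $[0,T_0]$ only says the whole trajectory, not just $a-b$, collapses to $0$: read pointwise in $t$, the claim is true but vacuous. The linearization at the origin has growth rate $\sigma/\sqrt\beta$, so $a$ and $b$ actually grow at times of order $\frac{\sqrt\beta}{\sigma}\log(1/\gamma)\to\infty$. No finite-horizon comparison, time-shifted or not, reaches that regime. What is needed, and what your Step 2 would also need, is a bound uniform in $t$. The paper proves $\sup_t|a(t)-b(t)|\le C(\gamma)\to 0$ as follows.
\begin{itemize}
\item On $\{a\ge b\}$, $\dot a-\dot b\le-\dot c$; on $\{a<b\}$, $\dot b-\dot a\le ac^2/\sqrt\beta$. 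Both estimates rely on $g<0$ (hence $ab<\sigma$) and on monotonicity of $x\mapsto x/\sqrt{x^2+\beta}$.
\item $\int_0^\infty ac^2\,\mathrm{d}t$ is bounded by splitting at $t_c$, the first time $a\ge c$. After $t_c$, $a\ge c$ persists because $a$ increases and $c$ decreases, so $ac^2=-\dot c\,cD/a\le-\dot c(\sigma c+\sqrt\beta)$.
\item Before $t_c$, $a<c\le c(0)$, and smallness of $\gamma$ forces $\dot a\ge b(0)\sigma/(b(0)\sigma+4\sqrt\beta)$. This bounds $t_c$ by a $\gamma$-independent $T$.
\end{itemize}
The bound on $|\dot a-\dot b|$ then follows from the $\beta^{-1/2}$-Lipschitz property of $x\mapsto x/\sqrt{x^2+\beta}$, together with $c\le c(0)$ and $a\le A(\gamma)$. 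This is where ``$\gamma$ sufficiently small'' genuinely enters.
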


For the proof of \Cref{thm:main_sync_rk1}, see \Cref{sec:sng_evo_rk1}. If the initialization scale is small, the decoupling from $\bA(t)\bB(t)$ to $a(t), b(t)$, and $c(t)$ results in both the convergence to global minima and a similar rate increment in \emph{both} factors.

This decoupling extends to general matrices under SpecGF with spectral initialization~\citep{zhang2025lora}, which aligns the singular spaces of $\bA(0)$ and $\bB(0)$ with that of $\bY$. Writing $\bY=\bU_{r^\star}\bSigma\bV_{r^\star}^\top$ and letting $\bU_r,\bV_r$ be the first $r$ columns of $\bU_{r^\star},\bV_{r^\star}$, the initialization takes $\bA(0) = \bU_r\bm\Sigma_{\bA}\bQ^\top$ and $\bB(0) = \bQ\bm\Sigma_{\bB}\bV_r^\top$, where $\bm\Sigma_{\bA},\bm\Sigma_{\bB}$ are nonnegative diagonal and $\bQ\in\mathbb{R}^{r\times r}$ is orthogonal. The dynamics then decouple into $r$ independent pairs of singular values $(\sigma_i(\bA(t)),\sigma_i(\bB(t)))$. If $\bm\Sigma_{\bA}$ and $\bm\Sigma_{\bB}$ are sufficiently small, SpecGF with $\mathcal{T}_\beta$ keeps both $|\sigma_i(\bA(t))-\sigma_i(\bB(t))|$ and $|\dot\sigma_i(\bA(t))-\dot\sigma_i(\bB(t))|$ small for all $i$; see \Cref{sec:spec_init}.

\subsection{General case: Approximate as Near-Diagonal}\label{sec:general_approx}

The analyses in the decoupled case utilize alignments between LoRA factors, $\bA(t)$ and $\bB(t)$. However, such alignments are not applicable to \Cref{eq:obj} for LoRA initialization.
In this section, we consider the general case without alignments at initialization. Despite the lack of alignment, we show that under small initialization, all singular values of $\bA(t)\bB(t)$ tend to grow at nearly the same rate, even though $\bA$ and $\bB$ are updated via separately orthogonalized gradients. 

Our approach introduces a coordinate system aligned with the target $\bY$, and we argue that orthogonalization largely separates the growth speed from the current singular-value scale, leading to near-uniform growth across singular values. 

\paragraph{Core Variables.} We focus on the case $r = r_\star$ for simplicity; the case $r < r^\star$ is discussed in Appendix \Cref{sec:underparameterized}. Assume $\bSigma = \Diag(\sigma_1, \ldots, \sigma_r)$ and $\sigma_1 > \cdots > \sigma_r > 0$ with $\sigma_i = O(1)$.
Let $\bV_\perp \in \sR^{n \times (n-r)}$ denote the orthogonal complement of $\bV$ (full right singular vectors), so that $[\bV_r \mid \bV_\perp]$ is orthonormal. We define the \emph{core variables}:
\begin{align*}
    \big(\bX(t), \bZ(t), \bZ_\perp(t)\big) \coloneqq \big(\bU_r^\top \bA(t), \bB(t)\bV_r, \bB(t)\bV_\perp\big).
\end{align*}
The core product is $\bG(t) \coloneqq \bX(t)\bZ(t)$, and under $\bA(0) = \mathbf{0}$, we have $\bA(t) = \bU_r\bX(t)$ for all $t \ge 0$. We denote the diagonal entries of $\bG(t)$ by $d_i(t) := [\bG(t)]_{ii}$ for all $i \in [r]$. Further, we let $d_{\min}(t) \coloneqq \min_{i\in[r]}d_i(t)$. For a square matrix $\bM$, we write $\Off(\bM) \coloneqq \bM \odot (\mathbf{1} \mathbf{1}^\top - \bI_r)$ where $\odot$ denotes the Hadamard (element-wise) product.

\paragraph{Key Concepts.} Under the reparametrization, we introduce two tolerances: (i) the \emph{alignment tolerance} $\delta > 0$; and (ii) the \emph{target tolerance} $\varepsilon > 0$. 
\begin{itemize}[leftmargin=4mm,itemsep=2pt,topsep=0pt,partopsep=0pt]
    \item \textbf{Alignment tolerance $\delta > 0$:} We define $\delta$-alignment by 
    \begin{align}\label{eq:alignment}
        \normF{\Off(\bG(t))} \le \delta \text{ and } \normF{\bX(t)\bZ_\perp(t)} \le \delta,
    \end{align}
    i.e., $\bG(t)$ is \emph{near-diagonal} for a sufficiently small $\delta > 0$. 
    \item \textbf{Target tolerance $\varepsilon > 0$:} We define $d_i(t) := [\bG(t)]_{ii}$, $e_i(t) := d_i(t) - \sigma_i$. The active set is defined by
    \begin{align}\label{eq:active_set}
        \cI_\varepsilon(t) := \{i \in [r] : |e_i(t)| > \varepsilon\}.
    \end{align}
\end{itemize}
We call each index $i \in [r]$ a \emph{mode}, corresponding to the $i$-th singular value of $\bA\bB$. A mode $i$ is \emph{active} at time $t$ if $i \in \cI_\varepsilon(t)$, meaning it has not yet converged to its target $\sigma_i$. Note that, given a convergence to a global minima~(\Cref{sec:conv_analysis}), for any $\delta > 0$ there exists a finite $t_0 > 0$ such that $\delta$-alignment holds for all $t \ge t_0$ since $\cL = \tfrac{1}{2}(\|\bG(t) - \bSigma\|_F^2 + \|\bX\bZ_\perp\|_F^2)$. 

\paragraph{Why Near-Diagonal?}  
When $\bG(t)$ is diagonal, its diagonal entries $d_i(t)$ exactly correspond to the singular values of $\bA(t)\bB(t)$, greatly simplifying the analysis. Although $\bG(t)$ is not exactly diagonal in general, under $\delta$-alignment \eqref{eq:alignment} the diagonal entries can still approximate the singular value well. This is guaranteed by the following lemma.

\begin{lemma}[Diagonal approximation]\label{lem:diagonal_approx_main}
Under $\delta$-alignment \eqref{eq:alignment}, it follows that $|\sigma_{i}(\bA(t)\bB(t)) - d_i(t)| = O(\delta)$, for all $i \in [r]$, where $\sigma_i(\cdot)$ denotes the $i$-th singular value.
\end{lemma}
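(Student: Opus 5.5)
Since $\bA(0)=\bm 0$ and every update of $\bA$ has the form $\cT_\beta\roundb{(\bA\bB-\bY)\bB^\top}$, the plan is to reduce the singular values of $\bA\bB$ to those of a small block matrix and then invoke Weyl's inequality. The first step is the claim already noted in the setup: $\bA(t)=\bU_r\bX(t)$ for all $t$. The reason is that whenever $\bA=\bU_r\bX$, the gradient $(\bA\bB-\bY)\bB^\top$ has column space inside $\mathrm{span}(\bU_r)$ because $r=r^\star$ and $\bY=\bU_r\bSigma\bV_r^\top$. The operator $\cT_\beta$ preserves the column space, since $\cT_\beta(\bM)=\bU_\bM\bS_\beta\bV_\bM^\top$. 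So the subspace $\{\bA:\bA=\bU_r\bX\}$ is invariant, and uniqueness of solutions to the analytic ODE finishes this step.

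With that in hand, write $\bB=\bB[\bV_r\mid\bV_\perp][\bV_r\mid\bV_\perp]^\top$. This gives
\[
\bA\bB=\bU_r\bigl[\bX\bZ \mid \bX\bZ_\perp\bigr]\,[\bV_r\mid\bV_\perp]^\top .
\]
Both $\bU_r$ (orthonormal columns) and $[\bV_r\mid\bV_\perp]$ (orthogonal) preserve singular values. Hence $\sigma_i(\bA\bB)=\sigma_i(\bM)$ with $\bM\coloneqq[\bG\mid\bX\bZ_\perp]\in\sR^{r\times n}$.

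Next decompose $\bM=[\bD\mid\bm 0]+\bE$, where $\bD\coloneqq\Diag(d_1,\dots,d_r)$ and $\bE\coloneqq[\Off(\bG)\mid\bX\bZ_\perp]$. Under $\delta$-alignment \eqref{eq:alignment}, $\norm{\bE}_2\le\normF{\bE}\le\sqrt2\,\delta$. Weyl's inequality for singular values gives $|\sigma_i(\bM)-\sigma_i([\bD\mid\bm 0])|\le\sqrt2\,\delta$. The singular values of $[\bD\mid\bm 0]$ are the values $|d_i|$ sorted in non-increasing order.

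The main obstacle is matching the $i$-th sorted value with $d_i$ itself, i.e., showing the $d_i$ are nonnegative and ordered like the $\sigma_i$. Mode labels come from the target $\bY$, not from the sorted order of $\bG$'s diagonal. I would handle this as follows. For nonnegativity, in the regime where the lemma is used the diagonals satisfy $d_i\ge -O(\delta)$; replacing $d_i$ by $|d_i|$ then costs at most $O(\delta)$. For ordering, if $d_i$ and $d_j$ with $i<j$ appear out of order, a standard rearrangement/matching argument shows the sorted sequence differs entrywise from $(d_i)$ by at most the amount of disorder. Concretely, I would use the fact that $\max_i|x_{(i)}-y_{(i)}|\le\max_i|x_i-y_i|$ for sorted sequences, compare both against $(\sigma_i)$, and exploit the equal-rate structure, under which the $d_i$ keep the target order up to $O(\delta)$.

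Absent that structure, I would state the conclusion as a bound on the sorted diagonal. That is, $|\sigma_i(\bA\bB)-d_{(i)}|\le\sqrt2\,\delta$, where $d_{(i)}$ is the $i$-th largest $|d_j|$. This is exactly $O(\delta)$ with an explicit constant and coincides with the stated lemma whenever the $d_i$ are nonnegative and ordered.
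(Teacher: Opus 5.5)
This is essentially the paper's argument (\Cref{lem:diagonal_approx_app}). The paper likewise reduces to the singular values of $[\bG\mid\bX\bZ_\perp]$, bounds the Frobenius distance to $[\bD\mid\bm 0]$ by $2\delta$ (your $\sqrt2\,\delta$ is slightly sharper), applies Weyl/Mirsky, and uses $d_i>0$ from \Cref{lem:bootstrap_persistence_app}; it too only concludes up to a permutation $\pi(t)$, which is equivalent to your sorted-diagonal statement.

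So your fallback is exactly what the paper proves, and the ordering heuristic in your middle paragraph is unnecessary. It would also not work as sketched: $\delta$-alignment alone cannot fix the order, since a diagonal $\bG$ with $d_1<d_2$ is $0$-aligned, and comparing against $(\sigma_i)$ fails while the $d_i$ are still far from their targets.
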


In other words, $d_i(t)$ approximates the singular value of $\sigma_i(\bA(t)\bB(t))$ up to $O(\delta)$ error; see \Cref{lem:diagonal_approx_app} for the proof of \Cref{lem:diagonal_approx_main}. Hence, it suffices to analyze the dynamics of $d_i(t)$ to understand the singular values of $\bA(t)\bB(t)$. However, under LoRA initialization $\bA(0) = \bm 0$, the gradient $\nabla_\bB \cL$ is very close to zero, thus $\cT_\beta$ operates in its near-zero regime, where the update is dominated by the $\beta$-regularization. We therefore introduce a short time $\tau > 0$ such that the diagonal entries become positive enough. 

\begin{lemma}[Initial growth]\label{lem:initial_growth_main}
Let $\tau \asymp \frac{\sigma_r}{\sigma_1 \sqrt{r}}$ and $\tau \gamma < \sigma_r - \varepsilon$ holds for a small $\varepsilon$ such that $\varepsilon \ll \sigma_r$. With probability at least $1 - e^{-cr}$ for some absolute constant $c > 0$, it follows that $d_{\min}(\tau) \ge d_0 \asymp \tau \gamma > 0$. Moreover, all modes are active at $\tau$: $\cI_\varepsilon(\tau) = [r]$ and $d_i(\tau) < \sigma_i$ for all $i$.
\end{lemma}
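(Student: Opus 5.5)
The plan is to linearize the flow on the short window $[0,\tau]$ around the LoRA initialization and read off the diagonal of $\bG(\tau)$ from an explicit leading-order formula. At $t=0$ we have $\bX(0)=\bm 0$ and $\bZ(0)=\bB(0)\bV_r=\gamma\bN$, where $\bN\in\sR^{r\times r}$ has i.i.d.\ Gaussian entries by rotation invariance. Since $\bY\bV_\perp=\bm 0$, the $\bA$-gradient is $(\bA\bB-\bY)\bB^\top=\bU_r(\bX\bZ-\bSigma)\bZ^\top+\bU_r\bX\bZ_\perp\bZ_\perp^\top$. Its leading part is $-\bU_r\bSigma\bZ^\top$. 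The $\bB$-gradient is $\bA^\top(\bA\bB-\bY)=\bX^\top(\cdots)$, which vanishes at $t=0$.

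\textbf{Step 1 (a priori bounds).} Because $\normF{\cT_\beta(\bM)}$ and $\norm{\cT_\beta(\bM)}_2$ are bounded by those of $\cT(\bM)$, we get $\norm{\dot\bX}_2,\norm{\dot\bB}_2\le 1$. Hence $\norm{\bX(t)}_2\le t$ and $\norm{\bB(t)-\bB(0)}_2\le t$ on $[0,\tau]$.

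\textbf{Step 2 (bound the drift of $\bB$).} The drift is in fact much smaller than $t$. The argument of $\cT_\beta$ in $\dot\bB$ is $\bX^\top(\bX\bZ-\bSigma)$, of size $O(t\sigma_1)$. Using $\norm{\cT_\beta(\bM)}_2\le\norm{\bM}_2/\sqrt\beta$, this gives $\norm{\bB(t)-\bB(0)}_2=O(t^2\sigma_1/\sqrt\beta)$.

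\textbf{Step 3 (leading-order formula for $\bX$).} By Step 2 and $\norm{\bX}_2\le t$, the argument of $\cT_\beta$ in $\dot\bX$ equals $-\bSigma\bZ(0)^\top$ up to a perturbation, which we control with the Lipschitz continuity of $\cT_\beta$ (\Cref{cor:T_analytic}) and a Grönwall estimate. This yields
\begin{equation*}
    \bX(t)=t\,\cT_\beta\!\big(\bSigma\bZ(0)^\top\big)+\bE_X(t),
\end{equation*}
with $\bE_X$ of higher order in $t$. We then have $\bG(t)=\bX(t)\bZ(t)=t\,\cT_\beta(\bSigma\bZ(0)^\top)\bZ(0)+(\text{error})$. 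Writing $\cT_\beta(\bSigma\bZ_0^\top)=\bSigma\bZ_0^\top(\bZ_0\bSigma^2\bZ_0^\top+\beta\bI)^{-1/2}$ gives
\begin{equation*}
    d_i(t)\approx t\,\sigma_i\,\bK_{ii},\qquad \bK\coloneqq\bZ_0^\top(\bZ_0\bSigma^2\bZ_0^\top+\beta\bI)^{-1/2}\bZ_0\succeq\bm 0.
\end{equation*}

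\textbf{Step 4 (lower bound on $\bK_{ii}$).} From $(\bZ_0\bSigma^2\bZ_0^\top+\beta\bI)^{-1/2}\succeq(\sigma_1^2\norm{\bZ_0}_2^2+\beta)^{-1/2}\bI$ we obtain
\begin{equation*}
    \bK_{ii}\ge\frac{\norm{\bZ_0 e_i}^2}{\sqrt{\sigma_1^2\norm{\bZ_0}_2^2+\beta}}.
\end{equation*}

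\textbf{Step 5 (probability).} Standard Gaussian concentration gives $\norm{\bN}_2\lesssim\sqrt r$ and $\norm{\bN e_i}^2\gtrsim r$ for every $i$, with probability $1-e^{-cr}$ after a union bound over the $r$ columns (adjusting $c$). Hence $\bK_{ii}\gtrsim\gamma\sqrt r/\sigma_1$ in the regime $\beta\lesssim\sigma_1^2\gamma^2 r$, and with a modified constant otherwise. This gives $d_{\min}(\tau)\gtrsim\tau\gamma\,\sigma_r\sqrt r/\sigma_1$. Substituting $\tau\asymp\sigma_r/(\sigma_1\sqrt r)$ (the scale at which the first-order expansion stays valid) yields $d_0\asymp\tau\gamma$, with the remaining constants absorbed.

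\textbf{Step 6 (all modes still active).} For the upper bound, $d_i(\tau)\le\norm{\bX(\tau)}_2\norm{\bZ(\tau)}_2\le\tau(\norm{\bZ_0}_2+\tau)$, which is of order $\tau\gamma$ after normalization. The hypothesis $\tau\gamma<\sigma_r-\varepsilon$ then gives $d_i(\tau)<\sigma_i-\varepsilon$. Thus $|e_i(\tau)|>\varepsilon$ and $d_i(\tau)<\sigma_i$ for all $i$, so $\cI_\varepsilon(\tau)=[r]$.

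\textbf{Main obstacle.} The hardest part is Step 3: showing the error terms (the drift of $\bB$, the feedback term $\bX\bZ\bZ^\top$, and the $\bZ_\perp$ contribution) are genuinely smaller than the main term $t\,\sigma_i\bK_{ii}$, which is itself only $O(t\gamma)$. The error must be compared to $\gamma$, not to $1$. I would first try a bootstrap on $[0,\tau]$: assume $\norm{\bB(t)-\bB(0)}_2\le\gamma/4$, derive the Step 3 expansion, and then verify that the assumption is self-consistent when $\tau$ satisfies the stated scaling. This relies on the smallness of $\norm{\bX}_2\le t$ in the $\bB$-gradient and on the $\beta$-Lipschitz constant of $\cT_\beta$. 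If that bound is too weak, the fallback is to require $\gamma$ small relative to $\sqrt\beta$, so that $\cT_\beta$ is effectively linear ($\approx\beta^{-1/2}\bM$) on the whole window. The same diagonal-positivity argument then applies with $\bK\approx\beta^{-1/2}\bZ_0^\top\bZ_0$.
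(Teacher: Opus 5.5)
Your outline follows the paper's own proof. It uses the same diagonal formula $[\dot\bX(0)\bZ(0)]_{ii}=\sigma_i\bK_{ii}$, the same $\lambda_{\max}$ bound on $\bK_{ii}$, the same Gaussian event, then a first-order expansion in $t$ and an upper bound for activity. The gap is Step 3, and the bootstrap you propose cannot close it. In particular, the parenthetical in Step 5 is false: $\tau\asymp\sigma_r/(\sigma_1\sqrt r)$ is not the scale at which the first-order expansion stays valid. The paper's proof has the same hole: it writes $\bZ(\tau)=\bZ(0)+O(\tau^2)$ and a remainder $O(\tau^2\gamma)$ ``for sufficiently small $\tau$''. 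The hidden constant there is of order $\sigma_1/\sqrt\beta$, exactly as in your Step 2, while $\tau$ is a fixed constant.

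Step 5 needs $\beta\lesssim\sigma_1^2\gamma^2 r$, and in that regime $\bZ$ does not stay near $\bZ_0$. Once $t\gg\sqrt\beta/\sigma_r$, the map $\cT_\beta$ in $\dot\bZ$ is saturated and $\bZ$ moves at essentially unit speed. This is explicit in the frozen-residual model ($\bE\equiv-\bSigma$, $\bZ_\perp$ ignored) at $\beta=0$. That is the relevant limit, since downstream $\beta=O(\varepsilon^3)\ll\gamma^2$. Set $\mathbf{O}\coloneqq\cT(\bSigma\bZ_0^\top)$ and $\mathbf{P}\coloneqq(\bZ_0\bSigma^2\bZ_0^\top)^{1/2}$, so that $\bSigma\bZ_0^\top=\mathbf{O}\mathbf{P}$. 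Then the exact solution is
\begin{equation*}
\bX(t)=t\,\mathbf{O},\qquad \bZ(t)=\bZ_0+t\,\mathbf{O}^\top,\qquad \bG(t)=t\,\bSigma\bK+t^2\bI,
\end{equation*}
with $\bK$ your matrix at $\beta=0$. This holds because $\bSigma\bZ(t)^\top=\mathbf{O}(\mathbf{P}+t\,\mathbf{O}^\top\bSigma\mathbf{O})$ and $\bX(t)^\top\bSigma=t(\mathbf{O}^\top\bSigma\mathbf{O})\mathbf{O}^\top$ have polar factors $\mathbf{O}$ and $\mathbf{O}^\top$.

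Consequences for your argument:
\begin{itemize}
\item $\norm{\bB(t)-\bB(0)}_2\approx t$, so your bootstrap hypothesis $\norm{\bB(t)-\bB(0)}_2\le\gamma/4$ fails at $t\approx\gamma/4\ll\tau$.
\item The discarded term $t^2\bI$ dominates your main term $t\sigma_i\bK_{ii}=O(t\gamma)$ on $[\gamma,\tau]$.
\item The fallback $\gamma\ll\sqrt\beta$ does not help. Keeping $\cT_\beta$ linear up to $\tau$ forces $\tau\sigma_1\ll\sqrt\beta$, i.e.\ $\beta$ of order one.
\item Even then it only gives $d_{\min}(\tau)\gtrsim\tau\gamma^2 r\sigma_r/\sqrt\beta\ll\tau\gamma$. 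This is also what the ``modified constant'' in Step 5 hides. It is not the claimed $d_0\asymp\tau\gamma$ and is incompatible with the scaling used later.
\end{itemize}

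The missing idea is that the drift of $\bZ$ is not an error to be made small. It is aligned with $\bX$ and adds a nonnegative term to the diagonal: $d_i(t)=t\sigma_i\bK_{ii}+t^2$ in the model above. So the bound $d_{\min}(\tau)\gtrsim\tau\gamma$ is still true, but you cannot prove it by freezing $\bZ$ at $\bZ_0$. Instead you need one of the following:
\begin{itemize}
\item Compare the true flow with this aligned solution on all of $[0,\tau]$, controlling $\bE+\bSigma=O(\tau^2+\tau\gamma\sqrt r)$, the $\bZ_\perp$ terms, and the $\beta$-smoothing.
\item Lower-bound $\dot d_i=[\dot\bX\bZ]_{ii}+[\bX\dot\bZ]_{ii}$ along the whole window.
\end{itemize}

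The same correction breaks Step 6 as written. There $d_i(\tau)\approx\tau^2+O(\tau\gamma\sqrt r)$, not $O(\tau\gamma)$. Showing all modes are active therefore requires choosing the implied constant in $\tau$ so that $\tau^2+C\tau\gamma\sqrt r<\sigma_r-\varepsilon$. That does not follow from the hypothesis $\tau\gamma<\sigma_r-\varepsilon$.
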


By \Cref{lem:initial_growth_main}, we are now able to analyze the training dynamics starting at $t = \tau$, where the active set equals $[r]$. Since this regime corresponds to the early training, focusing on $t \ge \tau$ incurs no loss of generality in characterizing the overall training behaviors; see \Cref{lem:initial_growth_app} for details. 

Similar to time $\tau$, we define a termination time as $T := \inf\{t \ge \tau : \cI_\varepsilon(t) = \emptyset\}$. Note that if SpecGF converges to a global minimum (cf. \Cref{prop:main_conv_rate}), the termination time $T$ is necessarily finite.

We prove uniform growth through the following steps:
\begin{itemize}[leftmargin=4mm,itemsep=2pt,topsep=0pt,partopsep=0pt]
    \item \textbf{Alignment from Small Initialization} (\Cref{prop:alignment_main}). Under Gaussian initialization with small $\gamma$, the $\delta$-alignment (near-diagonal) holds throughout training for $\delta = O(\gamma)$.
    \item \textbf{Square-Root Dynamics} (\Cref{lem:bootstrap_main} and \Cref{lem:sqrt_dynamics_main}). In the near-diagonal regime with $d_{\min}(t) > 0$, the square-root coordinates $s_i(t) = \sqrt{d_i(t)}$ evolve at approximately unit rate: $\dot{s}_i (t) \approx 1$.
    \item \textbf{Uniform Growth} (\Cref{thm:uniform_growth_main}). All active modes grow at the same rate in $\sqrt{\cdot}$-coordinates, leading to the ``equal-rate'' learning.
\end{itemize}

\subsubsection{Alignment from Small Initialization}

A key insight is that under small initialization, the trajectory satisfies $\delta$-alignment throughout training with $\delta = O(\gamma)$.

\begin{proposition}[High-probability alignment]\label{prop:alignment_main}
Under Gaussian initialization $\bA(0) = \mathbf{0}$, $\bB(0) = \gamma \bN$ with $\bN_{ij} \stackrel{\mathrm{i.i.d.}}{\sim} \mathcal{N}(0,1)$, the following holds on $[0, T]$ with probability at least $1 - e^{-cr^2} - e^{-crn}$ for some absolute constant $c > 0$,
{
\setlength{\abovedisplayskip}{6pt}
\setlength{\belowdisplayskip}{4pt}
\begin{align*}
    \sup_{t \le T} \normF{\Off(\bG(t))} = O(\gamma), ~~
    \sup_{t \le T} \normF{\bX(t)\bZ_\perp(t)} = O(\gamma).
\end{align*}
}
\end{proposition}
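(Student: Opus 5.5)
We will prove Proposition~\ref{prop:alignment_main} by working in the core coordinates. Because $\bA(t)=\bU_r\bX(t)$, the flow \eqref{eq:AB_update} with $\cT_\beta$ reduces to
\begin{align*}
\dot\bX &= -\cT_\beta\big((\bG-\bSigma)\bZ^\top + \bX\bZ_\perp\bZ_\perp^\top\big),\\
[\dot\bZ \mid \dot\bZ_\perp] &= -\cT_\beta\big(\bX^\top[\bG-\bSigma \mid \bX\bZ_\perp]\big).
\end{align*}
This uses the fact that $\cT_\beta$ is equivariant under left and right orthogonal rotations. The proof has three steps: control the dynamics near initialization, show that off-diagonal and perpendicular errors cannot grow faster than the diagonal signal, and close the argument with a continuous-induction (bootstrap) on $[0,T]$.

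\textbf{Step 1 (initial phase and probability).} On $[0,\tau]$ we have $\bX(0)=\mathbf 0$ and $\bB(0)=\gamma\bN$. The $\bX$-equation gives
\begin{equation*}
\dot\bX \approx \cT_\beta(\bSigma\,\gamma\bN\bV_r{}^{\!\top}) \approx \gamma\,\bSigma\bZ(0)^\top/\sqrt\beta
\end{equation*}
to leading order, because the argument has size $O(\gamma)\ll\sqrt\beta$. Hence $\bX(t)\approx t\gamma\bSigma\bZ(0)^\top/\sqrt\beta$ up to lower-order terms, while $\bZ$ and $\bZ_\perp$ move by only $O(t\cdot t\gamma)$. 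Consequently
\begin{equation*}
\bG\approx \tfrac{t\gamma}{\sqrt\beta}\,\bSigma\bZ(0)^\top\bZ(0)=\tfrac{t\gamma^3}{\sqrt\beta}\,\bSigma\bN_1^\top\bN_1,
\end{equation*}
where $\bN_1=\bN\bV_r$ is an $r\times r$ Gaussian matrix and $\bN_2=\bN\bV_\perp$ is an $(n-r)\times r$-type Gaussian block independent of $\bN_1$. For the probability statement:
\begin{itemize}
\item For $\bN_1$, Gaussian concentration for Wishart matrices gives $\normF{\bN_1^\top\bN_1}$ and $\sigma_{\min}(\bN_1)^{-1}$ controlled with probability $1-e^{-cr^2}$.
\item For $\bN_2$, $\normF{\bN_2}\lesssim\sqrt{rn}$ holds with probability $1-e^{-crn}$.
\end{itemize}
Under these events, and using the scale of $\tau$ from Lemma~\ref{lem:initial_growth_main}, we get $\normF{\Off(\bG(\tau))}=O(\gamma)$ and $\normF{\bX(\tau)\bZ_\perp(\tau)}=O(\gamma)$. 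The off-diagonal part of $\bG$ is of the same order as $d_0\asymp\tau\gamma$, hence $O(\gamma)$. The term $\bX\bZ_\perp$ is $O(\tau\gamma\cdot\gamma\sqrt{n})$, which we absorb into $O(\gamma)$ because the constants may depend on $n$ and $\bSigma$.

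\textbf{Step 2 (error dynamics on $[\tau,T]$).} Define
\begin{equation*}
\Phi(t):=\normF{\Off(\bG)}^2+\normF{\bX\bZ_\perp}^2.
\end{equation*}
Then $\dot\bG=\dot\bX\bZ+\bX\dot\bZ$, and we expand $\dot\Phi$. The plan is to show that, on the bootstrap event $\{\Phi\le C\gamma^2,\ d_{\min}\ge d_0\}$, the residual $\bR=(\bG-\bSigma\mid \bX\bZ_\perp)$ is near-diagonal with strictly negative diagonal on active modes. The orthogonalized update then acts to leading order like a signed polar factor aligned with the diagonal. As a result, the off-diagonal and perpendicular components receive a restoring contribution $-\kappa\,\Phi$ with $\kappa\gtrsim \min_i d_i/\normF{\bX}\normF{\bZ}>0$, plus forcing terms of size $O(\Phi^{1/2}\cdot\text{stuff})$. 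For the perpendicular block specifically, $\dot\bZ_\perp=-[\cT_\beta(\bX^\top\bR)]_\perp$ pushes $\bZ_\perp$ toward $\mathbf 0$ along the direction of $\bX^\top\bX\bZ_\perp$. A Grönwall or comparison argument then yields $\Phi(t)\le \max\{\Phi(\tau),\,O(\gamma^2)\}$. Continuity of the flow (guaranteed by analyticity, Corollary~\ref{cor:T_analytic}) closes the bootstrap on all of $[\tau,T]$.

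\textbf{Main obstacle.} The hardest step is Step 2, namely making the \emph{restoring} effect of $\cT_\beta$ quantitative. Orthogonalization is nonlinear, and it mixes off-diagonal errors with diagonal errors of very different magnitudes $|e_i|$. When some modes are nearly converged while others are still far from their targets, the polar factor of a near-diagonal matrix with widely disparate diagonal entries can amplify off-diagonal perturbations by factors of order $1/\min_i|e_i|$. To handle this, we would first use the Daleckii--Krein (divided-difference) formula for the derivative of $\bM\mapsto(\bM\bM^\top+\beta\bI)^{-1/2}\bM$ around a diagonal matrix. The smoothing $\beta$ bounds the divided differences by $O(\beta^{-1/2})$, which keeps the amplification finite and absorbs it into the $O(\gamma)$ constant.
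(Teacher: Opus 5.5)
Your Step~2 carries the whole argument, and the mechanism it invokes is not present in the flow.

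\textbf{The inequality for $\Phi$ does not close.} $\Phi=\normF{\Off(\bG)}^2+\normF{\bX\bZ_\perp}^2$ does not satisfy a closed differential inequality. The reason is that $\frac{\rd}{\rd t}\Off(\bG)=\Off(\dot\bX\bZ+\bX\dot\bZ)$ is driven by the off-diagonal structure of the factors, not by $\Off(\bG)$. Already at $t=0$ you have $\Phi=0$ but $\Off(\dot\bX(0)\bZ(0))\neq\mathbf 0$.

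\textbf{There is no damping to exploit.} Work in the regime where the paper uses this proposition, $\sqrt\beta\ll\gamma$ (e.g.\ $\beta=O(\varepsilon^3)$, $\gamma=\Theta(\varepsilon^{1/2})$). Your Step~1 linearization $\cT_\beta\approx\beta^{-1/2}\,\mathrm{Id}$ assumes the opposite, $\gamma\ll\sqrt\beta$, which is incompatible with the $\beta=O(\gamma^2)$ behind $d_0\asymp\tau\gamma$. In this regime, while the residual is still $\approx-\bSigma$, the flow is to leading order
\begin{equation*}
\bX(t)\approx t\bQ_0,\qquad \bZ(t)\approx\bZ(0)+t\bQ_0^\top,
\end{equation*}
where $\bSigma\bZ(0)^\top=\bQ_0\mathbf P_0$ is the polar decomposition. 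To check this:
\begin{itemize}
\item $\bSigma(\bZ(0)^\top+t\bQ_0)=\bQ_0(\mathbf P_0+t\bQ_0^\top\bSigma\bQ_0)$ with a positive definite bracket, so the polar factor driving $\dot\bX$ stays $\bQ_0$;
\item $\cT_\beta(t\bQ_0^\top\bSigma)\approx\bQ_0^\top$, which drives $\dot\bZ$.
\end{itemize}
Hence $\bG(t)\approx t^2\bI+t\,\bQ_0\bZ(0)$ and $\frac{\rd}{\rd t}\Off(\bG)\approx\Off(\bQ_0\bZ(0))$. This is a constant $O(\gamma)$ forcing with no term proportional to $\Off(\bG)$, so $\Phi$ simply grows like $t^2\gamma^2$ and the restoring term $-\kappa\Phi$ does not exist.

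\textbf{The factors are not near-diagonal in your coordinates.} $\bX\approx t\bQ_0$ is far from diagonal, so ``the update acts like a signed polar factor aligned with the diagonal'' fails. It holds only after the gauge change $(\bX,\bZ,\bZ_\perp)\mapsto(\bX\bQ_0^\top,\bQ_0\bZ,\bQ_0\bZ_\perp)$, which leaves $\bG$ and $\bX\bZ_\perp$ unchanged.

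\textbf{What is actually missing.} Everything rests on bounding your ``stuff'' by $O(\gamma)$ on all of $[0,T]$. That requires controlling $\bX$ and $\bZ$ individually, which the proposal never does. It also never closes the $d_{\min}\ge d_0$ half of its bootstrap.

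\textbf{Step~1.} In the relevant regime $\bZ$ moves at unit speed, not by $O(t^2\gamma)$, and your formula for $\bG$ carries a spurious factor of $\gamma$. Its conclusion $\normF{\Off(\bG(\tau))}=O(\tau\gamma)$ is nonetheless correct.

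\textbf{The paper's argument needs no contraction.} The set $\cM=\{\bX,\bZ\text{ diagonal},\ \bZ_\perp=\mathbf 0\}$ is invariant: on $\cM$ both core gradients are diagonal, and $\cT_\beta$ preserves this.
\begin{itemize}
\item One runs the reference flow from the projection $\Pi(\bS(0))$, on which $\Off(\bG^\star)\equiv\mathbf 0$ and $\bZ_\perp^\star\equiv\mathbf 0$.
\item Finite-horizon Gr\"onwall is applied to the deviation of the \emph{full} state: $\normF{\bS(t)-\bS^\star(t)}\le e^{LT}\big(\normF{\Off(\bZ(0))}+\normF{\bZ_\perp(0)}\big)$.
\item The $\chi^2_{r(r-1)}$ and $\chi^2_{r(n-r)}$ tails give the probability $1-e^{-cr^2}-e^{-crn}$.
\item $\bG-\bG^\star=(\bX-\bX^\star)\bZ+\bX^\star(\bZ-\bZ^\star)$, together with $\norm{\bX(t)}_2\le t$, finishes.
\end{itemize}
Working with a full-state deviation is what replaces the forcing bound your $\Phi$-argument lacks. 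Linear-in-$t$ growth is harmless here because the constants may depend on $T$.

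Your amplification worry is legitimate. In the paper it is simply absorbed into $e^{LT}$ with $L=L(\cK)$, which depends on $\beta$ (recall $D\cT_\beta(\mathbf 0)=\beta^{-1/2}\,\mathrm{Id}$), so the resulting $O(\gamma)$ is uniform only for fixed $\beta$ and $T$.

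To repair your argument, drop the restoring claim and run Gr\"onwall on such a full-state deviation rather than on $\Phi$. Separately, $\sigma_{\min}(\bN_1)^{-1}$ cannot be controlled at failure probability $e^{-cr^2}$ at any useful level, but neither argument needs it.
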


This proposition mitigates the effect of off-diagonal interactions, which allows us to simplify the subsequent analysis. In particular, given a sufficiently small initialization scale $\gamma > 0$, we can directly use the diagonal entries $d_i(t)$ as surrogates of the singular values of $\bA(t)\bB(t)$ by \Cref{lem:diagonal_approx_main}. See \Cref{prop:alignment_O_gamma_app} for the proof of \Cref{prop:alignment_main}.

\subsubsection{Square-Root Dynamics}

Our analysis of uniform growth requires that $d_{\min}(t)$ remain bounded away from $0$ along the trajectory. The following lemma guarantees the lower bound for all $t \ge \tau$.

\begin{lemma}[Persistence of Non-degeneracy]\label{lem:bootstrap_main}
For a sufficiently small tolerance such that $\varepsilon \ll \sigma_r$, $d_{\min}(t) \ge d_0 > 0$ holds for all $t \in [\tau, T]$.
\end{lemma}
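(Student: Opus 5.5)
We argue by a continuity (barrier) argument on $[\tau,T]$. By \Cref{lem:initial_growth_main}, $d_{\min}(\tau)\ge d_0$ and all modes satisfy $0<d_i(\tau)<\sigma_i$. Let $t^\ast:=\inf\{t\in[\tau,T]: d_{\min}(t)<d_0\}$ and suppose $t^\ast<T$. By continuity of the analytic flow, some index $i$ has $d_i(t^\ast)=d_0$ while $d_j(t^\ast)\ge d_0$ for all $j$. To reach a contradiction, it suffices to show $\dot d_i(t^\ast)>0$; more robustly, that $\dot d_i>0$ whenever $d_i$ lies in a small band $[d_0,2d_0]$. Throughout we work on the high-probability event of \Cref{prop:alignment_main}, so $\normF{\Off(\bG)}=O(\gamma)$ and $\normF{\bX\bZ_\perp}=O(\gamma)$ on $[0,T]$. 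Since $d_0\asymp\tau\gamma$ and $\tau\asymp\sigma_r/(\sigma_1\sqrt r)$ is a fixed constant, these off-diagonal errors are of the same order as $d_0$. The constants therefore have to be tracked, and we may need to shrink the hidden constant in $\delta=O(\gamma)$ relative to $d_0$. The alignment proof bounds off-diagonal terms by their initial size, which is $O(\gamma\cdot\gamma)$ for $\bX\bZ$ at $\tau$, so this should be available.

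\textbf{Derivative of $d_i$.} We compute $\dot d_i=[\dot\bX\bZ+\bX\dot\bZ]_{ii}$. In core variables, $\dot\bX=-\cT_\beta\big((\bG-\bSigma)\bZ^\top+\bX\bZ_\perp\bZ_\perp^\top\big)$, using that $\bU_r^\top\cT_\beta(\bU_r\bM)=\cT_\beta(\bM)$, and $\dot\bZ$ is the $\bV_r$-block of $-\cT_\beta(\bX^\top(\bA\bB-\bY)\text{-type})$. Write $\bE:=\bG-\bSigma=\Diag(e)+\Off(\bG)$, where for the mode in question $e_i=d_0-\sigma_i\le-(\sigma_r-\varepsilon)/2<0$, since $d_0\ll\sigma_r$. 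The leading contribution is $[\cT_\beta(-\Diag(e)\bZ^\top)\bZ]_{ii}+[\bX\cT_\beta(-\bX^\top\Diag(e))]_{ii}$.

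\textbf{Sign of the leading term.} The idea is that $\cT_\beta(\bM)\bM^\top=(\bM\bM^\top+\beta\bI)^{-1/2}\bM\bM^\top\succeq0$. Hence $-[\cT_\beta(\bE\bZ^\top)\bZ]_{ii}$ can be rewritten as $(-e_i)^{-1}\big[\cT_\beta(\bM)\bM^\top\big]_{ii}\ge0$ up to cross terms involving the other $e_j$ and $\Off(\bG)$. To make this clean we first decompose $\bZ\bZ^\top$ and $\bX^\top\bX$ using the near-balancedness inherited from small initialization ($\bX^\top\bX-\bZ\bZ^\top-\bZ_\perp\bZ_\perp^\top$ stays $O(\gamma^2)$-controlled). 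We then lower bound the diagonal entry of the PSD matrix $(\bM\bM^\top+\beta\bI)^{-1/2}\bM\bM^\top$ by $c\,\normF{\text{row}_i\bM}^2/\sqrt{\norm{\bM}_2^2+\beta}$. Row $i$ of $\bE\bZ^\top$ has norm at least $|e_i|\,\|\bz_i\|-O(\gamma)$, and $\|\bz_i\|^2\gtrsim d_i\ge d_0$ by balancedness. This gives $\dot d_i\ge c\,|e_i|\,d_0/\sqrt{\sigma_1^2+\beta}-C\gamma\cdot(\text{stuff})$.

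\textbf{Main obstacle.} The key step is showing that the positive term, of order $\sigma_r d_0\asymp\sigma_r\tau\gamma$, dominates the perturbations from $\Off(\bG)$ and $\bX\bZ_\perp$. These perturbations are also $O(\gamma)$, but they enter multiplied by a small factor: each appears paired with row $i$ of $\bZ$ or column $i$ of $\bX$, whose norm is $O(\sqrt{d_i})$. The perturbation is therefore $O(\gamma\sqrt{d_0})=O(\gamma^{3/2})\ll\gamma$. If this scaling argument for the cross terms (via Lipschitz bounds on $\cT_\beta$ through its spectral representation, \Cref{cor:T_analytic}) goes through, then $\dot d_i>0$ at $t^\ast$, contradicting the definition of $t^\ast$, and hence $d_{\min}\ge d_0$ on $[\tau,T]$. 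The condition $\varepsilon\ll\sigma_r$ enters only to guarantee $|e_i|\gtrsim\sigma_r$ at the barrier.
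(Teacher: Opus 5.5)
Your barrier set-up matches the paper's, but the decisive step, $\dot d_i>0$ at the barrier once off-diagonal terms are present, is left conditional, and the ingredients you propose for it do not hold.

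(i) $\Off(\bG)$ is not $O(\gamma^2)$ at $\tau$. The alignment proof bounds the distance of $(\bX,\bZ,\bZ_\perp)$ to the diagonal reference flow by $e^{LT}$ times the initial discrepancy $\normF{\Off(\bZ(0))}+\normF{\bZ_\perp(0)}$, which is first order in $\gamma$. The paper's own expansion also gives $\bG(\tau)\approx\tau\dot\bX(0)\bZ(0)=\tau\gamma^2\bSigma\bM^\top(\gamma^2\bM\bSigma^2\bM^\top+\beta\bI)^{-1/2}\bM$, with $\bM=\bN\bV_r$ as in \Cref{lem:initial_growth_app}. This is a dense matrix; for nearly equal $\sigma_i$ and $\beta\to0$ it is $\approx\tau\gamma(\bM^\top\bM)^{1/2}$, whose off-diagonal mass is at least of order $d_0$. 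So no constant can be shrunk relative to $d_0$.

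(ii) $\bX^\top\bX-\bZ\bZ^\top-\bZ_\perp\bZ_\perp^\top=\bA^\top\bA-\bB\bB^\top$ is not conserved by SpecGF (\Cref{prop:AAT-BTB_not_inv}). Even on the reference flow only $|x_i^\star-z_i^\star|=O(\gamma)$ is available, so the imbalance is of order $\gamma$ once modes have grown. You do not actually need balancedness. In the idealized case ($\bE$ diagonal, $\bZ_\perp=\mathbf{0}$), write $\bx_i^\top$ for row $i$ of $\bX$ and $\bz_i$ for column $i$ of $\bZ$. Then $[\dot\bX\bZ]_{ii}=|e_i|\,\bz_i^\top\bQ\bz_i$ and $[\bX\dot\bZ]_{ii}=|e_i|\,\bx_i^\top\bK\bx_i$, with $\bQ=(\bZ\bE^2\bZ^\top+\beta\bI)^{-1/2}$ and $\bK=(\bX^\top\bE^2\bX+\beta\bI)^{-1/2}$. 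AM--GM together with $d_i\le\|\bx_i\|\|\bz_i\|$ gives $\dot d_i\gtrsim|e_i|d_i$.

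(iii) Most seriously, Lipschitz bounds on $\cT_\beta$ cannot deliver your $O(\gamma\sqrt{d_0})$ cross-term estimate. The sensitivity of $\cT_\beta$ is governed by the smallest singular values of its argument; already $D\cT_\beta(0)=\beta^{-1/2}\mathrm{Id}$. At the barrier, row $i$ of $\bE\bZ^\top$ has size only $\approx|e_i|\|\bz_i\|\asymp\sqrt{d_0}$, and rows of converged modes ($|e_j|\le\varepsilon\ll\gamma$) are only $O(\gamma)$. The resulting local constant $\gtrsim d_0^{-1/2}$ cancels the factor $\sqrt{d_0}$ you gain by pairing with $\bz_i$, leaving an $O(\gamma)$ error. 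Your lower bound $c|e_i|d_0\asymp\tau\gamma$ does not dominate that error. The bound is also lossy: it uses $\norm{\bE\bZ^\top}_2^2$ in place of the eigenvalue $\approx e_i^2d_i$ relevant to row $i$, whereas the true main term is $\approx2\sqrt{d_i}$.

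The paper does not redo this computation. Set $\underline{d}=\min\{d_0,\sigma_r-\varepsilon\}$ and $T_{\mathrm{exit}}=\inf\{t>\tau:d_{\min}(t)<\underline{d}\}$. On $[\tau,T_{\mathrm{exit}})$ the hypothesis of the square-root dynamics lemma (\Cref{lem:sqrt_dynamics_main}, proved as \Cref{lem:sqrt_dynamics_app}) holds by definition of the exit time, so applying it there is not circular. That lemma gives $\dot s_i=1+O(\varepsilon^{1/4})>0$ for every mode with $e_i\le-\varepsilon$. A mode touching the barrier has $e_i\le\underline{d}-\sigma_r\le-\varepsilon$, so $d_i=s_i^2$ is increasing at that moment, which is the contradiction.

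That appeal is the missing step. All the perturbative control of $\cT_\beta$ you were trying to rebuild is delegated to the estimates \eqref{eq:Xdot_diag}--\eqref{eq:Zdot_diag} inside that lemma. If you want a self-contained argument instead, you need a structured, row-wise perturbation bound for $\cT_\beta$ rather than a global Lipschitz bound, paired with the sharp main term $\approx2\sqrt{d_i}$.
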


See \Cref{lem:bootstrap_persistence_app} for details. Given $d_{\min}(t) > 0$, we can analyze the evolution of $s_i(t) = \sqrt{d_i(t)}$.

\begin{lemma}[Square-root dynamics]\label{lem:sqrt_dynamics_main}
Suppose $d_{\min}(t) \ge d_0 > 0$ and $\beta = O(\varepsilon^3)$. Then for each active mode $i \in \cI_\varepsilon(t)$ with $e_i(t) \le -\varepsilon$, the square-root coordinate $s_i(t) = \sqrt{d_i(t)}$ satisfies $\dot{s}_i(t) = 1 + O(\varepsilon^{1/4})$.
\end{lemma}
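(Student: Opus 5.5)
The plan is to compute $\dot d_i$ directly from the core dynamics and show that $\dot d_i(t) = 2\sqrt{d_i(t)}\,(1+O(\varepsilon^{1/4}))$. The chain rule $\dot s_i = \dot d_i/(2s_i)$ then gives the claim, and $s_i\ge\sqrt{d_0}>0$ keeps the division harmless.

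\textbf{Step 1: reduced equations.} Since $\bA(t)=\bU_r\bX(t)$ and $\cT_\beta$ is orthogonally equivariant, the core variables satisfy
\begin{align*}
\dot\bX &= -\cT_\beta\big((\bG-\bSigma)\bZ^\top + \bX\bZ_\perp\bZ_\perp^\top\big),\\
\dot{[\bZ\mid\bZ_\perp]} &= -\cT_\beta\big(\bX^\top[\bG-\bSigma \mid \bX\bZ_\perp]\big).
\end{align*}
By \Cref{prop:alignment_main}, $\bG-\bSigma = \Diag(e) + O(\gamma)$ and $\bX\bZ_\perp=O(\gamma)$ in Frobenius norm. The gradients are therefore $\Diag(e)\bZ^\top + O(\gamma)$ and $\bX^\top[\Diag(e)\mid 0] + O(\gamma)$.

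\textbf{Step 2: approximate balance.} Next I would show that the factors are approximately balanced: $\bX \approx \bD^{1/2}\bR$ and $\bZ\approx\bR^\top\bD^{1/2}$ with $\bD=\Diag(d)$ and $\bR$ orthogonal, up to errors vanishing with $\gamma$. This is the matrix analogue of $|a-b|\to0$ in \Cref{thm:main_sync_rk1}. The argument would be that $\bA(0)=\mathbf 0$ and $\bB(0)=O(\gamma)$, and that along the flow $\tfrac{\rd}{\rd t}(\bX^\top\bX-\bZ\bZ^\top)$ is controlled. Indeed, the two orthogonalized updates have the same polar structure up to transposition once $\bG$ is near-diagonal. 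I would make this precise with a Gr\"onwall bound on $\normF{\bX^\top\bX-\bZ\bZ^\top}$.

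\textbf{Step 3: compute $\dot d_i$.} Substituting the balanced form, the $\bX$-gradient becomes $\Diag(e)\bD^{1/2}\bR+O(\gamma)$. Its $\cT_\beta$ image is
\[
\Diag\!\Big(\tfrac{e_j\sqrt{d_j}}{\sqrt{e_j^2d_j+\beta}}\Big)\bR + \text{(perturbation)}.
\]
For the active mode $i$ with $e_i\le-\varepsilon$, the $i$-th factor is $-1+O(\beta/(\varepsilon^2 d_0)) = -1+O(\varepsilon)$ when $\beta=O(\varepsilon^3)$. The row structure then gives $[\dot\bX\bZ]_{ii}\approx(\bR\bR^\top\bD^{1/2})_{ii}=\sqrt{d_i}$. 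Symmetrically, $[\bX\dot\bZ]_{ii}\approx\sqrt{d_i}$. The remaining term involves $\bX\,\dot{\bZ}_\perp$, which is $O(\gamma)$ because the $\bZ_\perp$-gradient $\bX^\top\bX\bZ_\perp$ is $O(\gamma)$. Adding the pieces gives $\dot d_i\approx 2\sqrt{d_i}$.

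\textbf{Main obstacle: perturbation of $\cT_\beta$.} I expect the hard step to be controlling the perturbation of $\cT_\beta$ in Step 3. The map $\bM\mapsto\cT_\beta(\bM)$ has Lipschitz constant of order $\beta^{-1/2}$ near small singular values. Other modes $j$ may be inactive ($|e_j|\le\varepsilon$), so the gradient has singular values as small as $O(\varepsilon)$. The $O(\gamma)$ off-diagonal errors could then rotate the polar factor and leak into row $i$. I would first try a block perturbation bound (Davis--Kahan/Wedin type) on the row space of mode $i$. I would split the spectrum at the threshold $|e_j|\sqrt{d_j}\gtrless\varepsilon^{1/2}$ and bound leakage from the two blocks separately: the gap-dependent term is of order $\gamma/\varepsilon^{1/2}$ and the saturation term is of order $\varepsilon^{1/4}$. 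Optimizing the threshold should produce the $O(\varepsilon^{1/4})$ error. This requires $\gamma$ small relative to a power of $\varepsilon$, which I would absorb into the standing small-initialization assumption.
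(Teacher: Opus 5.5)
\paragraph{Genuine gap: the $\cT_\beta$ perturbation step.} Your Step~1 reduction and the leading-order computation in Step~3 are right. But the step you flag as the main obstacle is a genuine gap, and the threshold/Wedin plan does not give the order you claim.

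Write $\omega_j:=|e_j|\sqrt{d_j}$ for the diagonal of the rotated $\bX$-gradient. To first order (Daleckii--Krein), an off-diagonal entry of size $\delta$ in position $(i,l)$, coming from $\Off(\bG)$, tilts row $i$ of $\cT_\beta(\cdot)$ by an angle of order $\delta/(\omega_i+\omega_l)$. Take an active mode at the edge, $e_i\approx-\varepsilon$, paired with any inactive mode, $|e_l|\le\varepsilon$. Then the denominator is $O(\varepsilon)$. It is not a spectral gap, so no threshold split can improve it. The honest size of your ``gap term'' is therefore $\delta/\varepsilon\asymp\gamma/\varepsilon$, not $\gamma/\varepsilon^{1/2}$.

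Beyond first order the estimate simply fails. Take
\begin{itemize}
\item $\bZ_\perp=\mathbf 0$ and $\bZ=\bD^{1/2}$;
\item $\bX=\bD^{1/2}+(\delta/\sqrt{d_l})\,\mathbf{e}_i\mathbf{e}_l^\top$, so that $\bG=\bD+\delta\,\mathbf{e}_i\mathbf{e}_l^\top$;
\item $e_i=-\varepsilon$ and $e_l=-\varepsilon/2$.
\end{itemize}
This satisfies $\delta$-alignment, $d_{\min}\ge d_0$, and your balance ansatz up to $O(\delta)$. The $(i,l)$ block of $\bE\bZ^\top$ is
\[
\begin{pmatrix}-\varepsilon\sqrt{d_i} & \delta\sqrt{d_l}\\ 0 & -\tfrac{\varepsilon}{2}\sqrt{d_l}\end{pmatrix}.
\]
For $\delta\gg\varepsilon$ its top singular pair is approximately $(\mathbf{e}_i,\mathbf{e}_l)$. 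Hence $[\dot\bX\bZ]_{ii}=[\dot\bX]_{ii}\sqrt{d_i}=O(\varepsilon/\delta)$, and similarly $[\bX\dot\bZ]_{ii}=O(\varepsilon/\delta+\delta)$. Both are far below $\sqrt{d_i}$.

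So the lemma needs an explicit hypothesis such as $\delta=o(\varepsilon)$, with $\delta/\varepsilon$ small enough that the resulting error is $O(\varepsilon^{1/4})$. That is incompatible with the paper's scaling $\delta\asymp\gamma\asymp\varepsilon^{1/2}$, so it cannot be absorbed into ``small initialization.'' Once you assume it, the first-order bound already handles $\cT_\beta$ with no thresholding: the sensitivity of row $i$ is $\lesssim 1/\omega_i$, whatever the other modes do.

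For comparison, the paper's proof never confronts this sensitivity. It substitutes near-diagonal factors into $\cT_\beta$ and records the error as $O(\delta+\gamma)$. Its $\varepsilon^{1/4}$ comes from dividing $O(\gamma)$ absolute errors by $2s_i\ge2\sqrt{d_0}$, with $d_0\asymp\gamma\asymp\varepsilon^{1/2}$. That is exactly the division you call harmless, and it is harmless only when $d_0$ is of order one. At that $d_0$, your $\beta$-term is also $O(\varepsilon^{1/2})$, not $O(\varepsilon)$.

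\paragraph{Step~2 is not a routine Gr\"onwall bound.} It is also not independent of the obstacle above.
\begin{itemize}
\item $\bX^\top\bX-\bZ\bZ^\top$ is not conserved by SpecGF (\Cref{prop:AAT-BTB_not_inv}).
\item Even for one decoupled mode, $\tfrac{\rd}{\rd t}(a^2-b^2)\approx\tfrac{2}{a+b}(a^2-b^2)$. The Gram gap therefore grows like $(a-b)(a+b)$, at a rate of order $1/\gamma$ near the start.
\item In the coupled case, the mismatch between $\bX^\top\cT_\beta(\bE\bZ^\top)$ and $\cT_\beta(\bX^\top\bE)\bZ^\top$ is controlled only through the same $1/(\omega_i+\omega_l)$ sensitivity.
\end{itemize}
What is monotone is the factor gap $|a-b|$, and that is how the paper gets balance (\Cref{lem:off_diag_XZ}). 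It runs the flow from the diagonal projection of the initialization. That flow stays on the diagonal manifold, where each mode is a scalar system and $|x_i^\star-z_i^\star|$ is non-increasing, because $s\mapsto s/\sqrt{s^2+\beta}$ is increasing and $e_i^\star\le0$. The paper then transfers this to the true trajectory by finite-horizon Lipschitz stability. It concludes that $\bX,\bZ$ are near-diagonal with diagonals $\mathrm{sgn}(\xi_i)\sqrt{d_i}+O(\gamma+\gamma/\sqrt{d_0})$.

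Your ansatz with a general orthogonal $\bR$ is more flexible, and arguably more faithful. Since $\beta\ll\gamma^2$, $\dot\bX(0)=\cT_\beta(\gamma\bSigma\bM^\top)$, with $\bZ(0)=\gamma\bM$, is close to the polar factor of $\bSigma\bM^\top$, which is a generic orthogonal matrix. But then balance must come from an argument that copes with the $\cT_\beta$ sensitivity. It cannot come from a Lipschitz/Gr\"onwall constant that scales like $\beta^{-1/2}$.

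(Minor: $\dot d_i$ contains no $\bX\dot\bZ_\perp$ term. $\bZ_\perp$ enters only through the joint normalization in your Step~1, which is more careful than the paper's formula for $\dot\bZ$.)
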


$\cT_\beta$ decouples the speed from the current value $d_i$, unlike plain GF where $\dot{d}_i \propto e_i$; see \Cref{lem:sqrt_dynamics_app} for details. 

\subsubsection{Main Result: Uniform Growth}

Putting these ingredients together, we show that all active modes grow uniformly.

\begin{theorem}[Uniform growth]\label{thm:uniform_growth_main}
For a target tolerance such that $\varepsilon \ll \sigma_r$, consider SpecGF with $\cT_\beta$ where $\beta = O(\varepsilon^3)$ under initialization $\bA(0) = \mathbf{0}$, $\bB(0) = \gamma \bN$ where $\gamma = O(\varepsilon^{1/2})$ and $\bN_{ij} \overset{\text{i.i.d}}{\sim} \mathcal{N}(0, 1)$.
Then for all $t \in [\tau, T]$ and all active modes $i \in \cI_\varepsilon(t)$ with $e_i(t) \le -\varepsilon$, the following holds with probability at least $1 - e^{-cr}$ for some absolute constant $c > 0$: 
\begin{enumerate}[label=\textnormal{(\roman*)},leftmargin=4mm,itemsep=2pt,topsep=0pt,partopsep=0pt]
    \item \textbf{Approximate unit speed:} $\displaystyle \frac{\rd}{\rd t}\sqrt{d_i(t)} = 1 + O(\varepsilon^{1/4})$.
    \item \textbf{Uniform growth:} For any two active modes $i, j \in \cI_\varepsilon(t)$,
    \begin{align*}
    \abs*{\frac{\rd}{\rd t}\sqrt{d_i(t)} - \frac{\rd}{\rd t}\sqrt{d_j(t)}} = O(\varepsilon^{1/4}).
    \end{align*}
\end{enumerate}
\end{theorem}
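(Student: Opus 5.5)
The plan is to assemble \Cref{thm:uniform_growth_main} from the lemmas already stated, each applied on the high-probability event where the earlier results hold. The main case work is to check that the hypotheses of \Cref{lem:sqrt_dynamics_main} are met uniformly on $[\tau,T]$ under the stated parameter choices.

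\textbf{Step 1: a single good event.} Intersect the event of \Cref{lem:initial_growth_main} (probability $\ge 1-e^{-cr}$) with the event of \Cref{prop:alignment_main} (probability $\ge 1-e^{-cr^2}-e^{-crn}$). Since $r\le n$, after shrinking $c$ the union bound gives probability at least $1-e^{-cr}$. On this event:
\begin{itemize}[leftmargin=4mm,itemsep=2pt,topsep=0pt,partopsep=0pt]
\item $d_{\min}(\tau)\ge d_0\asymp\tau\gamma>0$ and $\cI_\varepsilon(\tau)=[r]$;
\item $\delta$-alignment holds on $[0,T]$ with $\delta=O(\gamma)=O(\varepsilon^{1/2})$.
\end{itemize}
Because $\varepsilon\ll\sigma_r$, \Cref{lem:bootstrap_main} then yields $d_{\min}(t)\ge d_0>0$ for all $t\in[\tau,T]$. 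In particular $s_i=\sqrt{d_i}$ is well-defined and differentiable there, since the flow is analytic by \Cref{cor:T_analytic}.

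\textbf{Step 2: part (i).} Fix $t\in[\tau,T]$ and an active mode $i$ with $e_i(t)\le-\varepsilon$. The hypotheses of \Cref{lem:sqrt_dynamics_main} hold: $d_{\min}(t)\ge d_0$ by Step 1, and $\beta=O(\varepsilon^3)$ by assumption. The lemma gives $\dot s_i(t)=1+O(\varepsilon^{1/4})$, which is (i).

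The point to check is that the $O(\varepsilon^{1/4})$ constant in that lemma absorbs the off-diagonal perturbations. Those perturbations are of size $\delta=O(\gamma)$, and the choice $\gamma=O(\varepsilon^{1/2})$ is what makes $\delta\lesssim\varepsilon^{1/2}\le\varepsilon^{1/4}$. So I would re-trace the proof of \Cref{lem:sqrt_dynamics_main} in this order:
\begin{itemize}[leftmargin=4mm,itemsep=2pt,topsep=0pt,partopsep=0pt]
\item Write $\dot d_i = [\dot\bX\bZ+\bX\dot\bZ]_{ii}$.
\item Use that, under near-diagonality, $\cT_\beta(\nabla_\bA\cL)$ and $\cT_\beta(\nabla_\bB\cL)$ are close to signed partial isometries aligned with the columns of $\bX$ and the rows of $\bZ$.
\item Conclude $\dot d_i\approx \|\bX_{:,i}\|+\|\bZ_{i,:}\|\approx 2\sqrt{d_i}$, i.e.\ $\dot s_i\approx1$ once the factors are balanced.
\item Bound the three error sources: the $\beta$-smoothing error, of order $\beta/|e_i|^2$ times gradient-scale factors; the off-diagonal leakage, $O(\delta/d_0)$; and the factor imbalance.
\end{itemize}

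\textbf{Step 3: part (ii).} Part (ii) then follows from (i) by the triangle inequality:
\[
\bigl|\dot s_i-\dot s_j\bigr|\le|\dot s_i-1|+|\dot s_j-1|=O(\varepsilon^{1/4}).
\]
This requires both modes to satisfy $e\le-\varepsilon$. Modes with $e_j>\varepsilon$ (overshoot) should be ruled out on $[\tau,T]$. I would do this by showing that once $e_j$ reaches $-\varepsilon$, the sign of the driving gradient on mode $j$ makes it leave $(-\infty,-\varepsilon]$ only into the band $|e_j|\le\varepsilon$. If that fails, I would restrict (ii) to modes with $e\le-\varepsilon$, consistent with the hypothesis in the statement.

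\textbf{Main obstacle.} I expect the hardest step to be the balancing $\|\bX_{:,i}\|\approx\|\bZ_{i,:}\|\approx\sqrt{d_i}$ uniformly in time, starting from the unbalanced LoRA initialization $\bA(0)=\mathbf0$. My first attempt would be a conservation-type argument: $\frac{\rd}{\rd t}(\|\bX_{:,i}\|^2-\|\bZ_{i,:}\|^2)$ is small because both factors move at nearly unit speed. This gives an imbalance of $O(\gamma)$ plus the mismatch accumulated during $[0,\tau]$. The accumulated mismatch is at most $O(\tau)$, which enters $\dot s_i$ only through a relative error of order $\gamma/s_i$. With $s_i\ge\sqrt{d_0}$ and $\gamma=O(\varepsilon^{1/2})$, this should give the $\varepsilon^{1/4}$ rate.
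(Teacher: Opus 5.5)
Your Steps 1--3 are exactly the paper's proof of \Cref{thm:uniform_growth_app}. The paper chains \Cref{lem:initial_growth_main}, \Cref{prop:alignment_main}, \Cref{lem:bootstrap_main} and \Cref{lem:sqrt_dynamics_main} to get (i) and then uses the same triangle inequality for (ii), so citing those lemmas is all that is needed.

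One caution about your optional balancing sketch: it would not work as written. The quantity $\|\bX_{:,i}\|^2-\|\bZ_{i,:}\|^2$ is the gradient-flow balancedness quantity, and SpecGF does not conserve it (\Cref{prop:AAT-BTB_not_inv}). At unit speeds its derivative is about $2(\|\bX_{:,i}\|-\|\bZ_{i,:}\|)$, so it is the first-order gap $x_i-z_i$ that stays nearly unchanged, not the squared one. The paper controls this gap in \Cref{lem:off_diag_XZ} through the diagonal reference trajectory, where $|x_i^\star-z_i^\star|\le\gamma|\xi_i|$ is non-increasing from $t=0$. Hence no separate $O(\tau)$ mismatch from $[0,\tau]$ ever arises. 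It also must not: at unit speeds $\dot s_i\approx\sqrt{1+(x_i-z_i)^2/(4d_i)}$, so an $O(\tau)$ gap with $\tau=\Theta(1)$ and $d_i\ge d_0\asymp\gamma$ would destroy the $\varepsilon^{1/4}$ rate rather than enter as $\gamma/s_i$.
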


\begin{corollary}[Smallest singular value learns first]\label{cor:smallest_first_main}
All $\sqrt{d_i(t)}$ trajectories are nearly parallel. Since all modes start near zero and $\sigma_r$ requires the least growth, the smallest singular value reaches its target first.
\end{corollary}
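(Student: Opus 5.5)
The corollary follows from Theorem~\ref{thm:uniform_growth_main} by integrating the square-root dynamics and comparing arrival times. Work on the high-probability event of Theorem~\ref{thm:uniform_growth_main}, with Lemma~\ref{lem:bootstrap_main} and Proposition~\ref{prop:alignment_main} in force. Set $s_i(t)=\sqrt{d_i(t)}$. Let $T_i:=\inf\{t\ge\tau: e_i(t)\ge-\varepsilon\}$ be the first time mode $i$ comes within $\varepsilon$ of its target from below; Lemma~\ref{lem:initial_growth_main} gives $d_i(\tau)<\sigma_i$, so this is the relevant crossing. On $[\tau,T_i)$ we have $e_i\le-\varepsilon$, so part (i) of Theorem~\ref{thm:uniform_growth_main} applies. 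Integrating gives
\[
s_i(t)=s_i(\tau)+(t-\tau)\bigl(1+O(\varepsilon^{1/4})\bigr).
\]
Lemma~\ref{lem:initial_growth_main} gives $d_0\le d_i(\tau)$, and its proof gives $d_i(\tau)=O(\tau\gamma)$ up to the randomness of the initial Gaussian. Hence $s_i(\tau)=O(\sqrt{\tau\gamma})$ uniformly in $i$, so all modes start near zero.

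Solving $s_i(T_i)=\sqrt{\sigma_i-\varepsilon}$ then yields
\[
T_i-\tau=\bigl(\sqrt{\sigma_i-\varepsilon}-s_i(\tau)\bigr)\bigl(1+O(\varepsilon^{1/4})\bigr)
=\sqrt{\sigma_i}+O(\varepsilon^{1/4})+O(\sqrt{\tau\gamma})+O(\varepsilon/\sqrt{\sigma_r}),
\]
where we use $\sigma_i=O(1)$ and $\varepsilon\ll\sigma_r$. Since $\sigma_1>\dots>\sigma_r$, the leading terms are strictly ordered: $\sqrt{\sigma_r}<\sqrt{\sigma_{r-1}}<\dots<\sqrt{\sigma_1}$.

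The arrival times inherit this order once the error terms fall below the smallest gap $\min_i(\sqrt{\sigma_i}-\sqrt{\sigma_{i+1}})$. The standing assumptions $\gamma=O(\varepsilon^{1/2})$ and $\tau\asymp\sigma_r/(\sigma_1\sqrt r)$ give $\sqrt{\tau\gamma}=O(\varepsilon^{1/4})$, so it suffices to take $\varepsilon$ small relative to the spectral gaps. We then get $T_r<T_{r-1}<\dots<T_1$; in particular $\sigma_r$ is the first singular value to reach its target. Lemma~\ref{lem:diagonal_approx_main} with $\delta=O(\gamma)$ transfers this from the diagonal entries to the actual singular values $\sigma_i(\bA\bB)$, at the cost of an extra $O(\gamma)$ term absorbed in the same way. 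Part (ii) of the theorem says the $s_i$ trajectories stay parallel up to $O(\varepsilon^{1/4})$ drift per unit time, which is the "nearly parallel" claim.

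The main obstacle is that Theorem~\ref{thm:uniform_growth_main} controls only modes with $e_i\le-\varepsilon$, so we must ensure that a mode which has already arrived cannot drift back below target and re-enter the active set. The plan is to argue that near $e_i=0$ the $\cT_\beta$ update for mode $i$ has sign $-\mathrm{sign}(e_i)$ up to $O(\delta)$ off-diagonal contamination. This makes $\{e_i\ge-\varepsilon\}$ forward-invariant, since the boundary drift points inward once $\gamma$ is small relative to $\varepsilon$. The ordering claim, however, concerns only the first crossing times $T_i$, so even without this invariance the corollary holds as stated via first-passage times.
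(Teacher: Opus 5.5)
Your proposal is correct and follows the paper's reasoning. The paper gives no separate proof, only the one-line argument in the statement itself: nearly parallel $\sqrt{d_i}$ trajectories from Theorem~\ref{thm:uniform_growth_main}, common near-zero starting values, and least required growth $\sqrt{\sigma_r}$. Your first-passage-time computation makes that argument quantitative, and it correctly exposes a requirement the paper leaves unstated: the errors $O(\varepsilon^{1/4})$ and $O(\sqrt{\tau\gamma})$ must be small compared with the gaps $\sqrt{\sigma_i}-\sqrt{\sigma_{i+1}}$.
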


Our analysis naturally extends to the underparameterized setting $r < r^\star$; the proofs largely follow those of \Cref{thm:uniform_growth_main}. We refer to \Cref{sec:uniform_growth} and \Cref{sec:underparameterized} for details.

\paragraph{Implication.} The uniform growth of all spectral components from SpecGF could lead to a faster convergence. Standard GF learns the spectral components one at a time by sequentially visiting the vicinity of saddle points of increasing rank~\citep{jacot2021saddle,zhang2025saddle}. Moreover, the smaller the initialization scale is, the longer it takes to escape each saddle. On the contrary, SpecGF promotes the evolution of every component. Hence, the time to learn all components essentially equals to that to learn the component with the largest singular value. See \Cref{fig:sval}.

However, learning all components uniformly is not always beneficial. When the underlying solution that generalizes well has rank smaller than the LoRA rank, i.e. $r^* < r$, uniform growth across all $r$ singular values may be suboptimal, as it can amplify unnecessary components that are potentially harmful to generalization.

We additionally state that the spectral initialization is subsumed in the setting of \Cref{sec:general_approx}, as the core product $\bG$ is always diagonal, thus the dynamics is under 0-alignment.
\section{Convergence Analysis of SpecGF}\label{sec:conv_analysis}

In this section, we guarantee convergence for SpecGF with the smoothed operator $\cT_\beta$. Toward this, we begin by establishing the well-posedness/uniqueness of SpecGF solutions in the analytic regime; see \Cref{prop:specgf_analytic} for details.
\begin{proposition}[Analyticity of $\cT$ and $\cT_\beta$]
    SpecGF with $\cT_\beta$ is analytic everywhere. SpecGF with $\cT$ is analytic on the set of full-rank matrices. Moreover, for both cases, the solution uniquely exists for all $t \geq 0$ where they are analytic.
\end{proposition}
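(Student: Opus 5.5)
The plan is to regard SpecGF as the autonomous ODE $\dot{\bx} = F(\bx)$ on the Euclidean space $\bx = (\bA,\bB)\in\sR^{m\times r}\times\sR^{r\times n}$, where $F(\bA,\bB) = -(\cN(\nabla_\bA\cL), \cN(\nabla_\bB\cL))$ and $\cN\in\{\cT,\cT_\beta\}$. Analyticity of $F$ will follow by composition. The gradient maps $(\bA,\bB)\mapsto(\bA\bB-\bY)\bB^\top$ and $(\bA,\bB)\mapsto\bA^\top(\bA\bB-\bY)$ are polynomial, so it suffices to show that $\cT_\beta$ is analytic on all of $\sR^{m\times r}$ (respectively $\sR^{r\times n}$), and that $\cT$ is analytic on the open set of full-rank matrices. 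Well-posedness then follows from Cauchy--Lipschitz, since analytic functions are locally Lipschitz, together with Cauchy--Kovalevskaya/analytic dependence, which makes the solution analytic in $t$. Global existence will come from an a priori bound on the vector field.

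For $\cT_\beta(\bM) = (\bM\bM^\top+\beta\bI)^{-1/2}\bM$, the matrix $\bS=\bM\bM^\top+\beta\bI$ has spectrum contained in $[\beta,\infty)$. The map $\bS\mapsto\bS^{-1/2}$ is analytic on the open cone of positive definite matrices. One way to see this is to write it as a Cauchy/Riesz integral $\bS^{-1/2}=\frac{1}{2\pi i}\oint_\Gamma z^{-1/2}(z\bI-\bS)^{-1}\,\rd z$, where the contour $\Gamma$ lies in the right half-plane and encloses the spectrum; this is locally uniform in $\bS$, and the resolvent is analytic (rational) in $\bS$. Alternatively, one can use the holomorphic functional calculus. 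Composing with the polynomial $\bM\mapsto\bM\bM^\top+\beta\bI$ and multiplying by $\bM$ gives analyticity everywhere. This is presumably the content of \Cref{cor:T_analytic}, which I would invoke or reprove.

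For $\cT$, I use the equivalent form $\cT(\bM)=\bM(\bM^\top\bM)^{-1/2}$ when $\bM$ has full column rank, or $(\bM\bM^\top)^{-1/2}\bM$ when it has full row rank. On the set of full-rank matrices, whichever Gram matrix is $k\times k$ with $k=\min$ dimension is positive definite, and it coincides with the pseudoinverse formula in \eqref{eq:T_def}. The same functional-calculus argument then applies on this open set. Full rank of both gradients is an open condition on $(\bA,\bB)$, so $F$ is analytic on an open domain $D$.

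For existence on all $t\ge0$: every singular value of $\cT_\beta(\bM)$ equals $\sigma_i/\sqrt{\sigma_i^2+\beta}<1$, and every singular value of $\cT(\bM)$ is $1$. Hence $\normF{F}\le\sqrt{2r}$ uniformly, so $\normF{\bx(t)-\bx(0)}\le\sqrt{2r}\,t$, and the solution cannot blow up in finite time. For $\cT_\beta$ this gives a unique global analytic solution. For $\cT$, the same bound shows that the maximal solution can only terminate by leaving $D$ (hitting a rank-deficient point), never by escaping to infinity. This is how I interpret ``where they are analytic'': the unique solution exists on the maximal interval during which the trajectory stays in $D$.

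The main obstacle is the careful justification that the matrix inverse square root is real-analytic, uniformly on neighborhoods, together with the correct formulation of the $\cT$ case. The formula in \eqref{eq:T_def} uses the pseudoinverse, which is discontinuous across rank changes, so the statement must be restricted to the open full-rank set before the standard ODE theory can be applied.
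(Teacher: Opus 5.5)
Your proposal is correct and follows essentially the same route as the paper. Analyticity comes from composing the polynomial gradient maps with the analytic inverse square root on positive definite matrices; the paper gets this from a holomorphic functional calculus lemma, and your Riesz-integral argument is the elementary finite-dimensional version of it. Global existence comes from the uniform speed bound $\sqrt{2r}$, and uniqueness from local Lipschitzness via Picard--Lindel\"of. Your treatment of the $\cT$ case, where the maximal solution is guaranteed only until it leaves the open full-rank domain, is more careful than the paper's proof, which asserts existence for all $t \geq 0$ without considering exit through rank-deficient points.
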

Since the vector field is analytic (hence locally Lipschitz) on its domain, the solution under SpecGF with $\cT_\beta$ is unique. Moreover, because $\normF{\cT(\cdot)} \le \sqrt{r}$ and $\normF{\cT_\beta(\cdot)} \le \sqrt{r}$, the dynamics have bounded speed, implying that solutions do not blow up in finite time and globally exist. 

\begin{theorem}\label{thm:main_specgf_conv}
    For almost all initial values $\bA(0)$ and $\bB(0)$, SpecGF with $\cT_\beta$ either converges to a global minimum of $\cL(\bA, \bB)$ or $\normF{\bA(t)}^2 + \normF{\bB(t)}^2$ diverges to infinity.
\end{theorem}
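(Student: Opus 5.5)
We will prove the dichotomy by combining a Lyapunov argument, the \L{}ojasiewicz gradient inequality for analytic functions, and a center--stable manifold argument for strict saddles. The loss $\cL$ is itself a Lyapunov function for SpecGF with $\cT_\beta$. Writing $\bM = \nabla_\bA \cL$ with SVD $\bU_\bM\bS_\bM\bV_\bM^\top$, we have
\begin{equation*}
\angleb{\bM, \cT_\beta(\bM)} = \sum_i \frac{\sigma_i(\bM)^2}{\sqrt{\sigma_i(\bM)^2+\beta}} \ge 0,
\end{equation*}
and the analogous identity holds for $\bB$. Hence $\dot\cL = -\angleb{\nabla_\bA\cL,\cT_\beta(\nabla_\bA\cL)} - \angleb{\nabla_\bB\cL,\cT_\beta(\nabla_\bB\cL)} \le 0$, with equality iff $\nabla\cL = 0$. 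Moreover, $\cT_\beta(\bM) = \bP(\bM)\bM$ with $\bP(\bM) = (\bM\bM^\top + \beta\bI)^{-1/2}$ positive definite and analytic. So SpecGF with $\cT_\beta$ is a gradient flow for $\cL$ in an analytic, state-dependent Riemannian metric (a preconditioned gradient flow).

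\textbf{Step 1 (bounded trajectories converge).} Suppose $\normF{\bA(t)}^2+\normF{\bB(t)}^2$ does not diverge. We must then rule out oscillation, i.e.\ a $\liminf$ finite but a $\limsup$ infinite. We first argue that the $\omega$-limit set is nonempty and consists of critical points. On a compact sublevel region the preconditioner satisfies $\bP \succeq c\bI$, since $\nabla\cL$ is bounded there. The \L{}ojasiewicz inequality $|\cL-\cL^*|^{1-\theta}\le C\normF{\nabla\cL}$ near any limit point then gives finite arc length, $\int \normF{\dot\bA}+\normF{\dot\bB}\,\rd t<\infty$, on excursions near that point. By the standard Absil--Mahony--Andrews argument, the trajectory cannot leave a neighborhood of an accumulation point once it is close enough. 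This yields the dichotomy: either the trajectory converges to a single critical point, or the norm tends to infinity. Boundedness of the speed, $\normF{\cT_\beta}\le\sqrt r$, helps control the exit argument.

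\textbf{Step 2 (critical points of $\cL$).} We use the classical landscape of $\frac12\normF{\bA\bB-\bY}^2$ (Baldi--Hornik style). Every critical point is either a global minimum or a strict saddle, i.e.\ the Hessian has a negative eigenvalue. The negative direction is obtained by taking an unlearned singular direction $(\bu_j,\bv_j)$ of $\bY$ and a direction in the kernel of $\bA$ or $\bB$; for $r\le r^\star$ with distinct singular values this is standard. The Jacobian of the SpecGF vector field at a critical point is $-\mathrm{blockdiag}(\bP_\bA,\bP_\bB)\nabla^2\cL$ with $\bP = \beta^{-1/2}\bI$ there, since the gradient vanishes. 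It therefore has a positive eigenvalue exactly when $\nabla^2\cL$ has a negative one.

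\textbf{Step 3 (measure zero).} By the center--stable manifold theorem applied to the time-one map of the analytic flow (a diffeomorphism), the set of initializations converging to a given strict saddle lies in a lower-dimensional manifold. The saddles form non-isolated sets, namely $\mathsf{GL}(r)$-orbits. We therefore use the Lee--Panageas--Piliouras style covering by countably many neighborhoods together with a Lindel\"of argument. Because the flow map is a diffeomorphism, preimages of null sets are null, and the union of the stable sets of all strict saddles has measure zero.

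\textbf{Main obstacle.} The main difficulty is Step 1. One must show that a non-divergent trajectory cannot have bounded $\liminf$ and unbounded $\limsup$ of the norm. One must also show that \L{}ojasiewicz applies uniformly along the preconditioned flow, where the metric degenerates as $\normF{\nabla\cL}\to\infty$ and we only have $\bP\succeq(\normF{\nabla\cL}^2+\beta)^{-1/2}\bI$.
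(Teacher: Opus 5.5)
Your proposal is correct and follows essentially the paper's route. The ingredients are the same: monotonicity of $\cL$; a \L{}ojasiewicz trapping argument giving the converge-or-diverge dichotomy (\Cref{prop:infdiv_conv}); the Bah et al.\ landscape result that every critical point is a global minimum or a strict saddle; and strict-saddle avoidance via the time-$T$ flow map with $Df_\beta(\bar\bx) = -\beta^{-1/2}\nabla^2\cL(\bar\bx)$. For the last step the paper invokes Cheridito et al.'s result (\Cref{prop:fixed_point_avoid}) rather than an explicit center--stable manifold plus Lindel\"of covering.

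The ``main obstacle'' you flag is not actually one. The trapping argument is local to the accumulation point, where the preconditioner is bounded above and below, and it already rules out a finite $\liminf$ with infinite $\limsup$. The paper even obtains the angle condition globally, $\dot\cL \le -\frac{1}{2r}\normF{\nabla\cL}\,\normF{(\dot\bA,\dot\bB)}$, via Chebyshev's sum inequality. Also, the strict-saddle landscape you cite in Step 2 holds without assuming distinct singular values, which the theorem does not assume.
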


Whether SpecGF does not diverge to infinity in norm is crucial to claim the convergence of SpecGF. This is true for standard GF, as each factor is uniformly bounded from a time-invariant quantity, $\bA(t)^\top\bA(t) - \bB(t)\bB(t)^\top$~\citep{arora2019implicit,bah2022learning}. However, such invariance for SpecGF are unknown---$\bA(t)^\top\bA(t) - \bB(t)\bB(t)^\top$ could fail to be invariant for SpecGF; see \Cref{prop:AAT-BTB_not_inv}---thus whether SpecGF globally converges to a critical point is \textit{inconclusive}. We conjecture that there are no polynomial invariants in terms of $\bA(t)$ and $\bB(t)$ for SpecGF; if there were, then it should translate to the scalar case, $m=n=1$. However, every invariant polynomial in terms of $\bA(t)$ and $\bB(t)$ for the scalar case is identically constant; see \Cref{prop:no_poly_invar_Tb,prop:no_poly_invar_T} for details. Nonetheless, assuming convergence, we characterize which critical points SpecGF essentially evolves to.

Our convergence result relies on the strict-saddle geometry of the squared-loss factorization objective: a critical point $x$ of a twice continuously differentiable function $f$ is a \textit{strict saddle point} if the Hessian of $f$ at $x$ has a negative eigenvalue.
According to~\citet{kawaguchi2016deep,bah2022learning}, every critical point of $\cL$ is either a global minimum or a strict saddle point. Moreover, every global minimum corresponds to the best rank-$r$ approximation of $\bY$. With the fact that the set of initial points where SpecGF with $\cT_\beta$ converges to a strict saddle point has Lebesgue measure zero (\Cref{thm:SpecGF_avoids_SSP}), SpecGF with $\cT_\beta$ almost surely converges to the \textbf{global} minima or diverges to infinity in norm.

We prove \Cref{thm:main_specgf_conv} by showing that SpecGF cannot wander around critical points and it avoids every ``bad'' critical points almost surely. We begin our argument with monotonic decrease of the loss along SpecGF trajectories. 
\begin{lemma}\label{lem:SpecGF_dec}
    For SpecGF with $\cT$ and SpecGF with $\cT_\beta$, 
    \begin{align*}
        \frac{d \cL(t)}{\mathrm{d}t} \le 0.
    \end{align*}
    The equality holds if and only if $\nabla_\bA \cL(t) = \nabla_\bB \cL(t) = 0$.
\end{lemma}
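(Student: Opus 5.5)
By the chain rule, $\frac{d\cL}{dt} = \angleb{\nabla_\bA\cL, \dot\bA} + \angleb{\nabla_\bB\cL, \dot\bB}$. Substituting the flow \eqref{eq:AB_update} gives
\[
\frac{d\cL}{dt} = -\angleb{\nabla_\bA\cL, \cN(\nabla_\bA\cL)} - \angleb{\nabla_\bB\cL, \cN(\nabla_\bB\cL)},
\]
with $\cN\in\{\cT,\cT_\beta\}$. The plan is therefore to prove a single matrix inequality and apply it to each term: for any matrix $\bM$, $\angleb{\bM,\cN(\bM)}\ge 0$, with equality if and only if $\bM=\bm 0$.

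To prove the inequality, take the compact SVD $\bM=\bU_\bM\bS_\bM\bV_\bM^\top$ with $\bS_\bM=\Diag\{\sigma_1,\dots,\sigma_k\}$ and all $\sigma_i>0$.

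For $\cT$, we have $\cT(\bM)=\bU_\bM\bV_\bM^\top$. Using orthonormality of the columns of $\bU_\bM$ and $\bV_\bM$,
\[
\angleb{\bM,\cT(\bM)}=\mathrm{Tr}\big(\bU_\bM\bS_\bM\bV_\bM^\top\bV_\bM\bU_\bM^\top\big)=\mathrm{Tr}(\bS_\bM)=\sum_i\sigma_i=\norm{\bM}_*.
\]

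For $\cT_\beta$, the spectral form $\cT_\beta(\bM)=\bU_\bM\bS_\beta\bV_\bM^\top$ from \eqref{eq:Tb_def} gives in the same way
\[
\angleb{\bM,\cT_\beta(\bM)}=\sum_i\frac{\sigma_i^2}{\sqrt{\sigma_i^2+\beta}}.
\]
An alternative derivation avoids the SVD: write $\angleb{\bM,\cT_\beta(\bM)}=\mathrm{Tr}\big((\bM\bM^\top+\beta\bI)^{-1/2}\bM\bM^\top\big)$. This is the trace of a product of two commuting positive semidefinite matrices, so it is nonnegative.

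In both cases the quantity is a sum of nonnegative terms, each strictly positive whenever $\sigma_i>0$. So it vanishes exactly when $\bM$ has no nonzero singular values, that is, $\bM=\bm 0$. The case $\bM=\bm 0$ is consistent with the definitions, since $\cT(\bm 0)=\bm 0$ by the pseudoinverse convention and $\cT_\beta(\bm 0)=\bm 0$ trivially.

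Combining, $\frac{d\cL}{dt}$ is minus a sum of two nonnegative terms, so $\frac{d\cL}{dt}\le 0$. Equality forces both terms to vanish, which by the equality case means $\nabla_\bA\cL=\nabla_\bB\cL=\bm 0$. Conversely, if both gradients vanish then both terms are zero.

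There is no real analytic obstacle. The one point needing care is applying the chain rule for $\cT$, which is discontinuous at rank changes. I would state the identity at every $t$ where the trajectory is differentiable; by the preceding analyticity proposition, this holds wherever the solution is defined in the analytic regime.
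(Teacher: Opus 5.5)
Your proof is correct and follows the paper's argument: the chain rule reduces $\frac{d\cL}{dt}$ to $-\angleb{\nabla_\bA\cL,\cN(\nabla_\bA\cL)}-\angleb{\nabla_\bB\cL,\cN(\nabla_\bB\cL)}$, and the compact SVD evaluates each term as a nuclear norm for $\cT$ or as $\sum_i\sigma_i^2/\sqrt{\sigma_i^2+\beta}$ for $\cT_\beta$. You are slightly more careful than the paper, which leaves both the equality case and the differentiability caveat for $\cT$ implicit.
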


See \Cref{lem:SpecGF_T_dec,lem:SpecGF_Tb_dec} for the proof of \Cref{lem:SpecGF_dec}. Using this monotoncity of the loss, \Cref{prop:main_infdiv_conv} states that SpecGF should either converge or diverge to infinity in norm; see \Cref{prop:infdiv_conv} for detailed proof.

\begin{proposition}\label{prop:main_infdiv_conv}
    Under SpecGF with $\cT_\beta$, we have
    {
    \setlength{\abovedisplayskip}{6pt}
    \setlength{\belowdisplayskip}{6pt}
    \begin{equation}\label{eq:main_F_dec}
        \frac{\mathrm{d}\cL(t)}{\mathrm{d}t} \leq -c(r)\|\nabla \cL(t)\| \|(\dot\bA(t), \dot\bB(t))\|,
    \end{equation}}
    where $c(r)$ is a constant depending on $r$. Moreover, $(\bA(t), \bB(t))$ either converges or diverges to infinity in norm.
\end{proposition}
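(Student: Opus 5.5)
The plan is to prove \eqref{eq:main_F_dec} as a Łojasiewicz-type "angle condition" and then use analyticity of $\cL$ to get finite trajectory length whenever the trajectory stays bounded.

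\textbf{Step 1: the angle inequality.} Write $\bG_\bA=\nabla_\bA\cL$, $\bG_\bB=\nabla_\bB\cL$, and use the SVD form of $\cT_\beta$. Then
\begin{equation*}
\angleb{\bG_\bA,\cT_\beta(\bG_\bA)}=\sum_i \frac{\sigma_i^2}{\sqrt{\sigma_i^2+\beta}},
\end{equation*}
where the $\sigma_i$ are the singular values of $\bG_\bA$. Since $\bG_\bA=(\bA\bB-\bY)\bB^\top$ has rank at most $r$, there are at most $r$ nonzero terms. Each term satisfies $\frac{\sigma_i^2}{\sqrt{\sigma_i^2+\beta}}\ge \sigma_i\cdot\frac{\sigma_i}{\sqrt{\sigma_i^2+\beta}}$. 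Hence
\begin{equation*}
\angleb{\bG_\bA,\cT_\beta(\bG_\bA)}\ge \sum_i \sigma_i s_i \ge \frac{1}{r}\Big(\sum_i\sigma_i\Big)\Big(\sum_i s_i\Big) \quad\text{(Chebyshev's sum inequality)},
\end{equation*}
with $s_i=\sigma_i/\sqrt{\sigma_i^2+\beta}$. Chebyshev applies because $\sigma_i$ and $s_i$ are similarly ordered. Now $\sum_i\sigma_i\ge\normF{\bG_\bA}$ and $\sum_i s_i\ge\normF{\cT_\beta(\bG_\bA)}$. So $\angleb{\bG_\bA,\cT_\beta(\bG_\bA)}\ge \frac1r\normF{\bG_\bA}\normF{\dot\bA}$, and the same bound holds for $\bB$. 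Summing and using $xy+zw\ge\frac12\sqrt{x^2+z^2}\sqrt{y^2+w^2}$ up to a constant (or simply Cauchy–Schwarz on the pair) gives
\begin{equation*}
\frac{\rd\cL}{\rd t}=-\angleb{\bG_\bA,\dot\bA}-\angleb{\bG_\bB,\dot\bB}\le -c(r)\norm{\nabla\cL}\,\norm{(\dot\bA,\dot\bB)}.
\end{equation*}
This step should be routine, apart from checking the $\min$ or ordering argument. If Chebyshev feels delicate, a cruder route also works: bound $\sigma_i s_i\ge$ the larger $\sigma$ times the smallest $s$, losing only $r$-dependent constants.

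\textbf{Step 2: dichotomy.} Suppose $\normF{\bA}^2+\normF{\bB}^2$ does not diverge to infinity. Then there is a bounded subsequence, so the $\omega$-limit set contains a point $x^*$. Since $\cL$ is monotone (\Cref{lem:SpecGF_dec}) and bounded below, $\cL(t)\downarrow\cL^*=\cL(x^*)$. $\cL$ is a polynomial, hence analytic, so the Łojasiewicz gradient inequality holds near $x^*$: $|\cL-\cL^*|^{1-\theta}\le C\norm{\nabla\cL}$ for some $\theta\in(0,1/2]$. Combining it with \eqref{eq:main_F_dec} gives
\begin{equation*}
-\frac{\rd}{\rd t}(\cL-\cL^*)^\theta \ge \theta c(r) C^{-1}\norm{(\dot\bA,\dot\bB)}
\end{equation*}
whenever the trajectory lies in the Łojasiewicz neighborhood. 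The degenerate case where $\cL=\cL^*$ at some finite time means the flow is stationary, by the equality case of \Cref{lem:SpecGF_dec}.

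\textbf{Step 3: trapping argument, the main obstacle.} Pick a time $t_0$ with the iterate close to $x^*$ and $(\cL(t_0)-\cL^*)^\theta$ small. Then, by the standard Absil–Mahony–Andrews argument, the trajectory can never leave the ball. Its length after $t_0$ is bounded by $\frac{C}{\theta c(r)}(\cL(t_0)-\cL^*)^\theta$, which is less than the radius. Hence the trajectory has finite length, is Cauchy, and converges, necessarily to $x^*$. The care needed is only in arranging the radii. Since the trajectory must return infinitely often near $x^*$ if it does not go to infinity, this yields "converges or diverges to infinity in norm."
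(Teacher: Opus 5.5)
\textbf{Gap in Step 1: combining the two blocks.} Your overall route is the paper's: a Chebyshev-sum bound for the angle condition, then the \L{}ojasiewicz finite-length trapping argument around an accumulation point. Your Steps 2--3 match \Cref{prop:infdiv_conv} essentially line by line. The gap is in how you combine the two blocks in Step 1.

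Write $x=\normF{\nabla_\bA\cL}$, $y=\normF{\dot\bA}$, $z=\normF{\nabla_\bB\cL}$, $w=\normF{\dot\bB}$. Your per-block bounds $\angleb{\nabla_\bA\cL,\cT_\beta(\nabla_\bA\cL)}\ge\frac1r xy$ and $\angleb{\nabla_\bB\cL,\cT_\beta(\nabla_\bB\cL)}\ge\frac1r zw$ are correct. The inequality you then invoke, $xy+zw\ge\frac12\sqrt{x^2+z^2}\sqrt{y^2+w^2}$, is false for general nonnegative reals: $x=w=1$, $y=z=0$ gives $0\ge\frac12$. Cauchy--Schwarz on the pair gives exactly the reverse inequality. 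So summing the two block bounds does not yield $c(r)\norm{\nabla\cL}\norm{(\dot\bA,\dot\bB)}$. You need information coupling the gradient sizes to the update sizes across the two blocks, and your write-up does not supply it.

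\textbf{How to repair it.} The simplest fix, which is what the paper does, is to apply Chebyshev once rather than block by block. Pool the (at most $2r$) singular values of $\nabla_\bA\cL$ and $\nabla_\bB\cL$ into a single list $\{\sigma_k\}$. Both blocks use the same increasing map $g(\sigma)=\sigma/\sqrt{\sigma^2+\beta}$, so $(\sigma_k)$ and $(g(\sigma_k))$ are similarly ordered across the pooled list. Hence
$-\frac{\rd\cL}{\rd t}=\sum_k\sigma_k g(\sigma_k)\ge\frac{1}{2r}\big(\sum_k\sigma_k\big)\big(\sum_k g(\sigma_k)\big)\ge\frac{1}{2r}\normF{\nabla\cL}\,\normF{(\dot\bA,\dot\bB)}$,
which gives $c(r)=\frac{1}{2r}$.

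Alternatively, you can keep your block version if you add a cross-block comparison. If $x\ge z$, then $y\ge g(x/\sqrt r)\ge g(z/\sqrt r)\ge g(z)/\sqrt r\ge w/r$. Therefore $xy\ge\frac{1}{\sqrt2\,(r+1)}\sqrt{x^2+z^2}\sqrt{y^2+w^2}$, and the $xy$ term alone dominates; the case $z\ge x$ is symmetric. Pooling is cleaner. With either repair, the rest of your proof goes through as written.
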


\begin{proof}[Proof sketch of \Cref{prop:main_infdiv_conv}]

    The computation of $c(r)$ is straightforward; for SpecGF with $\cT_\beta$, $c(r) = \frac{1}{2r}$. We focus on the part that SpecGF either converges or diverges to infinity in norm by using \L{}ojasiewicz inequality, which controls the rate of convergence for gradient-based dynamics under analytic objectives~\citep{lojasiewicz1965ensembles}.

    \begin{lemma}[\L{}ojasiewicz]\label{lem:main_Loja_inequ}
        Let $F:U \to \sR$ be a real analytic function on an open set $U\subset\sR^d$. Then, for every critical point $\bp\in U$ of $F$, there are a possibly smaller neighborhood $W$ of $\bp$, constants $C > 0$ and $b \in [\frac12, 1)$ such that for all $\bx \in W$, $|F(\bx) - F(\bp)|^b \leq C\|\nabla F(\bx)\|$.
    \end{lemma}
    
    If the SpecGF iteration does not diverge to infinity in norm, it should have an accumulation point $(\bar\bA, \bar\bB)$; we show $(\bA(t), \bB(t)) \to (\bar\bA, \bar\bB)$. By \Cref{eq:main_F_dec} and continuity of $\cL$, $\cL(t)$ decreases to $\cL(\bar\bA, \bar\bB)$. Once $(\bA(t), \bB(t))$ is arbitrarily close to $(\bar\bA, \bar\bB)$, it remains in that vicinity forever by \Cref{eq:main_F_dec} and \Cref{lem:main_Loja_inequ}.
\end{proof}

\paragraph{Stability of Global Minima.} We state that every global minimum is Lyapunov stable under SpecGF with $\cT_\beta$. Lyapunov stability is standard for describing whether trajectories that start near an equilibrium remain near it over time.

\begin{definition}[Lyapunov stability]\label{def:main_Lyap_stab}
    A critical point $\bar \bx \in \bR^d$ is \textit{Lyapunov stable} if for every neighborhood $U_\epsilon \subset \bR^d$ of $\bar\bx$, there exists a neighborhood $U_\delta \subset \bR^d$ of $\bar\bx$ such that for all $\bx(t)$ with $\bx(0) \in U_\delta$, $\bx(t) \in U_\epsilon$ for all $t \geq 0$.
\end{definition}
If then, any starting point near some global minima cannot go far away from that global minimum. By \Cref{prop:main_infdiv_conv}, bounded SpecGF iterations should converge to a critical point. The stability of the global minima is proved by \Cref{lem:main_Loja_inequ}; see \Cref{thm:SpecGF_Tb_stable} for details. Therefore, the \textbf{local} convergence of SpecGF with $\cT_\beta$ is always guaranteed.

\paragraph{Convergence Rate.} The convergence rate of SpecGF with $\cT_\beta$ readily follows by \Cref{lem:main_Loja_inequ}. Assume that SpecGF converges to a global minimum. We show that the loss function is essentially Morse-Bott, whose optimal exponent in the \L{}ojasiewicz inequality is $\frac{1}{2}$, leading to exponential convergence;
see \Cref{sec:conv_rate_specgf} for detailed proof.
\begin{proposition}[Informal]\label{prop:main_conv_rate}
    Assume that the singular values of $\bY$ are mutually distinct. If SpecGF with $\cT_\beta$ converges to a global minimum of $\cL$, then it exponentially converges in terms of $\cL(t)$.
\end{proposition}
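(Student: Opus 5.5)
Assume $\bY$ has distinct singular values and that $(\bA(t),\bB(t))\to(\bar\bA,\bar\bB)$, a global minimum. The plan is to prove a \L{}ojasiewicz inequality with exponent $b=\tfrac12$ near $(\bar\bA,\bar\bB)$ and combine it with the dissipation inequality \eqref{eq:main_F_dec}. Write $\cL^\star=\cL(\bar\bA,\bar\bB)$. By \Cref{lem:SpecGF_dec}, $\cL(t)$ is nonincreasing with limit $\cL^\star$. We may assume $\cL(t)>\cL^\star$ for all $t$, since otherwise the trajectory sits at a critical point after finite time.

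\textbf{Step 1: a gradient-domination bound for $\cT_\beta$.} Near the limit all factors lie in a compact set, so $\norm{\nabla_\bA\cL}_2$ and $\norm{\nabla_\bB\cL}_2$ are bounded by some $K$. For a matrix $\bM$ with $\norm{\bM}_2\le K$,
\[
\angleb{\bM,\cT_\beta(\bM)}=\sum_i \frac{\sigma_i^2}{\sqrt{\sigma_i^2+\beta}}\ge \frac{\normF{\bM}^2}{\sqrt{K^2+\beta}}.
\]
Therefore, for all large $t$,
\[
-\dot\cL(t)\ge c_\beta\|\nabla\cL(t)\|^2, \qquad c_\beta=(K^2+\beta)^{-1/2}.
\]
Unlike \eqref{eq:main_F_dec}, this bound is quadratic in the gradient, which is exactly what an exponential rate requires.

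\textbf{Step 2: Morse--Bott structure.} Because $\bY$ has distinct singular values, the best rank-$r$ approximation $\bY_r$ is unique. The set of global minima is then
\[
\cM=\{(\bA\bR,\bR^{-1}\bB):\bR\in\mathsf{GL}(r)\},
\]
a smooth orbit of dimension $r^2$. I will show $\cM$ is Morse--Bott, meaning that at each $(\bar\bA,\bar\bB)\in\cM$ the Hessian kernel equals the tangent space $\{(\bar\bA\bD,-\bD\bar\bB)\}$. The Hessian quadratic form at a point with residual $\bE=\bar\bA\bar\bB-\bY$ is
\[
\normF{\Delta_\bA\bar\bB+\bar\bA\Delta_\bB}^2+2\angleb{\bE,\Delta_\bA\Delta_\bB}.
\]
Here $\bE=-(\bY-\bY_r)$ is orthogonal to the row and column spaces of $\bY_r$. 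I will decompose the perturbations along the singular subspaces of $\bY$ and check positivity off the tangent directions. This uses the gap $\sigma_r>\sigma_{r+1}$, which the distinctness assumption provides. This is the classical computation for low-rank matrix factorization and the step I expect to be the main obstacle. If $r=r^\star$ then $\bE=0$ and the argument simplifies to injectivity of the linearized map modulo the $\mathsf{GL}(r)$ action.

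\textbf{Step 3: \L{}ojasiewicz inequality with exponent $\tfrac12$.} The Morse--Bott property gives a neighborhood $W$ of $(\bar\bA,\bar\bB)$ and a constant $C$ such that
\[
\cL-\cL^\star\le C\|\nabla\cL\|^2 \quad\text{on } W.
\]
The proof is a standard Taylor expansion in normal coordinates to $\cM$. This is \Cref{lem:main_Loja_inequ} with $b=\tfrac12$.

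\textbf{Step 4: conclusion.} Since the trajectory converges, it enters $W$ at some finite time $t_0$ and stays there. Combining Steps 1 and 3,
\[
\frac{\rd}{\rd t}(\cL-\cL^\star)\le -\frac{c_\beta}{C}(\cL-\cL^\star),
\]
and Gr\"onwall's inequality yields
\[
\cL(t)-\cL^\star\le (\cL(t_0)-\cL^\star)\,e^{-c_\beta(t-t_0)/C}.
\]
By \eqref{eq:main_F_dec}, the trajectory length beyond time $t$ is $O(\sqrt{\cL(t)-\cL^\star})$. This also gives exponential convergence of the parameters themselves.
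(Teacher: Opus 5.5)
Your proposal is correct and follows essentially the same route as the paper. Both arguments combine a quadratic dissipation bound $-\dot\cL\ge c\|\nabla\cL\|^2$ for $\cT_\beta$ near the limit with the Morse--Bott property of the global-minimum set: the Hessian kernel equals the tangent space $\{(\bar\bA\bK,-\bK\bar\bB)\}$ of the $\mathsf{GL}(r)$-orbit, using that no singular value of $\bSigma_r$ coincides with one of the residual $\bY-\bar\bA\bar\bB$. This yields the \L{}ojasiewicz exponent $b=\tfrac12$, and Gr\"onwall finishes.

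The differences are minor. You obtain the dissipation constant from a compactness bound $K$ on the gradients, whereas the paper waits until $\|\nabla_\bA\cL\|_2,\|\nabla_\bB\cL\|_2\le\sqrt\beta$. You also additionally deduce exponential convergence of the parameters themselves via \eqref{eq:main_F_dec}.
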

The assumption of distinct singular values of the target matrix is also considered in previous studies~\citep{gidel2019implicit,jin2023understanding}.

\begin{remark}\label{rem:global_conv}
Assume that both $\nabla_\bA \cL(t)$ and $\nabla_\bB \cL(t)$ are full-rank throughout SpecGF with $\cT$. Then SpecGF with $\cT$ is analytic, and the same arguments for SpecGF with $\cT_\beta$ hold. Moreover, if SpecGF with $\cT$ converges, then it should converge \textit{in finite time}; see \Cref{prop:specgf_t_finite_conv} for details.
\end{remark}

\paragraph{Global Convergence via Regularization.} Adding a $\ell_2$ regularizer guarantees global convergence of SpecGF with $\cT_\beta$.
All theoretical claims for SpecGF with $\cT_\beta$ also apply to the regularized SpecGF with $\cT_\beta$. Moreover, boundedness of $\bA$ and $\bB$ readily follows from the regularization; see \Cref{sec:L2reg} for details. Therefore, regularized SpecGF with $\cT_\beta$ \textbf{globally} converges to the global minima almost surely.
\section{Empirical Validation of Our Theory}

In this section, we validate our theoretical results. In particular, we confirm the uniform growth of singular values under SpecGF and the equal-rate learning behavior.

\begin{figure}[t]
	\centering
	\begin{subfigure}[b]{0.26\textwidth}
        \includegraphics[width=\linewidth]{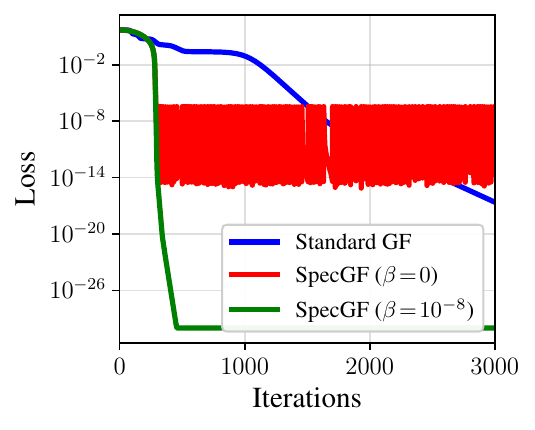}
	\end{subfigure}
    \caption{Loss comparison.}
	\label{fig:loss_comparison}
    \vspace{-0.15in}
\end{figure}

\paragraph{Setup.} We use the same setup as in \Cref{fig:toy_observation}. We set $\beta = 10^{-8}$ (for our theory) or $\beta = 0$ (for exact orthonormalization). To analyze the dynamics, we track core variables $\bX$ and $\bZ$ with the diagonal entries $d_i = [\bG]_{ii}$ and $\sigma_i(\bA\bB)$.

\paragraph{Results.} We highlight several key observations:
\begin{enumerate}[leftmargin=6mm,itemsep=2pt,topsep=0pt,partopsep=0pt]
    \item \textbf{Convergence.} In \Cref{fig:loss_comparison}, both methods successfully minimize the loss, confirming the convergence guarantee of \Cref{thm:main_specgf_conv}. Notably, SpecGF ($\beta = 10^{-8}$) converges significantly faster than vanilla GF. 
    \item \textbf{Diagonal approximation.} In \Cref{fig:sval}, the left and right panels show that $d_i(t) = [\bX(t)\bZ(t)]_{ii}$ closely tracks $\sigma_i(\bA(t)\bB(t))$, validating \Cref{lem:diagonal_approx_main}. This confirms that the diagonal entries of the core product serve as accurate surrogates for the singular values of $\bA(t)\bB(t)$.
    \item \textbf{Uniform Growth.} Under SpecGF, the curves $\sqrt{d_i(t)}$ in \Cref{fig:sval} exhibit nearly identical slopes, demonstrating our \Cref{thm:uniform_growth_main}. In contrast, vanilla GF shows non-uniform growth. 
\end{enumerate}

We present further experimental results in \Cref{sec:theory_expm} by modifying the training settings of \Cref{fig:toy_observation}. Across various settings, the uniform growth of singular values of the product of LoRA factors persists.

\begin{figure}[t]
	\centering
    \begin{subfigure}[b]{0.48\textwidth}
        \includegraphics[width=\linewidth]{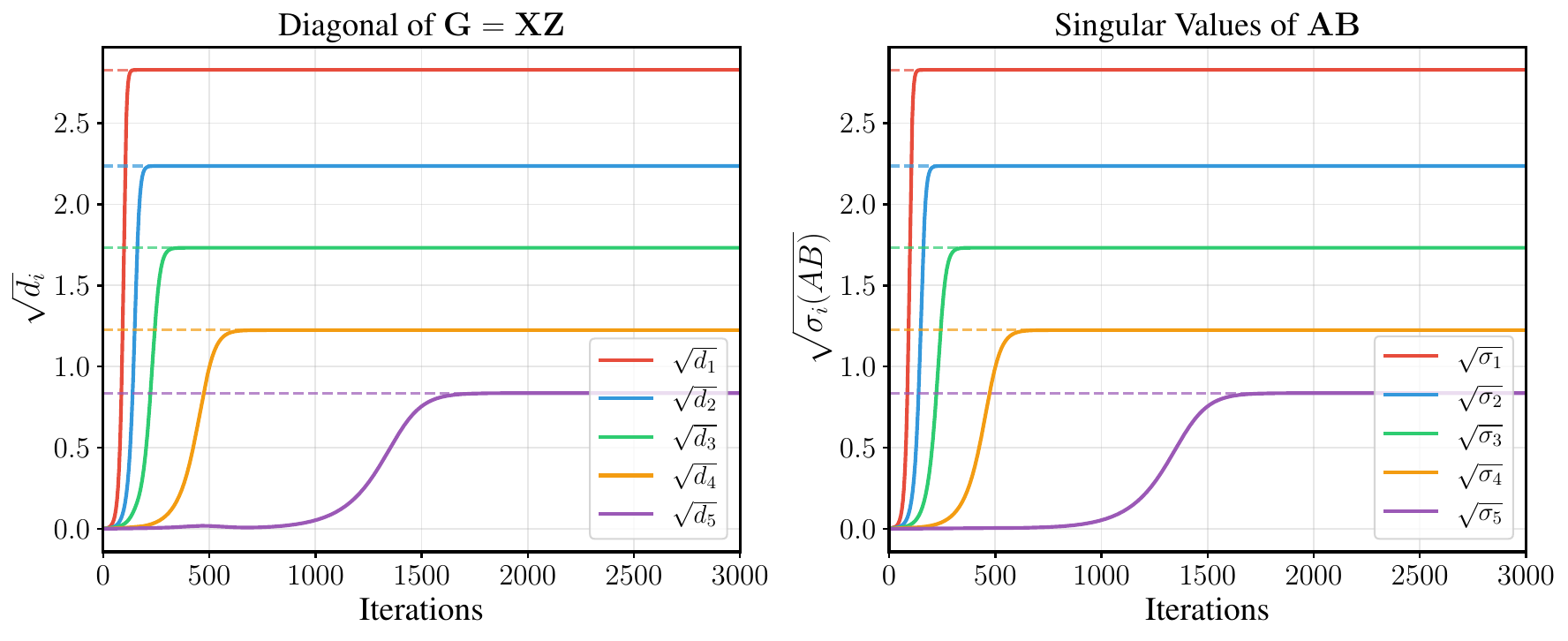}
        \caption{Vanilla GF}
	\end{subfigure}
	\begin{subfigure}[b]{0.48\textwidth}
        \includegraphics[width=\linewidth]{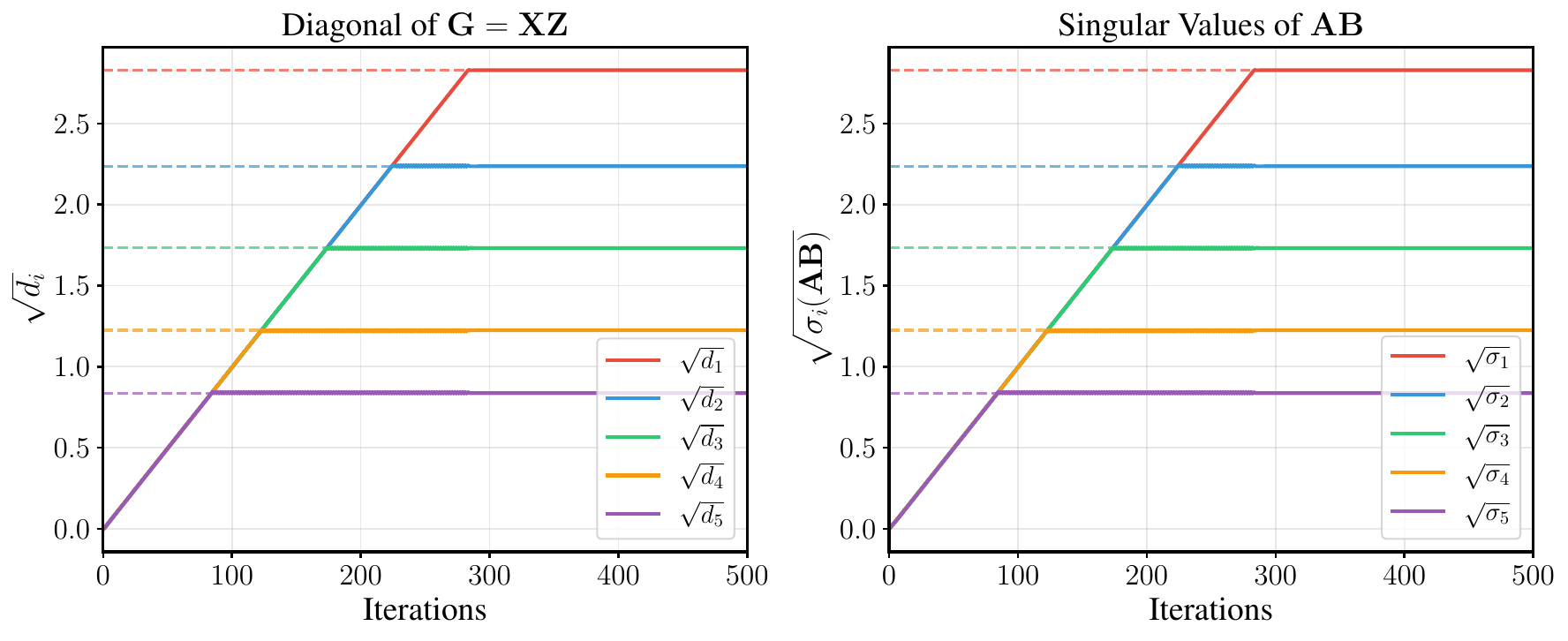}
        \caption{SpecGF ($\beta = 10^{-8}$)}
	\end{subfigure}
    \begin{subfigure}[b]{0.48\textwidth}
        \includegraphics[width=\linewidth]{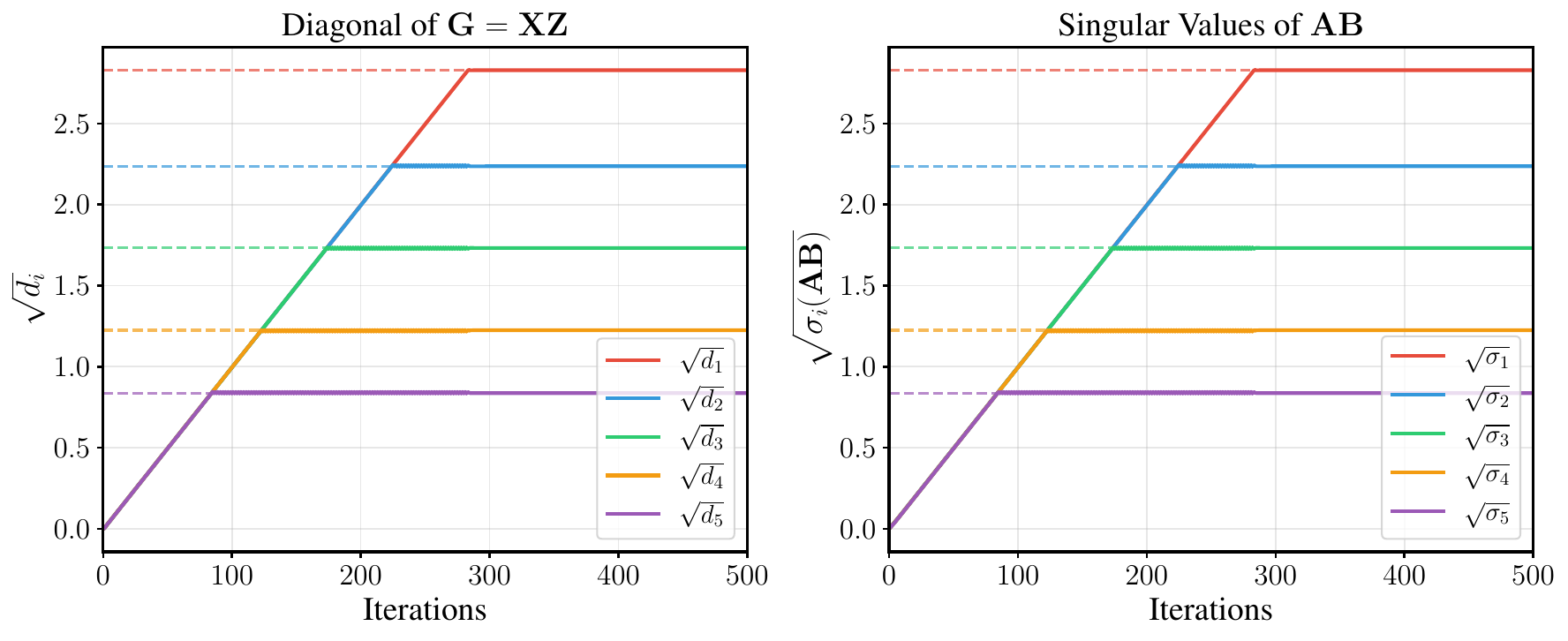}
        \caption{SpecGF ($\beta = 0$)}
    \end{subfigure}
    \caption{Comparison of SpecGF and vanilla GF on matrix factorization with a rank-$5$ target.}
	\label{fig:sval}
\end{figure}
\section{Conclusion}

We analyzed spectral gradient flow (SpecGF) for LoRA-style matrix factorization and proved that singular values of the product $\bA\bB$ exhibit uniform growth, leading to a ``equal-rate'' learning that contrasts with the largest-first behavior of standard gradient flow. We also established that SpecGF almost surely converges to global minima; with $\ell_2$ regularization, global convergence is also guaranteed. We validated our theory via experiments on matrix factorization and LLM fine-tuning. A key limitation is that global convergence without regularization remains open due to the absence of known invariants for SpecGF. Extending our continuous-time analysis to discrete-time SpecGD and more general network architectures is an important future work.


\section*{Impact Statement}

This paper presents work whose goal is to advance the field of machine learning. There are many potential societal consequences of our work, none of which we feel must be specifically highlighted here.


\bibliography{reference}
\bibliographystyle{icml2026}

\newpage
\appendix
\onecolumn

\section*{Supplementary Materials}
\section{Speed of singular values of \texorpdfstring{$\bA$}{A} and \texorpdfstring{$\bB$}{B}}

In this section, we show that the singular values of \emph{$\bA$ and $\bB$} increase at the same speed if the whole matrix evolution can be well-decoupled into scalar evolutions.

\subsection{Rank-1 Case}\label{sec:sng_evo_rk1}

Next, we consider the case with random initialization but simplify the problem to rank-1 $\bY$ and $r=1$ case:
\begin{equation*}
    \bY = \sigma \bu\bv^\top, \quad \bA \in \sR^{m\times 1}, \quad \bB \in \sR^{1\times n}.
\end{equation*}
There, $\bu$ and $\bv$ are unit vectors in $\sR^m$ and $\sR^n$, respectively.

\subsubsection{SpecGF with $\cT_\beta$}
We note that the dynamics exist uniquely (from analyticity of $\cT_\beta$) and globally (from boundedness of velocity by $\cT_\beta$). In addition, for this rank-1 case, SpecGF with $\cT_\beta$ is:
\begin{align*}
    \cT_\beta(\nabla_\bA \cL) = \frac{\nabla_\bA \cL}{\sqrt{\normF{\nabla_\bA \cL}^2 + \beta}}, \quad \cT_\beta(\nabla_\bB \cL) = \frac{\nabla_\bB \cL}{\sqrt{\normF{\nabla_\bB \cL}^2 + \beta}}.
\end{align*}
We sample a random unit vector $\bw$ in $\sR^n$, then $\alpha \triangleq \angleb{\bv, \bw} \ne 0$ with probability 1.
\begin{assumption}
    $\alpha \ne 0$.
\end{assumption}

A typical LoRA training would start at
\begin{equation*}
    \bA(0) = \bm 0, \quad \bB(0) = \gamma \bw^\top,
\end{equation*}
for sufficiently small $\gamma > 0$ with gradients
\begin{align*}
    \nabla_\bA \cL &= (\bA\bB - \bY)\bB^\top,\\
    \nabla_\bB \cL &= \bA^\top(\bA\bB - \bY).
\end{align*}

We first show that $\bA(t) \in \mathrm{span}\{\bu\}$ and $\bB(t)^\top \in \mathrm{span}\{\bv, \bw\}$ at all $t \geq 0$ for SpecGF with $\cT$.
\begin{lemma}
    For all $t \geq 0$, $\bA(t) \in \mathrm{span}\{\bu\}$ and $\bB(t)^\top \in \mathrm{span}\{\bv, \bw\}$ hold.
\end{lemma}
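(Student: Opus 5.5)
The plan is an invariant-subspace argument combined with uniqueness of solutions. Let $\cS \coloneqq \{(\bA,\bB) : \bA \in \mathrm{span}\{\bu\},\ \bB^\top \in \mathrm{span}\{\bv,\bw\}\}$, a linear subspace of $\sR^{m\times 1}\times\sR^{1\times n}$. The initial point $(\bm 0, \gamma\bw^\top)$ lies in $\cS$. I will show that the SpecGF vector field (with $\cT_\beta$, and likewise with $\cT$ where it is defined) is tangent to $\cS$ at every point of $\cS$. Then the flow restricted to $\cS$ has a solution starting at the initial point. Since the vector field is analytic, hence locally Lipschitz, solutions are unique, so this restricted solution coincides with the full trajectory for all $t\ge 0$.

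\textbf{Tangency.} Take $\bA = a\bu$ and $\bB = \bp^\top$ with $\bp \in \mathrm{span}\{\bv,\bw\}$. Then
\[
\bA\bB - \bY = \bu(a\bp - \sigma\bv)^\top .
\]
This gives
\[
\nabla_\bA \cL = \bu(a\bp-\sigma\bv)^\top\bp = \big(a\|\bp\|^2 - \sigma\langle\bv,\bp\rangle\big)\bu \in \mathrm{span}\{\bu\},
\]
and
\[
\nabla_\bB \cL = a\,\bu^\top\bu\,(a\bp-\sigma\bv)^\top = a(a\bp-\sigma\bv)^\top,
\]
whose transpose lies in $\mathrm{span}\{\bv,\bw\}$. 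In the rank-1 setting both gradients are vectors. By the displayed formula for $\cT_\beta$ in this case, $\cT_\beta$ only rescales a vector by the positive scalar $(\normF{\cdot}^2+\beta)^{-1/2}$. Similarly, $\cT$ maps a nonzero vector to its normalization and $\bm 0$ to $\bm 0$. Hence $\dot\bA \in \mathrm{span}\{\bu\}$ and $\dot\bB^\top \in \mathrm{span}\{\bv,\bw\}$, so the field is tangent to $\cS$.

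\textbf{Conclusion.} Parametrize $\cS$ by $(a,\bp)$, with $\bp$ written in coordinates on $\mathrm{span}\{\bv,\bw\}$. The reduced ODE is analytic with velocity bounded by $\sqrt{r}$, so it has a global solution in $\cS$. Embedded back, this is a solution of the full SpecGF with the same initial condition. By the uniqueness guaranteed by the analyticity proposition, it is the trajectory. The claim follows, and writing $\bw = \alpha\bv + \bz$ gives the form $\bB(t) = b(t)\bv^\top + c(t)\bz^\top$ used in \Cref{thm:main_sync_rk1}.

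\textbf{Main obstacle.} The only delicate point is uniqueness for SpecGF with $\cT$. There the field is discontinuous at zero gradients, and at $t=0$ we have $\nabla_\bB\cL = 0$ because $\bA(0)=\bm 0$. I would handle the $\cT_\beta$ case rigorously as above. For $\cT$, I would note that $\nabla_\bA \cL(0) = -\sigma\gamma\alpha\,\bu \neq 0$ by the assumption $\alpha\neq 0$, so $\bA$ immediately leaves $\bm 0$ along $\bu$. One can then argue on intervals where both gradients are nonzero, where the field is analytic, and patch these intervals together. Alternatively, one can take the $\beta\to 0$ limit of the invariant $\cT_\beta$ trajectories, which stay in the closed subspace $\cS$.
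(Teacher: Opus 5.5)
Your proposal is correct and follows the paper's proof: compute both gradients at a point of the subspace, note that in this rank-one case $\cT_\beta$ only rescales them by a positive scalar, so the velocity stays in $\mathrm{span}\{\bu\}\times\mathrm{span}\{\bv,\bw\}$, and conclude from the initial condition. Your appeal to uniqueness of solutions for the analytic field makes explicit a step the paper leaves implicit, and your remarks on $\cT$ are not needed here because the lemma sits in the $\cT_\beta$ setting; for $\cT$ the paper simply assumes this span structure.
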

\begin{proof}
    Assume that $\bA(t) = a(t)\bu$ and $\bB(t) = b_0(t)\bv^\top + c_0(t)\bw^\top$ for some $a(t), b_0(t), c_0(t) \in \sR$. We directly compute the gradients:
    \begin{align*}
        \nabla_\bA \cL(t) &= (\bA(t)\bB(t) - \bY)\bB(t)^\top = \bu\roundb{(a(t)b_0(t)-\sigma)(b_0(t)+c_0(t)\alpha) + a(t)c_0(t)(b_0(t)\alpha+c_0(t))},\\
        \nabla_\bB \cL(t) &= \bA(t)^\top(\bA(t)\bB(t) - \bY) = a(t)(a(t)b_0(t)-\sigma)\bv^\top + a(t)^2c_0(t)\bw^\top.
    \end{align*}
    Normalization only modifies the coefficients, so the updates $\dot\bA(t) \in \mathrm{span}\{\bu\}$ and $\dot\bB(t)^\top \in \mathrm{span}\{\bv, \bw\}$ hold. The proof is concluded as $\bA(0) \in \mathrm{span}\{\bu\}$ and $\bB(0)^\top \in \mathrm{span}\{\bv, \bw\}$.
\end{proof}

Without loss of generality, assume $\alpha > 0$. If $\alpha < 0$, one can simply flip the sign of $\bu$ and $\bv$ and the argument analogously holds. If $\alpha = 1$, then $\bw = \bv$ and it is straightforward to see that
\begin{align*}
    \nabla_\bA \cL(t) &= (\bA(t)\bB(t) - \bY)\bB(t)^\top = \bu(a(t)b_0(t)-\sigma)b_0(t),\\
    \nabla_\bB \cL(t) &= \bA(t)^\top(\bA(t)\bB(t) - \bY) = a(t)(a(t)b_0(t)-\sigma)\bv^\top.
\end{align*}
Hence, if $\gamma^2 < \sigma$, then the dynamics of $a(t)$ and $b_0(t)$ are exactly equal to those in \Cref{prop:specgf_rk1_lora}. Henceforth, consider the case $\alpha \in (0, 1)$.

For the convenience of later analysis, define
\begin{equation}
    \bz \triangleq \frac{1}{\sqrt{1-\alpha^2}}(\bw - \alpha\bv),
\end{equation}
which gives $\bw = \alpha \bv + \sqrt{1-\alpha^2}\bz$ and $\angleb{\bv, \bz} = 0$. Thus, $\bB(t) = (b_0(t) + \alpha c_0(t))\bv^\top + \sqrt{1-\alpha^2}c_0(t)\bz^\top$. For brevity, we write the parameters as
\begin{equation}
    \bA(t) = a(t)\bu, \quad \bB(t) \triangleq b(t)\bv^\top + c(t)\bz^\top.
\end{equation}
The goal is to show that 1) $a(t)b_0(t) \to \sigma$ \textit{with} neither $a(t) \to \infty$ nor $b_0(t) \to \infty$ and 2) $c_0(t) \to 0$; this is equivalent to showing that 1) $a(t)b(t) \to \sigma$ \textit{with} neither $a(t) \to \infty$ nor $b(t) \to \infty$ and 2) $c(t) \to 0$. The remaining part is to show that $a(t) \approx b(t)$; due to randomness in the initialization, $\dot a(t)$ cannot be exactly equal to $\dot { b}(t)$. Instead, we show that $|a(t) - b(t)|$ and $|\dot a(t) - \dot { b}(t)|$ are both uniformly small over $t$, and the gap vanishes to 0 as $\gamma \to 0$.

We derive the time derivative of the coefficients. From
\begin{align*}
    \cL(t) &= \frac{1}{2}\normF{\bA(t)\bB(t) - \bY}^2 = \frac{1}{2}\normF{\bu\roundb{(a(t)b(t) - \sigma)\bv^\top + a(t)c(t)\bz^\top}}^2\\
    &= \frac{1}{2}\norm{(a(t)b(t) - \sigma)\bv^\top + a(t)c(t)\bz^\top}_2^2 \triangleq \frac{1}{2}\norm{\bd(t)}_2^2,
\end{align*}
with $\bd(t) = a(t)\bB(t) - \sigma\bv^\top$. Whenever $\norm{\bd(t)}_2 = 0$, $a(t)b(t) = \sigma$ and $c(t) = 0$ and the dynamics terminate. From now on, the interval of time of our interest is $[0, T_\mathrm{max})$ where $T_\mathrm{max} = \inf \{t:\norm{\bd(t)}_2 = 0\}$; $T_\mathrm{max}$ could be possibly infinite.

We derive the loss gradient, with $m(t) \triangleq (a(t)b(t)-\sigma)b(t) + a(t)c(t)^2$,
\begin{align*}
    \nabla_\bA \cL(t) &= \bu\roundb{m(t)},\\
    \nabla_\bB \cL(t) &= a(t)\bd(t),
\end{align*}
and we obtain
\begin{align*}
    \dot a(t) &= -\frac{m(t)}{\sqrt{m(t)^2 + \beta}}\\
    \dot { b}(t) &= -\frac{a(t)(a(t)b(t) - \sigma)}{\sqrt{a(t)^2\norm{\bd(t)}_2^2 + \beta}}\\
    \dot { c}(t) &= -\frac{a(t)^2c(t)}{\sqrt{a(t)^2\norm{\bd(t)}_2^2 + \beta}}.
\end{align*}

As $ c(0) = \sqrt{1-\alpha^2}\gamma > 0$, it directly follows that $c(t) \geq 0$ for all $t \geq 0$ and $c(t)$ is non-increasing. In addition, we have $a(0) = 0$ with $\dot a(0) > 0$ and $ b(0) = \alpha\gamma > 0$. The simultaneous non-negativity of $a$ and $ b$ at $t = 0$ makes them keep non-negative for all $t \geq 0$ under continuity. If $ b$ hits 0 while $a(t) \geq 0$, it gains non-negative speed; same for $a$. The only way for them to be negative is that both $a$ and $ b$ go across 0 at the same time, but then the dynamics become stationary. Therefore, we can safely argue that $a(t), b(t), c(t) \geq 0$.

We return our attention to the loss function. We know that $\cL(t)$ is non-increasing, hence $\norm{\bd(t)}_2$ is also non-increasing. Precisely,
\begin{align*}
    \ddt \frac{1}{2}\norm{\bd(t)}_2^2 &= \angleb{\bd(t), \dot \bd(t)}\\
    &= \angleb{\bd(t), \dot a(t)\bB(t) + a(t)\dot\bB(t)}\\
    &= \dot a(t) \angleb{\bd(t), \bB(t)} + a(t) \angleb{\bd(t), \dot\bB(t)}\\
    &= -\frac{m(t)}{\sqrt{m(t)^2 + \beta}}\cdot m(t) - a(t) \frac{a(t)\norm{\bd(t)}_2^2}{\sqrt{a(t)^2\norm{\bd(t)}_2^2 + \beta}}\\
    &= -\frac{m(t)^2}{\sqrt{m(t)^2 + \beta}} - \frac{a(t)^2\norm{\bd(t)}_2^2}{\sqrt{a(t)^2\norm{\bd(t)}_2^2 + \beta}} \leq 0.
\end{align*}

Notice that $m(0) = -\sigma b(0) < 0$, thus $\ddt \norm{\bd(0)}_2^2 < 0$. Hence, for $t > 0$, $\norm{\bd(t)}_2^2 < \norm{\bd(0)}_2^2$. Then,
$\norm{\bd(t)}_2 < \sigma$ for $t > 0$ also implies that $a(t), b(t) > 0$; if any of two is 0 at $t$, then $\norm{\bd(t)}_2 = \norm{(a(t)b(t) - \sigma)\bv^\top + a(t)c(t)\bz^\top}_2 \geq \sigma$.
\begin{lemma}\label{lem:m_neg_Tb}
    $m(t) < 0$.
\end{lemma}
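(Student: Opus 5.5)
We argue by contradiction with a first-hitting-time argument, on $[0,T_\mathrm{max})$. Recall $m(t) = (a(t)b(t)-\sigma)b(t) + a(t)c(t)^2$, and note $m(0) = -\sigma b(0) = -\sigma\alpha\gamma < 0$. By continuity $m<0$ on some interval $[0,t_1)$. Suppose the claim fails, and let $t^\ast \coloneqq \inf\{t>0 : m(t) = 0\} < T_\mathrm{max}$. Then $m<0$ on $[0,t^\ast)$ and $m(t^\ast)=0$.

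The key step is to compute $\dot m$ at $t^\ast$ and show that $\dot m(t^\ast) < 0$. This contradicts $m$ rising from negative values to $0$, which requires $\dot m(t^\ast)\ge 0$; indeed, strict negativity forces $m>0$ just before $t^\ast$.

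At $t^\ast$ we have $\dot a(t^\ast) = -m/\sqrt{m^2+\beta} = 0$. Also $m=0$ gives $(ab-\sigma)b = -ac^2$. Differentiating,
\begin{equation*}
\dot m = \dot a\,(2ab - \sigma + c^2) + \dot b\,(2ab - \sigma) \cdot \tfrac{b}{b} + 2ac\dot c .
\end{equation*}
More carefully, $\partial_b m = 2ab - \sigma$ and $\partial_c m = 2ac$. With $\dot a=0$ this leaves
\begin{equation*}
\dot m(t^\ast) = (2ab-\sigma)\dot b + 2ac\,\dot c .
\end{equation*}
Write $D \coloneqq \sqrt{a^2\norm{\bd}_2^2+\beta}$. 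Substituting $\dot b = -a(ab-\sigma)/D$ and $\dot c = -a^2c/D$ gives
\begin{equation*}
\dot m(t^\ast) = -\frac{a}{D}\Big[(2ab-\sigma)(ab-\sigma) + 2a^2c^2\Big].
\end{equation*}

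Now use $m=0$ together with $a,b>0$, which holds for $t>0$ as shown just before the lemma. If $c>0$, then $ab-\sigma = -ac^2/b < 0$. If $c=0$, then $ab=\sigma$ and $\bd = 0$, so $t^\ast \ge T_\mathrm{max}$, which is excluded. Hence $0<ab<\sigma$.

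The bracket simplifies using $ab-\sigma = -ac^2/b$, i.e.\ $a^2c^2 = ab(\sigma-ab)$:
\begin{equation*}
(2ab-\sigma)(ab-\sigma) + 2ab(\sigma-ab) = (\sigma-ab)(\sigma-2ab+2ab) = \sigma(\sigma-ab) > 0 .
\end{equation*}
Therefore $\dot m(t^\ast) = -a\sigma(\sigma-ab)/D < 0$, since $a>0$. This gives the contradiction.

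The main obstacle is the bookkeeping in this derivative. In particular, one must avoid dividing by $b$ before establishing $b>0$, and one must rule out the degenerate case $c(t^\ast)=0$. That case is handled by the definition of $T_\mathrm{max}$ as the first zero of $\norm{\bd}_2$, together with the positivity of $a$ and $b$ from the preceding paragraph.
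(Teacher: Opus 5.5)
Your proof is correct and follows the paper's route: a first-hitting-time contradiction showing $\dot m(t^\ast) = -a\sigma(\sigma-ab)/D = -\sigma\dot b(t^\ast) < 0$. The paper reaches the same expression a bit faster by writing $m = a\normF{\bB}^2 - \sigma b$ and noting that $a\ddt\normF{\bB}^2 = -2a^2m/D$ vanishes at $m=0$, whereas you compute via partial derivatives and simplify using $m=0$; your explicit check that $a>0$ and $0<ab<\sigma$ at $t^\ast$ fills in a step the paper leaves terse, but fix the slip $\partial_a m = b^2+c^2$ (not $2ab-\sigma+c^2$), which is harmless here only because $\dot a(t^\ast)=0$.
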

\begin{proof}
    Recall that $m(t) = (a(t)b(t)-\sigma)b(t) + a(t)c(t)^2$ and $m(0) = -\sigma b(0) < 0$.
    
    If there exists some time $\tau \triangleq \min\curlyb{t: m(t) \geq 0}$, then $m(\tau) = 0$ and $\dot a(\tau) = 0$. In addition, from
    \begin{align*}
        m(t) &= \angleb{\bd(t), \bB(t)}\\
        &= a(t)\normF{\bB(t)}^2 - \sigma b(t).\\
        \dot m(t) &= \dot a(t)\normF{\bB(t)}^2 + a(t) \ddt\normF{\bB(t)}^2 - \sigma \dot { b}(t)\\
        &= \dot a(t)\normF{\bB(t)}^2 - 2a(t)^2\frac{m(t)}{\sqrt{a(t)^2\norm{\bd(t)}_2^2 + \beta}} - \sigma \dot { b}(t).
    \end{align*}
    Plugging $\tau$ vanishes the first two terms on the right-hand side, giving $\dot m(\tau) = -\sigma \dot { b}(\tau) < 0$, a contradiction: If $\dot { b}(\tau) = 0$, then we have $a(\tau) b(\tau) = \sigma$ and $ c(\tau) = 0$.
\end{proof}
\Cref{lem:m_neg_Tb} directly implies that $a(t)b(t) < \sigma$, thus $a(t)$ and $b(t)$ are strictly increasing and positive. Therefore, all the scalars $a(t), b(t)$, and $c(t)$ are bounded, thus SpecGF with $\cT_\beta$ converges. From \citet{bah2022learning}, the critical point is either the origin or $\bY$, the point where $a(t)b(t) = \sigma$ and $c(t) = 0$. For $t > 0$, $\norm{\bd(t)}_2^2 < \norm{\bd(0)}_2^2$ and thus the origin cannot be the converging point. Therefore, SpecGF with $\cT_\beta$ converges to the global minima, and we see that $\norm{\bd(t)}_2 \to 0$, thus $a(t)b(t) \to \sigma$ and $c(t) \to 0$. This recovers the limiting behavior of $b_0(t) = b(t) - \alpha c_0(t)$ and $c_0(t) = \frac{1}{\sqrt{1-\alpha^2}}c(t)$.

We now bound $\Delta(t) \triangleq a(t) - b(t)$. Recall that

\begin{equation*}
    \norm{\bd(t)}_2 = \sqrt{\roundb{a(t)b(t)-\sigma}^2 + \roundb{a(t)c(t)}^2}.
\end{equation*}

\begin{enumerate}
    \item $a(t) \geq b(t)$: Notice that, from $-m(t) = -(a(t)b(t)-\sigma)b(t) - a(t)c(t)^2 \leq -(a(t)b(t)-\sigma)b(t) \leq -(a(t)b(t)-\sigma)a(t)$,
    \begin{equation*}
        \dot a(t) = -\frac{m(t)}{\sqrt{m(t)^2 + \beta}} = \frac{-m(t)}{\sqrt{(-m(t))^2 + \beta}} \leq \frac{-(a(t)b(t)-\sigma)a(t)}{\sqrt{(a(t)b(t)-\sigma)^2a(t)^2 + \beta}}
    \end{equation*}
    Then
    \begin{align*}
        \dot \Delta(t) &= \dot a(t) - \dot { b}(t)\\
        &\leq \frac{-(a(t)b(t)-\sigma)a(t)}{\sqrt{(a(t)b(t)-\sigma)^2a(t)^2 + \beta}} + \frac{a(t)(a(t)b(t) - \sigma)}{\sqrt{a(t)^2\norm{\bd(t)}_2^2 + \beta}}\\
        &= \frac{-(a(t)b(t)-\sigma)a(t)\roundb{a(t)^2\norm{\bd(t)}_2^2 - a(t)^2(a(t)b(t)-\sigma)^2}}{\sqrt{a(t)^2\norm{\bd(t)}_2^2 + \beta}\sqrt{(a(t)b(t)-\sigma)^2a(t)^2 + \beta}\roundb{\sqrt{a(t)^2\norm{\bd(t)}_2^2 + \beta} + \sqrt{(a(t)b(t)-\sigma)^2a(t)^2 + \beta}}}\\
        &= \frac{-(a(t)b(t)-\sigma)a(t)^3\roundb{a(t)c(t)}^2}{\sqrt{a(t)^2\norm{\bd(t)}_2^2 + \beta}\sqrt{(a(t)b(t)-\sigma)^2a(t)^2 + \beta}\roundb{\sqrt{a(t)^2\norm{\bd(t)}_2^2 + \beta} + \sqrt{(a(t)b(t)-\sigma)^2a(t)^2 + \beta}}}\\
        &= -\dot { c}(t)\frac{-(a(t)b(t)-\sigma)a(t)^3c(t)}{\sqrt{(a(t)b(t)-\sigma)^2a(t)^2 + \beta}\roundb{\sqrt{a(t)^2\norm{\bd(t)}_2^2 + \beta} + \sqrt{(a(t)b(t)-\sigma)^2a(t)^2 + \beta}}}\\
        &\leq -\dot { c}(t)\frac{-(a(t)b(t)-\sigma)a(t)^3c(t)}{-(a(t)b(t)-\sigma)a(t)\roundb{a(t)\norm{\bd(t)}_2}} \leq -\dot { c}(t).
    \end{align*}
    \item $a(t) < b(t)$: We require additional assumptions on $\gamma$.
    \begin{assumption}\label{asm:alpha_small_Tb_2}
        Take $T \triangleq \frac{\sqrt{1-\alpha^2}}{\alpha}\roundb{1 + \frac{4\sqrt{\beta}}{\sigma}} > 0$; notice that $T$ is independent of $\gamma$. $\gamma > 0$ satisfies
        \begin{equation*}
            \gamma^2 < \frac{\alpha\sigma}{4(1-\alpha^2)^{3/2}} \quad \text{and} \quad \gamma < \min\curlyb{\frac{\sigma/2}{\sqrt{1-\alpha^2}(\alpha + T)}, 1}.
        \end{equation*}
    \end{assumption}
    \Cref{asm:alpha_small_Tb_2} translates to
    \begin{equation*}
         c(0)^3 \leq \frac{\sigma}{4} b(0), \quad  c(0) \roundb{ b(0) + T} \leq \frac{\sigma}{2}.
    \end{equation*}
    The first one is straightforward; the second one holds as
    \begin{align*}
         c(0) \roundb{ b(0) + T} = \gamma\sqrt{1-\alpha^2}\roundb{\gamma\alpha + T} \leq \gamma\sqrt{1-\alpha^2}\roundb{\alpha + T} < \frac{\sigma}{2}.
    \end{align*}
    
    We see that
    \begin{align*}
        -\dot \Delta(t) &= \dot { b}(t) - \dot a(t)\\
        &\leq -\frac{a(t)(a(t)b(t) - \sigma)}{\sqrt{(a(t)b(t)-\sigma)^2a(t)^2 + \beta}} + \frac{m(t)}{\sqrt{(a(t)b(t)-\sigma)^2b(t)^2 + \beta}}\\
        &= -\frac{a(t)(a(t)b(t) - \sigma)}{\sqrt{(a(t)b(t)-\sigma)^2a(t)^2 + \beta}} + \frac{b(t)(a(t)b(t) - \sigma)}{\sqrt{(a(t)b(t)-\sigma)^2b(t)^2 + \beta}} + \frac{a(t)c(t)^2}{\sqrt{(a(t)b(t)-\sigma)^2b(t)^2 + \beta}}\\
        &\leq \frac{a(t)c(t)^2}{\sqrt{(a(t)b(t)-\sigma)^2b(t)^2 + \beta}} \leq \frac{a(t)c(t)^2}{\sqrt{\beta}}.
    \end{align*}
    We aim to bound the integral of $a(t)c(t)^2$. To this end, define $t_c$ when $a$ gets larger than $ c$:
    \begin{equation*}
        t_c \triangleq \inf\curlyb{t:a(t) \geq c(t)} < T_\mathrm{max}.
    \end{equation*}
    This time uniquely exists as $a(t)$ is strictly increasing while $c(t)$ (strictly) decreases to $0$ for $t > 0$.
    \begin{itemize}
        \item On $t \geq t_c$,
        \begin{equation*}
            a(t)c(t)^2 = -\dot { c}(t)\frac{c(t) \sqrt{a(t)^2 \norm{\bd(t)}_2^2 + \beta}}{a(t)} \leq -\dot { c}(t) \frac{c(t)}{a(t)}\roundb{a(t)\norm{\bd(t)}_2 + \sqrt{\beta}} \leq -\dot { c}(t) \roundb{c(t)\sigma + \sqrt{\beta}}.
        \end{equation*}
        This implies
        \begin{align*}
            \int_{t_c}^{T_\mathrm{max}} a(t)c(t)^2 \mathrm{d}t \leq \sigma \int_{t_c}^{T_\mathrm{max}} -\dot { c}(t)c(t) \mathrm{d}t + \sqrt{\beta} \int_{t_c}^{T_\mathrm{max}} -\dot { c}(t) \mathrm{d}t
            \leq \frac{\sigma}{2} c(0)^2 + \sqrt{\beta} c(0).
        \end{align*}

        \item For $t \leq t_c$, we show that $T \geq t_c$ from \Cref{asm:alpha_small_Tb_2}. Then, the integral of $a(t)c(t)^2$ on $[0, t_c]$ is bounded by that on $[0, T]$. Suppose by contradiction that $t_c > T$. On $[0, T] \subset [0, t_c]$,
        \begin{equation*}
            a(t) < c(t) \leq  c(0), \quad b(t) \leq  b(0) + T.
        \end{equation*}
        The latter holds since $\dot { b}(t)$ is bounded by 1. Then, on $[0, T]$,
        \begin{equation*}
            \sigma - a(t)b(t) \geq \sigma -  c(0)( b(0) + T) \geq \frac\sigma 2,
        \end{equation*}
        and this further implies
        \begin{equation*}
            -m(t) = (\sigma - a(t)b(t))b(t) - a(t)c(t)^2 \geq  b(0)\frac\sigma 2 -  c(0)^3 \geq  b(0)\frac\sigma 4.
        \end{equation*}
        Hence,
        \begin{equation*}
            \dot a(t) = -\frac{m(t)}{\sqrt{m(t)^2 + \beta}} \geq \frac{-m(t)}{-m(t) + \sqrt\beta} \geq \frac{ b(0)\sigma}{ b(0)\sigma + 4\sqrt\beta}.
        \end{equation*}
        However, this implies
        \begin{equation*}
            a(T) \geq T\frac{ b(0)\sigma}{ b(0)\sigma + 4\sqrt\beta} = \frac{\sqrt{1-\alpha^2}}{\alpha}\roundb{1 + \frac{4\sqrt{\beta}}{\sigma}}\frac{ b(0)\sigma}{ b(0)\sigma + 4\sqrt\beta} =  c(0),
        \end{equation*}
        a contradiction.
    \end{itemize}
    
    We see that
    \begin{equation*}
        \int_0^{T_\mathrm{max}} a(t)c(t)^2 \mathrm{d}t = \int_0^{t_c} a(t)c(t)^2 \mathrm{d}t + \int_{t_c}^{T_\mathrm{max}} a(t)c(t)^2 \mathrm{d}t \leq T c(0)^3 + \frac{\sigma}{2} c(0)^2 + \sqrt{\beta} c(0).
    \end{equation*}
\end{enumerate}

We summarize what we have until now.
\begin{align*}
\begin{dcases}
    \text{On } I_+ \triangleq \{t: a(t) \geq b(t)\}, & \dot \Delta(t) \leq -c(t).\\
    \text{On } I_- \triangleq \{t: a(t) < b(t)\}, & -\dot \Delta(t) \leq \frac{a(t)c(t)^2}{\sqrt{\beta}}.
\end{dcases}
\end{align*}
Notice that $\Delta(t)$ is (real-)analytic, hence its zero set is discrete (even finite on each $[0, t]$). Let $\Delta_+(t) \triangleq \max\{\Delta(t), 0\}$, then $\Delta_+$ is everywhere continuous and a.e. differentiable (precisely, differentiable except on discrete set).
\begin{enumerate}
    \item Define $p(t) \triangleq \Delta_+(t) + c(t)$; it is everywhere continuous and a.e. differentiable. On $I_+$, $\dot p(t) = \dot a(t) - \dot { b}(t) + \dot { c}(t) \leq 0$. On $I_-$, $\dot p(t) = \dot { c}(t) \leq 0$. Hence, $p(t)$ is non-increasing and
    \begin{equation*}
        \Delta_+(t) + c(t) = p(t) \leq p(0) = \Delta_+(0) +  c(0).
    \end{equation*}

    \item Define $q(t) \triangleq (-\Delta)_+(t) - \frac{1}{\sqrt{\beta}}\int_0^t a(s) c(s)^2 \mathrm{d}s$; it is everywhere continuous and a.e. differentiable. On $I_-$, $\dot q(t) = \dot { b}(t) - \dot a(t) - \frac{a(t)c(t)^2}{\sqrt{\beta}} \leq 0$. On $I_+$, $\dot q(t) = - \frac{a(t)c(t)^2}{\sqrt{\beta}} \leq 0$. Hence, $q(t)$ is non-increasing and
    \begin{equation*}
        (-\Delta)_+(t) - \frac{1}{\sqrt{\beta}}\int_0^t a(s) c(s)^2 \mathrm{d}s = q(t) \leq q(0) = (-\Delta)_+(0).
    \end{equation*}
\end{enumerate}
Therefore,
\begin{align*}
    |\Delta(t)| &= \Delta_+(t) + (-\Delta)_+(t) \leq \Delta_+(0) +  c(0) - c(t) + (-\Delta)_+(0) + \frac{1}{\sqrt{\beta}}\int_0^t a(s) c(s)^2 \mathrm{d}s\\
    &= |\Delta(0)| +  c(0) + \frac{1}{\sqrt{\beta}}\int_0^t a(s) c(s)^2 \mathrm{d}s\\
    &\leq  b(0) +  c(0) + \frac{1}{\sqrt{\beta}}\squarb{T c(0)^3 + \frac{\sigma}{2} c(0)^2 + \sqrt{\beta} c(0)}\\
    &= \gamma\alpha + 2\gamma\sqrt{1-\alpha^2} + \frac{\sigma(1-\alpha^2)}{2\sqrt\beta}\gamma^2 + \frac{T}{\sqrt\beta}(1-\alpha^2)^{\frac32}\gamma^3 \triangleq C(\gamma).
\end{align*}
Observe that the difference between the ``singular values'' is uniformly small, and especially the difference vanishes to 0 as $\gamma \to 0$.

\paragraph{Synchronized speed of singular values.} We know that $a(t)$ is bounded; denote the upper bound of $a(t)$ as $A > 0$. $A$ is computed as follows:
\begin{equation*}
    a(t)^2 - \Delta(t)a(t) = a(t)b(t) \leq \sigma,
\end{equation*}
and thus 
\begin{equation*}
    a(t)^2 \leq \sigma + |\Delta(t)|a(t) \leq \sigma + C(\gamma)a(t),
\end{equation*}
giving
\begin{equation*}
    A = \frac{C(\gamma) + \sqrt{C(\gamma)^2 + 4\sigma}}{2}.
\end{equation*}
Notice that $A \to \sqrt\sigma$ as $\gamma \to 0$.

\begin{proposition}
    \begin{equation*}
        \abs*{\dot a(t) - \dot { b}(t)} \leq \frac{1}{\sqrt\beta} \roundb{C(\gamma)\sigma + A(1-\alpha^2)\gamma^2} + \frac{A^4(1-\alpha^2)\gamma^2}{2\beta}.
    \end{equation*}
    The right-hand side goes to 0 as $\gamma \to 0$.
\end{proposition}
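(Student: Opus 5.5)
We bound $\dot a - \dot b$ directly from the explicit formulas
\begin{equation*}
\dot a = \frac{-m}{\sqrt{m^2+\beta}},\qquad \dot b = \frac{-a(ab-\sigma)}{\sqrt{a^2\norm{\bd}_2^2+\beta}},
\end{equation*}
by passing through an intermediate quantity that differs from each by a controlled amount. The map $\phi(x)=x/\sqrt{x^2+\beta}$ is $\frac{1}{\sqrt\beta}$-Lipschitz, and $x\mapsto 1/\sqrt{x+\beta}$ satisfies $\abs{(x+\beta)^{-1/2}-(y+\beta)^{-1/2}}\le \abs{x-y}/(2\beta^{3/2})$. Hence we write
\begin{equation*}
\dot a-\dot b = \big[\phi(-m)-\phi(-a(ab-\sigma))\big] + \Big[\frac{-a(ab-\sigma)}{\sqrt{a^2(ab-\sigma)^2+\beta}}-\frac{-a(ab-\sigma)}{\sqrt{a^2\norm{\bd}_2^2+\beta}}\Big].
\end{equation*}

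\textbf{First bracket.} By Lipschitzness, this term is at most $\frac{1}{\sqrt\beta}\abs{m-a(ab-\sigma)}$. Now
\begin{equation*}
m-a(ab-\sigma)=(ab-\sigma)(b-a)+ac^2 .
\end{equation*}
Using $0\le \sigma-ab\le\sigma$ (from \Cref{lem:m_neg_Tb}, $ab<\sigma$, and $a,b\ge0$), $\abs{b-a}=\abs{\Delta}\le C(\gamma)$, $a\le A$, and $c(t)\le c(0)=\sqrt{1-\alpha^2}\,\gamma$, so that $c^2\le(1-\alpha^2)\gamma^2$, we get
\begin{equation*}
\abs{m-a(ab-\sigma)}\le C(\gamma)\sigma + A(1-\alpha^2)\gamma^2 .
\end{equation*}
This produces the first term of the claimed bound.

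\textbf{Second bracket.} Since $\norm{\bd}_2^2=(ab-\sigma)^2+a^2c^2$, the two radicands differ by exactly $a^4c^2$. Multiplying the inverse-square-root estimate by the numerator prefactor $\abs{a(ab-\sigma)}$ would produce an extra factor of order $A\sigma$. To get the stated constant $\frac{A^4(1-\alpha^2)\gamma^2}{2\beta}$ instead, we keep the prefactor inside. Setting $u=a^2(ab-\sigma)^2$, $v=u+a^4c^2$, and $x=\abs{a(ab-\sigma)}$, we have
\begin{equation*}
x\Big(\frac{1}{\sqrt{u+\beta}}-\frac{1}{\sqrt{v+\beta}}\Big)=\frac{x\,(v-u)}{\sqrt{u+\beta}\sqrt{v+\beta}\,\big(\sqrt{u+\beta}+\sqrt{v+\beta}\big)} .
\end{equation*}
Bound $x\le\sqrt{u+\beta}$, use $\sqrt{v+\beta}\ge\sqrt\beta$ and $\sqrt{u+\beta}+\sqrt{v+\beta}\ge 2\sqrt\beta$. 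This gives at most $(v-u)/(2\beta)=a^4c^2/(2\beta)\le A^4(1-\alpha^2)\gamma^2/(2\beta)$. The triangle inequality then assembles the claimed bound.

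\textbf{Limit $\gamma\to0$.} The explicit formula gives $C(\gamma)=O(\gamma)$ for fixed $\alpha,\beta,\sigma$, and $A\to\sqrt\sigma$ stays bounded. Hence every term vanishes as $\gamma\to0$.

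\textbf{Main obstacle.} The main obstacle is the bookkeeping in the second bracket. Choosing the wrong normalization introduces a spurious factor $A\sigma$, so the prefactor must be absorbed into $\sqrt{u+\beta}$ as above. All other steps are direct applications of bounds already established: $\abs{\Delta}\le C(\gamma)$, $a\le A$, monotone $c$, and $ab<\sigma$.
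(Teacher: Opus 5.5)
Your proposal is correct and follows the paper's proof essentially step for step. You use the same intermediate term $a(\sigma-ab)/\sqrt{a^2(\sigma-ab)^2+\beta}$, the $\frac{1}{\sqrt\beta}$-Lipschitz bound with $m-a(ab-\sigma)=(ab-\sigma)(b-a)+ac^2$ for the first piece, and the same absorption of the prefactor $a(\sigma-ab)\le\sqrt{a^2(\sigma-ab)^2+\beta}$ for the second piece. The only cosmetic difference is that you rationalize the difference of inverse square roots exactly, whereas the paper applies the mean value theorem to $x\mapsto x^{-1/2}$ to obtain $\frac{a^4c^2}{2(a^2(\sigma-ab)^2+\beta)^{3/2}}$ before absorbing that prefactor.
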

\begin{proof}
    Recall
    \begin{align*}
        \dot a(t) &= -\frac{m(t)}{\sqrt{m(t)^2 + \beta}}, \quad m(t) = (a(t)b(t)-\sigma)b(t) + a(t)c(t)^2,\\
        \dot { b}(t) &= -\frac{a(t)(a(t)b(t) - \sigma)}{\sqrt{a(t)^2\norm{\bd(t)}_2^2 + \beta}}.
    \end{align*}
    Then,
    \begin{align*}
        \abs*{\dot a(t) - \dot { b}(t)} &\leq \abs*{-\frac{m(t)}{\sqrt{m(t)^2 + \beta}} - \frac{a(t)(\sigma - a(t)b(t))}{\sqrt{a(t)^2(\sigma - a(t)b(t))^2 + \beta}}}\\ &\phantom{=}+ \abs*{\frac{a(t)(\sigma - a(t)b(t))}{\sqrt{a(t)^2(\sigma - a(t)b(t))^2 + \beta}} - \frac{a(t)(\sigma - a(t)b(t))}{\sqrt{a(t)^2\norm{\bd(t)}_2^2 + \beta}}}\\
        &\leq \frac{1}{\sqrt\beta} \abs*{-m(t) - a(t)(\sigma - a(t)b(t))}\\
        &\phantom{=}+ a(t)(\sigma - a(t)b(t))\roundb{\frac{1}{\sqrt{a(t)^2(\sigma - a(t)b(t))^2 + \beta}} - \frac{1}{\sqrt{a(t)^2\roundb{(\sigma - a(t)b(t))^2 + (a(t)c(t))^2} + \beta}}}\\
        &\leq \frac{1}{\sqrt\beta} \abs*{(b(t) - a(t))(\sigma - a(t)b(t)) - a(t)c(t)^2} + a(t)(\sigma - a(t)b(t))\frac{a(t)^4c(t)^2}{2\roundb{a(t)^2(\sigma - a(t)b(t))^2 + \beta}^{\frac32}}\\
        &\leq \frac{1}{\sqrt\beta} \roundb{|\Delta(t)|(\sigma - a(t)b(t)) + a(t)c(t)^2} + \frac{a(t)^4c(t)^2}{2\roundb{a(t)^2(\sigma - a(t)b(t))^2 + \beta}}\\
        &\leq \frac{1}{\sqrt\beta} \roundb{C(\gamma)\sigma + A c(0)^2} + \frac{A^4 c(0)^2}{2\beta} = \frac{1}{\sqrt\beta} \roundb{C(\gamma)\sigma + A(1-\alpha^2)\gamma^2} + \frac{A^4(1-\alpha^2)\gamma^2}{2\beta}.
    \end{align*}
    The second inequality follows from the Lipschtiz continuity of $x \mapsto \frac{x}{\sqrt{x^2+\beta}}$ with constant $\frac{1}{\sqrt\beta}$. The third inequality follows from the mean value theorem applied to $x\mapsto\frac{1}{\sqrt x}$.
\end{proof}

\subsubsection{SpecGF with $\cT$}
We note that the dynamics exist globally (from boundedness of velocity by $\cT$). In addition, for this rank-1 case, SpecGF with $\cT$ is identical to the normalized gradient descent:
\begin{align*}
    \cT(\nabla_\bA \cL) = \frac{\nabla_\bA \cL}{\normF{\nabla_\bA \cL}}, \quad \cT(\nabla_\bB \cL) = \frac{\nabla_\bB \cL}{\normF{\nabla_\bB \cL}}.
\end{align*}
Just as SpecGF with $\cT_\beta$, we sample a random unit vector $\bw$ in $\sR^n$, then $\alpha \triangleq \angleb{\bv, \bw} \ne 0$ with probability 1.

\begin{assumption}
    $\alpha \ne 0$.
\end{assumption}

For now, assume $\alpha > 0$; we will justify it shortly after. A typical LoRA training would start at
\begin{equation*}
    \bA_0 = \bm 0, \quad \bB_0 = \gamma \bw^\top,
\end{equation*}
for sufficiently small $\gamma > 0$ with gradients
\begin{align*}
    \nabla_\bA \cL &= (\bA\bB - \bY)\bB^\top,\\
    \nabla_\bB \cL &= \bA^\top(\bA\bB - \bY).
\end{align*}
However, $\bA$ being zero induces a zero gradient for $\bB$, an ill-defined situation for SpecGF. This motivates us to initialize the SpecGF at the next iteration of LoRA training with step size $\eta$:
\begin{equation*}
    \bA_1 = \eta \bu, \quad \bB_1 = \gamma \bw^\top.
\end{equation*}
Thus the trajectory is intact but with nonzero gradients. For simplicity, we take $\eta = \gamma$. SpecGF with $\cT$ starts at
\begin{equation}
    \bA(0) = \gamma \bu, \quad \bB(0) = \gamma \bw^\top.
\end{equation}

We first show that $\bA_t \in \mathrm{span}\{\bu\}$ and $\bB_t^\top \in \mathrm{span}\{\bv, \bw\}$ at all $t \geq 0$ for \textit{SpecGD} with $\cT$.
\begin{lemma}
    For all $t \geq 0$, $\bA_t \in \mathrm{span}\{\bu\}$ and $\bB_t^\top \in \mathrm{span}\{\bv, \bw\}$ hold.
\end{lemma}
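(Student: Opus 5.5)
The plan is an induction on the discrete iteration index $t$, mirroring the continuous-time argument of the analogous lemma for SpecGF with $\cT_\beta$. Here the dynamics is SpecGD with $\cT$, $\bA_{t+1} = \bA_t - \eta\,\cT(\nabla_\bA\cL(\bA_t,\bB_t))$ and $\bB_{t+1} = \bB_t - \eta\,\cT(\nabla_\bB\cL(\bA_t,\bB_t))$. The base case holds by construction: $\bA_0 = \bm 0$ and $\bA_1 = \eta\bu$ lie in $\mathrm{span}\{\bu\}$, and $\bB_0^\top = \bB_1^\top = \gamma\bw$ lies in $\mathrm{span}\{\bv,\bw\}$. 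More generally, if we start from either $(\bA_0,\bB_0)$ or $(\bA_1,\bB_1)$, the claim holds at the starting index.

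For the inductive step, suppose $\bA_t = a_t\bu$ and $\bB_t = b_t\bv^\top + c_t\bw^\top$. The gradient computation is the same as in the earlier lemma. Since $\bY = \sigma\bu\bv^\top$, we get $\bA_t\bB_t - \bY = \bu\,\bd_t$ with $\bd_t \coloneqq (a_tb_t-\sigma)\bv^\top + a_tc_t\bw^\top$. Hence $\nabla_\bA\cL = \bu\,(\bd_t\bB_t^\top)$ is a scalar multiple of $\bu$, and $\nabla_\bB\cL = a_t\bd_t$ is a row vector whose transpose lies in $\mathrm{span}\{\bv,\bw\}$.

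Next I need to track how $\cT$ acts on these rank-at-most-one objects. Because $r=1$, both gradients are vectors (an $m\times 1$ column and a $1\times n$ row). For a nonzero vector $\bx$, the compact SVD gives $\cT(\bx) = \bx/\norm{\bx}_2$. For the zero vector, \eqref{eq:T_def} gives $\cT(\bm 0) = \bm 0$, since the pseudoinverse of the zero matrix is zero. In either case $\cT$ only rescales the gradient by a nonnegative scalar, so the updates stay in $\mathrm{span}\{\bu\}$ and $\mathrm{span}\{\bv,\bw\}$ respectively. Adding them to $\bA_t$ and $\bB_t$ keeps both iterates in the same spans, which closes the induction.

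There is no real obstacle here. The only point that needs care is checking that $\cT$ is well defined and span-preserving at degenerate points where a gradient vanishes, which the pseudoinverse convention handles. Nothing about the sign of $\alpha$ or the size of $\gamma$ is needed for this lemma.
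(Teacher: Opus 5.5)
Your proposal is correct and follows the paper's proof essentially verbatim. Both induct on $t$, show that $\nabla_\bA\cL$ is a multiple of $\bu$ and $(\nabla_\bB\cL)^\top \in \mathrm{span}\{\bv,\bw\}$, and note that $\cT$ acts on these vectors by nonnegative rescaling. Your explicit treatment of the zero-gradient case via $\cT(\bm 0)=\bm 0$ is a small improvement over the paper. It matters because $\nabla_\bB\cL$ really does vanish at $t=0$, where $\bA_0=\bm 0$, and the paper's one-line remark that ``normalization only modifies the coefficients'' glosses over this.
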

\begin{proof}
    Assume that $\bA_t = a\bu$ and $\bB_t = b\bv^\top + c\bw^\top$ for some $a, b, c \in \sR$. We directly compute the gradients:
    \begin{align*}
        \nabla_\bA \cL(t) &= (\bA_t\bB_t - \bY)\bB_t^\top = \bu\roundb{(ab-\sigma)(b+c\alpha) + ac(b\alpha+c)},\\
        \nabla_\bB \cL(t) &= \bA_t^\top(\bA_t\bB_t - \bY) = a(ab-\sigma)\bv^\top + a^2c\bw^\top.
    \end{align*}
    Normalization only modifies the coefficients, so the updates $\Delta\bA_t \in \mathrm{span}\{\bu\}$ and $\Delta\bB_t^\top \in \mathrm{span}\{\bv, \bw\}$ hold. The proof is concluded as $\bA_0 \in \mathrm{span}\{\bu\}$ and $\bB_0^\top \in \mathrm{span}\{\bv, \bw\}$.
\end{proof}

Hence, for SpecGF with $\cT$, we assume that there exists a solution that is the continuous version of the solution for SpecGD with $\cT$; we study only this solution. Indeed, such $\bA(t)$ and $\bB(t)$ are the solutions for the rank-1 case with suitable coefficient functions. 
\begin{assumption}
    There exist scalars $a(t), b_0(t)$, and $c_0(t)$ in $\sR$ that are absolutely continuous and  $\bA(t) = a(t)\bu$ and $\bB(t) = b_0(t)\bv^\top + c_0(t)\bw^\top$.
\end{assumption}
We require absolute continuity to define the derivative at almost every $t$. In addition, we assume that the initialization scale $\gamma$ is sufficiently small.
\begin{assumption}\label{asm:alpha_small}
    $\gamma^2 < \alpha\sigma$.
\end{assumption}

Without loss of generality, assume $\alpha > 0$. If $\alpha < 0$, one can simply flip the sign of $\bu$ and $\bv$ and the argument analogously holds. If $\alpha = 1$, then $\bw = \bv$ and it is straightforward to see that
\begin{align*}
    \nabla_\bA \cL(t) &= (\bA(t)\bB(t) - \bY)\bB(t)^\top = \bu(a(t)b_0(t)-\sigma)b_0(t),\\
    \nabla_\bB \cL(t) &= \bA(t)^\top(\bA(t)\bB(t) - \bY) = a(t)(a(t)b_0(t)-\sigma)\bv^\top.
\end{align*}
Hence, if $\gamma^2 < \sigma$, then $a(t)$ and $b_0(t)$ get a unit update, which yields
\begin{equation}
    a(t) = \gamma + \max\{T_\gamma, t\}, \quad b_0(t) = \gamma + \max\{T_\gamma, t\}
\end{equation}
where $T_\gamma > 0$ satisfies $(T_\gamma + \gamma)^2 = \sigma$. Moreover, in this case, $a(t) = b_0(t)$ for all $t\geq 0$. Henceforth, consider the case $\alpha \in (0, 1)$.

For the convenience of later analysis, define
\begin{equation}
    \bz \triangleq \frac{1}{\sqrt{1-\alpha^2}}(\bw - \alpha\bv),
\end{equation}
which gives $\bw = \alpha \bv + \sqrt{1-\alpha^2}\bz$ and $\angleb{\bv, \bz} = 0$. Thus, $\bB(t) = (b_0(t) + \alpha c_0(t))\bv^\top + \sqrt{1-\alpha^2}c_0(t)\bz^\top$. For brevity, we write the parameters as
\begin{equation}
    \bA(t) = a(t)\bu, \quad \bB(t) \triangleq b(t)\bv^\top + c(t)\bz^\top.
\end{equation}
The goal is to show that 1) $a(t)b_0(t) \to \sigma$ \textit{with} neither $a(t) \to \infty$ nor $b_0(t) \to \infty$ and 2) $c_0(t) \to 0$; this is equivalent to showing that 1) $a(t)b(t) \to \sigma$ \textit{with} neither $a(t) \to \infty$ nor $b(t) \to \infty$ and 2) $c(t) \to 0$. The remaining part is to show that $a(t) \approx b(t)$; due to randomness in the initialization, $\dot a(t)$ cannot be exactly equal to $\dot { b}(t)$ even if we employ $\cT$. Instead, we show that $|a(t) - b(t)|$ and $|\dot a(t) - \dot{ b}(t)|$ are both uniformly small over $t$, and the gap vanishes to 0 as $\gamma \to 0$.

We derive the time derivative of the coefficients. From
\begin{align*}
    \cL(t) &= \frac{1}{2}\normF{\bA(t)\bB(t) - \bY}^2 = \frac{1}{2}\normF{\bu\roundb{(a(t)b(t) - \sigma)\bv^\top + a(t)c(t)\bz^\top}}^2\\
    &= \frac{1}{2}\norm{(a(t)b(t) - \sigma)\bv^\top + a(t)c(t)\bz^\top}_2^2 \triangleq \frac{1}{2}\norm{\bd(t)}_2^2,
\end{align*}
with $\bd(t) = a(t)\bB(t) - \sigma\bv^\top$. Whenever $\norm{\bd(t)}_2 = 0$, $a(t)b(t) = \sigma$ and $c(t) = 0$ and the dynamics terminate. From now on, the interval of time of our interest is $[0, T_\mathrm{max})$ where $T_\mathrm{max} = \inf \{t:\norm{\bd(t)}_2 = 0\}$; $T_\mathrm{max}$ could be possibly infinite.

We derive the loss gradient, with $m(t) \triangleq (a(t)b(t)-\sigma)b(t) + a(t)c(t)^2$,
\begin{align*}
    \nabla_\bA \cL(t) &= \bu\roundb{m(t)},\\
    \nabla_\bB \cL(t) &= a(t)\bd(t),
\end{align*}
and we obtain
\begin{align*}
    \dot a(t) &= -\mathrm{sgn}\roundb{m(t)}\\
    \dot { b}(t) &= -\mathrm{sgn}(a(t))\frac{a(t)b(t) - \sigma}{\norm{\bd(t)}_2}\\
    \dot { c}(t) &= -\mathrm{sgn}(a(t))\frac{a(t)c(t)}{\norm{\bd(t)}_2}.
\end{align*}
Since all the time derivatives are bounded by 1, due to $\cT$, we see that $a,  b,  c$ are all (Lipschitz) continuous in $t$. As $ c(0) = \sqrt{1-\alpha^2}\gamma > 0$, it directly follows that $c(t) \geq 0$ for all $t \geq 0$ and $c(t)$ is non-increasing. In addition, we have $a(0) = \gamma$ with $\dot a(0) = 1$ (from \Cref{asm:alpha_small}) and $ b(0) = \alpha\gamma > 0$. The simultaneous non-negativity of $a$ and $ b$ at $t = 0$ makes them keep non-negative for all $t \geq 0$ under continuity. If $ b$ hits 0 while $a(t) \geq 0$, it gains non-negative speed; same for $a$. The only way for them to be negative is that both $a$ and $ b$ go across 0 at the same time, but then the dynamics become stationary. Therefore, we can safely argue that $a(t), b(t), c(t) \geq 0$.

We return our attention to the loss function. We know that $\cL(t)$ is non-increasing, hence $\norm{\bd(t)}_2$ is also non-increasing. Precisely,
\begin{align*}
    \ddt \frac{1}{2}\norm{\bd(t)}_2^2 &= \angleb{\bd(t), \dot \bd(t)}\\
    &= \angleb{\bd(t), \dot a(t)\bB(t) + a(t)\dot\bB(t)}\\
    &= \dot a(t) \angleb{\bd(t), \bB(t)} + a(t) \angleb{\bd(t), \dot\bB(t)}\\
    &= -\mathrm{sgn}\roundb{m(t)}m(t) - a(t) \mathrm{sgn}(a(t))\frac{\norm{\bd(t)}_2^2}{\norm{\bd(t)}_2}\\
    &= -\abs{m(t)} - |a(t)|\norm{\bd(t)}_2 \leq 0.
\end{align*}

Notice that $\ddt \norm{\bd(0)}_2^2 < 0$. Hence, for $t > 0$, $\norm{\bd(t)}_2^2 < \norm{\bd(0)}_2^2$. Then, $\norm{\bd(t)}_2 < \sigma$ for $t > 0$ also implies that $a(t), b(t) > 0$; if any of two is 0 at $t$, $\norm{\bd(t)}_2 = \norm{(a(t)b(t) - \sigma)\bv^\top + a(t)c(t)\bz^\top}_2 \geq \sigma$.
Therefore, we may discard the sign of $a(t)$ in the time derivatives for $t > 0$:
\begin{align*}
    \dot a(t) &= -\mathrm{sgn}\roundb{m(t)}\\
    \dot { b}(t) &= -\frac{a(t)b(t) - \sigma}{\norm{\bd(t)}_2}\\
    \dot { c}(t) &= -\frac{a(t)c(t)}{\norm{\bd(t)}_2}.
\end{align*}

\begin{lemma}\label{lem:m_neg_T}
    $m(t) < 0$.
\end{lemma}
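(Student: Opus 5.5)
We argue by contradiction, following the structure of \Cref{lem:m_neg_Tb}, with the extra care needed because $\dot a(t) = -\mathrm{sgn}(m(t))$ is discontinuous in $m$. At $t=0$ we have $a(0)=\gamma$, $b(0)=\alpha\gamma$, $c(0)=\sqrt{1-\alpha^2}\gamma$. Hence
\begin{equation*}
m(0) = (\alpha\gamma^2-\sigma)\alpha\gamma + \gamma(1-\alpha^2)\gamma^2 = \gamma\roundb{\gamma^2 - \alpha\sigma} < 0
\end{equation*}
by \Cref{asm:alpha_small}. By continuity of $a,b,c$, the function $m$ is continuous. So if the claim fails, the time $\tau \triangleq \inf\{t : m(t) \ge 0\}$ is positive and satisfies $m(\tau)=0$, with $m<0$ on $[0,\tau)$. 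On $[0,\tau)$ we then have $\dot a = 1$, and $a,b,c$ are smooth there, because $\norm{\bd(t)}_2>0$ before termination.

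The key computation uses the identity $m(t) = \angleb{\bd(t),\bB(t)} = a(t)\normF{\bB(t)}^2 - \sigma b(t)$. Differentiating on $[0,\tau)$ gives
\begin{equation*}
\dot m = \dot a\,\normF{\bB}^2 + 2a\angleb{\bB,\dot\bB} - \sigma\dot b = \normF{\bB}^2 - \frac{2a\, m}{\norm{\bd}_2} + \frac{\sigma(ab-\sigma)}{\norm{\bd}_2}.
\end{equation*}
Here we used $\angleb{\bB,\dot\bB} = -\roundb{b(ab-\sigma)+ac^2}/\norm{\bd}_2 = -m/\norm{\bd}_2$. We then rewrite $\dot m$ so that its sign near $\tau$ can be read off.

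Since $m(\tau)=0$, we have $a\normF{\bB}^2 = \sigma b$ at $\tau$. If $ab\ge\sigma$ at $\tau$, then $m(\tau)=(ab-\sigma)b+ac^2=0$ forces $ab=\sigma$ and $c=0$, so $\norm{\bd(\tau)}_2=0$. This means $\tau\ge T_{\max}$, where the dynamics have already terminated at the global minimum, and nothing remains to prove. Otherwise $ab<\sigma$ at $\tau$, and the leading behaviour of $\dot m$ is $\normF{\bB}^2 - \sigma(\sigma-ab)/\norm{\bd}_2$. Write $e\triangleq \sigma-ab>0$. The aim is to show that $\dot m(\tau^-)<0$. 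That contradicts $m$ increasing from negative values to $0$ at $\tau$.

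We aim for the cleaner comparison $\sigma e/\norm{\bd}_2 > \normF{\bB}^2$ at $\tau$. Using $m(\tau)=0$, i.e.\ $be = ac^2$, together with $\normF{\bB}^2 = \sigma b/a$, this reduces to
\begin{equation*}
e\, a > b\norm{\bd}_2 = b\sqrt{e^2+a^2c^2}.
\end{equation*}
Substituting $ac^2=be$, this becomes $a^2 e > b^2 e + ab^3 e/c^2 \cdot(\ldots)$, and this substitution is the main obstacle.

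If this direct inequality does not close, we fall back on a one-sided argument. Since $\dot a=1$ holds up to $\tau$, we have $a$ large relative to $b$ and $c$ (both of which start at $O(\gamma)$ with speed at most $1$). We would then bound $b\le \alpha\gamma + \int_0^\tau|\dot b|$ and compare it with $a=\gamma+\tau$ to show $a>b$ strictly. Combined with $c$ nonincreasing and \Cref{asm:alpha_small}, this should give the strict inequality.

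Conclude that no such $\tau$ exists before $T_{\max}$, so $m(t)<0$.
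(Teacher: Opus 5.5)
Your reduction is correct up to the step everything hinges on, namely $\dot m(\tau^-)<0$, and that inequality is false. At any zero of $m$ you have $ac^2=be$, so $a^2c^2=abe$ and $\norm{\bd}_2^2=e^2+abe=\sigma e$. Together with $\normF{\bB}^2=\sigma b/a$ this gives
\[
\dot m(\tau^-)=\frac{\sigma b}{a}-\sqrt{\sigma e(\tau)},
\]
so your target $ea>b\norm{\bd}_2$ is equivalent to $e(\tau)>\sigma b^2/a^2$. Two facts rule this out:
\begin{enumerate}
\item The premise of your fallback, that $a$ is large relative to $b$, is wrong. On $[0,\tau)$ one has $0\le\dot\Delta\le-\dot c$, which pins $a-b$ to $[(1-\alpha)\gamma,\,(1-\alpha)\gamma+c(0)]$.
\item Consequently $e(\tau)=a\,c(\tau)^2/b=O(\gamma^2)$.
\end{enumerate}
Hence $ab=\sigma-O(\gamma^2)$ and $b/a=1-O(\gamma)$, so $\dot m(\tau^-)\approx\sigma>0$. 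Knowing $a>b$ only gives a margin of order $\gamma$, nowhere near what is needed.

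No other argument can close this, because for exact $\cT$ the statement fails before termination. From $\dot e=-\dot a\,b-ae/\norm{\bd}_2$ and $\dot c=-ac/\norm{\bd}_2$ one gets the exact identity $\ddt(e/c)=-\dot a\,b/c$. The consequences are:
\begin{enumerate}
\item While $m<0$ we have $\dot a=1$, and $e>0$ since $eb>ac^2$. So $e/c$ decreases at rate at least $b(0)/c(0)=\alpha/\sqrt{1-\alpha^2}$, which forces a first zero $\tau<\infty$ of $m$.
\item Moreover $c(\tau)>0$. Otherwise $|\dot c|\le 1$ gives $c(t)\le\tau-t$, and $\int^{\tau} b/c\,\rd t=\infty$ would drive $e/c$ to $-\infty$.
\item Then $e(\tau)=a\,c(\tau)^2/b>0$, so $\norm{\bd(\tau)}_2>0$ and $\tau<T_{\max}$.
\end{enumerate}

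The paper's proof reaches its contradiction only by substituting $\dot a(\tau)=-\mathrm{sgn}(0)=0$. This deletes the $\normF{\bB}^2$ term and leaves $\dot m(\tau)=-\sigma\dot b(\tau)<0$. That is exactly the step you rightly refused, since $a$ has left derivative $1$ at $\tau$. The argument is valid for the smoothed flow (\Cref{lem:m_neg_Tb}), where $\dot a$ depends continuously on $m$. For $\cT$, both one-sided fields instead point into $\{m=0\}$ at $\tau$ (from above, $\dot a=-1$ gives $\dot m=-\normF{\bB}^2-\sigma e/\norm{\bd}_2<0$), so the trajectory slides along that set. The claim $m<0$ is therefore only available on $[0,\tau)$.
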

\begin{proof}
    Recall that $m(t) = (a(t)b(t)-\sigma)b(t) + a(t)c(t)^2$ and $m(0) = -\sigma b(0) < 0$; moreover, $m(t)$ is (bounded) Lipschitz continuous. If there exists some time $\tau \triangleq \min\curlyb{t: m(t) \geq 0}$, then $m(\tau) = 0$ and $\dot a(\tau) = 0$. In addition, from
    \begin{align*}
        m(t) &= \angleb{\bd(t), \bB(t)}\\
        &= a(t)\normF{\bB(t)}^2 - \sigma b(t).\\
        \dot m(t) &= \dot a(t)\normF{\bB(t)}^2 + a(t) \ddt\normF{\bB(t)}^2 - \sigma \dot { b}(t)\\
        &= \dot a(t)\normF{\bB(t)}^2 - 2a(t)\frac{m(t)}{\norm{\bd(t)}_2} - \sigma \dot { b}(t).
    \end{align*}
    Plugging $\tau$ vanishes the first two terms on the right-hand side, giving $\dot m(\tau) = -\sigma \dot { b}(\tau) < 0$, a contradiction: If $\dot { b}(\tau) = 0$, then we have $a(\tau) b(\tau) = \sigma$ and $ c(\tau) = 0$.
\end{proof}
\Cref{lem:m_neg_T} directly implies that $a(t)b_0(t) < \sigma$, thus $a(t)$ and $b_0(t)$ are strictly increasing and positive. Therefore, all the scalars $a(t), b_0(t)$, and $c_0(t)$ are bounded.

Let $\Delta(t) \triangleq a(t) - b(t)$. We will bound $|\Delta(t) - \Delta(0)|$ for $t > 0$. Recall that
\begin{equation*}
    \norm{\bd(t)}_2 = \sqrt{\roundb{a(t)b(t)-\sigma}^2 + \roundb{a(t)c(t)}^2}.
\end{equation*}
Then,
\begin{equation*}
    \dot \Delta(t) = \dot a(t) - \dot { b}(t) = 1 + \frac{a(t)b(t) - \sigma}{\norm{\bd(t)}_2} \geq 0.
\end{equation*}
Furthermore, $\norm{\bd(t)}_2 + a(t)b(t) - \sigma \leq a(t)c(t)$ yields
\begin{equation*}
    \dot \Delta(t) = 1 + \frac{a(t)b(t) - \sigma}{\norm{\bd(t)}_2} \leq -\dot { c}(t).
\end{equation*}
We see that $\abs*{\dot \Delta(t)} \leq -\dot { c}(t)$, hence the total variation of $\Delta$ is at most $ c(0)$, that is,
\begin{equation*}
    |\Delta(t) - \Delta(0)| \leq  c(0) = \gamma\sqrt{1-\alpha^2}.
\end{equation*}
Plugging $\Delta(0) = -\gamma\alpha$, we see that
\begin{equation}
    |\Delta(t)| \leq \gamma\alpha + \gamma\sqrt{1-\alpha^2} \triangleq C(\gamma).
\end{equation}

Observe that the difference between the ``singular values'' are uniformly small, and especially the difference vanishes to 0 as $\gamma \to 0$.
It remains to show that SpecGF with $\cT$ converges to the global minima. There exists $d_\infty \in [0, \sigma)$ such that $\norm{\bd(t)}_2 \to d_\infty$ as $t \to \infty$. If $d_\infty \ne 0$, we show that this leads to a contradiction combined with $\limsup \ddt \frac{1}{2}\norm{\bd(t)}_2^2 = 0$.

\begin{proposition}
    $d_\infty = 0$.
\end{proposition}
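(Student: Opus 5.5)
We argue by contradiction. Suppose $d_\infty > 0$. From the computation above,
\begin{equation*}
\ddt \tfrac12\norm{\bd(t)}_2^2 = -\abs{m(t)} - a(t)\norm{\bd(t)}_2 .
\end{equation*}
Since $a(t)$ is strictly increasing and positive for $t>0$, we have $a(t) \ge a(t_1) > 0$ for all $t \ge t_1$, for any fixed $t_1>0$. Moreover $\norm{\bd(t)}_2 \ge d_\infty$ for all $t$, because $\norm{\bd(t)}_2$ is non-increasing and converges to $d_\infty$. Hence, for all $t \ge t_1$,
\begin{equation*}
\ddt \tfrac12\norm{\bd(t)}_2^2 \le -a(t_1)\, d_\infty < 0 .
\end{equation*}
Integrating over $[t_1, t]$ gives
\begin{equation*}
\tfrac12\norm{\bd(t)}_2^2 \le \tfrac12\norm{\bd(t_1)}_2^2 - a(t_1)\, d_\infty\,(t - t_1),
\end{equation*}
and the right-hand side becomes negative for large $t$. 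This is impossible since the left-hand side is nonnegative, so $d_\infty = 0$.

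The step needing care is the regularity of the derivative formula, since the assumed solution is only absolutely continuous. I would justify it as follows.
\begin{enumerate}
\item By assumption, $a, b, c$ are absolutely continuous with bounded derivatives. Hence $\norm{\bd(t)}_2^2 = (a(t)b(t) - \sigma)^2 + (a(t)c(t))^2$ is absolutely continuous.
\item The identity for $\ddt \tfrac12\norm{\bd(t)}_2^2$ holds at almost every $t$, which suffices for integrating. The division by $\norm{\bd(t)}_2$ in $\dot b, \dot c$ is legitimate on $[0,T_\mathrm{max})$, and if $d_\infty>0$ then $T_\mathrm{max}=\infty$.
\item On the bound for $a$: one could worry that the relevant lower bound is on $a(t)$ rather than on $\abs{m(t)}$. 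The argument only needs $a(t)$, and its monotonicity follows from \Cref{lem:m_neg_T} together with $\dot a(t) = -\mathrm{sgn}(m(t)) = 1$.
\end{enumerate}

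Since $m(t)<0$ gives $\dot a(t)=1$ outright, we in fact have $a(t) = \gamma + t$ for as long as the dynamics run. This is unbounded, so it contradicts the earlier conclusion that $a(t)$ is bounded (indeed $a(t)b_0(t)<\sigma$ with $b_0$ increasing). So the dynamics must terminate in finite time, giving $\norm{\bd(t)}_2 \to 0$ at a finite $T_\mathrm{max}$, consistent with \Cref{rem:global_conv}.

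I would present the primary integration argument and note this finite-time observation as a strengthening. Either route yields $d_\infty=0$, hence $a(t)b(t)\to\sigma$ and $c(t)\to 0$.
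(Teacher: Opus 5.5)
Your argument is correct. It differs from the paper's only in how the contradiction is extracted from the dissipation identity $\ddt\tfrac12\norm{\bd(t)}_2^2=-\abs{m(t)}-a(t)\norm{\bd(t)}_2$.

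\emph{The paper's route.} It never uses a pointwise lower bound on $a$. It picks $t_n\to\infty$ with $\ddt\norm{\bd(t_n)}_2^2\to 0$, which is available because $\norm{\bd}_2$ is Lipschitz and convergent. From $\norm{\bd(t_n)}_2\ge d_\infty>0$ it reads off $a(t_n)\to 0$, and then $b(t_n)\to 0$ from $\abs{m(t_n)}\to 0$. Hence $\norm{\bd(t_n)}_2\to\sigma$, contradicting $d_\infty<\sigma$. This needs only boundedness of the trajectory and the fact that vanishing dissipation forces $a,b\to0$, so it would survive without monotonicity of $a$.

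\emph{Your route.} You use \Cref{lem:m_neg_T} (through $\dot a=1$) to get a uniform dissipation rate and then integrate. This is shorter and avoids the subsequence. You may even take $t_1=0$, since $a(0)=\gamma>0$ in this $\cT$ setting.

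\emph{Your finite-time remark.} It is also right, and it is by itself a complete and stronger proof:
\begin{enumerate}
\item $\dot b=(\sigma-ab)/\norm{\bd}_2>0$ gives $b(t)\ge\alpha\gamma$.
\item Combined with $a(t)=\gamma+t$, the inequality $ab<\sigma$ forces $\gamma+t<\sigma/(\alpha\gamma)$ on $[0,T_\mathrm{max})$.
\item Hence $T_\mathrm{max}\le\sigma/(\alpha\gamma)-\gamma<\infty$, so the ``possibly infinite'' $T_\mathrm{max}$ never occurs. This is consistent with \Cref{prop:specgf_t_finite_conv}.
\end{enumerate}
Bounding $a$ through $b$ rather than $b_0$ is cleaner. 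Since $b_0(0)=0$, the inequality $ab_0<\sigma$ only bounds $a$ after some positive time.
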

\begin{proof}
    Suppose not, that is, $d_\infty > 0$. Since $a,  b,  c$ are bounded Lipschitz continuous, so is $\norm{\bd(t)}_2$. Thus, $\norm{\bd(t)}_2$ is differentiable a.e., and there exists a sequence $\{t_n\}$ where
    \begin{equation*}
        \ddt \norm{\bd(t_n)}_2 \to 0 \quad \Rightarrow \quad \ddt \norm{\bd(t_n)}_2^2 \to 0.
    \end{equation*}
    From
    \begin{align*}
        \ddt \frac{1}{2}\norm{\bd(t_n)}_2^2 = -\abs{(a(t_n) b(t_n)-\sigma) b(t_n) + a(t_n) c(t_n)^2} - |a(t_n)|\norm{\bd(t_n)}_2 \leq 0,
    \end{align*}
    we obtain
    \begin{equation*}
        a(t_n),  b(t_n) \to 0.
    \end{equation*}
    However, this implies that $\norm{\bd(t_n)}_2 = \sqrt{\roundb{a(t_n) b(t_n)-\sigma}^2 + \roundb{a(t_n) c(t_n)}^2} \to \sigma$.
\end{proof}
Therefore, we see that $\norm{\bd(t)}_2 \to 0$, thus $a(t)b(t) \to \sigma$ and $c(t) \to 0$. This recovers the limiting behavior of $b_0(t) = b(t) - \alpha c_0(t)$ and $c_0(t) = \frac{1}{\sqrt{1-\alpha^2}}c(t)$:
\begin{equation}
    a(t)b_0(t) \to \sigma, \quad c_0(t) \to 0.
\end{equation}

\paragraph{Synchronized speed of singular values.} We know that $a(t)$ is bounded; denote the upper bound of $a(t)$ as $A > 0$. $A$ is computed as follows:
\begin{equation*}
    a(t)^2 - \Delta(t)a(t) = a(t)b(t) \leq \sigma,
\end{equation*}
and thus 
\begin{equation*}
    a(t)^2 \leq \sigma + |\Delta(t)|a(t) \leq \sigma + C(\gamma)a(t),
\end{equation*}
giving
\begin{equation*}
    A = \frac{C(\gamma) + \sqrt{C(\gamma)^2 + 4\sigma}}{2}.
\end{equation*}
Notice that $A \to \sqrt\sigma$ as $\gamma \to 0$.

\begin{proposition}
    \begin{equation*}
        \abs*{\dot a(t) - \dot { b}(t)} \leq \frac{A\gamma \sqrt{1-\alpha^2}}{2(\sigma - a(t)b(t))}.
    \end{equation*}
    Specifically, whenever
    \begin{equation*}
        \sigma - a(t)b(t) \geq K\sigma,
    \end{equation*}
    for some $0 < K < 1$, we have
    \begin{equation}
        \abs*{\dot a(t) - \dot { b}(t)} \leq \frac{A\sqrt{1-\alpha^2}}{2K\sigma}\gamma = \Theta(\gamma).
    \end{equation}
\end{proposition}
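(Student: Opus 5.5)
The plan is to work directly from the reduced scalar dynamics for SpecGF with $\cT$ derived just above. By \Cref{lem:m_neg_T} we have $m(t)<0$. Since $a(t)>0$ for $t>0$, this gives
\begin{equation*}
    \dot a(t) = 1, \qquad \dot b(t) = \frac{\sigma - a(t)b(t)}{\norm{\bd(t)}_2}, \qquad \norm{\bd(t)}_2 = \sqrt{x^2+y^2},
\end{equation*}
where I abbreviate $x \coloneqq \sigma - a(t)b(t)$ and $y \coloneqq a(t)c(t)$. The consequence $a(t)b(t)<\sigma$ of \Cref{lem:m_neg_T}, noted right after it, gives $x>0$, and $a,c\ge 0$ gives $y \ge 0$. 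So the difference of speeds is the explicit nonnegative quantity
\begin{equation*}
    \dot a(t) - \dot b(t) = 1 - \frac{x}{\sqrt{x^2+y^2}} = \frac{\sqrt{x^2+y^2}-x}{\sqrt{x^2+y^2}}.
\end{equation*}
The whole proof reduces to an elementary bound on this expression that is linear in $y$ and inversely proportional to $x$.

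For the numerator, rationalize: $\sqrt{x^2+y^2}-x = y^2/(\sqrt{x^2+y^2}+x) \le y^2/(2x)$, using $\sqrt{x^2+y^2}\ge x$. For the denominator, use $\sqrt{x^2+y^2}\ge y$. Together these give
\begin{equation*}
    0 \le \dot a(t)-\dot b(t) \le \frac{y^2}{2x\, y} = \frac{y}{2x},
\end{equation*}
and the case $y=0$ is trivial since then the difference is $0$. This is the one spot needing care: bounding the denominator by $x$ instead would give the weaker quadratic bound $y^2/(2x^2)$, not the stated form.

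It remains to bound $y = a(t)c(t)$ uniformly in $t$. The bound $a(t)\le A$ was established above via $a^2 \le \sigma + C(\gamma)a$. Moreover, $c(t)$ is non-increasing with $c(0)=\gamma\sqrt{1-\alpha^2}$. Hence $y \le A\gamma\sqrt{1-\alpha^2}$, which yields the first inequality. The second claim follows by substituting $\sigma - a(t)b(t)\ge K\sigma$ into the denominator. It is $\Theta(\gamma)$ because $A\to\sqrt\sigma$ as $\gamma\to 0$, so $A$ is bounded uniformly for small $\gamma$. I expect no real obstacle beyond choosing the right denominator bound in the second step.
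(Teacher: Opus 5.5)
Your proof is correct and matches the paper's argument step for step: rationalize $1 - x/\sqrt{x^2+y^2}$, bound $\sqrt{x^2+y^2}\ge y$ in one factor and $\sqrt{x^2+y^2}+x\ge 2x$ in the other, then use $a(t)\le A$ and $c(t)\le c(0)=\gamma\sqrt{1-\alpha^2}$. One correction to your side remark: using $\sqrt{x^2+y^2}\ge x$ in both factors gives $y^2/(2x^2)$, which is not weaker. It is actually sharper whenever $y<x$, giving $O(\gamma^2)$ under the $K$-condition; it simply is not the stated form.
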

\begin{proof}
    Recall
    \begin{align*}
        \dot a(t) = 1, \quad \dot {b}(t) = -\frac{a(t)b(t) - \sigma}{\norm{\bd(t)}_2}.
    \end{align*}
    Then,
    \begin{align*}
        \abs*{\dot a(t) - \dot { b}(t)} &= 1 - \frac{\sigma - a(t)b(t)}{\norm{\bd(t)}_2}\\
        &= 1 - \frac{\sigma - a(t)b(t)}{\sqrt{\roundb{a(t)b(t)-\sigma}^2 + \roundb{a(t)c(t)}^2}}\\
        &= \frac{\roundb{a(t)c(t)}^2}{\sqrt{\roundb{a(t)b(t)-\sigma}^2 + \roundb{a(t)c(t)}^2}\roundb{\sqrt{\roundb{a(t)b(t)-\sigma}^2 + \roundb{a(t)c(t)}^2} + \sigma - a(t)b(t)}}\\
        &\leq \frac{\roundb{a(t)c(t)}^2}{a(t)c(t) 2\roundb{\sigma - a(t)b(t)}}\\
        &\leq \frac{A\gamma\sqrt{1-\alpha^2}}{2\roundb{\sigma - a(t)b(t)}}.
    \end{align*}
\end{proof}

\paragraph{Fast rotation of $\bB$ towards $\bv$.} We independently state that $\bB(t)$ quickly aligns with $\bv^\top$ at the early time. Let $\theta(t) \in (0, \frac\pi2)$ be the angle between $\bB(t)$ and $\bv$. Then $b(t) = \normF{\bB(t)}\cos\theta(t)$ and $c(t) = \normF{\bB(t)}\sin\theta(t)$, hence
\begin{equation*}
    \dot\theta(t) = \frac{b(t)\dot { c}(t) - \dot { b}(t)c(t)}{\normF{\bB(t)}^2} = \frac{-\sigma c(t)}{\norm{\bd(t)}_2\normF{\bB(t)}^2} = -\frac{\sigma \sin\theta(t)}{\norm{\bd(t)}_2\normF{\bB(t)}}.
\end{equation*}
From the fact that $\frac{\mathrm{d}}{\mathrm{d}\theta} \log \tan\!\roundb{\frac{\theta}{2}} = \csc \theta$ for $\theta \in (0, \pi)$, we see that
\begin{equation*}
    \ddt \log \tan\roundb{\frac{\theta(t)}{2}} = -\frac{\sigma}{\norm{\bd(t)}_2\normF{\bB(t)}} \leq -\frac{1}{\normF{\bB(t)}}.
\end{equation*}
Note that, at small $t > 0$, the fraction is extremely small, making $\theta$ quickly decrease towards 0.

\subsection{Spectral initialization Case}\label{sec:spec_init}

We first show a closed formula for singular values of $\bA(t)$ and $\bB(t)$ if the singular matrices of $\bY$ are directly applied at the initialization. Recall that $\bY = \bU_{r^\star}\bS\bV_{r^\star}^\top$ is the compact SVD of $\bY$. Let $\bU_r$ and $\bV_r$ consist of the first $r$ columns of $\bU_{r^\star}$ and $\bV_{r^\star}$, respectively. The spectral initialization~\citep{zhang2025lora} sets
\begin{equation*}
    \bA(0) = \bU_r\bm\Sigma_{\bA, 0}\bQ^\top, \quad \bB(0) = \bQ\bm\Sigma_{\bB, 0}\bV_r^\top,
\end{equation*}
where $\bm\Sigma_{\bA, 0}$ and $\bm\Sigma_{\bB, 0}$ are non-negative diagonal matrices. \citet{zhang2025lora} set the diagonal matrices to be the square root of the singular value matrix of the one-step gradient of full fine-tuning; any non-negative diagonal matrices suffice for our case.

\begin{proposition}
    Let
    \begin{equation*}
        \bA(0) = \bU_r\bm\Sigma_{\bA, 0}\bQ^\top, \quad \bB(0) = \bQ\bm\Sigma_{\bB, 0}\bV_r^\top,
    \end{equation*}
    where $\bm\Sigma_{\bA, 0}$ and $\bm\Sigma_{\bB, 0}$ are non-negative diagonal matrices with $\norm{\bm\Sigma_{\bA, 0}}_2, \norm{\bm\Sigma_{\bB, 0}}_2 < \sqrt{\sigma_\mathrm{min}(\bY)}$ and $\bQ\in\sR^{r\times r}$ is any orthogonal matrix. Then, for all $t \geq 0$, the SpecGF iteration with $\cT$ can be written as
    \begin{equation*}
        \bA(t) = \bU_r\bm\Sigma_A(t)\bQ^\top, \quad \bB(t) = \bQ\bm\Sigma_B(t)\bV_r^\top
    \end{equation*}
    with $\bm\Sigma_A(0) = \bm\Sigma_{\bA, 0}$ and $\bm\Sigma_B(0) = \bm\Sigma_{\bB, 0}$.
    Moreover, $\bm\Sigma_A(t)\bm\Sigma_B(t) \to \bSigma_r$ and $\dot\bSigma_A(t) = \dot\bSigma_B(t) = \bI_r$, where $\bSigma_r \in \sR^{r\times r}$ is the leading principle submatrix of $\bSigma$, unless $\bA(0)$ and $\bB(0)$ do not have zero diagonal values at the same coordinate.
\end{proposition}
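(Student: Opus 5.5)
The plan is to verify that the ansatz $\bA(t)=\bU_r\bSigma_A(t)\bQ^\top$, $\bB(t)=\bQ\bSigma_B(t)\bV_r^\top$ is preserved by the flow. Under the ansatz, SpecGF with $\cT$ becomes $r$ decoupled scalar ODEs, which can be solved explicitly. Since $\cT$ is not Lipschitz at rank-deficient points, I will not appeal to uniqueness. Instead, I will construct the solution explicitly inside the ansatz and check that it satisfies \Cref{eq:AB_update} for (almost) every $t$. This is the sense in which the statement says the iteration ``can be written as'' the ansatz.

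\emph{Step 1 (gradients stay diagonal).} Write $\bY=\bU_r\bSigma_r\bV_r^\top+\bY_\perp$. Here $\bY_\perp$ collects the singular pairs beyond index $r$, so $\bY_\perp\bV_r=\bm 0$ and $\bU_r^\top\bY_\perp=\bm 0$. Under the ansatz, $\bA\bB-\bY=\bU_r(\bSigma_A\bSigma_B-\bSigma_r)\bV_r^\top-\bY_\perp$. Hence
\begin{align*}
\nabla_\bA\cL&=\bU_r(\bSigma_A\bSigma_B-\bSigma_r)\bSigma_B\bQ^\top,\\
\nabla_\bB\cL&=\bQ\bSigma_A(\bSigma_A\bSigma_B-\bSigma_r)\bV_r^\top,
\end{align*}
because the $\bY_\perp$ terms are annihilated. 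For a diagonal $\bD$ and matrices $\bU_r,\bQ,\bV_r$ with orthonormal columns, we have $\cT(\bU_r\bD\bQ^\top)=\bU_r\,\mathrm{sgn}(\bD)\bQ^\top$, with $\mathrm{sgn}$ applied entrywise and $\mathrm{sgn}(0)=0$. This follows from \Cref{eq:T_def}, since a compact SVD is obtained by absorbing signs into $\bQ$ and discarding zero entries. Therefore both velocities stay in the ansatz. Writing $a_i,b_i$ for the diagonal entries, the flow reduces to
\begin{align*}
\dot a_i&=-\mathrm{sgn}\big((a_ib_i-\sigma_i)b_i\big),\\
\dot b_i&=-\mathrm{sgn}\big(a_i(a_ib_i-\sigma_i)\big).
\end{align*}

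\emph{Step 2 (solve the scalar system).} The hypothesis $\norm{\bSigma_{\bA,0}}_2,\norm{\bSigma_{\bB,0}}_2<\sqrt{\sigma_{\min}(\bY)}$ gives $a_i(0)b_i(0)<\sigma_i$ for every $i$. I then split into cases.
\begin{itemize}
\item If $a_i,b_i>0$, both velocities equal $1$, so $a_i(t)=a_i(0)+t$ and $b_i(t)=b_i(0)+t$. This continues until the first time $t_i$ with $a_i b_i=\sigma_i$, which is finite. After $t_i$, both right-hand sides vanish and the pair stays frozen, so $a_ib_i\to\sigma_i$ (attained in finite time).
\item If exactly one entry is zero, say $b_i(0)=0<a_i(0)$, then $\dot a_i=0$ and $\dot b_i=1$. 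Hence $b_i$ becomes positive immediately, and the previous case applies after time $0$.
\item If $a_i(0)=b_i(0)=0$, the pair is stationary. This is the excluded degenerate case, which I read as the intended meaning of the ``unless'' clause.
\end{itemize}
Away from this degeneracy, $\bSigma_A\bSigma_B\to\bSigma_r$ and $\dot\bSigma_A=\dot\bSigma_B=\bI_r$ on the growth phase, i.e.\ for coordinate $i$ on $(0,t_i)$.

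\emph{Step 3 (assemble and check).} The matrices built from these piecewise-linear $a_i,b_i$ are Lipschitz. They satisfy \Cref{eq:AB_update} at every $t$ outside the finite set $\{t_i\}\cup\{0\}$, and by Step 1 the equation holds pointwise wherever the scalar equations hold. This gives an absolutely continuous solution of the stated form with the prescribed initial data.

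The main obstacle is the non-smoothness of $\cT$ at the freezing times $t_i$, where the sign jumps from $\pm1$ to $0$. I would handle it exactly as in the rank-1 $\cT$ analysis: treat solutions as absolutely continuous and require the equation almost everywhere. I would also note that once $a_ib_i=\sigma_i$, staying put is the consistent continuation, since any motion would produce a nonzero sign pushing the product back toward $\sigma_i$.
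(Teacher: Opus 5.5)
Your proposal is correct and follows the paper's route. Both verify that under the ansatz $\bA=\bU_r\bSigma_A\bQ^\top$, $\bB=\bQ\bSigma_B\bV_r^\top$ the orthogonalized gradients are diagonal sign matrices, so the flow decouples into unit-speed scalar dynamics that run until $a_ib_i=\sigma_i$ and then freeze. Your treatment is somewhat more careful than the paper's in four places: you explicitly annihilate the $\bY_\perp$ terms when $r<r^\star$, you handle the one-zero and both-zero initial coordinates, you restrict $\dot\bSigma_A=\dot\bSigma_B=\bI_r$ to the growth phase, and you give the correct solution $\sigma_{\bA,i}(0)+\min\{t,T_i\}$ where the paper's displayed formula has a $\max$ typo.
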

\begin{proof}
    Let $\sigma_{\bA,i}(t)$ and $\sigma_{\bB,i}(t)$ be the $i$-th singular value of $\bm\Sigma_\bA(t)$ and $\bm\Sigma_\bB(t)$, respectively. Similarly, denote the $i$-th singular value of $\bS$ as $\sigma_i$. If
    \begin{equation*}
        \bA(t) = \bU_r\bm\Sigma_A(t)\bQ^\top, \quad \bB(t) = \bQ\bm\Sigma_B(t)\bV_r^\top,
    \end{equation*}
    then
    \begin{align*}
        \nabla_\bA \cL(t) &= (\bA(t)\bB(t) - \bY)\bB(t)^\top = \bU_r(\bm\Sigma_A(t)\bm\Sigma_B(t) - \bSigma_r)\bm\Sigma_B(t)^\top\bQ^\top,\\
        \nabla_\bB \cL(t) &= \bA(t)^\top(\bA(t)\bB(t) - \bY) = \bQ\bm\Sigma_A(t)^\top(\bm\Sigma_A(t)\bm\Sigma_B(t) - \bSigma_r)\bV_r^\top.
    \end{align*}
    Thus,
    \begin{align*}
        \cT(\nabla_\bA \cL(t)) &= \bU_r\bm\Gamma_A(t)\bQ^\top = \bU_r\Diag\{\mathrm{sgn}((\sigma_{\bA,i}(t)\sigma_{\bB,i}(t) - \sigma_i)\sigma_{\bB,i}(t))\}\bQ^\top,\\
        \cT(\nabla_\bB \cL(t)) &= \bQ\bm\Gamma_B(t)\bV_r^\top = \bQ\Diag\{\mathrm{sgn}((\sigma_{\bA,i}(t)\sigma_{\bB,i}(t) - \sigma_i)\sigma_{\bA,i}(t))\}\bV_r^\top,
    \end{align*}
    for $i\in[r]$, where $\bm\Gamma_A(t)$ and $\bm\Gamma_B(t)$ are the rectangular diagonal matrices. Hence, only the singular values of $\bA(t)$ and $\bB(t)$ are updated, and we can write
    \begin{equation*}
        \bm\Sigma_A(0) = \bm\Sigma_{\bA, 0}, \quad \bm\Sigma_B(0) = \bm\Sigma_{\bB, 0}.
    \end{equation*}
    
    Since the initialization is small, we see that all the signs in $\bm\Gamma_A(t)$ and $\bm\Gamma_B(t)$ are negative before saturation, or $\sigma_{\bA,i}(t)\sigma_{\bB,i}(t) = \sigma_i$. Therefore, all the singular values have a speed of 1 until the saturation, and the solution is
    \begin{align*}
        \bm\Sigma_\bA(t) &= \Diag\curlyb{\sigma_{\bA,i}(0)+\max\{t, T_i\}}_i\\
        \bm\Sigma_\bB(t) &= \Diag\curlyb{\sigma_{\bB,i}(0)+\max\{t, T_i\}}_i,
    \end{align*}
    where for each $i$, $T_i > 0$ satisfies
    \begin{equation*}
        \roundb{\sigma_{\bA,i}(0)+T_i}\roundb{\sigma_{\bB,i}(0)+T_i} = \sigma_i. \qedhere
    \end{equation*}
\end{proof}
This implies that if $\bSigma$ is already in non-increasing order, then SpecGF with $\cT$ converges to the best rank-$r$ approximation of $\bY$. We have a similar result for SpecGF with $\cT_\beta$.

\begin{proposition}\label{prop:specgf_rk1_lora}
    Let
    \begin{equation*}
        \bA(0) = \bU_r\bm\Sigma_{\bA, 0}\bQ^\top, \quad \bB(0) = \bQ\bm\Sigma_{\bB, 0}\bV_r^\top,
    \end{equation*}
    where $\bm\Sigma_{\bA, 0}$ and $\bm\Sigma_{\bB, 0}$ are positive invertible diagonal matrices with $\norm{\bm\Sigma_{\bA, 0}}_2, \norm{\bm\Sigma_{\bB, 0}}_2 < \gamma < \sqrt{\sigma_\mathrm{min}(\bY)}$ for $\gamma > 0$ and $\bQ\in\sR^{r\times r}$ is any orthogonal matrix. Let $\sigma_{\bA,i}$ and $\sigma_{\bB,i}$ be the $i$-th singular value of $\bm\Sigma_{\bA, 0}$ and $\bm\Sigma_{\bB, 0}$, respectively. Assume that for all $i\in[r]$,
    \begin{equation*}
        \quad \abs*{\log \frac{\sigma_{\bA,i}}{\sigma_{\bB,i}}} < \gamma.
    \end{equation*}
    Then, for all $t \geq 0$, the SpecGF iteration with $\cT_\beta$ can be written as
    \begin{equation*}
        \bA(t) = \bU_r\bm\Sigma_A(t)\bQ^\top, \quad \bB(t) = \bQ\bm\Sigma_B(t)\bV_r^\top
    \end{equation*}
    with $\bm\Sigma_A(0) = \bm\Sigma_{\bA, 0}$ and $\bm\Sigma_B(0) = \bm\Sigma_{\bB, 0}$.
    Moreover, $\bm\Sigma_A(t)\bm\Sigma_B(t) \to \bSigma_r$, where $\bSigma_r \in \sR^{r\times r}$ is the leading principle submatrix of $\bSigma$, unless $\bA(0)$ and $\bB(0)$ do not have zero diagonal values at the same coordinate. In addition, both $\abs*{\bSigma_A(t) - \bSigma_B(t)}$ and $\abs*{\dot \bSigma_A(t) - \dot \bSigma_B(t)}$ go to 0 as $\gamma \to 0$.
\end{proposition}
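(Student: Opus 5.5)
Under the ansatz, the flow decouples into $r$ scalar two-dimensional systems, and each of those can be handled directly.

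\textbf{Invariance.} Suppose $\bA(t)=\bU_r\bm\Sigma_A(t)\bQ^\top$ and $\bB(t)=\bQ\bm\Sigma_B(t)\bV_r^\top$ with $\bm\Sigma_A,\bm\Sigma_B$ diagonal. As in the $\cT$ case,
\[
\nabla_\bA\cL=\bU_r(\bm\Sigma_A\bm\Sigma_B-\bSigma_r)\bm\Sigma_B\bQ^\top ,
\]
and similarly for $\nabla_\bB\cL$ (the part of $\bY$ outside $\bU_r,\bV_r$ is orthogonal to both factors and drops out). The operator $\cT_\beta(\bM)=(\bM\bM^\top+\beta\bI)^{-1/2}\bM$ maps $\bU_r\bD\bQ^\top$ with $\bD$ diagonal to $\bU_r\,\Diag\{d_i/\sqrt{d_i^2+\beta}\}\,\bQ^\top$. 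So the vector field is tangent to the analytic submanifold of such pairs. Solve the reduced diagonal ODE, lift its solution, and invoke uniqueness from analyticity (\Cref{cor:T_analytic}); the lifted curve is then the SpecGF solution. With $a=\sigma_{\bA,i}$, $b=\sigma_{\bB,i}$ and $\sigma=\sigma_i$, each mode obeys
\[
\dot a=\frac{(\sigma-ab)\,b}{\sqrt{(\sigma-ab)^2b^2+\beta}},\qquad
\dot b=\frac{(\sigma-ab)\,a}{\sqrt{(\sigma-ab)^2a^2+\beta}} .
\]

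\textbf{Convergence.} Since $a(0),b(0)>0$ and $a(0)b(0)<\gamma^2<\sigma$, the set $\{a,b>0,\ ab<\sigma\}$ is forward invariant. Inside it both speeds are positive, and if $ab=\sigma$ is reached both vanish, so $ab$ cannot cross $\sigma$. Hence $a,b$ increase monotonically and are bounded once the balance step below gives $a\approx b$, because then $a^2\lesssim\sigma+|a-b|a$. A bounded monotone limit must be a critical point with $a,b>0$, which forces $ab=\sigma$. Thus $\bm\Sigma_A\bm\Sigma_B\to\bSigma_r$. (The "unless" clause covers the degenerate case where both entries of some coordinate vanish, which is a fixed point.)

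\textbf{Balance — the main step.} Track $\ell=\log(a/b)$ rather than $a-b$, which is why the assumption $|\log(\sigma_{\bA,i}/\sigma_{\bB,i})|<\gamma$ is phrased this way. Write $g=\sigma-ab>0$ and $\phi(x)=x/\sqrt{x^2+\beta}$, which is increasing. Then
\[
\dot a-\dot b=\phi(gb)-\phi(ga),
\]
which has the sign of $b-a$. So $|a-b|$ is non-increasing, giving $|a(t)-b(t)|\le|a(0)-b(0)|<\gamma$ directly from $\norm{\bm\Sigma_{\bA,0}}_2,\norm{\bm\Sigma_{\bB,0}}_2<\gamma$. I expect the log-ratio hypothesis is needed only for the derivative estimate.

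For the speeds, $|\dot a-\dot b|\le \mathrm{Lip}(\phi)\,g\,|a-b|\le \sigma|a-b|/\sqrt\beta\le\sigma\gamma/\sqrt\beta\to0$. This bound degrades as $\beta$ shrinks, but $\beta$ is fixed here, so it suffices. A sharper $\beta$-uniform bound would come from the mean value theorem applied to $\phi$, using that $\phi'(x)=\beta/(x^2+\beta)^{3/2}$ is small once $ga\gg\sqrt\beta$. The obstacle I anticipate is uniformity near saturation, where $g\to0$: there $\phi(ga)$ and $\phi(gb)$ both tend to zero, and the Lipschitz bound still controls their difference by $g|a-b|/\sqrt\beta$. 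So the estimate holds uniformly in $t$ and goes to $0$ as $\gamma\to0$, as claimed.
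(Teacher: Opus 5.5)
Your proposal is correct and follows essentially the same route as the paper: decouple into scalar pairs $(a,b)$ via the $\cT_\beta$ computation, and use $ab<\sigma$ with monotonicity to get $ab\to\sigma$. It then shows $|a-b|$ is non-increasing because $\phi$ is increasing, and bounds $|\dot a-\dot b|\le \sigma|a-b|/\sqrt{\beta}\le\sigma\gamma/\sqrt{\beta}$ using the $\beta^{-1/2}$-Lipschitz property of $\phi$. Your lifting/uniqueness justification of the ansatz and your sign argument for $|a-b|$ are somewhat more careful than the paper's write-up, and, as you suspected, the log-ratio hypothesis is not used in the paper's proof either.
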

\begin{proof}
    We note that the dynamics exist uniquely (from analyticity of $\cT_\beta$) and globally (from boundedness of velocity by $\cT_\beta$). Let $\sigma_{\bA,i}(t)$ and $\sigma_{\bB,i}(t)$ be the $i$-th singular value of $\bm\Sigma_{\bA}(t)$ and $\bm\Sigma_{\bB,}(t)$, respectively. Denote the $i$-th singular value of $\bS$ as $\sigma_i$. If
    \begin{equation*}
        \bA(t) = \bU_r\bm\Sigma_A(t)\bQ^\top, \quad \bB(t) = \bQ\bm\Sigma_B(t)\bV_r^\top,
    \end{equation*}
    then
    \begin{align*}
        \nabla_\bA \cL(t) &= (\bA(t)\bB(t) - \bY)\bB(t)^\top = \bU_r(\bm\Sigma_A(t)\bm\Sigma_B(t) - \bSigma_r)\bm\Sigma_B(t)^\top\bQ^\top,\\
        \nabla_\bB \cL(t) &= \bA(t)^\top(\bA(t)\bB(t) - \bY) = \bQ\bm\Sigma_A(t)^\top(\bm\Sigma_A(t)\bm\Sigma_B(t) - \bSigma_r)\bV_r^\top.
    \end{align*}
    Thus,
    \begin{align*}
        \cT_\beta(\nabla_\bA \cL(t)) &= \bU_r\bm\Gamma_A(t)\bQ^\top = \bU_r\Diag\{f_\beta((\sigma_{\bA,i}(t)\sigma_{\bB,i}(t) - \sigma_i)\sigma_{\bB,i}(t))\}\bQ^\top,\\
        \cT_\beta(\nabla_\bB \cL(t)) &= \bQ\bm\Gamma_B(t)\bV_r^\top = \bQ\Diag\{f_\beta((\sigma_{\bA,i}(t)\sigma_{\bB,i}(t) - \sigma_i)\sigma_{\bA,i}(t))\}\bV_r^\top,
    \end{align*}
    for $i\in[r]$, where $\bm\Gamma_A(t)$ and $\bm\Gamma_B(t)$ are the rectangular diagonal matrices and
    \begin{equation*}
        f_\beta(x) \triangleq \frac{x}{\sqrt{x^2+\beta}}.
    \end{equation*}
    Hence, only the singular values of $\bA(t)$ and $\bB(t)$ are updated, and we can write
    \begin{equation*}
        \bm\Sigma_A(0) = \bm\Sigma_{\bA, 0}, \quad \bm\Sigma_B(0) = \bm\Sigma_{\bB, 0}.
    \end{equation*}
    Notice that the overall dynamics are decoupled into $r$ different scalar dynamics:
    \begin{align*}
        \dot \sigma_{\bA,i}(t) &= -f_\beta((\sigma_{\bA,i}(t)\sigma_{\bB,i}(t) - \sigma_i)\sigma_{\bB,i}(t)), \\
        \dot \sigma_{\bB,i}(t) &= -f_\beta((\sigma_{\bA,i}(t)\sigma_{\bB,i}(t) - \sigma_i)\sigma_{\bA,i}(t)).
    \end{align*}
    So it suffices to demonstrate about one $i \in [r]$; for simplicity, we fix $i$ and simplify the notation:
    \begin{equation*}
        \sigma_{\bA,i}(t) \triangleq a(t), \quad \sigma_{\bB,i}(t) \triangleq b(t), \quad \sigma_i \triangleq \sigma
    \end{equation*}
    and rewrite the dynamics:
    \begin{align*}
        \dot a(t) &= -f_\beta((a(t)b(t) - \sigma)b(t)), \\
        \dot b(t) &= -f_\beta((a(t)b(t) - \sigma)a(t)).
    \end{align*}

    We first note an immediate consequence from the dynamics: $a(t) \geq 0, b(t) \geq 0$ always hold. If it were violated, then $a$ and $b$ must be both zero simultaneously, but it implies that the dynamics terminates.

    Another consequence is that $a(t)b(t) \leq \sigma$ and non-decreasing from the update of $a(t)b(t)$ itself:
    \begin{equation*}
        \ddt a(t)b(t) = -(a(t)b(t) - \sigma)\roundb{\frac{a(t)^2}{\sqrt{(a(t)b(t) - \sigma)^2a(t)^2+\beta}} + \frac{b(t)^2}{\sqrt{(a(t)b(t) - \sigma)^2b(t)^2+\beta}}}.
    \end{equation*}
    It is straightforward that $a(t)b(t)$ is non-decreasing as long as $a(t)b(t) \leq \sigma$. Thus, there exists $\sigma_\infty \in [0, \sigma]$ such that $a(t)b(t) \to \sigma_\infty$. Moreover, $a(t)$ and $b(t)$ are strictly increasing on $t>0$ as long as $a(t)b(t) < \sigma$; the problematic case is $t= 0$, but, without loss of generality, if $a(0) = 0$ then $b(0) > 0$ thus $\dot a(0) > 0$. Hence, $a(t), b(t) > 0$ for all $t > 0$, which yields strictly monotonic evolution. Therefore, both $a(t)$ and $b(t)$ are bounded and attain their respective forward limit, namely $a_\infty > 0$ and $b_\infty > 0$.
    
    Then $a_\infty b_\infty = \sigma_\infty > 0$ with $\lim_{t\to\infty}\dot a(t) = -f_\beta((\sigma_\infty - \sigma)b_\infty) = 0$ since $f_\beta''(x)$ is bounded, which implies $\sigma_\infty = \sigma$.

    Albeit the global convergence is proved, a closed form for the solution is not generally obtained. As a roundabout, we show that if $a(0), b(0) \leq \gamma$ then $a(t) \approx b(t)$ and $\dot a(t) \approx \dot b(t)$ as $\gamma \to 0$.

    Let $d(t) \triangleq a(t) - b(t)$. Then
    \begin{align*}
        \dot d(t) &= f_\beta((a(t)b(t) - \sigma)a(t)) -f_\beta((a(t)b(t) - \sigma)b(t))\\
        &\leq \frac{1}{\sqrt{\beta}}(a(t)b(t) - \sigma)(a(t) - b(t))\\
        &= \frac{1}{\sqrt{\beta}}(a(t)b(t) - \sigma)d(t).
    \end{align*}
    The inequality holds as $f_\beta$ is $\frac{1}{\sqrt{\beta}}$-Lipschitz. Since $a(t)b(t) \leq \sigma$ for all $t$, $|d(t)| \leq |d(0)| \leq \gamma$.

    We now bound $v(t) \triangleq \dot a(t) - \dot b(t)$. From above,
    \begin{equation*}
        |\dot v(t)| \leq \frac{1}{\sqrt{\beta}}|(a(t)b(t) - \sigma)d(t)| \leq \frac{\sigma }{\sqrt{\beta}}\gamma. \qedhere
    \end{equation*}
\end{proof}
\newpage
\section{Uniform Growth of Singular Values}\label{sec:uniform_growth}

This section proves the uniform growth results stated in \Cref{thm:uniform_growth_main}.

\subsection{Setup and Notation}

\paragraph{$\beta$-regularized orthogonalization.}
Fix $\beta > 0$.
For $\bG \in \sR^{m \times r}$ define
\begin{align*}
\cT_{\mathrm{col},\beta}(\bG) \coloneqq \bG(\bG^\top \bG + \beta \bI)^{-1/2},
\end{align*}
and for $\bG \in \sR^{r \times n}$ define
\begin{align*}
\cT_{\mathrm{row},\beta}(\bG) \coloneqq (\bG\bG^\top + \beta \bI)^{-1/2}\bG.
\end{align*}
The case $\beta = 0$ (unregularized SpecGF) can be handled by a limiting argument; we focus on $\beta > 0$ throughout.

\begin{lemma}[Operator norm bound]\label{lem:T_norm_bound}
For all $\beta > 0$ and all $\bG$,
\begin{align*}
\norm{\cT_{\mathrm{col},\beta}(\bG)}_2 \le 1,
\qquad
\norm{\cT_{\mathrm{row},\beta}(\bG)}_2 \le 1.
\end{align*}
\end{lemma}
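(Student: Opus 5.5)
The plan is to compute the singular values of $\cT_{\mathrm{col},\beta}(\bG)$ and $\cT_{\mathrm{row},\beta}(\bG)$ explicitly from an SVD of $\bG$ and check that each is at most $1$. Write the full SVD $\bG = \bU \bS \bV^\top$, with $\bU,\bV$ orthogonal and $\bS$ rectangular diagonal with nonnegative entries $\sigma_1,\dots,\sigma_k$.

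For the column version we have $\bG^\top\bG + \beta\bI = \bV(\bS^\top\bS + \beta\bI)\bV^\top$. The matrix $\bS^\top\bS+\beta\bI$ is diagonal and strictly positive, so it is invertible for every $\beta>0$. By functional calculus for symmetric positive definite matrices,
\begin{equation*}
(\bG^\top\bG+\beta\bI)^{-1/2} = \bV(\bS^\top\bS+\beta\bI)^{-1/2}\bV^\top .
\end{equation*}
It follows that
\begin{equation*}
\cT_{\mathrm{col},\beta}(\bG) = \bU\,\bS(\bS^\top\bS+\beta\bI)^{-1/2}\,\bV^\top .
\end{equation*}
The middle factor is rectangular diagonal with entries $\sigma_i/\sqrt{\sigma_i^2+\beta}$, with zeros where $\sigma_i=0$. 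Orthogonal factors do not change the spectral norm, so
\begin{equation*}
\norm{\cT_{\mathrm{col},\beta}(\bG)}_2 = \max_i \sigma_i/\sqrt{\sigma_i^2+\beta} \le 1 .
\end{equation*}

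The row version follows the same way from $(\bG\bG^\top+\beta\bI)^{-1/2} = \bU(\bS\bS^\top+\beta\bI)^{-1/2}\bU^\top$, which gives the same singular values and hence the same bound. This agrees with the formula $\cT_\beta(\bM)=\bU_\bM\bS_\beta\bV_\bM^\top$ already recorded after \eqref{eq:Tb_def}.

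An alternative argument avoids the SVD. Note that
\begin{equation*}
\cT_{\mathrm{col},\beta}(\bG)^\top\cT_{\mathrm{col},\beta}(\bG) = (\bG^\top\bG+\beta\bI)^{-1/2}\,\bG^\top\bG\,(\bG^\top\bG+\beta\bI)^{-1/2} .
\end{equation*}
Since $\bG^\top\bG$ commutes with $\bG^\top\bG+\beta\bI$, this equals $\bG^\top\bG(\bG^\top\bG+\beta\bI)^{-1}\preceq \bI$.

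There is no real obstacle here. The only point that needs care is justifying that the inverse square root diagonalizes in the same basis as $\bG^\top\bG$, which is the commutation or functional calculus step, and that $\beta>0$ makes it well-defined even when $\bG$ is rank deficient.
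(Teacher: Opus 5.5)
Your proof is correct and takes the same approach as the paper: both use the SVD of $\bG$ and observe that each singular value $s$ is mapped to $s/\sqrt{s^2+\beta}\le 1$. Your version also spells out the functional-calculus step that the paper leaves implicit, and adds the commutation argument $\bG^\top\bG(\bG^\top\bG+\beta\bI)^{-1}\preceq\bI$ as an SVD-free alternative.
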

\begin{proof}
Let $s$ be any singular value of $\bG$. Then $s$ becomes $s/\sqrt{s^2+\beta} \le 1$ under either normalization, hence the operator norm is $\le 1$.
\end{proof}

\begin{lemma}[Invariance]\label{lem:invariance}
    If $\bA(0) = \mathbf{0}$, then $\bA(t) \in \mathrm{col}(\bU_r)$ for all $t \geq 0$. Consequently, $\bA(t) = \bU_r\bX(t)$ for some $\bX(t) \in \sR^{r \times r}$.
\end{lemma}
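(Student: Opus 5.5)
We show that the velocity $\dot\bA(t)$ always lies in $\mathrm{col}(\bU_r)$, and then integrate. Here $r = r^\star$, so $\bY = \bU_r\bSigma\bV_r^\top$ and $\mathrm{col}(\bY)=\mathrm{col}(\bU_r)$. Let $\bP \coloneqq \bI_m - \bU_r\bU_r^\top$ be the projector onto the orthogonal complement, and set $\bA_\perp(t) \coloneqq \bP\bA(t)$. We aim to show $\bA_\perp \equiv \mathbf{0}$.

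\textbf{Step 1 (structure of the gradient).} Write $\nabla_\bA\cL = (\bA\bB-\bY)\bB^\top$. Because $\bP\bY = \mathbf{0}$,
\begin{equation*}
\bP\nabla_\bA\cL = \bA_\perp\bB\bB^\top.
\end{equation*}
In particular, if $\bA\in\mathrm{col}(\bU_r)$, the gradient's column space lies in $\mathrm{col}(\bU_r)$.

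\textbf{Step 2 (the orthogonalization preserves column space).} Write $\cT_\beta(\bM) = (\bM\bM^\top+\beta\bI_m)^{-1/2}\bM$. The matrix $\bM\bM^\top$ maps $\mathrm{col}(\bU_r)$ and its complement into themselves, so it commutes with $\bP$. Hence every function of it, including $(\bM\bM^\top+\beta\bI)^{-1/2}$, also commutes with $\bP$. Therefore
\begin{equation*}
\bP\,\cT_\beta(\bM) = (\bM\bM^\top+\beta\bI)^{-1/2}\bP\bM.
\end{equation*}
Under $\cT$, the map $\bU_\bM\bV_\bM^\top$ has the same column space as $\bM$. So in both cases $\bP\bM=\mathbf{0}$ implies $\bP\cT(\bM)=\mathbf{0}$.

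\textbf{Step 3 (closing the argument).} The cleanest route for $\cT_\beta$ is uniqueness. Consider the restricted system in which $\bA=\bU_r\bX$ and $\bX$ evolves by
\begin{equation*}
\dot\bX = -\bU_r^\top\cT_\beta\big((\bU_r\bX\bB-\bY)\bB^\top\big),
\end{equation*}
with $\bB$ evolving as in \eqref{eq:AB_update}. By Steps 1–2, the vector field of the full system at points with $\bA\in\mathrm{col}(\bU_r)$ is tangent to this subspace. Hence a solution of the restricted system (which exists, since its field is analytic) is also a solution of the full system with the same initial condition $\bA(0)=\mathbf{0}$. By the analyticity and uniqueness proposition, it coincides with the actual trajectory.

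Alternatively, a Grönwall route works: $\bP\dot\bA = -(\cdots)^{-1/2}\bA_\perp\bB\bB^\top$ gives
\begin{equation*}
\tfrac{\rd}{\rd t}\normF{\bA_\perp}\le \beta^{-1/2}\norm{\bB}_2^2\normF{\bA_\perp}.
\end{equation*}
The right-hand factor $\norm{\bB}_2^2$ is locally bounded, so $\bA_\perp(0)=\mathbf{0}$ forces $\bA_\perp\equiv\mathbf{0}$.

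\textbf{Main obstacle.} The main subtlety is the unsmoothed $\cT$, which is not Lipschitz at rank-deficient points, including $\bA(0)=\mathbf{0}$. There we can only state the result for the solution under consideration, for example via the limiting argument $\beta\to0$ that the paper adopts. With $\cT_\beta$, the commutation argument above is routine.

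Finally, defining $\bX\coloneqq\bU_r^\top\bA$ gives $\bA=\bU_r\bX$, since $\bU_r\bU_r^\top\bA=\bA$.
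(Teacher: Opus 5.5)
Your uniqueness route in Step~3 is correct, and it is the paper's argument made rigorous. The paper likewise shows that $\nabla_\bA\cL$, and hence $\dot\bA$ (since $\cT_\beta$ preserves column spaces), lies in $\mathrm{col}(\bU_r)$ whenever $\bA$ does. It then concludes by an informal ``induction'' plus ``continuity of the flow'' step, which tacitly needs exactly the uniqueness argument you supply.

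One correction is needed. The commutation claim in Step~2 holds only when $(\bI_m-\bU_r\bU_r^\top)\bM=\mathbf{0}$. If $\bA_\perp\neq\mathbf{0}$, then $\bM\bM^\top$ has off-diagonal blocks between $\mathrm{col}(\bU_r)$ and its complement. So the identity $(\bI_m-\bU_r\bU_r^\top)\dot\bA=-(\cdots)^{-1/2}\bA_\perp\bB\bB^\top$ in your Gr\"onwall alternative is false as written.

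That route can be repaired by writing
$(\bI_m-\bU_r\bU_r^\top)\cT_\beta(\bM)=(\bI_m-\bU_r\bU_r^\top)\big[\cT_\beta(\bM)-\cT_\beta(\bU_r\bU_r^\top\bM)\big]$.
Local Lipschitz continuity of $\cT_\beta$ then gives $\normF{(\bI_m-\bU_r\bU_r^\top)\dot\bA}\le L\norm{\bB}_2^2\normF{\bA_\perp}$ on bounded sets. Apply Gr\"onwall to $\normF{\bA_\perp}^2$ rather than to the unsquared norm, which may not be differentiable at $\mathbf{0}$.
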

\begin{proof}
    The gradient with respect to $\bA$ is $\nabla_\bA \cL = (\bA\bB - \bY)\bB^\top$. At $t=0$, we have $\nabla_\bA \cL(0) = -\bY\bB(0)^\top = -\bU_r\bSigma\bV_r^\top\bB(0)^\top$, which lies in $\mathrm{col}(\bU_r)$. Since $\cT_\beta$ (and $\cT$) preserves the column space, $\dot\bA(0) \in \mathrm{col}(\bU_r)$.

    By induction, suppose $\bA(t) \in \mathrm{col}(\bU_r)$, i.e., $\bA(t) = \bU_r\bX(t)$ for some $\bX(t) \in \sR^{r \times r}$. Then
    \begin{align*}
        \nabla_\bA \cL(t) &= (\bU_r\bX(t)\bB(t) - \bU_r\bSigma\bV_r^\top)\bB(t)^\top = \bU_r(\bX(t)\bB(t)\bB(t)^\top - \bSigma\bV_r^\top\bB(t)^\top).
    \end{align*}
    This again lies in $\mathrm{col}(\bU_r)$, so $\dot\bA(t) = -\cT_\beta(\nabla_\bA \cL(t)) \in \mathrm{col}(\bU_r)$. By continuity of the flow, $\bA(t) \in \mathrm{col}(\bU_r)$ for all $t \geq 0$.
\end{proof}

\paragraph{Core variables.}
Let $\bY = \bU_r \bSigma \bV_r^\top$ be the compact SVD with $\bSigma = \Diag(\sigma_1, \ldots, \sigma_r)$ and $\sigma_1 > \cdots > \sigma_r > 0$, where $\bU_r \in \sR^{m \times r}$ and $\bV_r \in \sR^{n \times r}$. Let $\bV_\perp \in \sR^{n \times (n-r)}$ denote the orthogonal complement of $\bV_r$.
We define the core variables:
\begin{align*}
\bX(t) \coloneqq \bU_r^\top \bA(t) \in \sR^{r \times r}, \quad
\bZ(t) \coloneqq \bB(t)\bV_r \in \sR^{r \times r}, \quad
\bZ_\perp(t) \coloneqq \bB(t)\bV_\perp \in \sR^{r \times (n-r)}.
\end{align*}
The core product is $\bG(t) \coloneqq \bX(t)\bZ(t)$. For a square matrix $\bM$, we write $\Off(\bM) \coloneqq \bM - \Diag(\diag(\bM))$ for its off-diagonal part.

We introduce two tolerances:
\begin{itemize}
    \item \textbf{Alignment tolerance $\delta > 0$:} Time $t$ is $\delta$-aligned if $\normF{\Off(\bG(t))} \le \delta$ and $\normF{\bX(t)\bZ_\perp(t)} \le \delta$.
    \item \textbf{Target tolerance $\varepsilon > 0$:} We denote $d_i(t) \coloneqq [\bG(t)]_{ii}$, $e_i(t) \coloneqq d_i(t) - \sigma_i$, and the active set $\cI_\varepsilon(t) \coloneqq \{i \in [r] : |e_i(t)| > \varepsilon\}$.
\end{itemize}
A mode $i$ is \emph{active} if $i \in \cI_\varepsilon(t)$. Let $T \coloneqq \inf\{t \ge \tau : \cI_\varepsilon(t) = \emptyset\}$ be the termination time.

\subsection{Alignment from Small Initialization}

\begin{proposition}[High-probability alignment]\label{prop:alignment_O_gamma_app}
Let $T$ be a termination time above. 
Under Gaussian initialization $\bA(0) = \mathbf{0}$, $\bB(0) = \gamma \bN$ with $N_{ij} \stackrel{\mathrm{i.i.d.}}{\sim} \mathcal{N}(0,1)$,
there exist constants $C_1, C_2 < \infty$ (depending on $\bY, r, n, T$) such that,
with probability at least $1 - \exp(-c_1 r^2) - \exp(-c_1 rn)$ for some constant $c_1 > 0$,
\begin{align*}
    \sup_{t \le T} \normF{\Off(\bG(t))} \le C_1 \gamma,
    \qquad
    \sup_{t \le T} \normF{\bX(t)\bZ_\perp(t)} \le C_2 \gamma.
\end{align*}
Consequently, the trajectory is $O(\gamma)$-aligned for all $t \le T$.
\end{proposition}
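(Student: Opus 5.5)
We would prove this by a Grönwall-type stability argument that compares the true trajectory with a reference trajectory started from a perfectly aligned initialization, and exploits that the constants are allowed to depend on $T$.

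\textbf{Step 1: reduced equations.} By \Cref{lem:invariance}, $\bA(t)=\bU_r\bX(t)$. Write $\bR\coloneqq\bG-\bSigma$ and $\bP\coloneqq\bX\bZ_\perp$. The flow then reads
\begin{align*}
\dot\bX &= -\cT_{\mathrm{row},\beta}\big(\bR\bZ^\top+\bP\bZ_\perp^\top\big),\\
\dot{[\bZ\mid\bZ_\perp]} &= -\cT_{\mathrm{col},\beta}\big(\bX^\top[\bR\mid\bP]\big).
\end{align*}
The right-hand side is a real-analytic, hence locally Lipschitz, function $F_\beta$ of $(\bX,\bZ,\bZ_\perp)$, and its Lipschitz constant on bounded sets is of order $\beta^{-1/2}$ times the local size. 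By \Cref{lem:T_norm_bound}, the speed is at most $\sqrt r$, so for $t\le T$ the trajectory stays in a ball of radius $R_T\coloneqq \gamma\norm{\bN}_F+2\sqrt r\,T$. On the stated high-probability event we have $\norm{\bN}_2\lesssim\sqrt n+\sqrt r$ and $\sigma_{\min}(\bN\bV_r)\gtrsim 1/\sqrt r$. These bounds are where the probabilities $1-e^{-c_1r^2}-e^{-c_1rn}$ come from: a Gaussian $r\times r$ block and an $r\times(n-r)$ block.

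\textbf{Step 2: aligned reference.} Let $\bZ(0)=\gamma\bN\bV_r=\bQ\bm\Sigma_0\bW^\top$ with $\bQ$ orthogonal. Consider the reference initialization $\tilde\bB(0)=\gamma\,\bQ\,s\,\bV_r^\top$, with $\tilde\bZ_\perp(0)=0$ and a scalar $s$ such as $\sigma_{\min}$. By the argument of \Cref{prop:specgf_rk1_lora} (decoupling, with $\bm\Sigma_{\bA,0}=0$), the reference stays of the form $\tilde\bX=\bm\Sigma_A(t)\bQ^\top$ and $\tilde\bZ=\bQ\bm\Sigma_B(t)$, with $\tilde\bZ_\perp\equiv0$. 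Hence $\Off(\tilde\bG)\equiv0$ and $\tilde\bX\tilde\bZ_\perp\equiv0$.

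One subtlety is the initial phase $\bA(0)=0$. The decoupling argument needs a small nonzero $\bm\Sigma_{\bA,0}$. Because the flow is continuous, we can take the limit $\bm\Sigma_{\bA,0}\to0$.

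\textbf{Step 3: Grönwall and transfer.} We have $\norm{\bB(0)-\tilde\bB(0)}_F\le\gamma(\norm{\bN}_F+\sqrt r\,s)=O(\gamma)$. Grönwall on $[0,T]$ with the local Lipschitz constant $L_T$ gives
\[
\sup_{t\le T}\norm{(\bX,\bZ,\bZ_\perp)-(\tilde\bX,\tilde\bZ,\tilde\bZ_\perp)}_F\le O(\gamma)\,e^{L_TT}.
\]
Since $\Off(\cdot)$ and $\bX\bZ_\perp$ are bilinear, each error is at most $R_T$ times this distance. This yields $C_1,C_2$ depending on $\bY,r,n,T$, and also on $\beta$, consistent with the statement.

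\textbf{Main obstacle.} The constant must not secretly depend on $\gamma$: $T$ itself depends on $\gamma$ through $\tau$ and the growth time. We would bound $T\le\tau+2\sqrt{\sigma_1}+O(1)$ using the unit-speed growth heuristic, or simply treat $T$ as fixed, as the statement allows. A sharper alternative, if the Grönwall constant is too crude, is a direct differential inequality: show that $\tfrac{\rd}{\rd t}\norm{\bP}_F^2\le -c\norm{\bP}_F^2+O(\gamma)$ and likewise for $\Off(\bG)$, using the contraction in \Cref{thm:main_sync_rk1}, where $\dot c\propto -a^2c$. We would try the Grönwall route first.
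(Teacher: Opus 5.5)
Your route is the paper's route: a finite-horizon Gr\"onwall comparison with an exactly aligned reference trajectory on an invariant manifold, an $O(\gamma)$ initial discrepancy from Gaussian concentration, and a bilinear transfer to $\Off(\bG)$ and $\bX\bZ_\perp$. The only difference is the reference. The paper starts it at the projection $\Pi(\bS(0))$ onto the coordinate-diagonal manifold $\cM$, which also makes $\bX$ and $\bZ$ individually near-diagonal; this is reused in \Cref{lem:off_diag_XZ}. Your $\bQ$-rotated spectral reference only makes $\tilde\bG=\bm\Sigma_A\bm\Sigma_B$ diagonal, which is enough for this statement. Two small corrections:
\begin{itemize}
\item The limit $\bm\Sigma_{\bA,0}\to\mathbf{0}$ is unnecessary. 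The vector field is already tangent to the spectral subspace at $\bX=\mathbf{0}$, and solutions are unique.
\item The bound $\sigma_{\min}(\bN\bV_r)\gtrsim 1/\sqrt{r}$ fails with constant probability, not with probability $e^{-cr^2}$. You never use it, though. The event you actually need is $\normF{\bN}\lesssim\sqrt{rn}$, which has probability at least $1-e^{-crn}$.
\end{itemize}

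The substantive issue is the one you flag as the main obstacle. It is a genuine gap, present equally in the paper's proof, and neither of your remedies closes it. Write $\bS=(\bX,\bZ,\bZ_\perp)$.
\begin{itemize}
\item The origin is an equilibrium inside your ball of radius $R_T$. So the same Gr\"onwall estimate, with the same $L_T$, gives $\normF{\bS(t)}\le e^{L_T t}\gamma\normF{\bN}$.
\item At the termination time, $d_i(T)\ge\sigma_r-\varepsilon$, while $d_i\le\frac12\normF{\bS}^2$. Hence $e^{L_T T}\ge\sqrt{2(\sigma_r-\varepsilon)}/(\gamma\normF{\bN})$.
\item Therefore the right-hand side of your comparison bound, $\gamma(\normF{\bN}+\sqrt{r}\,s)e^{L_T T}$, is at least $\sqrt{2(\sigma_r-\varepsilon)}$.
\end{itemize}
So the constants this method produces necessarily satisfy $C_1,C_2\gtrsim 1/\gamma$. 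The statement then holds only in the formal sense that $C_1,C_2$ may depend on $T$ and $\beta$, and hence on $\gamma$; the conclusion ``$O(\gamma)$-aligned'' carries no smallness.

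Your remedies do not help:
\begin{itemize}
\item Bounding $T$ through unit-speed growth is circular, because \Cref{lem:sqrt_dynamics_app} presupposes the alignment.
\item Even a $\gamma$-free bound on $T$ would just force $L_T\gtrsim\log(1/\gamma)/T$. This matches your own estimate $L_T\sim\beta^{-1/2}$, which is enormous in the regime $\beta=O(\varepsilon^3)$ used downstream.
\end{itemize}
A $\gamma$-uniform bound needs a structural argument showing that the misaligned components contract (your ``sharper alternative''), not a worst-case Lipschitz constant. For such a differential inequality to give $O(\gamma)$ rather than $O(\sqrt{\gamma})$, its forcing term must be $O(\gamma^2)$.
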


\begin{proof}
Both bounds follow from Lipschitz stability near the diagonal manifold $\cM \coloneqq \{(\bX, \bZ, \bZ_\perp) : \bX, \bZ \text{ diagonal}, \bZ_\perp = 0\}$. Let $\bS(t) \coloneqq (\bX(t), \bZ(t), \bZ_\perp(t))$ and consider the orthogonal projection onto $\cM$:
\begin{align*}
    \Pi(\bX, \bZ, \bZ_\perp) \coloneqq \big(\Diag(\diag(\bX)), \Diag(\diag(\bZ)), \mathbf{0}\big).
\end{align*}
Let $\bS^\star(t) \coloneqq (\bX^\star(t), \bZ^\star(t), \bZ^\star_\perp(t))$ be the trajectory of SpecGF with $\cT_\beta$ trajectory, started at $\bS^\star(0) \coloneqq \Pi(\bS(0))$.
By diagonal manifold invariance, $\bS^\star(t) \in \cM$ for all $t$, hence $\bZ^\star_\perp(t) \equiv \mathbf{0}$ and, writing $\bG^\star(t) \coloneqq \bX^\star(t)\bZ^\star(t)$, $\Off(\bG^\star(t)) \equiv \mathbf{0}$.

By speed bounds and Gaussian concentration, both trajectories remain in a compact set $\cK$ for $t \in [0,T]$.
Since SpecGF with $\cT_\beta$ defines a $\cC^1$ vector field, finite-horizon Lipschitz stability gives
\begin{align*}
    \normF{\bS(t) - \bS^\star(t)} \le e^{LT} \normF{\bS(0) - \bS^\star(0)}
\end{align*}
for some finite $L = L(\cK)$.

Note that $\normF{\bS(0) - \bS^\star(0)} \le \normF{\Off(\bZ(0))} + \normF{\bZ_\perp(0)}$ because $\bX(0) = \mathbf{0}$.
Since $\bZ(0) = \gamma \bN\bV_r$ and $\bZ_\perp(0) = \gamma \bN\bV_\perp$ where $\bN$ has i.i.d.\ $\cN(0,1)$ entries, and $\bV_r$, $\bV_\perp$ have orthonormal columns, the entries of $\bN\bV_r$ and $\bN\bV_\perp$ are also i.i.d.\ $\cN(0,1)$.
Thus $\normF{\Off(\bN\bV_r)}^2 \sim \chi^2_{r(r-1)}$ and $\normF{\bN\bV_\perp}^2 \sim \chi^2_{r(n-r)}$.
By chi-squared concentration,
\begin{align*}
\mathbb{P}\big(\normF{\Off(\bZ(0))} + \normF{\bZ_\perp(0)} > C\gamma(r + \sqrt{rn})\big) \le \exp(-c_1 r^2) + \exp(-c_1 rn)
\end{align*}
for some absolute constants $C, c_1 > 0$.
Consequently, $\sup_{t \le T} \normF{\bS(t) - \bS^\star(t)} \le \kappa \gamma$ for some finite $\kappa > 0$ with probability at least $1 - \exp(-c_1 r^2) - \exp(-c_1 rn)$.

Since $\bZ^\star_\perp(t) \equiv \mathbf{0}$, it follows that 
\begin{align*}
    \normF{\bZ_\perp(t)} = \normF{\bZ_\perp(t) - \bZ^\star_\perp(t)} \le \normF{\bS(t) - \bS^\star(t)} \le \kappa \gamma.
\end{align*}
By speed bounds $\norm{\bX(t)}_2 \le t \le T$, hence $\sup_{t \le T} \normF{\bX(t)\bZ_\perp(t)} \le T \kappa \gamma \coloneqq C_2 \gamma$.

For the off-diagonal bound, using $\bG = \bX\bZ$ and $\bG^\star = \bX^\star\bZ^\star$, we have
\begin{align*}
\bG - \bG^\star = \bX\bZ - \bX^\star\bZ^\star = (\bX - \bX^\star)\bZ + \bX^\star(\bZ - \bZ^\star).
\end{align*}
Since both trajectories remain in the compact set $\cK$ for $t \in [0,T]$, there exists $M = M(\cK) < \infty$ such that $\|\bZ(t)\|_2, \|\bX^\star(t)\|_2 \le M$.
Therefore, we have 
\begin{align*}
    \normF{\bG(t) - \bG^\star(t)} &\le \normF{\bX(t) - \bX^\star(t)} \norm{\bZ(t)}_2 + \norm{\bX^\star(t)}_2 \normF{\bZ(t) - \bZ^\star(t)} \\
    &\le M \big( \normF{\bX(t) - \bX^\star(t)} + \normF{\bZ(t) - \bZ^\star(t)} \big) \\
    &\le M\sqrt{2} \normF{\bS(t) - \bS^\star(t)}.
\end{align*}
Since $\Off(\bG^\star(t)) \equiv \mathbf{0}$, we obtain
\begin{align*}
    \normF{\Off(\bG(t))} = \normF{\Off(\bG(t) - \bG^\star(t))} \le \normF{\bG(t) - \bG^\star(t)} \le M\sqrt{2} \normF{\bS(t) - \bS^\star(t)}.
\end{align*}
Combining these bounds yields $\sup_{t \le T} \normF{\Off(\bG(t))} \le C_1 \gamma$.
\end{proof}

\begin{lemma}[Diagonal approximation]\label{lem:diagonal_approx_app}
Suppose $\normF{\Off(\bG(t))} \le \delta$ and $\normF{\bX(t)\bZ_\perp(t)} \le \delta$. Then there exists a permutation $\pi(t)$ such that
\begin{align*}
	|\sigma_{\pi_i(t)}(\bA(t)\bB(t)) - d_i(t)| = O(\delta)
\end{align*}
for all $i \in [r]$, where $\pi_i(t)$ denotes the index satisfying $d_i(t) \approx \sigma_{\pi_i(t)}(\bA(t)\bB(t))$.
\end{lemma}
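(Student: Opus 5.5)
The plan is to reduce the singular values of $\bA(t)\bB(t)$ to those of an $r\times n$ matrix built from the core variables, and then apply a standard singular-value perturbation bound. By \Cref{lem:invariance}, $\bA(t)=\bU_r\bX(t)$. Writing $\bB(t)=\bB(t)[\bV_r\mid\bV_\perp][\bV_r\mid\bV_\perp]^\top = \bZ(t)\bV_r^\top+\bZ_\perp(t)\bV_\perp^\top$, we get
\begin{equation*}
\bA(t)\bB(t)=\bU_r\big[\bG(t)\;\big|\;\bX(t)\bZ_\perp(t)\big][\bV_r\mid\bV_\perp]^\top .
\end{equation*}
Since $\bU_r$ has orthonormal columns and $[\bV_r\mid\bV_\perp]$ is orthogonal, the nonzero singular values of $\bA(t)\bB(t)$ coincide with those of the $r\times n$ matrix $\bM(t)\coloneqq[\bG(t)\mid\bX(t)\bZ_\perp(t)]$.

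Next I would split $\bM=\bM_0+\bE$ with $\bM_0\coloneqq[\Diag(d_1,\dots,d_r)\mid \mathbf{0}]$ and $\bE\coloneqq[\Off(\bG)\mid\bX\bZ_\perp]$. By $\delta$-alignment, $\normF{\bE}^2=\normF{\Off(\bG)}^2+\normF{\bX\bZ_\perp}^2\le 2\delta^2$, so $\norm{\bE}_2\le\sqrt2\,\delta$. The singular values of $\bM_0$ are exactly $|d_1|,\dots,|d_r|$, which is the multiset $\{d_i\}$ when the $d_i$ are nonnegative (in the regime of interest, $d_{\min}\ge d_0>0$ by \Cref{lem:bootstrap_main}; in general the statement should be read with $|d_i|$). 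Weyl's inequality for singular values then gives $|\sigma_k(\bM)-\sigma_k(\bM_0)|\le\norm{\bE}_2\le\sqrt2\,\delta$ for every $k\in[r]$.

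It remains to fix the indexing. $\sigma_k(\bM_0)$ is the $k$-th largest of the $|d_i|$. So I define $\pi(t)$ as the permutation sending $i$ to the rank of $d_i(t)$ among $\{d_j(t)\}$, breaking ties arbitrarily. With this choice, $|\sigma_{\pi_i(t)}(\bA(t)\bB(t))-d_i(t)|\le\sqrt2\,\delta$, which is the claim with an explicit constant. If one also wants $\pi$ to be the identity, as in the main-text version, this would follow once the $d_i$ are ordered like the $\sigma_i$, but I would not attempt to prove that here.

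No step should be a real obstacle; the only subtle point is the bookkeeping of the ordering, which the permutation $\pi$ absorbs.
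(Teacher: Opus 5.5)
Your proposal is correct and follows the paper's proof essentially step for step. Both reduce to the singular values of $[\bG \mid \bX\bZ_\perp]$, compare them with those of $[\Diag(d_1,\dots,d_r)\mid \mathbf{0}]$ via Weyl's inequality, and use positivity of the $d_i$ on the trajectory to drop the absolute values. The differences are cosmetic: you get $\sqrt{2}\,\delta$ from disjointness of the blocks where the paper's triangle inequality gives $2\delta$, and you construct $\pi$ explicitly from the ranks of the $d_i$.
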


\begin{proof}
We have
\begin{align*}
\bA\bB = \bU_r\bX\bZ\bV_r^\top + \bU_r\bX\bZ_\perp\bV_\perp^\top.
\end{align*}
Since $\bU_r, \bV_r, \bV_\perp$ have orthonormal columns, the singular values of $\bA\bB$ equal those of $[\bG \mid \bX\bZ_\perp]$.
We write $\bD = \Diag(d_1, \ldots, d_r)$. The singular values of $[\bD \mid \mathbf{0}]$ are exactly $\{|d_1|, \ldots, |d_r|\}$, and
\begin{align*}
\|[\bG \mid \bX\bZ_\perp] - [\bD \mid \mathbf{0}]\|_F \le \|\Off(\bG)\|_F + \|\bX\bZ_\perp\|_F \le 2\delta.
\end{align*}
By Weyl's inequality (or Mirsky), there exists a permutation $\pi$ of $[r]$ such that
\begin{align*}
|\sigma_{\pi(i)}([\bG \mid \bX\bZ_\perp]) - |d_i|| \le 2\delta \quad \text{for all } i \in [r].
\end{align*}
Since $d_i > 0$ on our trajectory, we conclude $|\sigma_{\pi(i)}(\bA\bB) - d_i| = O(\delta)$.
\end{proof}

\subsection{Initial Growth and Non-degeneracy}

\begin{lemma}[Initial growth]\label{lem:initial_growth_app}
Let $\bZ(0) = \gamma \bM$ where $\bM \coloneqq \bN\bV_r \in \sR^{r \times r}$, with $\bN$ having i.i.d.\ $\mathcal{N}(0,1)$ entries and $\beta = O(\gamma^2)$.
Write $\bM = [\bc_1 \mid \cdots \mid \bc_r]$ where $\bc_i \in \sR^r$ denotes the $i$-th column.
Define the high-probability event
\begin{align*}
	\cE \coloneqq \left\{ \|\bM\|_2 \le C_3\sqrt{r}, \quad \min_{i \in [r]} \|\bc_i\|^2 \ge c_1 r \right\}.
\end{align*}
Then $\mathbb{P}(\cE) \ge 1 - e^{-c_1 r}$ for some absolute constants $C_3, c_1 > 0$. Let $\tau \asymp \frac{\sigma_r}{\sigma_1\sqrt{r}}$. On the event $\cE$,
\begin{align*}
	d_{\min}(\tau) \ge c_2 \tau \gamma
\end{align*}
for some constant $c_2 > 0$.
Moreover, all modes are active at $\tau$: $\cI_\varepsilon(\tau) = [r]$ and $d_i(\tau) < \sigma_i$ for all $i$.
\end{lemma}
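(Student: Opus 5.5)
The proof has three parts: the probability of $\cE$, a short-time expansion of $\bX$ and $\bZ$ on $[0,\tau]$, and a lower bound on each diagonal entry of $\bG(\tau)=\bX(\tau)\bZ(\tau)$.

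\textbf{Probability of $\cE$.} The matrix $\bM=\bN\bV_r$ is an $r\times r$ standard Gaussian matrix.
\begin{itemize}
\item Standard operator-norm concentration gives $\|\bM\|_2\le C_3\sqrt r$ with probability $1-e^{-cr}$.
\item Each $\|\bc_i\|^2\sim\chi^2_r$, so by a Laurent--Massart lower tail $\|\bc_i\|^2\ge r/2$ with probability $1-e^{-r/16}$.
\item A union over the $r$ columns costs a factor $r$, which is absorbed by shrinking the constant $c_1$.
\end{itemize}

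\textbf{Short-time expansion.} On $[0,\tau]$ I would write $\bX(t)=-\int_0^t\cT_\beta(\nabla_\bA\cL)\,ds$, projected by $\bU_r^\top$ (using \Cref{lem:invariance}), and $\bZ(t)=\gamma\bM+\int_0^t\dot\bB\bV_r\,ds$. By \Cref{lem:T_norm_bound}, $\|\bX(t)\|_2\le t$. The $\bB$-gradient is $\bA^\top(\bA\bB-\bY)$, so its size is of order $\|\bX\|\sigma_1\le t\sigma_1$. Hence
\[
\|\bZ(t)-\gamma\bM\|_2\le\int_0^t \frac{s\,\sigma_1}{\sqrt\beta}\,ds,
\]
using $\|\cT_\beta(\bM)\|\le\|\bM\|/\sqrt\beta$. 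It is cleaner, though, to avoid this crude bound and show that $\bZ$ and $\bX$ remain small multiples of $\gamma$-scale objects plus corrections.

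For the $\bA$-gradient, at small times $\nabla_\bA\cL\approx-\bY\bB^\top$, whose $\bU_r$-component is $-\bSigma\bZ^\top\approx-\gamma\bSigma\bM^\top$. Its singular values are of order $\gamma\sigma_1\sqrt r$. Since $\beta=O(\gamma^2)$, $\cT_\beta$ acts roughly as $(\gamma^2\bSigma\bM^\top\bM\bSigma+\beta\bI)^{-1/2}\gamma\bSigma\bM^\top$, which is bounded and of order one. The crude estimate $\cT_\beta(\bK)\approx \bK/\sqrt{\|\bK\|^2+\beta}$ gives
\[
\dot\bX\approx \frac{c\,\gamma\bSigma\bM^\top}{\gamma\sigma_1\sqrt r}=\frac{c\,\bSigma\bM^\top}{\sigma_1\sqrt r}.
\]
Integrating,
\[
d_i(\tau)=[\bX(\tau)\bZ(\tau)]_{ii}\approx \tau\gamma\,\frac{\sigma_i\|\bc_i\|^2}{\sigma_1\sqrt r}\gtrsim \tau\gamma\,\frac{\sigma_r\sqrt r}{\sigma_1},
\]
where the lower bound uses $\|\bc_i\|^2\ge c_1 r$.

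\textbf{Main obstacle: controlling $\cT_\beta$ beyond the scalar approximation.} The step I expect to be hardest is justifying a positive lower bound on $[\cT_\beta(-\bSigma\bM^\top)\bM]_{ii}$, because $\cT_\beta$ is not a scalar normalization. I would write
\[
\bK=\gamma\bSigma\bM^\top,\qquad \cT_\beta(\bK)\bM=(\bK\bK^\top+\beta\bI)^{-1/2}\gamma\bSigma\bM^\top\bM .
\]
The aim is to bound the diagonal from below by $\lambda_{\min}\big((\bK\bK^\top+\beta)^{-1/2}\big)$ times the diagonal of $\gamma\bSigma\bM^\top\bM$, minus a cross-term controlled by conditioning. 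If that proves too delicate, I would instead work with $\tr$-type or Gershgorin arguments.

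\textbf{Error terms and activity.} Several perturbations must be shown to be lower order for $t\le\tau$:
\begin{itemize}
\item the drift of $\bZ$, which is at most $\tau$ because $\dot\bB$ has operator norm at most $1$;
\item the $\bX\bZ\bZ^\top$ term in $\nabla_\bA\cL$;
\item the $\bZ_\perp$ contribution.
\end{itemize}
Here $\tau$ is chosen small, $\tau\asymp\sigma_r/(\sigma_1\sqrt r)$, precisely so that $\tau\sigma_1\sqrt r$ stays below a small constant. Finally, activity follows from $d_i(\tau)\le\|\bX\|\|\bZ\|\le\tau(\gamma C_3\sqrt r+\tau)$ together with the hypothesis $\tau\gamma<\sigma_r-\varepsilon$ (adjusting constants). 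This gives $d_i(\tau)<\sigma_i-\varepsilon$, so $e_i(\tau)<-\varepsilon$ and $\cI_\varepsilon(\tau)=[r]$.
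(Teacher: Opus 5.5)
There is a genuine gap. Your structure matches the paper's: concentration for $\cE$, first-order growth of $\bX$ out of $\bX(0)=\mathbf 0$ with $\dot\bX(0)=\cT_\beta(\bK)$ and $\bK=\gamma\bSigma\bM^\top$, a lower bound on the diagonal of $\dot\bX(0)\bZ(0)$, then activity. However, the step you flag as the main obstacle is left open, and the route you sketch would not close it.

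\textbf{The diagonal lower bound.} Write $\dot\bX(0)\bZ(0)=(\bK\bK^\top+\beta\bI)^{-1/2}\gamma^2\bSigma\bM^\top\bM$ and peel off $\lambda_{\min}$ of the inverse square root. The leftover cross term has no visible sign in that form, and any magnitude bound on it is governed by $\lambda_{\max}\big((\bK\bK^\top+\beta\bI)^{-1/2}\big)=(\sigma_{\min}(\bK)^2+\beta)^{-1/2}$, i.e.\ by $\sigma_{\min}(\bM)$. The event $\cE$ does not control $\sigma_{\min}(\bM)$, so on $\cE$ this quantity can be of order $\beta^{-1/2}$, and the bound then swamps the main term. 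Trace- or Gershgorin-type arguments do not give per-index lower bounds either: the former only controls the average over $i$, and $\bM^\top\bM$ is far from diagonally dominant.

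The missing idea is to make each diagonal entry a positive semidefinite quadratic form. The paper does this with the push-through identity $(\bK\bK^\top+\beta\bI)^{-1/2}\bK=\bK(\bK^\top\bK+\beta\bI)^{-1/2}$, i.e.\ the column form $\cT_{\mathrm{col},\beta}$. With $\bQ\coloneqq\gamma^2\bM\bSigma^2\bM^\top+\beta\bI$ one gets $\dot\bX(0)\bZ(0)=\gamma^2\bSigma\bM^\top\bQ^{-1/2}\bM$. Its $i$-th diagonal entry is $\gamma^2\sigma_i\,\bc_i^\top\bQ^{-1/2}\bc_i\ge\gamma^2\sigma_i\|\bc_i\|^2\lambda_{\max}(\bQ)^{-1/2}$, with no cross term at all. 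Only $\lambda_{\max}(\bQ)\le C_3^2\gamma^2\sigma_1^2r+\beta$ enters, which is exactly why $\cE$ needs nothing beyond $\|\bM\|_2$ and the column norms. With $\beta=O(\gamma^2)$ this gives $[\dot\bX(0)\bZ(0)]_{ii}\gtrsim\gamma\sigma_r\sqrt r/\sigma_1$.

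Your left form can be rescued the same way. Since $\gamma^2\bSigma\bM^\top\bM=\bK\bK^\top\bSigma^{-1}$, you have $\dot\bX(0)\bZ(0)=f(\bK\bK^\top)\bSigma^{-1}$ with $f(x)=x/\sqrt{x+\beta}$. Then $f(\bK\bK^\top)\succeq\bK\bK^\top/\sqrt{\lambda_{\max}(\bK\bK^\top)+\beta}$, which shows your cross term is in fact nonnegative.

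\textbf{The drift of $\bZ$.} The bound $\|\bZ(\tau)-\gamma\bM\|_2\le\tau$ does not make this correction lower order. It contributes up to $\|\bX(\tau)\|_2\|\bZ(\tau)-\gamma\bM\|_2\le\tau^2$ to $d_i(\tau)$. The guaranteed main term is only of order $\tau\gamma\sigma_r\sqrt r/\sigma_1\asymp\gamma r\tau^2$. Since $\tau$ does not shrink with $\gamma$, the smallness of $\tau\sigma_1\sqrt r$ does not help once $\gamma r\lesssim1$.

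The paper's Taylor step instead uses $\dot\bZ(0)=\mathbf 0$ (because $\bX(0)=\mathbf 0$) to make the drift second order in $\tau$. The implied constants there depend on $\beta$ and $\gamma$, however. A more robust treatment reuses the same positive semidefinite structure:
\begin{itemize}
\item Since $\nabla_\bB\cL\,\bV_r=\bX^\top(\bG-\bSigma)$, one has $\dot\bZ=\bR\bX^\top(\bSigma-\bG)$ with $\bR\coloneqq(\nabla_\bB\cL\,\nabla_\bB\cL^\top+\beta\bI)^{-1/2}\succ0$.
\item Hence $[\bX\dot\bZ]_{ii}=\sigma_i\,\bx_i^\top\bR\bx_i-[\bX\bR\bX^\top\bG]_{ii}$, where $\bx_i$ is the $i$-th row of $\bX$.
\item So the drift of $\bZ$ pushes $d_i$ up, apart from a term involving $\bG$. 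That term stays small on $[0,\tau]$ because $\|\bG\|_2\le\tau(C_3\gamma\sqrt r+\tau)$.
\end{itemize}
The drift should therefore be treated as a sign-favorable contribution, not as an error bounded in norm.
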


\begin{proof}
For an $r \times r$ Gaussian matrix $\bM$, standard concentration gives $\|\bM\|_2 \le 3\sqrt{r}$ with probability at least $1 - 2e^{-r/2}$.
Each column $\bc_i \sim \mathcal{N}(0, \bI_r)$ satisfies $\|\bc_i\|^2 \sim \chi^2_r$, so $\mathbb{P}(\|\bc_i\|^2 < \frac r2) \le e^{-c_1 r}$ for some absolute constant $c_1 > 0$.
By a union bound, $\mathbb{P}(\cE) \ge 1 - e^{-c_1 r}$.

We now bound $[\dot{\bX}(0)\bZ(0)]_{ii}$ on $\cE$; note that $\dot \bZ(0) = \bm 0$.
From the dynamics of SpecGF with $\cT_\beta$ at $t = 0$ with $\bA(0) = \mathbf{0}$ and $\bZ(0) = \gamma \bM$:
\begin{align*}
\dot{\bX}(0) = \cT_{\mathrm{col},\beta}(\gamma \bSigma \bM^\top) = \gamma \bSigma \bM^\top (\gamma^2 \bM \bSigma^2 \bM^\top + \beta \bI)^{-1/2}.
\end{align*}
Define the positive definite matrix $\bQ \coloneqq \gamma^2 \bM \bSigma^2 \bM^\top + \beta \bI$. Then
\begin{align*}
\dot{\bX}(0) \bZ(0) = \gamma^2 \bSigma \bM^\top \bQ^{-1/2} \bM,
\end{align*}
whose $i$-th diagonal entry equals $\gamma^2 \sigma_i \cdot \bc_i^\top \bQ^{-1/2} \bc_i$.
Since $\bQ^{-1/2}$ is positive definite with minimum eigenvalue $\lambda_{\max}(\bQ)^{-1/2}$,
\begin{align*}
\bc_i^\top \bQ^{-1/2} \bc_i \ge \frac{\|\bc_i\|^2}{\sqrt{\lambda_{\max}(\bQ)}}.
\end{align*}
On $\cE$, we have $\lambda_{\max}(\bQ) \le \gamma^2 \sigma_1^2 \|\bM\|_2^2 + \beta \le C_3^2 \gamma^2 \sigma_1^2 r + \beta$.
Assuming $\beta \le C\gamma^2 \sigma_1^2 r$ for some absolute constant $C > 0$, this gives $\lambda_{\max}(\bQ) \le C'\gamma^2 \sigma_1^2 r$, and hence
\begin{align*}
\sqrt{\lambda_{\max}(\bQ)} \le \sqrt{C'} \cdot \gamma \sigma_1 \sqrt{r}.
\end{align*}
Combined with $\|\bc_i\|^2 \ge c_1 r$, we obtain
\begin{align*}
[\dot{\bX}(0) \bZ(0)]_{ii} \ge \gamma^2 \sigma_r \cdot \frac{c_1 r}{\sqrt{C'} \cdot \gamma \sigma_1 \sqrt{r}} = \frac{c_1 \gamma \sigma_r \sqrt{r}}{\sqrt{C'} \sigma_1}.
\end{align*}

Finally, we apply Taylor expansion.
For $\beta > 0$, the vector field is $C^1$, so for sufficiently small $\tau$:
\begin{align*}
\bX(\tau) = \tau \dot{\bX}(0) + O(\tau^2), \qquad
\bZ(\tau) = \bZ(0) + O(\tau^2),
\end{align*}
where the second equality uses $\dot{\bZ}(0) = \mathbf{0}$ (because $\bX(0) = \mathbf{0}$).
Expanding $\bG(\tau) = \bX(\tau)\bZ(\tau)$,
\begin{align*}
d_{\min}(\tau) \ge \tau \cdot \min_i [\dot{\bX}(0)\bZ(0)]_{ii} - O(\tau^2 \gamma) \asymp \tau \gamma.
\end{align*}
For the second claim, note that $d_i(\tau) \asymp \tau\gamma \ll \sigma_r - \varepsilon$ for sufficiently small $\gamma$. Hence, it holds that $e_i(\tau) = d_i(\tau) - \sigma_i < -\varepsilon$ for all $i \in [r]$, so all modes are active (i.e., early phase of training).
\end{proof}

\subsection{Invertibility and Near-Diagonal Factors}

\begin{lemma}[Invertibility under alignment and non-degeneracy]\label{lem:invertibility_app}
Suppose $\normF{\Off(\bG(t))} \le \delta$ and $d_{\min}(t) > 0$.
If $\delta < d_{\min}(t)$, then both $\bX(t)$ and $\bZ(t)$ are invertible.
\end{lemma}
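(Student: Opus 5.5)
The plan is to show that $\bG(t)=\bX(t)\bZ(t)$ is invertible. Since $\bX(t)$ and $\bZ(t)$ are square $r\times r$ matrices, invertibility of the product forces invertibility of both factors: $\det\bG=\det\bX\cdot\det\bZ\neq 0$ implies $\det\bX\neq0$ and $\det\bZ\neq0$. So the whole task reduces to a statement about a near-diagonal matrix with a positive diagonal.

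Decompose $\bG=\bD+\bE$, where $\bD=\Diag(d_1,\dots,d_r)$ and $\bE=\Off(\bG)$. By hypothesis $d_i\ge d_{\min}(t)>0$ for every $i$, so $\bD$ is invertible and $\sigma_{\min}(\bD)=d_{\min}(t)$. Weyl's inequality for singular values then gives
\[
\sigma_{\min}(\bG)\ \ge\ \sigma_{\min}(\bD)-\norm{\bE}_2\ \ge\ d_{\min}(t)-\normF{\Off(\bG)}\ \ge\ d_{\min}(t)-\delta\ >\ 0,
\]
using $\norm{\cdot}_2\le\normF{\cdot}$ and the assumption $\delta<d_{\min}(t)$. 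Hence $\bG$ is nonsingular, and the factor argument above finishes the proof.

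An equivalent route, if one prefers to avoid Weyl, is a Neumann-series argument: $\bG=\bD(\bI+\bD^{-1}\bE)$ with $\norm{\bD^{-1}\bE}_2\le\delta/d_{\min}(t)<1$, so $\bI+\bD^{-1}\bE$ is invertible.

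There is no real obstacle here. The only points to check are that the operator-norm bound on $\bE$ is dominated by the Frobenius bound in the hypothesis, and that $\bX$ and $\bZ$ are genuinely square. The latter holds because $\bX=\bU_r^\top\bA\in\sR^{r\times r}$ (using \Cref{lem:invariance}) and $\bZ=\bB\bV_r\in\sR^{r\times r}$. As a byproduct, the proof yields the quantitative bound $\sigma_{\min}(\bG)\ge d_{\min}(t)-\delta$, which will be useful later for controlling $\bX^{-1}$ and $\bZ^{-1}$.
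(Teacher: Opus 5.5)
Your proof is correct and matches the paper's: the paper writes $\bG = \bD(\bI + \bD^{-1}\Off(\bG))$, bounds $\norm{\bD^{-1}\Off(\bG)}_2 \le \delta/d_{\min}(t) < 1$, and applies the Neumann series (exactly your alternative route), then concludes from $\det\bG = \det\bX\det\bZ \neq 0$. Your Weyl-inequality version is the same perturbation bound in singular-value form. One minor point: $\bX$ and $\bZ$ are square directly from their definitions $\bU_r^\top\bA$ and $\bB\bV_r$, so you do not need \Cref{lem:invariance} for that.
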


\begin{proof}
Let $\bD(t) \coloneqq \Diag(d_1(t), \ldots, d_r(t))$.
Since $d_{\min} > 0$, the diagonal matrix $\bD$ is invertible. Write $\bG = \bD(\bI + \bD^{-1}\Off(\bG))$.
We have
\begin{align*}
\|\bD^{-1}\Off(\bG)\|_2 \le \|\bD^{-1}\|_2 \|\Off(\bG)\|_2 \le \frac{1}{d_{\min}} \cdot \delta = \frac{\delta}{d_{\min}} < 1.
\end{align*}
By the Neumann series, $\bI + \bD^{-1}\Off(\bG)$ is invertible, hence $\bG = \bD(\bI + \bD^{-1}\Off(\bG))$ is invertible.
Thus, $\det(\bG) = \det(\bX)\det(\bZ) \ne 0$ implies both factors are invertible.
\end{proof}

\begin{lemma}[Near-diagonal factors]\label{lem:off_diag_XZ}
Under the hypotheses of \Cref{prop:alignment_O_gamma_app}, it follows that
\begin{align*}
	\sup_{t \le T} \normF{\Off(\bX(t))} \le C_4 \gamma,
	\qquad
	\sup_{t \le T} \normF{\Off(\bZ(t))} \le C_5 \gamma.
\end{align*}
Moreover, on any interval where $d_{\min}(t) \ge d_0 > 0$, the diagonal entries satisfy
\begin{align*}
	|[\bX(t)]_{ii} - \widetilde{s}_i(t)| \le C_6 \Big(\gamma + \frac{\gamma}{\sqrt{d_0}}\Big), \qquad |[\bZ(t)]_{ii} - \widetilde{s}_i(t)| \le C_6 \Big(\gamma + \frac{\gamma}{\sqrt{d_0}}\Big),
\end{align*}
where $\widetilde{s}_i(t) \coloneqq \mathrm{sgn}(\xi_i) \sqrt{d_i(t)}$ is the signed square root with $\xi_i \sim \mathcal{N}(0,1)$, $\kappa$ is the Lipschitz constant from \Cref{prop:alignment_O_gamma_app}, and $C_6$ depends only on $\kappa$.
\end{lemma}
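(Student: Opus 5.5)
The plan is to reuse the comparison argument from the proof of \Cref{prop:alignment_O_gamma_app}. Let $\bS^\star(t) = (\bX^\star(t), \bZ^\star(t), \mathbf{0})$ be the SpecGF trajectory started at the projection $\Pi(\bS(0))$. It stays on the diagonal manifold $\cM$. On the same high-probability event, and for $t \le T$, that proof gives
\[
\normF{\bS(t) - \bS^\star(t)} \le \kappa\gamma .
\]
Since $\bX^\star(t)$ and $\bZ^\star(t)$ are diagonal, $\Off(\bX) = \Off(\bX - \bX^\star)$ and $\Off(\bZ) = \Off(\bZ - \bZ^\star)$. The projection $\Off$ is Frobenius-nonexpansive, so
\[
\normF{\Off(\bX(t))} \le \normF{\bX(t) - \bX^\star(t)} \le \kappa\gamma ,
\]
and likewise for $\bZ$. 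This gives the first claim with $C_4 = C_5 = \kappa$.

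For the diagonal entries, I would first study the decoupled diagonal flow. On $\cM$ the dynamics split into $r$ scalar pairs $(x_i^\star, z_i^\star)$ obeying the scalar system of \Cref{prop:specgf_rk1_lora}:
\[
\dot x = -f_\beta\big((xz - \sigma_i) z\big), \qquad \dot z = -f_\beta\big((xz - \sigma_i) x\big).
\]
The initial data are $x_i^\star(0) = 0$ and $z_i^\star(0) = \gamma \xi_i$, with $\xi_i = [\bN\bV_r]_{ii} \sim \cN(0,1)$.
\begin{itemize}
\item \emph{Sign structure.} The scalar dynamics keep $x_i^\star$ and $z_i^\star$ sharing the sign $\mathrm{sgn}(\xi_i)$, after the same sign-flip reduction used in the rank-one case.
\item \emph{Balance.} The difference-bound argument of \Cref{prop:specgf_rk1_lora} (the quantity $d(t)$, with $xz \le \sigma_i$ and $f_\beta$ monotone) gives $\big||x_i^\star| - |z_i^\star|\big| \le |\gamma\xi_i| = O(\gamma)$ uniformly in $t$.
\item \emph{Square-root comparison.} Write $d_i^\star = x_i^\star z_i^\star$ and $\Delta = x_i^\star - z_i^\star$. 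From $x_i^\star z_i^\star = d_i^\star$ and $|\Delta| = O(\gamma)$ we get
\[
\big(|x_i^\star| - \sqrt{d_i^\star}\big)\big(|x_i^\star| + \sqrt{d_i^\star}\big) = |x_i^\star|^2 - d_i^\star = |x_i^\star|\,|\Delta| .
\]
Hence $\big||x_i^\star| - \sqrt{d_i^\star}\big| \le |\Delta| = O(\gamma)$, and the same holds for $z_i^\star$.
\end{itemize}

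Next I would transfer these bounds to the true trajectory via the triangle inequality:
\[
\big|[\bX]_{ii} - \widetilde s_i\big| \le \big|[\bX]_{ii} - x_i^\star\big| + \big|x_i^\star - \mathrm{sgn}(\xi_i)\sqrt{d_i^\star}\big| + \big|\sqrt{d_i^\star} - \sqrt{d_i}\big| .
\]
The first term is at most $\kappa\gamma$ and the second is $O(\gamma)$ from the previous step. For the third, the bound on $\normF{\bG - \bG^\star}$ from the proof of \Cref{prop:alignment_O_gamma_app} gives $|d_i - d_i^\star| \le M\sqrt{2}\,\kappa\gamma$. Using $|\sqrt a - \sqrt b| = |a - b|/(\sqrt a + \sqrt b) \le |a - b|/\sqrt{d_0}$ when $d_i \ge d_0$, the third term is $O(\gamma/\sqrt{d_0})$. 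This is exactly where the $\gamma/\sqrt{d_0}$ term in the statement comes from. The argument for $[\bZ]_{ii}$ is identical.

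The main obstacle is the sign bookkeeping. Matching $\mathrm{sgn}(x_i^\star)$ to $\mathrm{sgn}(\xi_i)$ requires that $x_i^\star$ initially moves with the sign of $z_i^\star$. Since $\dot x(0) = f_\beta(\sigma_i \gamma \xi_i)$, it does, and this also needs $\xi_i \ne 0$, which holds almost surely. One must additionally check that $d_i > 0$ forces $[\bX]_{ii}$ and $[\bZ]_{ii}$ to share a sign despite the $O(\gamma)$ off-diagonal perturbations. If $d_0$ is comparable to $\gamma^2$ this could fail, but then the bound $\gamma/\sqrt{d_0}$ is $O(1)$ and the claim holds trivially. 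A secondary issue is that the constant $M$ depends on the compact set $\cK$ and not only on $\kappa$; I would absorb it into $C_6$.
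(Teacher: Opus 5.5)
Your proposal is correct and follows essentially the same route as the paper. The paper also compares with the diagonal reference trajectory to get the off-diagonal bounds, proves the balance bound $|x_i^\star - z_i^\star| \le |x_i^\star(0) - z_i^\star(0)|$ from monotonicity of $g$ and $e_i^\star \le 0$, uses the same square-root comparison, and closes with a three-term triangle inequality in which the $\gamma/\sqrt{d_0}$ term comes from $|\sqrt{d_i} - \sqrt{d_i^\star}|$. Your bound $|d_i - d_i^\star| \le M\sqrt{2}\,\kappa\gamma$ is more careful than the paper's $\kappa\gamma$. The sign-agreement check you flag for the true $[\bX]_{ii}, [\bZ]_{ii}$ is unnecessary, because $\widetilde{s}_i$ is defined through $\mathrm{sgn}(\xi_i)$ and your triangle inequality never uses the signs of the true diagonal entries.
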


\begin{proof}
By Lipschitz stability (\Cref{prop:alignment_O_gamma_app}), $\|\bX(t) - \bX^\star(t)\|_F \le \kappa\gamma$ where $\bX^\star(t) \in \cM$ is diagonal. Since $\Off(\bX^\star) = \mathbf{0}$,
\begin{align*}
	\|\Off(\bX)\|_F = \|\Off(\bX - \bX^\star)\|_F \le \|\bX - \bX^\star\|_F \le \kappa\gamma.
\end{align*}
The same bound holds for $\bZ$ similarly. For the diagonal entries, we let
\begin{align*}
	x_i \coloneqq [\bX]_{ii}, \quad z_i \coloneqq [\bZ]_{ii}, \quad x_i^\star \coloneqq [\bX^\star]_{ii}, \quad z_i^\star \coloneqq [\bZ^\star]_{ii}.
\end{align*}
The reference trajectory starts from $x_i^\star(0) = 0$ and $z_i^\star(0) = \gamma \xi_i$. Since $\xi_i \sim \mathcal{N}(0,1)$, Gaussian concentration gives $|\xi_i| = O(1)$ with high probability, so it holds that $|x_i^\star(0) - z_i^\star(0)| = O(\gamma)$.

On diagonal manifold $\cM$, the SpecGF dynamics for mode $i$ are
\begin{align*}
\dot{x}_i^\star = -\frac{e_i^\star z_i^\star}{\sqrt{(e_i^\star z_i^\star)^2 + \beta}}, \qquad \dot{z}_i^\star = -\frac{e_i^\star x_i^\star}{\sqrt{(e_i^\star x_i^\star)^2 + \beta}},
\end{align*}
where $e_i^\star(t) = x_i^\star(t) z_i^\star(t) - \sigma_i$. Define the strictly increasing function $g(t) \coloneqq \frac{t}{\sqrt{t^2 + \beta}}$. The imbalance $\delta_i^\star \coloneqq x_i^\star - z_i^\star$ satisfies
\begin{align*}
\dot{\delta}_i^\star = g(e_i^\star x_i^\star) - g(e_i^\star z_i^\star).
\end{align*}
Since $e_i^\star x_i^\star - e_i^\star z_i^\star = e_i^\star \delta_i^\star$, for $e_i^\star < 0$, the sign of $(e_i^\star x_i^\star) - (e_i^\star z_i^\star)$ is opposite to that of $\delta_i^\star$. By the monotonicity of $g$, this implies $\delta_i^\star \cdot \dot{\delta}_i^\star \le 0$, hence $|\delta_i^\star(t)|$ is non-increasing.

We now show that $e_i^\star(t) \le 0$ for all $t \ge 0$. At $t = 0$, we have $e_i^\star(0) = x_i^\star(0)z_i^\star(0) - \sigma_i = -\sigma_i < 0$. If $e_i^\star(t_0) = 0$ for some $t_0 > 0$, then $\dot{x}_i^\star(t_0) = \dot{z}_i^\star(t_0) = 0$. By ODE uniqueness (due to locally Lipschitz), $(x_i^\star, z_i^\star)$ remains constant for $t \ge t_0$, so $e_i^\star(t) = 0$ for all $t \ge t_0$. Thus, $e_i^\star(t)$ cannot become positive, and $e_i^\star(t) \le 0$ for all $t$.

Combining the above, $\delta_i^\star \cdot \dot{\delta}_i^\star \le 0$ holds for all $t$, and thus
\begin{align*}
	|x_i^\star(t) - z_i^\star(t)| \le |x_i^\star(0) - z_i^\star(0)| = O(\gamma) \quad \text{for all } t.
\end{align*}

Now, we define
\begin{align*}
\widetilde{s}_i^\star \coloneqq \mathrm{sgn}(\xi_i) \sqrt{d_i^\star}, \qquad \widetilde{s}_i \coloneqq \mathrm{sgn}(\xi_i) \sqrt{d_i}.
\end{align*}
We claim that $x_i^\star, z_i^\star, \widetilde{s}_i^\star$ all share the same sign as $\xi_i$. To see this, note that $z_i^\star(0) = \gamma\xi_i$ has the sign of $\xi_i$, while $x_i^\star(0) = 0$. For $e_i^\star \le 0$, the dynamics give
\begin{align*}
\dot{x}_i^\star = -\frac{e_i^\star z_i^\star}{\sqrt{(e_i^\star z_i^\star)^2 + \beta}},
\end{align*}
which has the same sign as $z_i^\star$. Thus, $x_i^\star$ immediately enters the same sign as $z_i^\star$ and cannot cross zero thereafter (since $\dot{x}_i^\star$ would push it back). Similarly, $\dot{z}_i^\star$ has the same sign as $x_i^\star$, so $z_i^\star$ also maintains its sign.

\medskip
Since $\|\bG - \bG^\star\|_F \le \kappa\gamma$, we have $|d_i - d_i^\star| \le \kappa\gamma$. On intervals where $d_{\min}(t) \ge d_0$, 
\begin{align*}
	|\widetilde{s}_i - \widetilde{s}_i^\star| = |\sqrt{d_i} - \sqrt{d_i^\star}| = \frac{|d_i - d_i^\star|}{\sqrt{d_i} + \sqrt{d_i^\star}} \le \frac{\kappa\gamma}{\sqrt{d_i}} \le \frac{\kappa\gamma}{\sqrt{d_0}}.
\end{align*}
Since $x_i^\star$ and $\widetilde{s}_i^\star$ have the same sign, we obtain 
\begin{align*}
	|x_i^\star - \widetilde{s}_i^\star|
	= \Big||x_i^\star| - \sqrt{|x_i^\star||z_i^\star|}\Big|	\le |x_i^\star - z_i^\star| = O(\gamma).
\end{align*}
Combining with $|x_i - x_i^\star| \le \|\bX - \bX^\star\|_F \le \kappa\gamma$, the triangle inequality gives
\begin{align*}
|x_i - \widetilde{s}_i| \le |x_i - x_i^\star| + |x_i^\star - \widetilde{s}_i^\star| + |\widetilde{s}_i^\star - \widetilde{s}_i| \le C_6 \Big(\gamma + \frac{\gamma}{\sqrt{d_0}}\Big),
\end{align*}
where $C_6$ depends only on $\kappa$. The same argument gives $|z_i - \widetilde{s}_i| \le C_6(\gamma + \gamma/\sqrt{d_0})$.
\end{proof}

\subsection{Square-Root Dynamics}

\begin{lemma}[Square-root dynamics]\label{lem:sqrt_dynamics_app}
Suppose $\delta$-alignment holds with $\delta = O(\gamma)$, $\gamma = \Theta(\varepsilon^{1/2})$, and $\beta = O(\varepsilon^3)$.
Let $[\tau',T']$ be any subinterval of $[\tau,T]$
on which
\begin{align*}
    d_{\min}(t) \coloneqq \min_{i\in[r]} d_i(t) \ge d_0 = \Theta(\gamma).
\end{align*}
Then for each active mode $i$ (i.e., $e_i(t)\le -\varepsilon$), the square-root coordinate
$s_i(t)\coloneqq \sqrt{d_i(t)}$ satisfies, for a.e.\ $t\in[\tau',T']$,
\begin{equation}\label{eq:sqrt_dyn_main}
\dot s_i(t)
= -\mathrm{sgn}(e_i(t))\,\alpha_i(t) + R_i(t),
\qquad
\alpha_i(t) \coloneqq \frac{|e_i(t)|\,s_i(t)}{\sqrt{e_i(t)^2 s_i(t)^2+\beta}} \in (0,1].
\end{equation}
Moreover, the remainder obeys
\begin{equation}\label{eq:sqrt_dyn_rem}
|R_i(t)| \le C_7 \varepsilon^{1/4},
\end{equation}
where $C_7=C_7(\bY,r)$ depends only on constants from \Cref{lem:off_diag_XZ} (and earlier constants).

In particular, for every active mode $i$ (i.e., $e_i(t)\le -\varepsilon$),
\begin{align*}
\dot s_i(t) = 1 + O(\varepsilon^{1/4}).
\end{align*}
\end{lemma}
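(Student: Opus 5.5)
We will differentiate $d_i = [\bX\bZ]_{ii}$ directly along the flow and then pass to $s_i=\sqrt{d_i}$ using $\dot s_i = \dot d_i/(2 s_i)$. This division is legitimate since $d_i\ge d_0>0$ on $[\tau',T']$. In core variables the flow reads
\begin{align*}
\dot\bX &= -\cT_{\mathrm{col},\beta}\big((\bG-\bSigma)\bZ^\top + \bX\bZ_\perp\bZ_\perp^\top\big),\\
\dot\bZ &= -\cT_{\mathrm{row},\beta}\big(\bX^\top(\bG-\bSigma)\big),
\end{align*}
up to the orthogonal embedding by $\bU_r,\bV_r$. Here the $\bB$-gradient restricted to the $\bV_r$ block is $\bX^\top(\bG-\bSigma)$, and the $\bV_\perp$ block only interacts through $\bX\bZ_\perp = O(\gamma)$. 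The plan is to compare each argument of $\cT_\beta$ with its diagonal surrogate. By \Cref{prop:alignment_O_gamma_app} and \Cref{lem:off_diag_XZ}, we have $\bG-\bSigma = \Diag(e)+O(\gamma)$ and $\bZ = \Diag(z)+O(\gamma)$, with $z_i = \widetilde s_i + O(\gamma+\gamma/\sqrt{d_0})$, and likewise for $\bX$. Consequently the argument equals $\Diag(e_i z_i) + \bE$ with $\normF{\bE}=O(\gamma)$. Applied to the diagonal part, $\cT_\beta$ gives exactly $\Diag(g(e_i z_i))$, where $g(x)=x/\sqrt{x^2+\beta}$.

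\textbf{Perturbation of $\cT_\beta$.} The main obstacle is that $\cT_\beta$ has Lipschitz constant $\sim 1/\sqrt\beta$ globally, which is useless because $\beta=O(\varepsilon^3)$. Instead we use the local Lipschitz bound around a well-conditioned point. For an active mode, $|e_i|\ge\varepsilon$ and $|z_i|\gtrsim\sqrt{d_0}\gtrsim\gamma^{1/2}$, so the diagonal entries are $\gtrsim \varepsilon\gamma^{1/2}$. However, inactive modes may have small $e_j$, so the diagonal matrix can be ill-conditioned. To handle this we work with the smooth function $\bM\mapsto(\bM\bM^\top+\beta\bI)^{-1/2}\bM$ and Daleckii--Krein type estimates. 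The Fréchet derivative at $\Diag(\lambda)$ acts on entry $(j,k)$ with a divided-difference factor bounded by roughly $1/(|\lambda_j|+|\lambda_k|+\sqrt\beta)$. Hence the $i$-th diagonal entry of the output is perturbed by at most $O(\|\bE\|/(|\lambda_i|+\sqrt\beta))$ plus second-order terms. For active $i$ this is $O(\gamma/(\varepsilon\gamma^{1/2}))=O(\gamma^{1/2}/\varepsilon)$.

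This forces a balancing of scales, which we expect to be the delicate part. The choice $\gamma=\Theta(\varepsilon^{1/2})$ alone gives $\varepsilon^{-3/4}$, which is not small. So we will exploit more structure: the first-order perturbation of the $(i,i)$ entry from off-diagonal $\bE$ vanishes, and only $\bE_{ii}=O(\gamma)$ contributes linearly. This gives a linear error of $O(\gamma\,\beta/(\lambda_i^2+\beta)^{3/2})$, which is tiny. The off-diagonal entries contribute only quadratically, $O(\normF{\bE}^2/\lambda_i^2)$. We will use that quadratic structure and track constants to land at $O(\varepsilon^{1/4})$, accepting whatever power the bookkeeping gives and adjusting the hypotheses on $\gamma,\beta$ if needed.

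\textbf{Assembling $\dot d_i$ and $\dot s_i$.} With these bounds,
\[
\dot d_i=[\dot\bX\bZ]_{ii}+[\bX\dot\bZ]_{ii} = -g(e_i z_i)z_i - g(e_i x_i)x_i + (\text{off-diagonal cross terms}).
\]
The cross terms are products of $O(\gamma)$ off-diagonal entries with operator-norm-$\le1$ outputs (\Cref{lem:T_norm_bound}), and the factors are bounded by speed bounds on $[0,T]$. Substituting $x_i,z_i=\widetilde s_i+O(\gamma/\sqrt{d_0})$ and using that $g(e\,\cdot)\cdot$ is Lipschitz in the second slot on the relevant range gives
\[
\dot d_i = -2\,\mathrm{sgn}(e_i)\,\alpha_i s_i + O(\text{error}).
\]
Dividing by $2s_i$ yields \eqref{eq:sqrt_dyn_main} with remainder $R_i$. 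Finally, for $e_i\le-\varepsilon$ we have $e_i^2s_i^2\ge\varepsilon^2 d_0\gtrsim\varepsilon^{5/2}\gg\beta=O(\varepsilon^3)$. Therefore
\[
1-\alpha_i \le \frac{\beta}{2e_i^2s_i^2} = O(\varepsilon^{1/2}),
\]
and so $\dot s_i = 1+O(\varepsilon^{1/4})$.
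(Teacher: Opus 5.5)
Your outline (differentiate $d_i=[\bX\bZ]_{ii}$, replace the arguments of $\cT_\beta$ by diagonal surrogates, divide by $2s_i$, bound $1-\alpha_i$) matches the paper's. However, the proposal does not prove the lemma. The one nontrivial estimate is left open: $\bigl|[\cT_{\mathrm{col},\beta}(\bG_\bX)]_{ii}-g(e_iz_i)\bigr|=O(\varepsilon^{1/4})$, with $\bG_\bX=(\bG-\bSigma)\bZ^\top+\bX\bZ_\perp\bZ_\perp^\top$, together with its analogue for $\dot\bZ$. The route you sketch cannot give it under the stated scalings.

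The hypotheses only bound the off-diagonal part of $\bG_\bX$ (your $\bE$) by $O(\gamma)=O(\varepsilon^{1/2})$, and nothing prevents it from being that large. Meanwhile, for an active mode near the end of its active phase ($e_i\approx-\varepsilon$, $s_i\approx\sqrt{\sigma_i}$), the diagonal entry is only $\lambda_i=|e_iz_i|=\Theta(\varepsilon)$; your cruder bound $\varepsilon\gamma^{1/2}$ is smaller still. So the perturbation dominates the point you expand around. Your first-order cancellation is correct: the $(i,i)$ entry of $D\cT_\beta(\Diag(\lambda))[\bE]$ is $\beta(\lambda_i^2+\beta)^{-3/2}[\bE]_{ii}$. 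But the quadratic term $\normF{\bE}^2/\lambda_i^2$ is then of order $\gamma^2/\varepsilon^2\asymp\varepsilon^{-1}$, and the expansion is outside its range of validity anyway.

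This is not lost bookkeeping. For $\bM=\bigl(\begin{smallmatrix}\lambda&\eta\\ \eta&0\end{smallmatrix}\bigr)$ one has $[\cT(\bM)]_{11}=\lambda/\sqrt{\lambda^2+4\eta^2}\approx\lambda/(2\eta)\ll1$ when $\eta\gg\lambda$. Here $\cT_\beta(\bM)$ differs from $\cT(\bM)$ only by $O(\beta/\eta^2)$, since both singular values are $\approx\eta$. So with only $O(\gamma)$-alignment, the diagonal speed of a mode with $|e_i|\approx\varepsilon$ can be near $0$ rather than $1$. Closing the step needs control beyond the listed hypotheses: the off-diagonal part of $\bG_\bX$ must be small relative to $|e_i|s_i$, which can be $\Theta(\varepsilon)$. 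That is incompatible with $\gamma=\Theta(\varepsilon^{1/2})$ unless one restricts to modes with $|e_i|s_i\gg\gamma$. Your ``adjusting the hypotheses on $\gamma,\beta$'' therefore amounts to proving a different statement.

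For comparison, the paper does not resolve this point either. It passes from $\bG_\bX^\top\bG_\bX=\Diag(e_i^2s_i^2)+O(\delta+\gamma)$ directly to $[\dot\bX]_{ii}=-e_iz_i/\sqrt{e_i^2s_i^2+\beta}+O(\delta+\gamma)$. That tacitly treats $(\cdot+\beta\bI)^{-1/2}$ as $O(1)$-Lipschitz at a matrix whose eigenvalues can be $O(\varepsilon^2)$ or smaller, which is exactly the issue you flagged. Your diagnosis is sharper than the paper's treatment, but neither argument supplies the estimate.

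There is also a smaller, fixable slip in your assembly step. Substituting $x_i,z_i=\widetilde s_i+O(\gamma/\sqrt{d_0})$ into $\dot d_i$ and dividing by $2s_i\ge2\sqrt{d_0}$ leaves an error $O(\gamma/d_0)=O(1)$ when $s_i\approx\sqrt{d_0}$; the paper's division step has the same inconsistency. Use instead:
\begin{itemize}
\item $|x_i-z_i|=O(\gamma)$, from the monotone imbalance on $\cM$ plus $\kappa\gamma$-closeness to the reference trajectory;
\item $d_i=x_iz_i+O(\gamma^2)$.
\end{itemize}
Then for $e_i<0$ the main term of $\dot s_i$ is
\[
\frac{|x_i|+|z_i|}{2\sqrt{|x_iz_i|}}=1+\frac{(\sqrt{|x_i|}-\sqrt{|z_i|})^2}{2\sqrt{|x_iz_i|}}=1+O(\gamma^2/d_i),
\]
up to the factors $g(|e_i||x_i|),\,g(|e_i||z_i|)=1-O(\varepsilon^{1/2})$.
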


\begin{proof}
Since $d_i(t)=[\bX(t)\bZ(t)]_{ii}$ and $s_i(t)=\sqrt{d_i(t)}$, the chain rule gives, for a.e.\ $t$ with $d_i(t)>0$,
\begin{align}\label{eq:d_dot_chain}
\dot d_i(t) = [\dot \bX(t)\bZ(t)]_{ii} + [\bX(t)\dot \bZ(t)]_{ii},
\qquad
\dot s_i(t) = \frac{\dot d_i(t)}{2s_i(t)}.
\end{align}
Let $\bE(t)\coloneqq \bG(t)-\bSigma$ and $\bG_\bX \coloneqq \bE \bZ^\top + \bX\bZ_\perp \bZ_\perp^\top = \bU_r^\top \nabla_\bA \cL$.
By \Cref{lem:off_diag_XZ}, $\bX$ and $\bZ$ are nearly diagonal:
\begin{align*}
\bX=\Diag(x_i)+\bm{\Delta}_X,
\qquad
\bZ=\Diag(z_i)+\bm{\Delta}_Z,
\end{align*}
where $x_i,z_i=\mathrm{sgn}(\xi_i)s_i+O(\gamma + \gamma/\sqrt{d_0})$ and $\|\bm{\Delta}_X\|_F,\|\bm{\Delta}_Z\|_F=O(\gamma)$. The SpecGF dynamics give
\begin{align*}
\dot \bX = -\cT_{\mathrm{col},\beta}(\bG_\bX),
\qquad
\dot \bZ = -\cT_{\mathrm{row},\beta}(\nabla_\bB \cL)\,\bV_r.
\end{align*}
Since $\|\bX\bZ_\perp\|_F=O(\delta)$ (by $\delta$-alignment), we have $\bG_\bX = \bE\bZ^\top + O(\delta)$.
Combined with $\bE=\Diag(e_i)+O(\delta)$ and $z_i=\mathrm{sgn}(\xi_i)s_i+O(\gamma)$, this yields
\begin{align*}
\bG_\bX = \Diag(e_i z_i) + O(\delta+\gamma),
\qquad
\bG_\bX^\top \bG_\bX = \Diag(e_i^2 s_i^2)+O(\delta+\gamma).
\end{align*}
Since $\cT_{\mathrm{col},\beta}(\bG)=\bG\,(\bG^\top \bG+\beta \bI)^{-1/2}$, we obtain
\begin{align}\label{eq:Xdot_diag}
[\dot \bX]_{ii}
= -\frac{e_i z_i}{\sqrt{e_i^2 s_i^2+\beta}} + O(\delta+\gamma)
= -\frac{e_i\,\mathrm{sgn}(\xi_i)s_i}{\sqrt{e_i^2 s_i^2+\beta}} + O(\delta+\gamma).
\end{align}

For $\dot \bZ$, write $\nabla_\bB \cL = \bX^\top \bE \bV_r^\top + \bX^\top \bX \bZ_\perp \bV_\perp^\top$.
Using $\bV_r^\top \bV_r=\bI$ and $\bV_\perp^\top \bV_r=\mathbf{0}$, we have $\nabla_\bB \cL\,\bV_r = \bX^\top \bE$, so
\begin{align*}
\dot \bZ
= \dot \bB \bV_r
= -\cT_{\mathrm{row},\beta}(\nabla_\bB \cL)\,\bV_r
= -\bigl((\bX^\top \bE)(\bX^\top \bE)^\top+\beta \bI\bigr)^{-1/2}\,\bX^\top \bE.
\end{align*}
Moreover, $(\bX^\top \bE)(\bX^\top \bE)^\top = \bX^\top \bE \bE^\top \bX = \Diag(s_i^2 e_i^2)+O(\delta+\gamma)$, so
\begin{align}\label{eq:Zdot_diag}
[\dot \bZ]_{ii}
= -\frac{x_i e_i}{\sqrt{x_i^2 e_i^2+\beta}} + O(\delta+\gamma)
= -\frac{\mathrm{sgn}(\xi_i)s_i\,e_i}{\sqrt{e_i^2 s_i^2+\beta}} + O(\delta+\gamma).
\end{align}

Using \Cref{lem:off_diag_XZ} and \eqref{eq:Xdot_diag}--\eqref{eq:Zdot_diag}, we compute the diagonal entries of $\dot \bX \bZ$ and $\bX \dot \bZ$:
\begin{align}
[\dot \bX \bZ]_{ii}
&= [\dot \bX]_{ii}\,[\bZ]_{ii} + O(\|\dot \bX\|_F\|\Off(\bZ)\|_F)
= -\frac{e_i s_i^2}{\sqrt{e_i^2 s_i^2+\beta}} + O\!\Bigl(\delta+\gamma+\frac{\gamma}{\sqrt{d_0}}\Bigr),
\label{eq:XdZ_term}
\\[3pt]
[\bX\dot \bZ]_{ii}
&= [\bX]_{ii}\,[\dot \bZ]_{ii} + O(\|\bX\|_F\|\Off(\dot \bZ)\|_F)
= -\frac{e_i s_i^2}{\sqrt{e_i^2 s_i^2+\beta}} + O\!\Bigl(\delta+\gamma+\frac{\gamma}{\sqrt{d_0}}\Bigr).
\label{eq:XZdot_term}
\end{align}
Summing \eqref{eq:XdZ_term}--\eqref{eq:XZdot_term} and using \eqref{eq:d_dot_chain},
\begin{align*}
\dot d_i
= -\frac{2e_i s_i^2}{\sqrt{e_i^2 s_i^2+\beta}} + O\!\Bigl(\delta+\gamma+\frac{\gamma}{\sqrt{d_0}}\Bigr),
\qquad
\dot s_i
= -\frac{e_i s_i}{\sqrt{e_i^2 s_i^2+\beta}} + O\!\Bigl(\delta+\frac{\gamma}{\sqrt{d_0}}\Bigr),
\end{align*}
where we used $s_i\ge \sqrt{d_0}$ on $[\tau',T']$ to absorb the $O(\gamma)$ contribution into $O(\gamma/\sqrt{d_0})$ after dividing by $2s_i$.
Under the scaling $\gamma = \Theta(\varepsilon^{1/2})$, $d_0 = \Theta(\gamma)$, and $\delta = O(\gamma)$, the dominant error term is $\gamma/\sqrt{d_0} = O(\varepsilon^{1/4})$.

Defining $\alpha_i(t)\coloneqq\frac{|e_i(t)|\,s_i(t)}{\sqrt{e_i(t)^2 s_i(t)^2+\beta}} \in(0,1]$ and $R_i(t)\coloneqq\dot s_i(t) + \mathrm{sgn}(e_i(t))\,\alpha_i(t)$, we see that \eqref{eq:sqrt_dyn_main} holds and \eqref{eq:sqrt_dyn_rem} follows.

Finally, for active modes $e_i(t)\le -\varepsilon$, we have $\mathrm{sgn}(e_i)=-1$ and
\begin{align*}
\alpha_i
= \frac{|e_i|\,s_i}{\sqrt{e_i^2 s_i^2+\beta}}
= \Bigl(1+\frac{\beta}{e_i^2 s_i^2}\Bigr)^{-1/2}
\ge 1-\frac{\beta}{2\varepsilon^2 d_0}
= 1 - O(\varepsilon^{1/2}),
\end{align*}
using $\beta = O(\varepsilon^3)$ and $d_0 = \Theta(\varepsilon^{1/2})$. Since $\alpha_i \le 1$ trivially and $|R_i| = O(\varepsilon^{1/4})$, we conclude $\dot s_i = 1 + O(\varepsilon^{1/4})$ for active modes.\qedhere
\end{proof}

\subsection{Non-degeneracy Persistence}

\begin{lemma}[Non-degeneracy persistence]\label{lem:bootstrap_persistence_app}
Suppose $d_{\min}(\tau) \ge d_0 = \Theta(\gamma)$ with $\gamma = \Theta(\varepsilon^{1/2})$ and $\beta = O(\varepsilon^3)$.
Let $\underline{d} \coloneqq \min\{d_0, \sigma_r - \varepsilon\}$.
Then for all $t \in [\tau, T]$,
\begin{align*}
d_{\min}(t) \ge \underline{d} > 0.
\end{align*}
\end{lemma}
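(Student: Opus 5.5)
A continuity (barrier) argument on the scalar $d_{\min}(t)$ over $[\tau,T]$ should suffice. Suppose the claim fails and set $t^\ast \coloneqq \inf\{t\in[\tau,T] : d_{\min}(t) < \underline{d}\}$. Since $d_{\min}(\tau)\ge d_0\ge \underline{d}$ and $d_{\min}$ is continuous (the flow with $\cT_\beta$ is analytic), we get $d_{\min}(t)\ge \underline{d}$ on $[\tau,t^\ast]$ and $d_{\min}(t^\ast)=\underline{d}$. Moreover, there are points $t\downarrow t^\ast$ at which $d_{\min}(t)<\underline{d}$. Pick an index $i$ attaining the minimum at $t^\ast$. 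It suffices to show that $d_i$ is strictly increasing in a right-neighbourhood of $t^\ast$ for every such $i$. Since only finitely many indices exist, this contradicts the definition of $t^\ast$. Because of analyticity, this reduces to showing $\dot d_i(t^\ast)>0$ for each minimizing $i$.

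\textbf{Step 1: the minimizing mode is active and below target.} At $t^\ast$ we have $d_i(t^\ast)=\underline{d}\le \sigma_r-\varepsilon\le\sigma_i-\varepsilon$. Hence $e_i(t^\ast)\le-\varepsilon$, and $i\in\cI_\varepsilon(t^\ast)$ with the correct sign.

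\textbf{Step 2: apply the square-root dynamics on $[\tau,t^\ast]$.} On this interval $d_{\min}\ge\underline{d}$. If $\underline{d}=d_0$, this is exactly the hypothesis of \Cref{lem:sqrt_dynamics_app}. If instead $\underline{d}=\sigma_r-\varepsilon<d_0$, then $\underline{d}$ is still $\Omega(1)$, which only improves the error terms. Indeed, the remainder there is $O(\delta+\gamma/\sqrt{d_0})$, and with $d_0$ replaced by $\underline{d}\gtrsim\gamma$ it stays $O(\varepsilon^{1/4})$. Alignment with $\delta=O(\gamma)$ comes from \Cref{prop:alignment_O_gamma_app}, and near-diagonality of $\bX,\bZ$ from \Cref{lem:off_diag_XZ}. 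Evaluating at $t^\ast$ (by continuity of the right-hand side, so the a.e.\ caveat is harmless) gives $\dot s_i(t^\ast)=\alpha_i(t^\ast)+R_i(t^\ast)$. Here $\alpha_i\ge 1-\beta/(2\varepsilon^2\underline{d})=1-O(\varepsilon^{1/2})$ and $|R_i|\le C_7\varepsilon^{1/4}$. For $\varepsilon$ small, this yields $\dot s_i(t^\ast)\ge 1/2$, hence $\dot d_i(t^\ast)=2s_i\dot s_i\ge\sqrt{\underline{d}}>0$.

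\textbf{Step 3: conclude.} Strict positivity of $\dot d_i(t^\ast)$ for every minimizing index, combined with continuity of $d_j$ for the non-minimizing indices (which stay strictly above $\underline{d}$ nearby), gives $d_{\min}>\underline{d}$ just after $t^\ast$ within $[\tau,T]$. This contradicts the choice of $t^\ast$. The degenerate case $t^\ast=T$ is trivial, since the claim is only on the closed interval.

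\textbf{Main obstacle.} The main obstacle is ensuring that the constants of \Cref{lem:sqrt_dynamics_app} and \Cref{lem:off_diag_XZ} remain valid when the lower bound is $\underline{d}$ rather than $d_0$, without circularity. Those lemmas themselves assume a lower bound on $d_{\min}$, which is exactly what we are bootstrapping. The barrier formulation avoids the circularity because on $[\tau,t^\ast]$ the bound holds by definition. I would first check that every error term depends on the lower bound only through $\gamma/\sqrt{\cdot}$. In that case the substitution $d_0\to\underline{d}\ge\min\{d_0,\Omega(1)\}$ is monotone and harmless.
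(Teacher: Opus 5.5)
Your proposal is correct and is essentially the paper's own argument. The paper likewise defines the exit time of $d_{\min}$ below $\underline{d}$, notes that a mode touching $\underline{d}\le\sigma_r-\varepsilon$ must have $e_i\le-\varepsilon$, and invokes \Cref{lem:sqrt_dynamics_app} to get $\dot d_i>0$ at that mode, a contradiction; your version is, if anything, more careful than the paper's, since you handle ties among minimizing indices and evaluate $\dot d_i$ at the boundary time $t^\ast$ by continuity.
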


\begin{proof}
We define the exit time $T_{\mathrm{exit}} \coloneqq \inf\{t > \tau : d_{\min}(t) < \underline{d}\}$.
We show that $T_{\mathrm{exit}} > T$, which implies $d_{\min}(t) \ge \underline{d}$ for all $t \in [\tau, T]$. On $[\tau, T_{\mathrm{exit}})$, we have $d_{\min}(t) \ge \underline{d} > 0$ by definition, so \Cref{lem:sqrt_dynamics_app} applies.
For any active mode $i$ (i.e., $e_i(t) \le -\varepsilon$), we have $\dot{s}_i(t) = 1 + O(\varepsilon^{1/4}) > 0$ for small $\varepsilon$.
Hence, $s_i(t)$ is strictly increasing, and so is $d_i(t) = s_i(t)^2$.

For contradiction, suppose that some mode $i$ reaches $d_i(t) = \underline{d}$ from above. Then $e_i(t) = d_i(t) - \sigma_i \le \underline{d} - \sigma_r < -\varepsilon$ (since $\underline{d} \le \sigma_r - \varepsilon$), so $i$ is active and $\dot{d}_i(t) > 0$. This contradicts the decrease after reaching $\underline{d}$. By continuity, $d_{\min}(t)$ cannot reach $\underline{d}$ on $[\tau, T]$. It follows that $T_{\mathrm{exit}} > T$, and hence $d_{\min}(t) \ge \underline{d}$ for all $t \in [\tau, T]$.
\end{proof}

\subsection{Main Theorem: Uniform Growth}

\begin{theorem}[Uniform growth]\label{thm:uniform_growth_app}
Fix a small target tolerance $\varepsilon > 0$. Consider SpecGF with $\cT_\beta$ initialized at $\bA(0) = \mathbf{0}$, $\bB(0) = \gamma \bN$.
Let $\tau \asymp \frac{\sigma_r}{\sigma_1 \sqrt{r}}$ and let $T$ be the termination time.
For $\gamma = \Theta(\varepsilon^{1/2})$ and $\beta = O(\varepsilon^3)$, the following holds with high probability for all $t \in [\tau, T]$ and all active modes $i \in \cI_\varepsilon(t)$:
\begin{enumerate}[label=\textnormal{(\roman*)}]
    \item \textbf{Approximate unit speed:} $\displaystyle \frac{\rd}{\rd t} \sqrt{d_i(t)} = 1 + O(\varepsilon^{1/4})$.
    \item \textbf{Uniform growth:} For any two active modes $i, j \in \cI_\varepsilon(t)$,
    \begin{align*}
    \abs*{\frac{\rd}{\rd t}\sqrt{d_i(t)} - \frac{\rd}{\rd t}\sqrt{d_j(t)}} = O(\varepsilon^{1/4}).
    \end{align*}
\end{enumerate}
\end{theorem}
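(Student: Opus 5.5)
The proof assembles the lemmas of this section, working on the intersection of the high-probability events from \Cref{prop:alignment_O_gamma_app} and \Cref{lem:initial_growth_app}. On the first event, $\delta$-alignment holds on $[0,T]$ with $\delta=O(\gamma)$. On the second, $d_{\min}(\tau)\ge c_2\tau\gamma$, and every mode is active with $e_i(\tau)<-\varepsilon$. A union bound shows the intersection has probability at least $1-e^{-cr}$, because $e^{-c r^2}$ and $e^{-crn}$ are dominated by $e^{-cr}$. Since $\tau\asymp\sigma_r/(\sigma_1\sqrt r)$ does not depend on $\varepsilon$, we may set $d_0:=c_2\tau\gamma=\Theta(\gamma)$. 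Note that \Cref{lem:initial_growth_app} needs $\beta=O(\gamma^2)$, and this follows from $\beta=O(\varepsilon^3)$ together with $\gamma=\Theta(\varepsilon^{1/2})$.

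\textbf{Step 1 (non-degeneracy).} Apply \Cref{lem:bootstrap_persistence_app} to get $d_{\min}(t)\ge\underline d=\min\{d_0,\sigma_r-\varepsilon\}$ for all $t\in[\tau,T]$. For $\varepsilon\ll\sigma_r$ we have $\underline d=d_0=\Theta(\gamma)$, because $\gamma=\Theta(\varepsilon^{1/2})$ is small. Also $\delta=O(\gamma)<d_0$ once the constants are arranged, so \Cref{lem:invertibility_app} keeps $\bX,\bZ$ invertible. This makes the signed square-root coordinates of \Cref{lem:off_diag_XZ} meaningful.

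\textbf{Step 2 (unit speed).} With $[\tau',T']=[\tau,T]$, the hypotheses of \Cref{lem:sqrt_dynamics_app} now hold: $\delta=O(\gamma)$, $\gamma=\Theta(\varepsilon^{1/2})$, $\beta=O(\varepsilon^3)$, and $d_{\min}\ge d_0=\Theta(\gamma)$. That lemma gives $\dot s_i=\alpha_i+R_i$ for every active mode with $e_i\le-\varepsilon$. The remainder satisfies $|R_i|\le C_7\varepsilon^{1/4}$, since it is dominated by $\gamma/\sqrt{d_0}=O(\gamma^{1/2})=O(\varepsilon^{1/4})$. For the main term, $1-\alpha_i\le\beta/(2\varepsilon^2 d_0)=O(\varepsilon^{1/2})$. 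Together these prove (i).

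\textbf{Step 3 (uniform growth).} Claim (ii) follows from (i) by the triangle inequality: $|\dot s_i-\dot s_j|\le|\dot s_i-1|+|\dot s_j-1|=O(\varepsilon^{1/4})$, with constants uniform in $i$, $j$ and $t$.

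\textbf{Main obstacle.} The delicate point is that the constants in \Cref{prop:alignment_O_gamma_app} depend on $T$ through a factor $e^{LT}$, yet $T$ itself depends on the trajectory. To keep $\delta=O(\gamma)$ honest, I will first bound $T$ a priori: each active $s_i$ grows at rate at least $1/2$ until $d_i\ge\sigma_i-\varepsilon$, so $T\le\tau+2\sqrt{\sigma_1}$. This is a bootstrap — assume alignment up to a stopping time, use Steps 1–2 on that interval, and extend. The bound makes $T$ a fixed $O(1)$ horizon, so $C_1$, $C_2$, $\kappa$ become absolute in terms of $\bY$ and $r$. It also closes the mild circularity between persistence and the square-root lemma: the two are run jointly on the maximal interval where both hold, and continuity shows that interval cannot end before $T$.
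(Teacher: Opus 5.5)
Your Steps~1--3 are the paper's own proof. You take the events of \Cref{lem:initial_growth_app} and \Cref{prop:alignment_O_gamma_app}, apply persistence and the square-root lemma on $[\tau,T]$, and get (ii) from (i) by the triangle inequality. The genuine gap is in your ``Main obstacle'' repair. Bounding $T$ by an $O(1)$ horizon removes only the $T$-dependence of the factor $e^{LT}$. It does not make $C_1$, $C_2$, $\kappa$ absolute, because $L$ is the Lipschitz constant of the $\cT_\beta$-vector field and necessarily depends on $\beta$. Concretely, at the initialization $\nabla_\bB\cL=\bA^\top(\bA\bB-\bY)=\mathbf 0$, and by \eqref{eq:DTb0} $D\cT_\beta(\mathbf 0)=\beta^{-1/2}\,\mathrm{id}$. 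So perturbing $\bA$ by $h\bm{\Delta}$ changes $\dot\bB$ by $h\beta^{-1/2}\bm{\Delta}^\top\bY+o(h)$. Hence $L\ge\sigma_1\beta^{-1/2}$ on any compact set containing the starting point. Since the horizon is at least $\tau$ and $\beta=O(\varepsilon^3)$, the Gr\"onwall factor is at least $\exp(c\,\sigma_r r^{-1/2}\varepsilon^{-3/2})$. Then $\delta=\kappa\gamma$ is not $O(\varepsilon^{1/2})$ with $\varepsilon$-independent constants. The bound $|R_i|\le C_7\varepsilon^{1/4}$ of \Cref{lem:sqrt_dynamics_app}, whose $C_7$ inherits $\kappa$ through \Cref{lem:off_diag_XZ}, does not follow either. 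Closing this needs a stability estimate for the deviation from the diagonal manifold that does not pass through a global Lipschitz constant. The paper's proof simply cites the proposition and does not supply one, so your argument is no weaker than the paper's, but the fix you advertise does not work.

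There are two smaller issues. First, in Step~1 the claim ``$\delta<d_0$ once the constants are arranged'' is not available. Both $\delta=C_1\gamma$ and $d_0=c_2\tau\gamma$ are linear in $\gamma$ with fixed constants, so shrinking $\gamma$ does not change their ratio. In fact $\bG(\tau)\approx\tau\gamma\,\bSigma\bM^\top(\bM\bSigma^2\bM^\top)^{-1/2}\bM$ generically has off-diagonal mass comparable to its diagonal. Neither \Cref{lem:off_diag_XZ} nor \Cref{lem:sqrt_dynamics_app} uses \Cref{lem:invertibility_app}, so simply drop this. Second, your bound $T\le\tau+2\sqrt{\sigma_1}$ only shows that each mode enters $\{|e_i|\le\varepsilon\}$. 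Since $T$ requires all modes to be inactive simultaneously, you also need that a mode cannot leave this band. The statement also quantifies over all active modes, which would include overshooting ones with $\dot s_i\approx-1$. The band is forward-invariant because the derivation of \Cref{lem:sqrt_dynamics_app} works for either sign of $e_i$, and at $e_i=\pm\varepsilon$ one has $\alpha_i\approx1>|R_i|$. You should state this explicitly.
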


\begin{proof}
By \Cref{lem:initial_growth_app}, $d_{\min}(\tau) \ge d_0$ for $d_0 \asymp \tau\gamma = \Theta(\gamma) = \Theta(\varepsilon^{1/2})$ with high probability.
By \Cref{prop:alignment_O_gamma_app}, $\delta = O(\gamma) = O(\varepsilon^{1/2})$.
Applying \Cref{lem:sqrt_dynamics_app,lem:bootstrap_persistence_app}, every active mode satisfies $\dot{s}_i(t) = 1 + O(\varepsilon^{1/4})$.
This proves (i). For (ii), for any active $i, j$,
\begin{align*}
|\dot{s}_i(t) - \dot{s}_j(t)| \le |\dot{s}_i(t) - 1| + |\dot{s}_j(t) - 1| = O(\varepsilon^{1/4}),
\end{align*}
which concludes the proof.
\end{proof}

\section{Extension to Underparameterized Case ($r < r^\star$)}\label{sec:underparameterized}

We extend the uniform growth analysis to the underparameterized regime where the LoRA rank $r$ is smaller than the target rank $r^\star \coloneqq \mathrm{rank}(\bY)$. The key insight is that the column space invariance and diagonal manifold structure still hold within the relevant subspace.

\subsection{Setup and Extended Core Variables}

Let $\bY = \bU \bSigma \bV^\top$ be the compact SVD with $\bU \in \sR^{m \times r^\star}$, $\bV \in \sR^{n \times r^\star}$, and $\bSigma = \Diag(\sigma_1, \ldots, \sigma_{r^\star})$ with $\sigma_1 > \cdots > \sigma_{r^\star} > 0$. The LoRA factorization has rank $r \le r^\star$: $\bA \in \sR^{m \times r}$ and $\bB \in \sR^{r \times n}$.

We partition the target singular vectors as $\bU = [\bU_r \mid \bU_{>r}]$ and $\bV = [\bV_r \mid \bV_{>r}]$, where $\bU_r, \bV_r \in \sR^{\cdot \times r}$ correspond to the top $r$ singular values and $\bU_{>r}, \bV_{>r}$ correspond to the remaining $r^\star - r$ singular values. Let $\bV_\perp \in \sR^{n \times (n - r^\star)}$ denote the orthogonal complement of $\bV$ in $\sR^n$.

The extended core variables are:
\begin{align*}
    \bX_r & \coloneqq \bU_r^\top \bA \in \sR^{r \times r}, & \bX_{>r} &\coloneqq \bU_{>r}^\top \bA \in \sR^{(r^\star-r) \times r}, \\
    \bZ_r & \coloneqq \bB \bV_r \in \sR^{r \times r}, & \bZ_{>r} &\coloneqq \bB \bV_{>r} \in \sR^{r \times (r^\star-r)}, \\
    & & \bZ_\perp &\coloneqq \bB \bV_\perp \in \sR^{r \times (n-r^\star)}.
\end{align*}
In this setup, the core product is defined by $\bG_r \coloneqq \bX_r \bZ_r \in \sR^{r \times r}$. 

\medskip
Similar to $\delta$-alignment, we define \emph{extended $\delta$-alignment} by
\begin{align*}
    \normF{\Off(\bG_r(t))} \le \delta, \quad \normF{\bX_{>r}(t)} \le \delta, \quad \normF{\bZ_{>r}(t)} \le \delta, \quad \normF{\bX_r(t) \bZ_\perp(t)} \le \delta.
\end{align*}
This generalizes the original $\delta$-alignment condition to the underparameterized setting by additionally controlling the components outside the top-$r$ subspace. Note that, given a convergence to the best rank-$r$ approximation (cf. \Cref{prop:SSP_characterization,thm:SpecGF_avoids_SSP}), for any $\delta > 0$ we can always find some finite $t_0 > 0$ such that extended $\delta$-alignment holds for all $t \ge t_0$.

\subsection{Column Space Invariance}

\begin{lemma}[Column space invariance]\label{lem:col_space_invariance}
If $\bA(0) = \bm{0}$, then $\bA(t) \in \mathrm{col}(\bU)$ for all $t \ge 0$. Consequently, $\bA(t) = \bU_r \bX_r(t) + \bU_{>r} \bX_{>r}(t)$.
\end{lemma}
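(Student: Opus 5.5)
The argument follows the proof of \Cref{lem:invariance}, with $\bU_r$ replaced by the full left singular basis $\bU \in \sR^{m\times r^\star}$ of $\bY$. The key fact is that both $\cT$ and $\cT_\beta$ preserve column spaces. Indeed, $\cT_\beta(\bM) = \bM(\bM^\top\bM + \beta\bI)^{-1/2}$ right-multiplies $\bM$ by a matrix, so $\mathrm{col}(\cT_\beta(\bM)) \subseteq \mathrm{col}(\bM)$. For $\cT$ the same holds because $\bU_\bM\bV_\bM^\top$ has columns in $\mathrm{col}(\bU_\bM) = \mathrm{col}(\bM)$.

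First, I show that the subspace $\mathcal{S} \coloneqq \{(\bA,\bB) : \mathrm{col}(\bA) \subseteq \mathrm{col}(\bU)\}$ is invariant under the vector field. If $\bA = \bU\bW$ for some $\bW \in \sR^{r^\star\times r}$, then
\begin{align*}
\nabla_\bA\cL = (\bU\bW\bB - \bU\bSigma\bV^\top)\bB^\top = \bU(\bW\bB\bB^\top - \bSigma\bV^\top\bB^\top),
\end{align*}
so $\mathrm{col}(\nabla_\bA\cL) \subseteq \mathrm{col}(\bU)$. By the column-space preservation above, $\dot\bA = -\cT_\beta(\nabla_\bA\cL)$ also lies in $\mathrm{col}(\bU)$. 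Hence the vector field is tangent to the linear subspace $\mathcal{S}$, since there is no constraint on $\bB$.

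Second, I pass from tangency to invariance rigorously rather than by the informal ``induction'' in \Cref{lem:invariance}. For $\cT_\beta$ the field is analytic, hence locally Lipschitz. I therefore consider the restricted ODE on $\mathcal{S}$: write $\bA = \bU\bW$ and evolve
\begin{align*}
\dot\bW = -\bU^\top\cT_\beta(\nabla_\bA\cL(\bU\bW,\bB)), \qquad \dot\bB = -\cT_\beta(\nabla_\bB\cL),
\end{align*}
starting from $\bW(0) = \bm 0$. By tangency, $(\bU\bW(t),\bB(t))$ solves the original SpecGF, because $\bU\bU^\top$ acts as the identity on $\mathrm{col}(\bU)$. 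Uniqueness of solutions (the analyticity proposition) then forces the original trajectory to coincide with it, so $\bA(t) \in \mathrm{col}(\bU)$ for all $t \ge 0$. For $\cT$, where uniqueness may fail at rank-deficient points, I would instead state the result for the solution considered (the paper's standing assumption), or obtain it as a limit of the $\cT_\beta$ flows as $\beta\to0$.

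Finally, I split $\bU = [\bU_r \mid \bU_{>r}]$ and use $\bU\bU^\top\bA = \bA$ to obtain
\begin{align*}
\bA = \bU_r\bU_r^\top\bA + \bU_{>r}\bU_{>r}^\top\bA = \bU_r\bX_r + \bU_{>r}\bX_{>r}.
\end{align*}

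The only genuinely delicate step is upgrading tangency to invariance. The uniqueness argument settles this cleanly for $\cT_\beta$; for $\cT$ it is an obstacle, which I would sidestep as described above.
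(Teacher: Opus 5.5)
Your proof is correct and follows the same approach as the paper. The key facts are that $\nabla_\bA \cL = (\bA\bB - \bY)\bB^\top$ lies in $\mathrm{col}(\bU)$ whenever $\bA$ does, and that $\cT_\beta$ preserves column spaces. Your restricted ODE in $(\bW,\bB)$, combined with uniqueness of the analytic flow, is a rigorous version of the paper's ``by induction and continuity of the flow'' step. The aside on $\cT$ is unnecessary, since the paper's proof (like the whole uniform-growth section) works only with $\cT_\beta$, and the proposed $\beta \to 0$ limit would itself need a justification you have not supplied.
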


\begin{proof}
The gradient $\nabla_\bA \cL = (\bA\bB - \bY)\bB^\top$ lies in $\mathrm{col}(\bA) + \mathrm{col}(\bY) \subseteq \mathrm{col}(\bU)$ whenever $\bA \in \mathrm{col}(\bU)$. At $t = 0$, $\nabla_\bA \cL(0) = -\bY \bB(0)^\top \in \mathrm{col}(\bU)$. Since $\cT_\beta$ preserves column spaces, $\dot{\bA}(0) \in \mathrm{col}(\bU)$.
By induction and continuity of the flow, $\bA(t) \in \mathrm{col}(\bU)$ for all $t \ge 0$.
\end{proof}

\subsection{Invariant Manifold}

\begin{lemma}[Invariant manifold]\label{lem:invariant_manifold}
Under the initialization $\bA(0) = \bm{0}$ and $\bB(0) = \gamma \bN$ with $\bN$ having i.i.d.\ $\cN(0,1)$ entries, the extended manifold
\begin{align*}
\cM_{\mathrm{ext}} \coloneqq \big\{(\bX_r, \bX_{>r}, \bZ_r, \bZ_{>r}, \bZ_\perp) : \bX_{>r} = \bm{0}, \; \bZ_{>r} = \bm{0}, \; \bZ_\perp = \bm{0}\big\}
\end{align*}
is invariant under the SpecGF dynamics.
\end{lemma}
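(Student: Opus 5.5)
The plan is to show that the SpecGF vector field with $\cT_\beta$ is tangent to $\cM_{\mathrm{ext}}$, and then conclude invariance from uniqueness of solutions (the field is analytic, hence locally Lipschitz). The random initialization $\bB(0)=\gamma\bN$ itself does not lie in $\cM_{\mathrm{ext}}$ almost surely, since $\bN\bV_{>r}\neq\mathbf{0}$. So I read the statement as it is used in \Cref{prop:alignment_O_gamma_app}: any trajectory started on $\cM_{\mathrm{ext}}$ stays on it. In particular this applies to the reference trajectory started at the projection of $(\bX_r(0),\bX_{>r}(0),\bZ_r(0),\bZ_{>r}(0),\bZ_\perp(0))$ onto $\cM_{\mathrm{ext}}$. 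By \Cref{lem:col_space_invariance}, $\bA(t)\in\mathrm{col}(\bU)$ for all $t$, so the coordinates $(\bX_r,\bX_{>r})$ describe $\bA$ completely.

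\textbf{Step 1: compute the gradients on the manifold.} On $\cM_{\mathrm{ext}}$ we have $\bA=\bU_r\bX_r$ and $\bB=\bZ_r\bV_r^\top$. Write $\bSigma_r$ for the top $r\times r$ block of $\bSigma$. Using $\bY\bV_r=\bU_r\bSigma_r$ gives
\begin{align*}
\nabla_\bA\cL &= (\bU_r\bX_r\bZ_r\bV_r^\top-\bY)\bV_r\bZ_r^\top = \bU_r(\bG_r-\bSigma_r)\bZ_r^\top.
\end{align*}
Using $\bU_r^\top\bY=\bSigma_r\bV_r^\top$, which holds because $\bU_r^\top\bU_{>r}=\mathbf{0}$, gives
\begin{align*}
\nabla_\bB\cL &= \bX_r^\top\bU_r^\top(\bU_r\bG_r\bV_r^\top-\bY) = \bX_r^\top(\bG_r-\bSigma_r)\bV_r^\top.
\end{align*}
So the column space of $\nabla_\bA\cL$ lies in $\mathrm{col}(\bU_r)$, and the row space of $\nabla_\bB\cL$ lies in $\mathrm{row}(\bV_r^\top)$.

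\textbf{Step 2: $\cT_\beta$ preserves these subspaces.} Write $\cT_\beta(\bM)=(\bM\bM^\top+\beta\bI)^{-1/2}\bM=\bM(\bM^\top\bM+\beta\bI)^{-1/2}$; the second form is the push-through identity, which is exactly \Cref{eq:Tb_def} versus $\cT_{\mathrm{col},\beta}$. The second form shows $\mathrm{col}(\cT_\beta(\bM))\subseteq\mathrm{col}(\bM)$, and the first shows $\mathrm{row}(\cT_\beta(\bM))\subseteq\mathrm{row}(\bM)$. Hence on $\cM_{\mathrm{ext}}$:
\begin{itemize}
\item $\dot\bX_{>r}=-\bU_{>r}^\top\cT_\beta(\nabla_\bA\cL)=\mathbf{0}$;
\item $\dot\bZ_{>r}=-\cT_\beta(\nabla_\bB\cL)\bV_{>r}=\mathbf{0}$;
\item $\dot\bZ_\perp=-\cT_\beta(\nabla_\bB\cL)\bV_\perp=\mathbf{0}$.
\end{itemize}
Therefore the vector field is tangent to $\cM_{\mathrm{ext}}$.

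\textbf{Step 3: uniqueness.} Consider the reduced ODE in $(\bX_r,\bZ_r)$ alone:
\begin{align*}
\dot\bX_r=-\cT_\beta\big((\bG_r-\bSigma_r)\bZ_r^\top\big),\qquad \dot\bZ_r=-\cT_\beta\big(\bX_r^\top(\bG_r-\bSigma_r)\big).
\end{align*}
Here I use that, for $\bM=\bU_r\bK$, we have $\cT_\beta(\bU_r\bK)=\bU_r\cT_\beta(\bK)$, and similarly for right multiplication by $\bV_r^\top$; both follow from orthonormality of the columns. This reduced field is analytic, and its solutions exist globally because the speed is bounded by $\sqrt r$. Embedding a reduced solution as $(\bX_r,\mathbf{0},\bZ_r,\mathbf{0},\mathbf{0})$ gives a solution of the full SpecGF by Steps 1 and 2. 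By uniqueness (\Cref{prop:specgf_analytic}), it coincides with the full trajectory that starts from the same point. Hence $\cM_{\mathrm{ext}}$ is invariant.

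The only delicate point is the subspace-preservation identity for $\cT_\beta$ together with its equivariance under orthonormal frames. Both are routine spectral-calculus facts, so I expect no real obstacle beyond stating clearly that invariance is asserted for initial points on $\cM_{\mathrm{ext}}$.
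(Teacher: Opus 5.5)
Your proposal is correct and follows the paper's proof. On $\cM_{\mathrm{ext}}$ you compute $\nabla_\bA\cL=\bU_r(\bG_r-\bSigma_r)\bZ_r^\top$ and $\nabla_\bB\cL=\bX_r^\top(\bG_r-\bSigma_r)\bV_r^\top$, and conclude that $\dot\bX_{>r}$, $\dot\bZ_{>r}$ and $\dot\bZ_\perp$ vanish there. Your other steps make explicit what the paper leaves implicit: that $\cT_\beta$ preserves column and row spaces, and the reduced-ODE-plus-uniqueness argument that turns tangency into genuine invariance for trajectories started on $\cM_{\mathrm{ext}}$ (the reading under which the paper uses the lemma for its reference trajectory).
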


\begin{proof}
Let $\bSigma_r = \Diag(\sigma_1, \ldots, \sigma_r)$ and $\bSigma_{>r} = \Diag(\sigma_{r+1}, \ldots, \sigma_{r^\star})$. On $\cM_{\mathrm{ext}}$, we have $\bA = \bU_r \bX_r$ and $\bB = \bZ_r \bV_r^\top$, so
\begin{align*}
\bA\bB - \bY &= \bU_r \bX_r \bZ_r \bV_r^\top - \bU_r \bSigma_r \bV_r^\top - \bU_{>r} \bSigma_{>r} \bV_{>r}^\top \\
&= \bU_r (\bX_r \bZ_r - \bSigma_r) \bV_r^\top - \bU_{>r} \bSigma_{>r} \bV_{>r}^\top.
\end{align*}
The gradient $\nabla_\bA \cL = (\bA\bB - \bY)\bB^\top = \bU_r(\bX_r \bZ_r - \bSigma_r)\bZ_r^\top$ lies entirely in $\mathrm{col}(\bU_r)$, implying $\dot{\bX}_{>r} = \bU_{>r}^\top \dot{\bA} = \bm{0}$.

Similarly, $\nabla_\bB \cL = \bX_r^\top(\bX_r \bZ_r - \bSigma_r)\bV_r^\top - \bX_{>r}^\top \bSigma_{>r} \bV_{>r}^\top$. When $\bX_{>r} = \bm{0}$, this lies entirely in the row space of $\bV_r^\top$, so $\dot{\bZ}_{>r} = \dot{\bB} \bV_{>r} = \bm{0}$ and $\dot{\bZ}_\perp = \dot{\bB} \bV_\perp = \bm{0}$.
\end{proof}

\subsection{Lipschitz Stability near the Invariant Manifold}

\begin{proposition}[Alignment in the underparameterized case]\label{prop:alignment_underparameterized}
Under the initialization $\bA(0) = \bm{0}$ and $\bB(0) = \gamma \bN$, there exist constants $C_{>r}, C_\perp < \infty$ such that, with probability at least $1 - e^{-cr(r^\star-r)} - e^{-cr(n-r^\star)}$,
\begin{align*}
    \sup_{t \le T} \|\bX_{>r}(t)\|_F \le C_{>r} \gamma, \qquad
    \sup_{t \le T} \|\bZ_{>r}(t)\|_F \le C_{>r} \gamma, \qquad
    \sup_{t \le T} \|\bZ_\perp(t)\|_F \le C_\perp \gamma.
\end{align*}
\end{proposition}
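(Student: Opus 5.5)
The plan mirrors the proof of \Cref{prop:alignment_O_gamma_app}: a finite-horizon Lipschitz comparison with a reference trajectory that lives on the invariant manifold $\cM_{\mathrm{ext}}$ of \Cref{lem:invariant_manifold}. By \Cref{lem:col_space_invariance}, the state is fully described by $\bS(t) \coloneqq (\bX_r, \bX_{>r}, \bZ_r, \bZ_{>r}, \bZ_\perp)(t)$. Indeed, $\bA = \bU_r\bX_r + \bU_{>r}\bX_{>r}$ and $\bB = \bZ_r\bV_r^\top + \bZ_{>r}\bV_{>r}^\top + \bZ_\perp\bV_\perp^\top$. The SpecGF vector field with $\cT_\beta$, written in these coordinates, is analytic by \Cref{cor:T_analytic}, hence $\cC^1$.

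\textbf{Step 1: reference trajectory.} Define the projection $\Pi_{\mathrm{ext}}$, which keeps $(\bX_r,\bZ_r)$ and zeroes $(\bX_{>r},\bZ_{>r},\bZ_\perp)$; this is the orthogonal projection onto $\cM_{\mathrm{ext}}$. Let $\bS^\star(t)$ be the solution started at $\Pi_{\mathrm{ext}}(\bS(0))$. By \Cref{lem:invariant_manifold} and ODE uniqueness, $\bS^\star(t)\in\cM_{\mathrm{ext}}$ for all $t$. Hence $\bX^\star_{>r}\equiv\mathbf{0}$, $\bZ^\star_{>r}\equiv\mathbf{0}$, and $\bZ^\star_\perp\equiv\mathbf{0}$.

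\textbf{Step 2: compactness and Gr\"onwall.} Both trajectories have speed at most $\sqrt r$ in Frobenius norm. On the concentration event below, $\normF{\bB(0)}=O(\gamma\sqrt{rn})$. So on $[0,T]$ both trajectories stay in a compact set $\cK$ whose size depends on $T,r,n$. Let $L=L(\cK)$ be the Lipschitz constant of the vector field on $\cK$. Gr\"onwall then gives
\[
\sup_{t\le T}\normF{\bS(t)-\bS^\star(t)}\le e^{LT}\normF{\bS(0)-\bS^\star(0)} = e^{LT}\big(\normF{\bZ_{>r}(0)}^2+\normF{\bZ_\perp(0)}^2\big)^{1/2},
\]
using $\bX(0)=\mathbf{0}$.

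\textbf{Step 3: concentration.} Because $[\bV_r\mid\bV_{>r}\mid\bV_\perp]$ is orthogonal, $\bN\bV_{>r}$ and $\bN\bV_\perp$ have i.i.d.\ $\cN(0,1)$ entries. Their squared Frobenius norms are therefore $\chi^2_{r(r^\star-r)}$ and $\chi^2_{r(n-r^\star)}$. Chi-squared tail bounds give $\normF{\bN\bV_{>r}}\le C\sqrt{r(r^\star-r)}$ and $\normF{\bN\bV_\perp}\le C\sqrt{r(n-r^\star)}$, each outside an event of probability at most $e^{-cr(r^\star-r)}$ and $e^{-cr(n-r^\star)}$ respectively. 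Multiplying by $\gamma$ bounds the initial gap by $O(\gamma)$.

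\textbf{Step 4: read off the bounds.} Each of $\normF{\bX_{>r}(t)}$, $\normF{\bZ_{>r}(t)}$ and $\normF{\bZ_\perp(t)}$ equals the corresponding component of $\bS(t)-\bS^\star(t)$, since the reference components vanish. Each is therefore at most $e^{LT}\cdot O(\gamma)$, which gives $C_{>r}$ and $C_\perp$.

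\textbf{Main obstacle.} The delicate step is Step 2. Compactness of $\cK$, and hence the finiteness of $L$, depends on $T$, so the constants grow like $e^{LT}$. We must also make sure $T$ does not itself depend on $\gamma$ in a way that destroys the $O(\gamma)$ scaling. I would handle this as in \Cref{prop:alignment_O_gamma_app}: treat $T$ as a fixed horizon, for instance bounded by $\sqrt{\sigma_1}+O(1)$ from the unit-speed growth, and absorb $e^{LT}$ into the constants.

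A further concern is that $L$ can scale like $\beta^{-1/2}$ near rank-deficient gradients, for example at $t=0$ where $\bA=\mathbf{0}$. As in the earlier proposition, I would accept this dependence, since the statement only asserts existence of finite constants.
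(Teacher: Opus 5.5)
Your proposal is correct and matches the paper's proof essentially step for step. Both start a reference trajectory at the projection of the initial point onto $\cM_{\mathrm{ext}}$, which stays there by \Cref{lem:invariant_manifold}. Both then use finite-horizon Lipschitz stability, $\normF{\bS(t)-\bS^\star(t)}\le e^{LT}\normF{\bS(0)-\bS^\star(0)}$, bound $\gamma\bN\bV_{>r}$ and $\gamma\bN\bV_\perp$ by $\chi^2$ concentration, and read the three bounds off the vanishing reference components.

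Your caveats about the constants depending on $T$ and on $\beta^{-1/2}$ are what the paper implicitly accepts by writing $\kappa=\kappa(\bY,r,T)$. One correction: do not justify the fixed horizon via the unit-speed growth result without a separate continuity argument on a deterministic horizon, since that result is itself built on this alignment bound, so the justification would be circular.
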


\begin{proof}
Let $\bS(t) \coloneqq (\bX_r(t), \bX_{>r}(t), \bZ_r(t), \bZ_{>r}(t), \bZ_\perp(t))$ denote the true trajectory. At initialization with $\bA(0) = \bm{0}$ and $\bB(0) = \gamma \bN$, the core variables satisfy
\begin{align*}
\bX_r(0) = \bm{0}, \quad \bX_{>r}(0) = \bm{0}, \quad
\bZ_r(0) = \gamma \bN \bV_r, \quad \bZ_{>r}(0) = \gamma \bN \bV_{>r}, \quad \bZ_\perp(0) = \gamma \bN \bV_\perp.
\end{align*}
We define the projection onto $\cM_{\mathrm{ext}}$ by $\Pi(\bX_r, \bX_{>r}, \bZ_r, \bZ_{>r}, \bZ_\perp) \coloneqq (\bX_r, \bm{0}, \bZ_r, \bm{0}, \bm{0})$, and consider the reference trajectory $\bS^\star(t)$ initialized at $\bS^\star(0) = \Pi(\bS(0)) = (\bm{0}, \bm{0}, \gamma \bN \bV_r, \bm{0}, \bm{0})$. By \Cref{lem:invariant_manifold}, $\bS^\star(t) \in \cM_{\mathrm{ext}}$ for all $t \ge 0$.

The initial discrepancy is
\begin{align*}
\|\bS(0) - \bS^\star(0)\|_F = \|\bZ_{>r}(0)\|_F + \|\bZ_\perp(0)\|_F = O(\gamma \sqrt{r(n-r)})
\end{align*}
with high probability by Gaussian concentration.

The SpecGF vector field with $\cT_\beta$ is $\cC^1$, so finite-horizon Lipschitz stability gives
\begin{align*}
\|\bS(t) - \bS^\star(t)\|_F \le e^{LH} \|\bS(0) - \bS^\star(0)\|_F \le \kappa \gamma
\end{align*}
for some finite $\kappa = \kappa(\bY, r, T)$. Since $\bX^\star_{>r}(t) \equiv \bm{0}$, $\bZ^\star_{>r}(t) \equiv \bm{0}$, and $\bZ^\star_\perp(t) \equiv \bm{0}$, the bounds naturally hold.
\end{proof}

\subsection{Reduction to the Square Case}

On the invariant manifold $\cM_{\mathrm{ext}}$, the dynamics reduce to the $r \times r$ system:
\begin{align*}
\dot{\bX}_r &= -\cT_{\mathrm{col},\beta}\big((\bX_r \bZ_r - \bSigma_r)\bZ_r^\top\big), \\
\dot{\bZ}_r &= -\cT_{\mathrm{row},\beta}\big(\bX_r^\top(\bX_r \bZ_r - \bSigma_r)\big).
\end{align*}
This is equivalent to the original rank-$r^\star$ dynamics (in case of $r = r^\star$) with target $\bSigma_r = \Diag(\sigma_1, \ldots, \sigma_r)$.

By \Cref{prop:alignment_underparameterized}, perturbations from $\cM_{\mathrm{ext}}$ are controlled: $\|\bX_{>r}\|_F, \|\bZ_{>r}\|_F, \|\bZ_\perp\|_F = O(\gamma)$. These contribute only at most $O(\gamma)$ errors to the effective dynamics of $(\bX_r, \bZ_r)$.

\subsection{Extended Uniform Growth Theorem}

\begin{theorem}[Uniform growth for $r < r^\star$]\label{thm:uniform_growth_underparameterized}
Consider SpecGF with $\cT_\beta$ on the rank-$r$ factorization $\bA\bB$ approximating $\bY$ of rank $r^\star \ge r$. Let $\bG_r(t) \coloneqq \bX_r(t)\bZ_r(t)$ and $d_i(t) \coloneqq [\bG_r(t)]_{ii}$ for $i \in [r]$. Under the same scaling $\gamma = \Theta(\varepsilon^{1/2})$ and $\beta = O(\varepsilon^3)$, the following holds for all $t \in [\tau, T]$ and all active modes $i \in \cI_\varepsilon(t) \cap [r]$, with probability at least $1 - e^{-cr}$:
\begin{enumerate}[label=\textnormal{(\roman*)}]
    \item \textbf{Approximate unit speed:} $\displaystyle \frac{\rd}{\rd t}\sqrt{d_i(t)} = 1 + O(\varepsilon^{1/4})$.
    \item \textbf{Uniform growth:} For any two active modes $i, j \in \cI_\varepsilon(t) \cap [r]$,
    \begin{align*}
        \abs*{\frac{\rd}{\rd t}\sqrt{d_i(t)} - \frac{\rd}{\rd t}\sqrt{d_j(t)}} = O(\varepsilon^{1/4}).
    \end{align*}
\end{enumerate}
\end{theorem}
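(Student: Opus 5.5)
The plan is to reduce to the square case $r=r^\star$ and then rerun the chain \Cref{lem:initial_growth_app} $\to$ \Cref{prop:alignment_O_gamma_app} $\to$ \Cref{lem:bootstrap_persistence_app} $\to$ \Cref{lem:sqrt_dynamics_app}. The new ingredient is that the additional off-manifold blocks $\bX_{>r},\bZ_{>r},\bZ_\perp$ contribute only $O(\gamma)$ perturbations to the $(\bX_r,\bZ_r)$ dynamics. The structure is a two-layer comparison. By \Cref{lem:invariant_manifold} and \Cref{prop:alignment_underparameterized}, the true trajectory stays within $O(\gamma)$ of a reference trajectory on $\cM_{\mathrm{ext}}$. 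On $\cM_{\mathrm{ext}}$ the dynamics are exactly the square system with target $\bSigma_r$, so the diagonal-manifold comparison of \Cref{prop:alignment_O_gamma_app} applies verbatim there. Composing the two Lipschitz-stability bounds, the true $\bX_r,\bZ_r$ are $O(\gamma)$-close to diagonal factors, which gives the analogue of \Cref{lem:off_diag_XZ}. The reduced problem uses $\bSigma_r$ with distinct entries, so $\sigma_r$ plays the role of the smallest target.

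Next I would write out the projected gradients. We have $\bU_r^\top\nabla_\bA\cL = (\bG_r-\bSigma_r)\bZ_r^\top + \bX_r\bZ_{>r}\bZ_{>r}^\top + \bX_r\bZ_\perp\bZ_\perp^\top$ plus a term $\bX_r\bZ_r\bZ_{>r}^{\top}$-type cross contribution. Each extra term contains a factor $\bZ_{>r}$ or $\bZ_\perp$ and is therefore $O(\gamma)$ on the compact set $\cK$. The $\bU_{>r}$-row component of $\nabla_\bA\cL$ feeds into $\cT_{\mathrm{col},\beta}$ through the Gram matrix $\bG^\top\bG$. The key observation is that $\cT_{\mathrm{col},\beta}(\bG)=\bG(\bG^\top\bG+\beta\bI)^{-1/2}$ depends on the full gradient only via its Gram matrix and its rows. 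Hence perturbing the Gram matrix by $O(\gamma)$ changes the $\bU_r$-rows by $O(\gamma)$ times a Lipschitz constant of the inverse square root. That Lipschitz constant is bounded by a power of $1/\min(e_i^2 s_i^2)$. The $\dot\bZ_r$ term is handled analogously: $\nabla_\bB\cL\,\bV_r = \bX_r^\top\bE_r + \bX_{>r}^\top(\cdots)$, whose second part is $O(\gamma)$ by \Cref{prop:alignment_underparameterized}. For $\cT_{\mathrm{row},\beta}$, the $\bV_{>r}$ and $\bV_\perp$ columns enter the Gram matrix $\nabla_\bB\cL\,\nabla_\bB\cL^\top$. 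The $\bV_{>r}$ block contains $-\bX_{>r}^\top\bSigma_{>r}$ and $\bX_r^\top\bX_r\bZ_{>r}$, and both are $O(\gamma)$.

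With these estimates, the diagonal formulas \eqref{eq:Xdot_diag}--\eqref{eq:Zdot_diag} hold with the same $O(\delta+\gamma)$ error, where $\delta$ now denotes extended alignment. Initial growth follows as in \Cref{lem:initial_growth_app}. At $t=0$ we have $\dot\bX_r(0)=\bU_r^\top\cT_\beta(\bY\bB(0)^\top)$, and the Gram matrix now involves $\bM\bSigma^2\bM^\top$ with the full $\bSigma$ and $\bM=\bN\bV$. Its top eigenvalue is still $O(\gamma^2\sigma_1^2 r^\star)$, so $d_{\min}(\tau)\gtrsim\tau\gamma$ with probability at least $1-e^{-cr}$. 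Persistence then follows by the same exit-time argument, and the square-root dynamics yield (i). Part (ii) follows by the triangle inequality.

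The main obstacle is the perturbation of the smoothed inverse square root. When some $e_i s_i$ is small, the Lipschitz constant of $\bM\mapsto(\bM+\beta\bI)^{-1/2}$ can blow up like $\beta^{-3/2}$. I would avoid this by noting that for active modes $|e_i|s_i\ge\varepsilon\sqrt{d_0}$. For inactive modes I would use only the operator-norm bound of \Cref{lem:T_norm_bound} together with a block decomposition separating active from inactive coordinates. I also expect the constant $\kappa$ to depend on $T$, exactly as in \Cref{prop:alignment_O_gamma_app}, and I would accept that dependence.
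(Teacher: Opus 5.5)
Structurally, this is the paper's proof. The paper also uses \Cref{lem:invariant_manifold} and \Cref{prop:alignment_underparameterized} to keep the trajectory $O(\gamma)$-close to $\cM_{\mathrm{ext}}$. It notes that on $\cM_{\mathrm{ext}}$ the $(\bX_r,\bZ_r)$ system is the square system with target $\bSigma_r$. It then asserts that the full-rank lemmas carry over, with the off-manifold blocks contributing $O(\gamma)$ errors ``absorbed into the existing $O(\gamma)$ terms,'' and concludes as in \Cref{thm:uniform_growth_app}. Your bookkeeping is in one place sharper than the paper's: the initial Gram matrix is indeed $\gamma^2\bN\bV\bSigma^2\bV^\top\bN^\top+\beta\bI$ with the full $\bSigma$, so \Cref{lem:initial_growth_app} is not literally the square lemma with $r^\star\to r$. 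One correction: there is no $\bX_r\bZ_r\bZ_{>r}^\top$ cross term. Exactly, $\bU_r^\top\nabla_\bA\cL=(\bG_r-\bSigma_r)\bZ_r^\top+\bX_r(\bZ_{>r}\bZ_{>r}^\top+\bZ_\perp\bZ_\perp^\top)$ and $\nabla_\bB\cL\,\bV_r=\bX_r^\top(\bG_r-\bSigma_r)+\bX_{>r}^\top\bX_{>r}\bZ_r$, so the new terms there are quadratic in the off-manifold blocks.

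Where you go beyond the paper's assertions, the argument does not close, and the paper does not close it either.

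\emph{Your remedy for the inverse square root fails at the stated scaling.} For an active mode near the threshold, $|e_i|s_i$ is only of order $\varepsilon$ (your $\varepsilon\sqrt{d_0}$ is smaller still), while the perturbations are $O(\delta+\gamma)=O(\varepsilon^{1/2})$. Concretely, take such a mode $i$ and an already converged mode $j$. The diagonal gradient entries $e_iz_i, e_jz_j$ are $O(\varepsilon)$, but the coupling entry contains $[\bG_r]_{ij}[\bZ_r]_{jj}$, which is only known to be $O(\delta)$. On such a block $\cT_\beta$ is governed by the coupling. For example, $\cT_\beta$ maps $\bigl(\begin{smallmatrix}\varepsilon&\delta\\-\delta&\varepsilon\end{smallmatrix}\bigr)$ to $(\varepsilon^2+\delta^2+\beta)^{-1/2}$ times itself, whose diagonal is $\approx\varepsilon/\delta\ll 1$. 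So no active/inactive block decomposition recovers the claimed value of $[\dot\bX_r]_{ii}$. The square-case step in \Cref{lem:sqrt_dynamics_app}, from $\bG_\bX=\Diag(e_iz_i)+O(\delta+\gamma)$ to $[\dot\bX]_{ii}=-e_iz_i/\sqrt{e_i^2s_i^2+\beta}+O(\delta+\gamma)$, has the same hole.

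\emph{The constant $\kappa=e^{LT}$ you accept depends on $\gamma$, not only on $T$.} Since $\cT_\beta$ normalizes away the scale of the $O(\gamma)$ initial gradient,
$\dot\bX_{>r}(0)=\bSigma_{>r}(\bN\bV_{>r})^\top\bigl(\bN\bV\bSigma^2\bV^\top\bN^\top+\beta\gamma^{-2}\bI\bigr)^{-1/2}$
has $\gamma$-independent size once $\beta\ll\gamma^2$. By contrast, $\dot\bX_{>r}\equiv\mathbf{0}$ on $\cM_{\mathrm{ext}}$. Hence $L\gtrsim 1/\gamma$, and $e^{LT}\normF{\bS(0)-\bS^\star(0)}$ is not $O(\gamma)$. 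The same computation applies to the diagonal-manifold comparison you compose with, because $\bU_r^\top\dot\bA(0)$ is generically far from diagonal.

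So your proposal is exactly as complete as the paper's proof. Both rest on \Cref{prop:alignment_underparameterized} and on the absorption step, and neither is established by the Lipschitz-comparison and perturbation arguments as given.
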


\begin{proof}
By \Cref{prop:alignment_underparameterized}, the trajectory remains $O(\gamma)$-close to $\cM_{\mathrm{ext}}$. On this manifold, the $(\bX_r, \bZ_r)$ dynamics are identical to the full-rank case with target $\bSigma_r$. Therefore, all lemmas from the full-rank analysis (\Cref{lem:initial_growth_app,lem:off_diag_XZ,lem:sqrt_dynamics_app,lem:bootstrap_persistence_app}) naturally hold with $r^\star$ replaced by $r$ and target singular values $(\sigma_1, \ldots, \sigma_r)$. The perturbations from $\bX_{>r}$, $\bZ_{>r}$, $\bZ_\perp$ contribute only at most $O(\gamma)$ errors, which are absorbed into the existing $O(\gamma)$ terms. The proof then follows \Cref{thm:uniform_growth_app}.
\end{proof}

\clearpage
\section{Convergence to Global Minima of SpecGF}

\subsection{Analyticity of SpecGF}\label{sec:Tb}

We first state that the inverse square root of symmetric positive definite (SPD) matrices is analytic. This directly follows from \citet[Lemma 4.4 (b)]{bauer2022smooth}.

\begin{lemma}\label{lem:ISRM_analytic}
    At any real SPD $\bS$, the function
    \begin{equation*}
        g(\bS) = \bS^{-\frac{1}{2}}
    \end{equation*}
    is analytic. That is, $g$ is analytic on the (open) set of the symmetric positive-definite matrices.
\end{lemma}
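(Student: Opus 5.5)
The plan is to realize $g(\bS)=\bS^{-1/2}$ locally as a convergent power series, using the spectral calculus together with the holomorphic functional calculus. Fix an SPD matrix $\bS_0$ with spectrum contained in $[\lambda_{\min},\lambda_{\max}]\subset(0,\infty)$. The scalar function $z\mapsto z^{-1/2}$ (principal branch) is holomorphic on $\mathbb{C}\setminus(-\infty,0]$. Choose a closed contour $\Gamma$ in the right half-plane that encircles $[\lambda_{\min}/2,\,2\lambda_{\max}]$. For all symmetric $\bS$ with $\norm{\bS-\bS_0}_2<\lambda_{\min}/2$, Weyl's inequality places the spectrum of $\bS$ inside $\Gamma$. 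On this neighborhood we then have the Riesz--Dunford representation
\begin{equation*}
    \bS^{-1/2}=\frac{1}{2\pi i}\oint_\Gamma z^{-1/2}(z\bI-\bS)^{-1}\,\rd z .
\end{equation*}
For symmetric $\bS$ this integral agrees with the spectral definition $\bU\,\Diag(\lambda_i^{-1/2})\,\bU^\top$, because the resolvent diagonalizes in the same eigenbasis and Cauchy's formula applies to each eigenvalue separately.

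Next I would show that the integrand is analytic in $\bS$, uniformly in $z\in\Gamma$. Write $\bS=\bS_0+\bH$. Then
\begin{equation*}
    (z\bI-\bS)^{-1}=\sum_{k\ge0}\big[(z\bI-\bS_0)^{-1}\bH\big]^k(z\bI-\bS_0)^{-1},
\end{equation*}
which is a Neumann series. It converges uniformly for $z\in\Gamma$ as long as $\norm{\bH}_2<\min_{z\in\Gamma}\norm{(z\bI-\bS_0)^{-1}}_2^{-1}=\mathrm{dist}(\Gamma,\mathrm{spec}(\bS_0))$. Each term is a homogeneous polynomial of degree $k$ in the entries of $\bH$. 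Integrating term by term, which uniform convergence justifies, expresses $\bS^{-1/2}$ as a power series in $\bH$. That series converges on a ball around $\bS_0$, so $g$ is real-analytic at $\bS_0$. Since $\bS_0$ was arbitrary, $g$ is analytic on the open set of SPD matrices. Openness itself follows from continuity of $\lambda_{\min}$.

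The main obstacle is the bookkeeping that links the complex contour integral to the real, spectrally defined inverse square root. One has to check that the representation is valid on an entire neighborhood, which requires the contour to enclose every perturbed spectrum, and that the resulting series has real coefficients on real symmetric inputs. Both follow from the choice of $\Gamma$ symmetric under conjugation and from Weyl's inequality.

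Alternatively, one can simply cite \citet[Lemma 4.4 (b)]{bauer2022smooth} directly, as the paper indicates. A second route avoids contours altogether: apply the analytic implicit function theorem to $F(\bR,\bS)=\bR^2\bS-\bI$ on SPD $\bR$. The derivative in $\bR$ is the Sylvester-type map $\bK\mapsto\bK\bR\bS+\bR\bK\bS$. Since $\bR\bS=\bR^{-1}$ at a solution, this map is invertible. Either route yields the lemma.
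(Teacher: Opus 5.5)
Your proof is correct, but it takes a different route from the paper's. The paper does not write down a contour integral at all. It verifies the hypotheses of \citet[Lemma 4.4(b)]{bauer2022smooth}:
\begin{itemize}
\item an SPD matrix is sectorial of every angle $\omega\in[0,\pi)$, via the resolvent bound $\norm{\lambda(\bS-\lambda)^{-1}}\le 1/\sin\omega'$;
\item this upgrades to R-sectoriality, since R-boundedness coincides with boundedness on a Hilbert space;
\item the spectral theorem gives a bounded $\mathcal{H}^\infty$-calculus;
\item a closed ball around $0$ lies in the resolvent set.
\end{itemize}
It then applies the cited lemma with exponent $1/2$, noting that in finite dimensions all fractional domain spaces coincide with the underlying space.

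You instead carry out the finite-dimensional core of that machinery by hand. You use one Riesz--Dunford contour that is valid on a whole neighborhood of $\bS_0$ (by Weyl), expand the resolvent in a Neumann series uniformly on $\Gamma$, and integrate term by term. This makes the argument self-contained and elementary, and it gives an explicit radius of convergence. It also shows more than the lemma states: $g$ extends holomorphically to a neighborhood of $\bS_0$ in the space of all complex $n\times n$ matrices, because the Neumann series never uses symmetry of the perturbation. That extension is exactly what the composition in \Cref{cor:T_analytic} needs. The paper's citation buys brevity and an infinite-dimensional generality that is never used here, at the cost of hiding the actual content (essentially your contour argument) inside the reference.

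Two small corrections to your implicit-function alternative. First, the map $\mathbf{K}\mapsto(\mathbf{K}\bR+\bR\mathbf{K})\bS$ is invertible because $\bS$ is invertible and the Sylvester operator $\mathbf{K}\mapsto\mathbf{K}\bR+\bR\mathbf{K}$ has eigenvalues $\lambda_i(\bR)+\lambda_j(\bR)>0$; the identity $\bR\bS=\bR^{-1}$ is not what does the work. Second, you need one more step to identify the implicit-function branch with the SPD root $\bS^{-1/2}$. Either use continuity of $\bS\mapsto\bS^{-1/2}$, or use uniqueness to show that the branch is symmetric (note $\bR^\top$ solves the same equation) and hence positive definite near $\bR_0$. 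Your contour route avoids this issue entirely.
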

\begin{proof}
    Any SPD matrix $\bS$ is closed, invertible, (densely) defined linear operator on a Banach space (also a Hilbert space for our case), denoted as $X$, and has real eigenvalues. Throughout this proof, we abuse the notation $\sigma(\bS)$ to denote the spectrum of $\bS$.
    
    We first show $\bS$ is sectorial of any $\omega \in [0, \pi)$; fix any $\omega$. $\sigma(\bS)$ is exactly the set of eigenvalues of $\bS$, and therefore is contained in $\overline {S_\omega}$. Next, for each $\omega' \in (\omega, \pi)$, take any $\lambda \not\in \overline {S_{\omega'}}$. Then $\abs*{\mathrm{Arg}(\lambda)} > \omega'$ and thus, for any eigenvalue $s$ of $\bS$,
    \begin{equation*}
        |s - \lambda| \geq \mathrm{dist}(\lambda, (0, +\infty)) = \begin{dcases}
            |\lambda|\abs*{\sin(\mathrm{Arg}(\lambda))} \geq |\lambda|\sin\omega' & \text{if } \abs*{\mathrm{Arg}(\lambda)} \leq \frac{\pi}{2},\\
            |\lambda| \geq |\lambda|\sin\omega' & \text{if } \abs*{\mathrm{Arg}(\lambda)} > \frac{\pi}{2}.
        \end{dcases}
    \end{equation*}
    Hence,
    \begin{equation*}
        \norm{\lambda(\bS - \lambda)^{-1}} = \max_{s\in\sigma(\bS)} \frac{|\lambda|}{|s - \lambda|} \leq \frac{1}{\sin\omega'}
    \end{equation*}
    is uniformly bounded. On Hilbert space, R-boundedness is equivalent to boundedness \citep{bauer2022smooth}. Thus, $\bS$ is R-sectorial of any $\omega \in [0, \pi)$. We also see that $\bS$ admits a bounded $\cH^\infty(S_\omega)$ for any $\omega \in [0, \pi)$, because for $f \in \cH^\infty(S_\omega) \setminus \{0\}$,
    \begin{equation*}
        \norm{f(\bS)}_{L(X)} = \max_{s\in\sigma(\bS)} |f(s)| \leq \sup_f \; \norm{f}_{\cH^\infty(S_\omega)}
    \end{equation*}
    As $\bS$ is SPD, there exists a closed centered ball not in $\sigma(\bS)$. Now take $r = \frac{1}{2}$ in the statement of \citet[Lemma 4.4 (b)]{bauer2022smooth}. Then $g(\bS)$ is holomorphic, thus real-analytic restricted to the set of real matrices. As the dimension is finite, the fractional domain spaces $\dot X_p$ coincides with $X$ (viewed as sets) and thus $L(\dot X_p, \dot X_q) = L(X)$ for any real $p$ and $q$ (so are $L(\dot X_0, \dot X_{<r})$ and $L(\dot X_{>1-r}, \dot X_r)$).
\end{proof}

\begin{corollary}\label{cor:T_analytic}
Consider a matrix $\bX \in \sR^{m\times r}$ with $m \geq r$.
    \begin{itemize}
        \item[(a)] $\bX \mapsto (\bX^\top \bX)^{-\frac{1}{2}}$ is analytic on the (open) set of full-rank matrices.
        \item[(b)] For every $\beta > 0$,
        $\bX \mapsto (\bX^\top \bX + \beta \bI_r)^{-\frac{1}{2}}$ is analytic everywhere.
    \end{itemize}
    The above statements analogously hold for $\bX \in \sR^{r\times n}$ with $n \geq r$ by switching $\bX^\top \bX$ to $\bX\bX^\top$.
\end{corollary}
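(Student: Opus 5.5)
We derive both parts of \Cref{cor:T_analytic} from \Cref{lem:ISRM_analytic} by composing the matrix inverse square root with an analytic (polynomial) map. The maps $\bX \mapsto \bX^\top\bX$ and $\bX \mapsto \bX^\top\bX + \beta\bI_r$ are polynomial in the entries of $\bX$, hence real-analytic on all of $\sR^{m\times r}$. They take values in the space of symmetric $r\times r$ matrices. Since compositions of real-analytic maps are real-analytic, it suffices to identify the set of $\bX$ whose image lands in the open set of SPD matrices, where \Cref{lem:ISRM_analytic} applies.

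For (a), take $m \ge r$ and $\bX$ of full rank, i.e.\ rank $r$. Then $\bX\bv \neq \bm 0$ for every $\bv \neq \bm 0$, so $\bv^\top\bX^\top\bX\bv = \norm{\bX\bv}_2^2 > 0$ and $\bX^\top\bX$ is SPD. The set of full-rank matrices is open, because it is the complement of the zero set of the continuous function $\det(\bX^\top\bX)$. Hence the composition $\bX \mapsto g(\bX^\top\bX)$ is analytic on this open set.

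For (b), for every $\bX$ we have $\bv^\top(\bX^\top\bX+\beta\bI_r)\bv = \norm{\bX\bv}_2^2 + \beta\norm{\bv}_2^2 > 0$ whenever $\bv\neq\bm 0$. So $\bX^\top\bX+\beta\bI_r$ is SPD for all $\bX \in \sR^{m\times r}$, and the composition is analytic on the whole space.

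For $\bX \in \sR^{r\times n}$ with $n\ge r$, we apply the same argument to $\bX^\top$, replacing $\bX^\top\bX$ by $\bX\bX^\top$ and using full row rank. Note that in (b) the rank condition and the constraint $m \ge r$ play no role.

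The only point needing some care is that \Cref{lem:ISRM_analytic} states analyticity on the SPD cone viewed as an open subset of symmetric matrices. The inner polynomial map takes values in the symmetric matrices, so the composition is well-defined and analytic in the ordinary sense. We do not need to extend $g$ to non-symmetric neighborhoods.
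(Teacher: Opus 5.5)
Your proposal is correct and takes the same route as the paper: compose the polynomial map $\bX\mapsto\bX^\top\bX$ (or $\bX^\top\bX+\beta\bI_r$) with the inverse square root from \Cref{lem:ISRM_analytic}, and use that compositions of analytic maps are analytic. The paper states this in one line; you additionally verify that the image lies in the SPD cone, that the full-rank set is open, and that the composition only needs $g$ on symmetric matrices.
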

\begin{proof}
    This is apparent from \Cref{lem:ISRM_analytic}, the fact that the polynomials are analytic, and the composition of analytic functions is again analytic.
\end{proof}

Recall that, for a matrix $\bM \in \sR^{m\times n}$,
\begin{align*}
    \cT(\bM) &= \roundb{\roundb{\bM\bM^\top}^\dagger}^{1/2}\!\bM,\\
    \cT_\beta(\bM) &= \roundb{\bM\bM^\top + \beta \bI_m}^{-1/2}\!\bM,
\end{align*}

Let the compact SVD of $\bX\in \sR^{m\times n}$ be $\bU_r \bS \bV_r^\top$ with $\bS \triangleq \Diag\{\sigma_1, \ldots, \sigma_r\}$. $\sigma_i > 0$ are the positive singular values of $\bX$, $1\leq i \leq r \leq \min\{m, n\}$. Then,
\begin{align*}
    \cT(\bX) &= \bU_r\bV_r^\top,\\
    \cT_\beta(\bX) &= \bU_r \bS_\beta\bV_r^\top,
\end{align*}
where
\begin{equation*}
    \bS_\beta \triangleq \Diag\curlyb{\frac{\sigma_i}{\sqrt{\sigma_i^2+\beta}}}_{i\in[r]}.
\end{equation*}

\begin{proposition}[Analyticity of $\cT$ and $\cT_\beta$]\label{prop:specgf_analytic}
    SpecGF with $\cT_\beta$ is analytic everywhere. SpecGF with $\cT$ is analytic on the set of full-rank matrices. Moreover, the solution uniquely exists for all $t \geq 0$. 
\end{proposition}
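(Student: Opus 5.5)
The proof has three steps: analyticity of the vector field via \Cref{cor:T_analytic}, local existence and uniqueness from Cauchy--Lipschitz, and global existence from the uniform speed bound.

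\textbf{Analyticity.} The SpecGF vector field is
\begin{equation*}
F(\bA,\bB) = \bigl(-\cN(\nabla_\bA\cL),\,-\cN(\nabla_\bB\cL)\bigr),
\qquad \cN\in\{\cT,\cT_\beta\},
\end{equation*}
with $\nabla_\bA\cL=(\bA\bB-\bY)\bB^\top$ and $\nabla_\bB\cL=\bA^\top(\bA\bB-\bY)$. Both gradients are polynomial in $(\bA,\bB)$, hence analytic.

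For $\cT_\beta$, write $\cT_\beta(\bM)=(\bM\bM^\top+\beta\bI)^{-1/2}\bM$. The map $\bM\mapsto(\bM\bM^\top+\beta\bI)^{-1/2}$ is analytic everywhere by \Cref{cor:T_analytic}(b). This uses the row form for $\nabla_\bA\cL\in\sR^{m\times r}$, or equivalently the column form $\bM(\bM^\top\bM+\beta\bI_r)^{-1/2}$, which agrees with it by the SVD formula. Multiplying by $\bM$ and composing with the polynomial gradients keeps the result analytic, so $F$ is analytic on all of $\sR^{m\times r}\times\sR^{r\times n}$.

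For $\cT$, on the open set where $\nabla_\bA\cL$ and $\nabla_\bB\cL$ have full rank $r$, the pseudoinverse coincides with the true inverse of the $r\times r$ Gram matrix. Concretely, $\cT(\bM)=\bM(\bM^\top\bM)^{-1/2}$ for $\bM\in\sR^{m\times r}$, and $\cT(\bM)=(\bM\bM^\top)^{-1/2}\bM$ for $\bM\in\sR^{r\times n}$, both checked by the SVD. \Cref{cor:T_analytic}(a) then gives analyticity on that open set. This full-rank set is the domain meant by ``full-rank matrices'' in the statement.

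\textbf{Existence and uniqueness.} An analytic map is $C^1$, hence locally Lipschitz. Picard--Lindelöf therefore gives a unique maximal solution on the domain of analyticity. For $\cT_\beta$ that domain is the whole space; for $\cT$ it is the open full-rank region, and uniqueness holds for as long as the trajectory stays there.

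\textbf{Global existence.} The singular values of $\cT(\bM)$ and $\cT_\beta(\bM)$ are at most $1$ (\Cref{lem:T_norm_bound}), and there are at most $r$ of them. Hence $\normF{\dot\bA}\le\sqrt r$ and $\normF{\dot\bB}\le\sqrt r$, so the solution grows at most linearly in $t$.

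For $\cT_\beta$, the domain is the whole space, so by the standard escape lemma a maximal solution with finite maximal time must leave every compact set. Linear growth rules this out, so the solution exists for all $t\ge0$.

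For $\cT$, the same bound excludes blow-up. The maximal solution therefore exists for all $t\ge0$ unless it reaches the boundary of the full-rank region. This is how I interpret ``for all $t\ge0$ where they are analytic'': the solution is unique and continues as long as it stays in the region of analyticity.

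\textbf{Main obstacle.} The only delicate point is the $\cT$ case. Its domain is not invariant a priori, so global existence can only be claimed conditionally, as in \Cref{rem:global_conv}. The plan is to state the result as maximal existence up to the first exit from the full-rank set, together with the non-blow-up bound. The remaining steps are routine applications of \Cref{cor:T_analytic} and standard ODE theory.
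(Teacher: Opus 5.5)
Your proposal is correct and follows the paper's own argument: analyticity from \Cref{cor:T_analytic}, the speed bound $\normF{\dot\bA},\normF{\dot\bB}\le\sqrt r$ giving linear growth (which, with the escape-from-compacts criterion, rules out a finite maximal time), and uniqueness from local Lipschitzness via Picard--Lindel\"of. Your caveat for $\cT$ is a genuine refinement that the paper's appendix proof glosses over. Because the full-rank region is not known to be invariant, the escape argument only gives existence up to the first exit from that region, not for all $t\ge0$; this matches the qualifier ``where they are analytic'' in the main-text version of the statement.
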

\begin{proof}
    Analyticity is straightforward from \Cref{cor:T_analytic}, so we prove the existence and uniqueness part. For both $\cT$ and $\cT_\beta$, $\|\dot \bA\|_\mathrm{F}^2$ and $\|\dot \bB\|_\mathrm{F}^2$ are both bounded by $r$. Hence, the trajectory should satisfy
    \begin{equation*}
        \normF{(\bA(t), \bB(t))} \leq \normF{(\bA(0), \bB(0))} + \sqrt{2r}t.
    \end{equation*}
    If the maximal solution does not exist for all $t \geq 0$, then the iterate should leave every compact set of the domain. Therefore, the solution must exist for all $t \geq 0$. Uniqueness follows from the local Lipschitzness in $(\bA, \bG)$ of SpecGF (by Picard-Lindel\"of theorem).
\end{proof}

\subsection{Convergence Analysis of SpecGF}\label{sec:conv_specgf}

We restate the setup of SpecGF.
\begin{itemize}
    \item The target matrix $\bY \in \sR^{m\times n}$ yields a compact SVD $\bY = \bU_{r^\star}\bm\Sigma\bV_{r^\star}^\top$ where $\bm \Sigma = \Diag(\sigma_1, \sigma_2, \cdots, \sigma_{r^\star})$ with $\sigma_1 \geq \cdots \geq \sigma_{r^\star} > 0$.
    \item The loss function $\cL: \sR^{m\times r} \times \sR^{r\times n} \to \sR$ is defined as
    \begin{equation}
        \cL(\bA, \bB) \triangleq \frac{1}{2}\normF{\bA \bB - \bY}^2.
    \end{equation}
    We may abuse the notation and simply state $\cL(t)$.
    \item The time derivative of $\bA$ and $\bB$ is governed by either $\cT$ or $\cT_\beta$ applied to the gradient:
    \begin{equation*}
        \dot{\bA}(t) = -\cT\roundb{\nabla_\bA \cL(t)}, \quad \dot{\bB}(t) = -\cT\roundb{\nabla_\bB \cL(t)}.
    \end{equation*}
\end{itemize}

\medskip

\begin{lemma}\label{lem:SpecGF_T_dec}
    On SpecGF with $\cT$, the loss is monotonically non-increasing:
    \begin{align*}
        \frac{d \cL(t)}{\mathrm{d}t} \le 0.
    \end{align*}
    The equality holds if and only if $\nabla_\bA \cL(t) = \nabla_\bB \cL(t) = 0$ holds.
\end{lemma}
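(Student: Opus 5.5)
The plan is the chain rule followed by the defining identity of the exact orthogonalization $\cT$. Along the flow,
\begin{equation*}
    \frac{\mathrm{d}\cL(t)}{\mathrm{d}t} = \angleb{\nabla_\bA \cL(t), \dot\bA(t)} + \angleb{\nabla_\bB \cL(t), \dot\bB(t)} = -\angleb{\nabla_\bA \cL, \cT(\nabla_\bA \cL)} - \angleb{\nabla_\bB \cL, \cT(\nabla_\bB \cL)}.
\end{equation*}
The whole argument rests on the identity $\angleb{\bM, \cT(\bM)} = \norm{\bM}_*$ for every matrix $\bM$. To see it, write the compact SVD $\bM = \bU_\bM\bS_\bM\bV_\bM^\top$, so that $\cT(\bM) = \bU_\bM\bV_\bM^\top$. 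Then
\begin{equation*}
    \mathrm{Tr}\big(\bU_\bM\bS_\bM\bV_\bM^\top\bV_\bM\bU_\bM^\top\big) = \mathrm{Tr}(\bS_\bM) = \sum_i \sigma_i(\bM) = \norm{\bM}_*.
\end{equation*}
When $\bM = 0$, the compact SVD is empty and $\cT(0) = 0$, since $(\bM\bM^\top)^\dagger = 0$; the identity still holds with both sides equal to zero.

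With this identity, the derivative becomes $-\norm{\nabla_\bA\cL(t)}_* - \norm{\nabla_\bB\cL(t)}_*$, which is $\le 0$. The characterization of equality follows directly. Each term is nonnegative, so the sum vanishes iff both nuclear norms vanish. Since $\norm{\cdot}_*$ is a norm, this happens iff $\nabla_\bA\cL(t) = \nabla_\bB\cL(t) = 0$.

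The main obstacle is regularity rather than the computation. $\cT$ is discontinuous at rank changes, so $t \mapsto (\bA(t),\bB(t))$ may only be absolutely continuous, in the sense used in the rank-1 analysis. Wherever the flow lies in the full-rank analytic region of \Cref{prop:specgf_analytic}, the chain rule holds classically.

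In general, I would proceed as follows:
\begin{itemize}
    \item Note that $\normF{\dot\bA}, \normF{\dot\bB} \le \sqrt{r}$, so the trajectory is Lipschitz.
    \item Since $\cL$ is smooth (polynomial), $\cL(t)$ is Lipschitz and the chain rule holds at almost every $t$.
    \item The inequality at those times integrates to give monotonic non-increase of $\cL(t)$.
\end{itemize}
I would state the derivative identity wherever the derivative exists, which is almost everywhere and everywhere on the analytic region.
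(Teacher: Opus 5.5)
Your proposal is correct and follows the paper's proof essentially step for step: the chain rule, the identity $\angleb{\bM,\cT(\bM)} = \norm{\bM}_*$ obtained from the compact SVD, and the fact that the nuclear norm is a norm for the equality case. Your regularity remarks (the a.e.\ chain rule along a Lipschitz trajectory, with $\cL(t)$ Lipschitz only on compact time intervals since $\cL$ is merely locally Lipschitz) go beyond the paper, which tacitly uses the classical chain rule.
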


\begin{proof}
    We first show that for any matrix $\bX$, $\angleb{\cT(\bX), \bX} = \norm{\bX}_*$. Let the compact SVD of $\bX$ be $\bU'\bS'\bV'^\top$.
    \begin{align*}
        \angleb{\cT(\bX), \bX} = \mathrm{tr}\roundb{(\bU'\bV'^\top)^\top \bU'\bS'\bV'^\top} = \mathrm{tr}\roundb{\bV'\bS'\bV'^\top} = \mathrm{tr}\roundb{\bS'} = \norm{\bX}_*.
    \end{align*}
    
    We drop the argument $t$ for simplicity.
    \begin{align*}
        \frac{d \cL}{\mathrm{d}t} &= \angleb{\nabla_\bA \cL, \dot{\bA}} + \angleb{\nabla_\bB \cL, \dot\bB}\\
        &= \angleb{\nabla_\bA \cL, -\cT\roundb{\nabla_\bA \cL}} + \angleb{\nabla_\bB \cL, -\cT\roundb{\nabla_\bB \cL}}\\
        &= -\|\nabla_\bA \cL\|_* - \|\nabla_\bB \cL\|_* \le 0. \qedhere
    \end{align*}
\end{proof}

\begin{lemma}\label{lem:SpecGF_Tb_dec}
    On SpecGF with $\cT_\beta$, the loss is monotonically non-increasing:
    \begin{align*}
        \frac{d \cL(t)}{\mathrm{d}t} \le 0.
    \end{align*}
    The equality holds if and only if $\nabla_\bA \cL(t) = \nabla_\bB \cL(t) = 0$ holds.
\end{lemma}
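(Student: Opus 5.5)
We mirror the proof of \Cref{lem:SpecGF_T_dec}: we differentiate $\cL$ along the flow and identify each term as a nonnegative spectral quantity. By the chain rule,
\begin{equation*}
    \frac{\mathrm{d}\cL}{\mathrm{d}t} = -\angleb{\nabla_\bA \cL, \cT_\beta(\nabla_\bA \cL)} - \angleb{\nabla_\bB \cL, \cT_\beta(\nabla_\bB \cL)}.
\end{equation*}
It therefore suffices to show that $\angleb{\bM, \cT_\beta(\bM)} \ge 0$ for every matrix $\bM$, with equality if and only if $\bM = \mathbf{0}$.

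For this, we write the compact SVD $\bM = \bU_\bM \bS_\bM \bV_\bM^\top$ with $\bS_\bM = \Diag\{\sigma_1,\dots,\sigma_k\}$ and all $\sigma_i > 0$. By the representation $\cT_\beta(\bM) = \bU_\bM \bS_\beta \bV_\bM^\top$ given after \Cref{eq:Tb_def}, we get
\begin{equation*}
    \angleb{\bM, \cT_\beta(\bM)} = \mathrm{tr}\roundb{\bV_\bM \bS_\bM \bU_\bM^\top \bU_\bM \bS_\beta \bV_\bM^\top} = \mathrm{tr}(\bS_\bM \bS_\beta) = \sum_{i=1}^k \frac{\sigma_i^2}{\sqrt{\sigma_i^2+\beta}}.
\end{equation*}
This sum is nonnegative. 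It vanishes if and only if $k=0$, that is, if and only if $\bM=\mathbf{0}$, because every summand with $\sigma_i>0$ is strictly positive. An equivalent route avoids the SVD: with $\bP = (\bM\bM^\top+\beta\bI)^{-1/2}$, which is symmetric positive definite, we have $\angleb{\bM,\cT_\beta(\bM)} = \mathrm{tr}(\bM^\top \bP \bM) \ge \lambda_{\min}(\bP)\normF{\bM}^2$, and $\lambda_{\min}(\bP) > 0$.

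Applying this to $\bM=\nabla_\bA\cL$ and $\bM=\nabla_\bB\cL$ gives $\frac{\mathrm{d}\cL}{\mathrm{d}t}\le 0$. The derivative is a sum of two nonpositive terms, so equality holds if and only if both terms vanish, which by the above happens if and only if both gradients are zero.

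The argument is routine, and no step is a real obstacle. The only point needing care is justifying the chain rule, which holds because the flow is analytic by \Cref{prop:specgf_analytic}. One might also want the quantitative bound $\frac{\sigma^2}{\sqrt{\sigma^2+\beta}} \ge \frac{\sigma^2}{\norm{\bM}_2+\sqrt\beta}$, which is the form used later for \Cref{prop:main_infdiv_conv}; it follows from the same computation.
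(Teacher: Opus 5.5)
Your proof is correct and follows the paper's argument: differentiate $\cL$ along the flow, then use the SVD to get $\angleb{\bM,\cT_\beta(\bM)} = \sum_i \sigma_i^2/\sqrt{\sigma_i^2+\beta}$, which is nonnegative and vanishes exactly when $\bM=\mathbf{0}$. Your explicit handling of the equality case, and your SVD-free alternative via $\lambda_{\min}\bigl((\bM\bM^\top+\beta\bI)^{-1/2}\bigr)>0$, are both valid.
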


\begin{proof}
    We first simplify $\angleb{\cT_\beta(\bX), \bX}$ for a matrix $\bX$. Let the SVD of $\bX$ be $\bU'\bS'\bV'^\top$ with singular values $\sigma_1, \ldots, \sigma_r \geq 0$.
    \begin{align*}
        \angleb{\cT_\beta(\bX), \bX} = \mathrm{tr}\roundb{(\bU'\bS'_\beta\bV'^\top)^\top \bU'\bS'\bV'^\top} = \mathrm{tr}\roundb{\bV'\bS'_\beta\bS'\bV'^\top} = \mathrm{tr}\roundb{\bS'_\beta\bS'} = \sum_{i\in[r]} \frac{\sigma_i^2}{\sqrt{\sigma_i^2+\beta}}.
    \end{align*}
    
    We drop the argument $t$ for simplicity. Let $\sigma_i(\nabla_\bA \cL), \sigma_i(\nabla_\bB \cL)$ be the singular values of $\nabla_\bA \cL$ and $\nabla_\bB \cL$, respectively. Then:
    \begin{align*}
        \frac{d \cL}{\mathrm{d}t} &= \angleb{\nabla_\bA \cL, \dot{\bA}} + \angleb{\nabla_\bB \cL, \dot\bB}\\
        &= \angleb{\nabla_\bA \cL, -\cT_\beta\roundb{\nabla_\bA \cL}} + \angleb{\nabla_\bB \cL, -\cT_\beta\roundb{\nabla_\bB \cL}}\\
        &= -\sum_{i\in[r]} \frac{\sigma_i(\nabla_\bA \cL)^2}{\sqrt{\sigma_i(\nabla_\bA \cL)^2+\beta}} - \sum_{i\in[r]} \frac{\sigma_i(\nabla_\bB \cL)^2}{\sqrt{\sigma_i(\nabla_\bB \cL)^2+\beta}} \le 0. \qedhere
    \end{align*}
\end{proof}

Thus, $\cL(t)$ is monotonically non-increasing and lower bounded, hence $\cL(\infty) \leq \cL(0)$ should exist. Moreover,
\begin{align}
    \normF{\bA(t)\bB(t)} \leq \normF{\bA(t)\bB(t) - \bY} + \normF{\bY} = 2\sqrt{\cL(t)} + \normF{\bY} \leq 2\sqrt{\cL(0)} + \normF{\bY}.
\end{align}

We now show that SpecGF iterate should either converge or diverge to infinity.
\begin{lemma}[\L{}ojasiewicz]\label{lem:Loja_inequ}
    Let $F:U \to \sR$ be a real analytic function on an open set $U\subset\sR^d$. Then, for every critical point $\bp\in U$ of $F$, there exists a (possibly small) neighborhood $W$ of $\bp$, constants $C > 0$ and $b \in [\frac12, 1)$ such that for all $\bx \in W$,
    \begin{equation}
        |F(\bx) - F(\bp)|^b \leq C\|\nabla F(\bx)\|.
    \end{equation}
\end{lemma}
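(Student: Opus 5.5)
The plan is to prove this classical statement (Łojasiewicz's gradient inequality, \citet{lojasiewicz1965ensembles}) along the standard route through subanalytic geometry, using only the analyticity of $F$. Since it is a local statement, I will translate and subtract constants so that $\bp = \bm 0$ and $F(\bp) = 0$, and work on a small closed ball $\overline{B}_\rho$ inside $U$.

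\textbf{Step 1: $F$ vanishes on the critical set near $\bp$.} The critical set $\mathrm{Crit}(F) = \{\nabla F = 0\}$ is real-analytic. For any $\bx \in \mathrm{Crit}(F) \cap B_\rho$, the curve selection lemma gives an analytic arc $\gamma:[0,1)\to \mathrm{Crit}(F)$ with $\gamma(0) = \bm 0$ that reaches arbitrarily near $\bx$. Along such an arc, $\frac{\rd}{\rd t}F(\gamma(t)) = \angleb{\nabla F(\gamma(t)), \gamma'(t)} = 0$. Hence $F \equiv 0$ on $\mathrm{Crit}(F)$ near $\bm 0$. As a consequence, the function
\begin{equation*}
\varphi(s) \coloneqq \inf\{\norm{\nabla F(\bx)} : \bx \in \overline{B}_\rho,\ \abs{F(\bx)} = s\}
\end{equation*}
is strictly positive for small $s > 0$, by compactness of the level sets together with Step 1.

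\textbf{Step 2: a power lower bound.} The function $\varphi$ is defined by a first-order formula in analytic data over a compact set, so it is subanalytic in one variable. It therefore admits a Puiseux expansion $\varphi(s) = c\, s^{b}(1 + o(1))$ with $c > 0$ and $b \in \mathbb{Q}$. This yields $\norm{\nabla F(\bx)} \ge \tfrac{c}{2}\abs{F(\bx)}^{b}$ on a neighborhood $W$, which is the desired inequality with $C = 2/c$.

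\textbf{Step 3: the exponent range.} To see that $b < 1$, apply curve selection again to the set on which $\norm{\nabla F}$ realizes $\varphi$. This produces an analytic arc $\gamma$ with $F(\gamma(t)) \sim a t^k$ and $\norm{\gamma'}$ bounded. Then
\begin{equation*}
\abs{k a\, t^{k-1}} \lesssim \norm{\nabla F(\gamma(t))}\,\norm{\gamma'(t)},
\end{equation*}
so $\varphi(s) \gtrsim s^{(k-1)/k}$, and $b$ may be taken equal to $(k-1)/k < 1$. The lower end of the range is free: $\abs{F} < 1$ on $W$, so the inequality persists if $b$ is increased. One may therefore replace $b$ by $\max\{b, \tfrac12\}$ and land in $[\tfrac12, 1)$.

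The main obstacle is Step 2, namely justifying that $\varphi$ is subanalytic and hence has a Puiseux expansion. This requires the Tarski–Seidenberg-type theorem for subanalytic sets (Łojasiewicz, Hironaka) rather than anything proved in the paper. I would invoke it as a black box, or simply cite \citet{lojasiewicz1965ensembles} for the whole lemma, as the paper does.
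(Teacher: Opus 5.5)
\textbf{How this relates to the paper.} The paper gives no proof of this lemma. It quotes it as the classical result of \citet{lojasiewicz1965ensembles} and uses it as a black box in \Cref{prop:infdiv_conv}, \Cref{thm:SpecGF_Tb_stable}, and the rate argument of \Cref{sec:conv_rate_specgf}. Your sketch is therefore a genuinely different route. It is essentially Kurdyka's argument for definable functions, specialized to globally subanalytic ones: pass to the one-variable function $\varphi$, select a curve through the minimizers of $\norm{\nabla F}$ on level sets, and compare $\frac{\rd}{\rd t}F(\gamma(t))$ with $\norm{\nabla F(\gamma(t))}\,\norm{\gamma'(t)}$.

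This route is sound in outline, and it explains the exponent range better than the bare citation. The case $k=1$ is impossible: unless $F$ is constant near $\bp$, the level sets $\{\abs{F}=s\}$ come arbitrarily close to $\bp$, where $\nabla F$ vanishes, so $\varphi(s)\to 0$ as $s\to 0^+$. Hence $(k-1)/k\in[\frac12,1)$ automatically. The route is not more elementary than citing, however. Because $\varphi$ is defined by an infimum, i.e.\ a universal quantifier, the subanalyticity of $\varphi$ and of the minimizer set $M$ rests on Gabrielov's complement theorem (equivalently, o-minimality of $\mathbb{R}_{\mathrm{an}}$), not merely on closure under proper projections.

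\textbf{Step 1 needs repair.} You misstate curve selection. For a semianalytic set $A$ with $\bp\in\overline{A\setminus\{\bp\}}$, it gives an analytic arc that starts at $\bp$ and lies in $A$ for $t>0$. It does not give arcs from $\bp$ that come near an arbitrary prescribed $\bx\in\mathrm{Crit}(F)$; that would need the local conic structure of analytic sets. Argue by contradiction instead. Suppose critical points with $F\neq F(\bp)$ accumulate at $\bp$. Curve selection applied to $A=\mathrm{Crit}(F)\cap\{F\neq F(\bp)\}$ gives an arc along which $\frac{\rd}{\rd t}F(\gamma(t))=0$, so $F\circ\gamma\equiv F(\bp)$, which contradicts $\gamma(t)\in A$. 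Hence $\mathrm{Crit}(F)\subset F^{-1}(F(\bp))$ on a small ball, and that is all you use.

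\textbf{Step 3 needs two details made explicit.} First, the minimizers need not converge to $\bp$ as $s\to 0$. So apply curve selection to $M\cap\{F\neq 0\}$ at some accumulation point $\bx^\star\in\overline{B}_\rho\cap F^{-1}(0)$. Second, use that $s(t)=\abs{F(\gamma(t))}$ sweeps out an interval $(0,s_2)$ while $\gamma(t)\in M$. This gives $\varphi(s)\gtrsim s^{(k-1)/k}$ for every small $s$, not only along the arc. With that in place, $\norm{\nabla F(\bx)}\ge\varphi(\abs{F(\bx)})$ yields the inequality directly, so the Puiseux expansion of $\varphi$ in Step 2 is not actually needed.
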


\begin{proposition}\label{prop:infdiv_conv}
    Let $\bx(t)$ be a differentiable and absolutely continuous curve on $\sR^d$ and $F$ be a real analytic function on an open set of $\sR^d$. Assume that $F(\bx(t))$ is well-defined for all $t \geq 0$ and the condition
    \begin{equation}\label{cond:stationary}
        \frac{\mathrm{d}F(\bx(t))}{\mathrm{d}t} = 0 \quad \Rightarrow \quad \dot \bx(t) = 0.
    \end{equation}
    If there exists a constant $c > 0$ such that
    \begin{equation}\label{eq:F_dec}
        \frac{\mathrm{d}F(\bx(t))}{\mathrm{d}t} \leq -c\|\nabla F(\bx(t))\| \|\dot \bx(t)\|
    \end{equation}
    for all $t \geq 0$, then either $\bx(t)$ converges in $\sR^d$ or it diverges to infinity (in norm).
\end{proposition}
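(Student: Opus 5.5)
The plan is the standard Łojasiewicz finite-length argument. Suppose $\bx(t)$ does not diverge to infinity in norm. Then $\liminf_{t\to\infty}\norm{\bx(t)}<\infty$, so there is a sequence $t_k\to\infty$ with $\bx(t_k)\to\bar\bx$ for some accumulation point $\bar\bx$. We may assume $\bar\bx$ lies in the open domain $U$ of $F$; in our application $U=\sR^d$. By \eqref{eq:F_dec}, $F(\bx(t))$ is non-increasing. By continuity, $F(\bx(t_k))\to F(\bar\bx)$, and hence $F(\bx(t))\downarrow F(\bar\bx)$ and $F(\bx(t))\ge F(\bar\bx)$ for all $t$. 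If $F(\bx(t_0))=F(\bar\bx)$ for some finite $t_0$, then $\frac{\rd}{\rd t}F(\bx(t))=0$ for all $t\ge t_0$, and condition \eqref{cond:stationary} gives $\dot\bx\equiv 0$. The curve is then constant, and we are done. So assume $F(\bx(t))>F(\bar\bx)$ for all $t$.

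The first key step is to show that $\bar\bx$ is a critical point of $F$, so that \Cref{lem:Loja_inequ} applies. If $\nabla F(\bar\bx)\neq 0$, then near $\bar\bx$ the gradient norm is bounded below by some $g>0$, and $F$ decreases at rate at least $c\,g\norm{\dot\bx}$ whenever the curve moves. One can avoid this case analysis altogether, however. A Łojasiewicz-type inequality also holds trivially near non-critical points: take $b=1/2$ and note that $|F(\bx)-F(\bar\bx)|^{1/2}\le C\norm{\nabla F(\bx)}$ on a small ball, since the left side is small and the right side is bounded below. So in all cases we get a neighbourhood $W\supset B(\bar\bx,\rho)$ and constants $C>0$, $b\in[\tfrac12,1)$ with $(F(\bx)-F(\bar\bx))^b\le C\norm{\nabla F(\bx)}$ on $W$.

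Next, set $h(t)\coloneqq (F(\bx(t))-F(\bar\bx))^{1-b}$. While $\bx(t)\in W$, we compute
\begin{align*}
-\dot h(t) = (1-b)\,(F(\bx(t))-F(\bar\bx))^{-b}\Bigl(-\tfrac{\rd}{\rd t}F(\bx(t))\Bigr) \ge (1-b)\,\frac{c\,\norm{\nabla F(\bx(t))}\,\norm{\dot\bx(t)}}{C\,\norm{\nabla F(\bx(t))}} = \frac{c(1-b)}{C}\norm{\dot\bx(t)}.
\end{align*}
If $\nabla F(\bx(t))=0$ at some time, then $\tfrac{\rd}{\rd t}F=0$ there, so $\dot\bx(t)=0$ by \eqref{cond:stationary} and the inequality holds trivially. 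Integrating, the arc length of the curve over any interval $[s,t]$ spent inside $W$ is at most $\frac{C}{c(1-b)}h(s)$.

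Now choose $k$ large enough that $\norm{\bx(t_k)-\bar\bx}<\rho/2$ and $\frac{C}{c(1-b)}h(t_k)<\rho/2$; this is possible because $h(t_k)\to0$. Let $t^*$ be the first exit time from $B(\bar\bx,\rho)$ after $t_k$. The length bound gives $\norm{\bx(t)-\bar\bx}<\rho$ on $[t_k,t^*)$, which contradicts exit, so $t^*=\infty$. Hence the total length $\int_{t_k}^\infty\norm{\dot\bx}\,\rd t$ is finite. By the Cauchy criterion, $\bx(t)$ converges, and its limit must be the accumulation point $\bar\bx$.

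The main obstacle is the trapping step: we must ensure the curve stays in the Łojasiewicz neighbourhood long enough for the length bound to apply. This is handled by the first-exit-time argument just described. A secondary point needing care is that the curve is only absolutely continuous, not $\cC^1$, so the differentiation of $h$ should be done almost everywhere. We would also handle $h$ vanishing in finite time, which we treat by the stationarity condition \eqref{cond:stationary}.
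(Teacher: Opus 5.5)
Your proposal is correct and takes essentially the same route as the paper: take an accumulation point, use that $F$ decreases monotonically to $F(\bar\bx)$, dispose of the finite-time case via \eqref{cond:stationary}, use the Łojasiewicz bound $\norm{\dot\bx}\le -\frac{C}{c(1-b)}\frac{\rd}{\rd t}(F(\bx(t))-F(\bar\bx))^{1-b}$, and close with a first-exit-time trapping argument. Your observation that a Łojasiewicz-type inequality holds trivially near a non-critical $\bar\bx$ is a worthwhile addition, because the paper invokes \Cref{lem:Loja_inequ}, which is stated only at critical points, without first checking that $\bar\bx$ is critical.
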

\begin{proof}
    Assume that $\|\bx(t)\| \not\to\infty$ as $t\to\infty$. Then $\bx(t)$ has an accumulation point in $\sR^n$; it suffices to prove that the accumulation point is the limit of $\bx(t)$. Denote the accumulation point as $\Bar \bx$.

    By continuity of $F$ and monotonicity of $F$ from \cref{eq:F_dec}, $F(\bx(t)) \downarrow F(\Bar{\bx})$. If there exists a finite time $t' \geq 0$ such that $F(\bx(t')) = F(\Bar{\bx})$, then $F(\bx(t)) = F(\Bar{\bx})$ for $t \geq t'$, hence $\bx(t) = \Bar{\bx}$ by \cref{cond:stationary}. From here we assume that $F(\bx(t)) > F(\Bar{\bx})$ for all $t$.

    From \cref{lem:Loja_inequ} and \cref{eq:F_dec}, we see that
    \begin{equation*}
        \frac{\mathrm{d}F(\bx(t))}{\mathrm{d}t} \leq -c\|\nabla F(\bx(t))\| \|\dot \bx(t)\| \leq -\frac{c}{C} |F(\bx(t)) - F(\Bar{\bx})|^b\|\dot \bx(t)\| = -\frac{c}{C} \roundb{F(\bx(t)) - F(\Bar{\bx})}^b\|\dot \bx(t)\|
    \end{equation*}
    on some neighborhood $W$ of $\Bar{\bx}$ with $C > 0$ and $b \in [\frac12, 1)$. Hence,
    \begin{align*}
        -\|\dot \bx(t)\| \geq \frac{C}{c}\frac{\mathrm{d}F(\bx(t))}{\mathrm{d}t}\roundb{F(\bx(t)) - F(\Bar{\bx})}^{-b} = \frac{C}{c(1-b)} \frac{\mathrm{d}}{\mathrm{d}t} \roundb{F(\bx(t)) - F(\Bar{\bx})}^{1-b}.
    \end{align*}

    For brevity, let $G(t) \triangleq F(\bx(t)) - F(\Bar{\bx}) > 0$. Then $G$ is absolutely continuous and $G(t) \downarrow 0$. We also see that
    \begin{equation}\label{eq:dotx_G}
        -\|\dot \bx(t)\| \geq \frac{C}{c(1-b)} \frac{\mathrm{d}}{\mathrm{d}t} \roundb{G(t)^{1-b}}.
    \end{equation}

    We now show that $\bx(t)$ eventually moves to the vicinity of $\Bar{\bx}$ and remains there forever. As $W$ is open, there exists $r > 0$ where
    \begin{equation*}
        B(r) \triangleq \{z\in\sR^n : \|z - \Bar{\bx}\| < r\} \subset W.
    \end{equation*}
    Fix any such $r > 0$. Because $\Bar{\bx}$ is the accumulation point of $\bx(t)$ and $G$ continuously decreases to zero, there exists a time $t_1 \geq 0$ such that
    \begin{equation*}
        \|x(t_1) - \Bar{\bx}\| < \frac{r}{2}, \quad G(t_1) < \roundb{\frac{rc(1-b)}{2C}}^{\frac{1}{1-b}}.
    \end{equation*}
    Notice the former implies $x(t_1) \in B(r)$ and the latter implies $\tfrac{C}{c(1-b)}G(t_1)^{1-b} < \tfrac{r}{2}$. Suppose the claim is false, so there exists $t_2 > t_1$ where $x(t_2) \not\in B(r)$; choose the smallest such $t_2$. Then $\bx(t) \in B(r)$ for $t \in [t_1, t_2)$, hence
    \begin{align*}
        \|x(t_2) - x(t_1)\| &\leq \int_{t_1}^{t_2} \|\dot \bx(t)\| \mathrm{d}t\\
        &\leq \int_{t_1}^{t_2} -\frac{C}{c(1-b)} \frac{\mathrm{d}}{\mathrm{d}t} \roundb{G(t)^{1-b}} \mathrm{d}t\\
        &= \frac{C}{c(1-b)} \squarb{G(t_1)^{1-b} - G(t_2)^{1-b}} < \frac{r}{2}.
    \end{align*}

    The first inequality and the equality follow from the absolute continuity of $\bx(t)$ and $G(t)$ on $[t_1, t_2]$, respectively. The second inequality stems from \cref{eq:dotx_G}. The above yields a contradiction:
    \begin{equation*}
        \|x(t_2) - \Bar{\bx}\| \leq \|x(t_2) - x(t_1)\| + \|x(t_1) - \Bar{\bx}\| < r \quad \Rightarrow \quad x(t_2) \in B(r).
    \end{equation*}
    Therefore, for all $t \geq t_1$, $\bx(t) \in B(r)$. As $r$ can be arbitrarily small, $\bx(t) \to \Bar{\bx}$ should hold. \qedhere
\end{proof}

It is evident that $\cL(\bA, \bB)$ is analytic everywhere. For $\cT_\beta$, $\bA(t)$ and $\bB(t)$ are continuously differentiable as their time derivative $-\cT_\beta(\nabla_\bA \cL(t))$ and $-\cT_\beta(\nabla_\bB \cL(t))$ are analytic. In addition, $\bM = 0 \Leftrightarrow \cT_\beta(\bM) = 0$, hence \cref{cond:stationary} is satisfied. Moreover, let the rank of $\nabla_\bA \cL(t)$ and $\nabla_\bB \cL(t)$ be $r_A$, $r_B$, respectively. Sort the singular values $\sigma(t)$ of the gradients with respect to $\bA$ and $\bB$ in non-increasing order. Then:
\begin{align*}
    \frac{d \cL(t)}{\mathrm{d}t} &= -\sum_{i\in[r_A + r_B]} \sigma_i(t)\frac{\sigma_i(t)}{\sqrt{\sigma_i(t)^2 + \beta}}\\
    &\leq -\frac{1}{r_A+r_B} \sum_{i\in[r_A + r_B]} \sigma_i(t) \sum_{i\in[r_A + r_B]}\frac{\sigma_i(t)}{\sqrt{\sigma_i(t)^2 + \beta}}\\
    &\leq -\frac{1}{r_A+r_B}\normF{\nabla \cL(t)}\normF{(\dot\bA, \dot\bB)}\\
    &\leq -\frac{1}{2r}\normF{\nabla \cL(t)}\normF{(\dot\bA, \dot\bB)}.
\end{align*}
The first inequality follows from Chebyshev's sum inequality. Hence, \cref{eq:F_dec} is also met. Therefore, the SpecGF with $\cT_\beta$ can only either converge to a critical point or diverge to infinity.

For $\cT$, assuming that iteration is always full rank (for the analyticity),
\begin{align*}
    \frac{d \cL(t)}{\mathrm{d}t} &= -\|\nabla_\bA \cL(t)\|_* - \|\nabla_\bB \cL(t)\|_*\\
    &\leq -\normF{\nabla_\bA \cL(t)} - \normF{\nabla_\bB \cL(t)} \leq -\normF{\nabla \cL(t)}\\
    &\leq -\frac{1}{\sqrt{2r}}\normF{\nabla \cL(t)}\normF{(\dot\bA, \dot\bB)}.
\end{align*}
The last inequality follows from the fact that $\normF{\cT(\bM)}^2 \leq \mathrm{rank}(\bM)$ for any matrix $\bM$.

\paragraph{Invariance.} We conjecture that SpecGF may not have a nontrivial invariant term with respect to $\bA(t)$ and $\bB(t)$, at least for polynomials, while plain GF has one. Here, the terminology ``invariant'' is a function $I: \sR^d \to \sR$ such that for all solutions $\bx(t)$ of the differential equation $\dot \bx(t) = f(\bx)$, $I(\bx(t))$ is constant (possibly depending on $\bx(0)$). Then, $\frac{\mathrm{d}}{\mathrm{d}t} I(\bx(t)) \equiv 0$, or 
\begin{equation*}
    \angleb{\nabla_\bx I, f(\bx)} \equiv 0
\end{equation*}
globally holds on the set of initial points where the solution is well-defined. If there exists an invariant polynomial in $\bA$ and $\bB$, it should work for the $m=n=1$ case. However, it can be shown that any polynomial should be identically constant on whole $\sR\times \sR$. Let $m=n=1=r$. Then $\bY = \sigma \ne 0$, $a(t), b(t) \in \sR$, and thus the loss becomes
\begin{equation*}
    \cL(a, b) = \frac{1}{2}(ab-\sigma)^2.
\end{equation*}
We state the partial derivatives for convenience:
\begin{equation*}
    \partial_a \cL = b(ab-\sigma), \quad \partial_b \cL = a(ab-\sigma).
\end{equation*}

\begin{proposition}[Absence of polynomial invariant for $\cT_\beta$]\label{prop:no_poly_invar_Tb}
     For SpecGF with $\cT_\beta$, there does not exist a non-constant invariant polynomial with real coefficients in $a$ and $b$ on all of $\:\sR \times \sR$.
\end{proposition}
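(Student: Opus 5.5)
Let $f_\beta(x) \coloneqq x/\sqrt{x^2+\beta}$. In the scalar case $m=n=r=1$, SpecGF with $\cT_\beta$ is the planar system
\begin{equation*}
\dot a = -f_\beta\big(b(ab-\sigma)\big), \qquad \dot b = -f_\beta\big(a(ab-\sigma)\big).
\end{equation*}
A polynomial $I(a,b)$ is invariant iff $\partial_a I\, f_\beta(b(ab-\sigma)) + \partial_b I\, f_\beta(a(ab-\sigma)) \equiv 0$ on $\sR^2$. I will restrict this identity to a region where the two $f_\beta$ factors are analytically independent, and use the mismatch between polynomial and non-polynomial behaviour to force $\nabla I \equiv 0$.

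Write $e = ab-\sigma$ and multiply the identity by the square roots:
\begin{equation*}
\partial_a I\cdot b\,e\sqrt{a^2e^2+\beta} = -\partial_b I\cdot a\,e\sqrt{b^2e^2+\beta}.
\end{equation*}
On the open set where $e\ne 0$, cancel $e$ and square both sides to get the polynomial identity
\begin{equation*}
(\partial_a I)^2 b^2(a^2e^2+\beta) = (\partial_b I)^2 a^2 (b^2e^2+\beta).
\end{equation*}
Since this holds on an open set, it holds on all of $\sR^2$. Setting $P=\partial_a I$ and $Q=\partial_b I$ and rearranging gives
\begin{equation*}
a^2b^2e^2\,(P^2-Q^2) = \beta\,(a^2Q^2 - b^2P^2).
\end{equation*}

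The next step is to compare degrees or restrict to lines. The simplest route is the original unsquared identity on the diagonal $a=b=x$. There $b(ab-\sigma)=a(ab-\sigma)$, so the two $f_\beta$ factors coincide and $(P+Q)(x,x)f_\beta(x(x^2-\sigma))=0$. Hence $P(x,x)=-Q(x,x)$, i.e.\ $\tfrac{d}{dx}I(x,x)=P+Q=0$ along the diagonal, which alone is not enough.

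To get more, I use the squared identity. Suppose $I$ is non-constant, so $(P,Q)\ne 0$, and take highest-degree homogeneous parts. Let $d=\max(\deg P,\deg Q)$, with top parts $P_d,Q_d$. The left side has top part $a^4b^4(P_d^2-Q_d^2)$ of degree $2d+8$, while the right side has degree at most $2d+2$. Therefore $P_d^2=Q_d^2$, i.e.\ $P_d=\pm Q_d$. Substituting back and peeling off successive homogeneous layers should give an inductive contradiction. Here, however, I expect the main obstacle. After the top layer cancels, the lower layers involve the cross terms of $e^2=a^2b^2-2\sigma ab+\sigma^2$ and the $\beta$ side, and the bookkeeping could become messy.

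Rather than push that bookkeeping through, I would first try a cleaner alternative. Near a point with $a=0$, $b\ne 0$, $e=-\sigma\ne0$, the velocity is $(\dot a,\dot b)\approx(f_\beta(-\sigma b),\,0)$, which is nonzero and transverse. I would instead exploit the fact that level sets of $I$ must be unions of trajectories. Trajectories started on the line $\{ab=\sigma\}$ are fixed points. Nearby trajectories converge to distinct points of this curve, as shown by the monotone analysis in \Cref{prop:specgf_rk1_lora} with a shifted start. Whichever of the two routes is used, one must also show that trajectories from an open set pass arbitrarily close to the origin region. There $I$ would have to equal $I(0,0)$, and combined with the fact that $I$ is constant on each orbit, the union of such orbits would be open, forcing $I\equiv I(0,0)$ by the identity theorem. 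I expect this reachability argument to be the delicate step. My fallback is the degree-layer induction, carried out explicitly with $\beta$ fixed.
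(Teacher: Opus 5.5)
Your reduction is correct and matches the paper's: cancelling $e=ab-\sigma$ and squaring gives $(bP)^2(a^2e^2+\beta)=(aQ)^2(b^2e^2+\beta)$ on all of $\mathbb{R}^2$, where $P=\partial_a I$ and $Q=\partial_b I$. The gap is that you never extract a contradiction from this identity. Comparing top-degree parts gives only $P_d=\pm Q_d$, which is consistent. The layer-by-layer induction you defer does not obviously close either. With $S=P+Q$ and $D=P-Q$ the identity reads $SD\,(4a^2b^2e^2+2\beta(a^2+b^2))=\beta(a^2-b^2)(S^2+D^2)$, and matching degrees only produces further relations, for example that one of $P\pm Q$ has degree exactly $d-6$.

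The missing idea is unique factorization in $\mathbb{R}[a,b]$. Put $R_a=a^2e^2+\beta$ and $R_b=b^2e^2+\beta$.
\begin{itemize}
\item Viewed as a quadratic in $b$ over $\mathbb{R}[a]$, $R_a$ is primitive with discriminant $-4\beta a^4$. This is not a square, so $R_a$ is irreducible by Gauss's lemma.
\item $R_a\nmid R_b$: compare them at $a=0$, where $R_a=\beta$ and $R_b=\sigma^2b^2+\beta$.
\item Hence $R_a$ divides the left-hand side to odd order and the right-hand side to even order. This is impossible unless both sides vanish, which forces $bP=aQ=0$ and so $\nabla I\equiv 0$.
\end{itemize}
The paper runs the same idea a little differently. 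Setting $a=0$ gives $a\mid P$, and symmetrically $b\mid Q$. Writing $P=aP'$ and $Q=bQ'$ yields $P'^2R_a=Q'^2R_b$. Primality and coprimality of $R_a$ and $R_b$ then give an infinite descent $R_a^k\mid P'$ for every $k$.

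You should drop the dynamical alternative; it cannot work as stated. The origin is a hyperbolic saddle of the flow: its linearization is $-\beta^{-1/2}\nabla^2\cL(0,0)$, with eigenvalues $\pm\sigma/\sqrt{\beta}$. The anti-diagonal $a=-b$ is invariant and is exactly its stable manifold. So no open set of orbits accumulates at the origin, and orbits that merely pass near it give only $I\approx I(0,0)$, not equality. More fundamentally, continuity plus constancy along orbits can never force constancy here. Near the normally attracting curve of equilibria $\{ab=\sigma\}$, the map sending a point to the limit of its trajectory is a continuous, non-constant first integral. Any correct proof must therefore use that $I$ is a polynomial, which is exactly what the factorization argument does.
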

\begin{proof}
    Note that, as $\sR$ is a field, $\sR[a][b] \cong \sR[a, b] \cong \sR[b][a]$. In addition, $\sR[x]$ is a unique factorization domain (UFD) for an indeterminate $x$.
    
    Let $P \in \sR[a, b]$ be a polynomial (of finite degree) with real coefficients. Then
    \begin{equation*}
        \partial_a P(a(t), b(t))\dot a(t) + \partial_b P(a(t), b(t))\dot b(t) = 0.
    \end{equation*}
    Substitute $\dot a(t) = -\cT_\beta(\partial_a \cL)$ and $\dot b(t) = -\cT_\beta(\partial_b \cL)$. Rearranging the equations gives that, for any $a$ and $b$,
    \begin{equation*}
        \partial_a P(a, b) \times b(ab-\sigma)\sqrt{a^2(ab-\sigma)^2 + \beta} + \partial_b P(a, b) \times a(ab-\sigma)\sqrt{b^2(ab-\sigma)^2 + \beta} = 0.
    \end{equation*}
    For simplicity, denote $a^2(ab-\sigma)^2 + \beta$ and $b^2(ab-\sigma)^2 + \beta$ as $Q_{(a)}$ and $Q_{(b)}$ in $\sR[a, b]$, respectively. Then
    \begin{equation*}
        \roundb{\partial_a P(a, b)}^2 \times b^2Q_{(a)} = \roundb{\partial_b P(a, b)}^2 \times a^2Q_{(b)}.
    \end{equation*}
    Plugging $a = 0$ gives $\roundb{\partial_a P(0, b)}^2 \times b^2\beta = 0$, which implies that $a\:|\: \partial_a P(a, b)$. Plugging $b = 0$ implies $b\:|\: \partial_b P(a, b)$ by analogous reasoning. We see that there exist some polynomials $P_{(a)}'$ and $P_{(b)}'$ that
    \begin{equation*}
        \partial_a P(a, b) = a P_{(a)}', \quad \partial_b P(a, b) = b P_{(b)}'.
    \end{equation*}
    Thus, for any $a$ and $b$,
    \begin{equation*}
        P_{(a)}'^2 Q_{(a)} = P_{(b)}'^2 Q_{(b)}.
    \end{equation*}
    As $\beta > 0$, the discriminant of $Q_{(a)}$, viewed as a quadratic in $b$, is
    \begin{equation*}
        (-2a^3\sigma)^2 - 4(a^4)(a^2\sigma^2 + \beta) = -4a^4\beta,
    \end{equation*}
    which is not a perfect square in $\sR[a]$. For a quadratic with a non-perfect square discriminant, it has no roots in $\sR[a]$. $\sR[a]$ being UFD allows us to use Gauss' lemma, hence $Q_{(a)}$ is (irreducible, thus) a prime in $\sR[a,b]$. Similarly, $Q_{(b)}$ is also a prime in $\sR[a,b]$. We now show that $Q_{(a)}$ and $Q_{(b)}$ are coprime. Indeed, for every common factor $D$ of $Q_{(a)}$ and $Q_{(b)}$, $D$ should also divide
    \begin{equation*}
        b^2Q_{(a)} - a^2Q_{(b)} = \beta(b^2 - a^2) = \beta(b-a)(b+a).
    \end{equation*}
    Since $b-a$ and $b+a$ are coprime, either $D\:|\: (b-a)$ or $D\:|\: (b+a)$. However, $Q_{(a)}(a, a) = a^2(a^2-\sigma)^2 + \beta \not\equiv 0$ and $Q_{(a)}(a, -a) = a^2(-a^2-\sigma)^2 + \beta \not\equiv 0$; so $D = 1$.

    Therefore, we have
    \begin{align*}
        Q_{(a)} \:|\: P_{(b)}'^2 \quad &\Rightarrow \quad Q_{(a)} \:|\: P_{(b)}',\\
        Q_{(b)} \:|\: P_{(a)}'^2 \quad &\Rightarrow \quad Q_{(b)} \:|\: P_{(a)}'.
    \end{align*}
    Then, there exist some polynomials $P_{(a)}''$ and $P_{(b)}''$ that
    \begin{equation*}
        P_{(a)}' = Q_{(b)} P_{(a)}'', \quad P_{(b)}' = Q_{(a)} P_{(b)}''.
    \end{equation*}
    Thus,
    \begin{equation*}
        P_{(a)}''^2 Q_{(b)} = P_{(b)}''^2 Q_{(a)}.
    \end{equation*}
    Then, with similar argument, there exist some polynomials $P_{(a)}'''$ and $P_{(b)}'''$ that
    \begin{equation*}
        P_{(a)}'' = Q_{(a)} P_{(a)}''', \quad P_{(b)}'' = Q_{(b)} P_{(b)}'''.
    \end{equation*}
    Thus,
    \begin{equation*}
        P_{(a)}'''^2 Q_{(a)} = P_{(b)}'''^2 Q_{(b)}.
    \end{equation*}
    The above reasoning implies that $P_{(a)}'$ is divisible by $Q_{(a)}^k$ for all positive integers $k \geq 1$, which can only happen when $P_{(a)}' \equiv 0$; we also have $P_{(b)}' \equiv 0$. Then $\partial_a P(a, b) \equiv 0$ and $\partial_b P(a, b) \equiv 0$. Therefore, such an invariant $P$ has to be constant.
\end{proof}

The similar result is obtained for $\cT$ on the set where SpecGF with $\cT$ is analytic.
\begin{proposition}[Absence of polynomial invariant for $\cT$]\label{prop:no_poly_invar_T}
    For SpecGF with $\cT$, there does not exist a non-constant invariant polynomial with real coefficients in $a$ and $b$ on
    \begin{equation*}
        O \triangleq \curlyb{(a, b): a\ne0, b\ne0, ab-\sigma \ne 0}.
    \end{equation*}
\end{proposition}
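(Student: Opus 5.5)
We follow the template of \Cref{prop:no_poly_invar_Tb}, with $\beta=0$ and the domain restricted to $O$. On $O$ we have $\partial_a\cL=b(ab-\sigma)\neq0$ and $\partial_b\cL=a(ab-\sigma)\neq0$. Hence SpecGF with $\cT$ reduces to
\begin{equation*}
\dot a=-\mathrm{sgn}\bigl(b(ab-\sigma)\bigr),\qquad \dot b=-\mathrm{sgn}\bigl(a(ab-\sigma)\bigr),
\end{equation*}
and this field is analytic (in fact locally constant) on $O$. Suppose $P\in\sR[a,b]$ is invariant on $O$. Then on $O$ we get
\begin{equation*}
\partial_aP(a,b)\,\mathrm{sgn}\bigl(b(ab-\sigma)\bigr)+\partial_bP(a,b)\,\mathrm{sgn}\bigl(a(ab-\sigma)\bigr)=0 .
\end{equation*}
Multiplying by $\mathrm{sgn}(ab-\sigma)$, which does not vanish on $O$, gives $\mathrm{sgn}(b)\,\partial_aP+\mathrm{sgn}(a)\,\partial_bP=0$.

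We then work quadrant by quadrant. $O$ is a nonempty open subset of each open quadrant, since it only removes the axes and the hyperbola $ab=\sigma$. In the first quadrant the identity becomes $\partial_aP+\partial_bP=0$ on a nonempty open set. A polynomial vanishing on a nonempty open set is identically zero, so $\partial_aP+\partial_bP\equiv0$ on $\sR^2$. In the second quadrant ($a<0<b$) we get $\partial_aP-\partial_bP=0$ on an open set, hence $\partial_aP-\partial_bP\equiv0$ globally. Adding and subtracting the two identities gives $\partial_aP\equiv\partial_bP\equiv0$, so $P$ is constant.

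Equivalently, one can square the identity as in the $\cT_\beta$ proof to get $(\partial_aP)^2b^2(ab-\sigma)^2=(\partial_bP)^2a^2(ab-\sigma)^2$. After cancelling $(ab-\sigma)^2$ this gives the polynomial identity $b^2(\partial_aP)^2=a^2(\partial_bP)^2$, so $b\,\partial_aP=\pm a\,\partial_bP$. This alone does not force constancy: for example $P=a^2-b^2$ satisfies $b\partial_aP=-a\partial_bP$, and it is invariant for plain gradient flow. The sign information is therefore essential, and the quadrant argument is the step that makes the proof work. It uses the fact that $\mathrm{sgn}(a)\mathrm{sgn}(b)$ takes both values on $O$, which turns the single relation into two linearly independent linear PDEs.

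The main point to check is that invariance along trajectories implies the pointwise identity $\langle\nabla P,f\rangle=0$ at every point of $O$. This holds because through each point of $O$ there is a local analytic solution with that initial value, and differentiating $P$ along it at $t=0$ gives the identity. A smaller point is the uniqueness of polynomial continuation from an open set. The remaining steps are routine.
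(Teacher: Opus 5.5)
Your proof is correct and follows essentially the same route as the paper: reduce invariance on $O$ to $\mathrm{sgn}(b)\,\partial_aP+\mathrm{sgn}(a)\,\partial_bP=0$, use a same-sign region to get $\partial_aP+\partial_bP=0$ and an opposite-sign region to get $\partial_aP-\partial_bP=0$ on open sets, extend both identities to all of $\sR^2$ by the identity theorem for polynomials, and conclude $\partial_aP\equiv\partial_bP\equiv0$. Your two extra remarks are accurate and spell out points the paper leaves implicit: squaring alone cannot work (for example $P=a^2-b^2$), and invariance along trajectories yields the pointwise identity at every point of $O$.
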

\begin{proof}
    Note that on $\sR$,
    \begin{equation*}
        \cT(x) = \mathrm{sgn}(x) \quad (x \ne 0),
    \end{equation*}
    where $\mathrm{sgn}$ is the sign function.
    
    Let $P$ be a polynomial (of finite degree) with real coefficients. Then
    \begin{equation*}
        \partial_a P(a(t), b(t))\dot a(t) + \partial_b P(a(t), b(t))\dot b(t) = 0.
    \end{equation*}
    Substitute $\dot a(t) = -\cT(\partial_a \cL)$ and $\dot b(t) = -\cT(\partial_b \cL)$. Rearranging the equations gives that, for any $a$ and $b$ in $O$,
    \begin{equation*}
        \partial_a P(a, b) \times \mathrm{sgn}\roundb{b(ab-\sigma)} + \partial_b P(a, b) \times \mathrm{sgn}\roundb{a(ab-\sigma)} = 0.
    \end{equation*}
    If $\mathrm{sgn}(a) = \mathrm{sgn}(b)$, we have
    \begin{equation*}
        \partial_a P(a, b) + \partial_b P(a, b) = 0.
    \end{equation*}
    As the equality holds on the open subset, we actually have
    \begin{equation*}
        \partial_a P(a, b) + \partial_b P(a, b) \equiv 0
    \end{equation*}
    on all of $\sR \times \sR$. Analogously, if $\mathrm{sgn}(a) = -\mathrm{sgn}(b)$, we have
    \begin{equation*}
        \partial_a P(a, b) - \partial_b P(a, b) = 0,
    \end{equation*}
    so in fact, on all of $\sR \times \sR$,
    \begin{equation*}
        \partial_a P(a, b) - \partial_b P(a, b) \equiv 0.
    \end{equation*}
    Therefore $\partial_a P(a, b)$ and $\partial_b P(a, b)$ is identically zero everywhere, implying that such an invariant $P$ has to be constant.
\end{proof}

In addition, we show that $\bA(t)^\top\bA(t) - \bB(t)\bB(t)^\top$ is not an invariant for SpecGF with both $\cT$ and $\cT_\beta$.

\begin{proposition}\label{prop:AAT-BTB_not_inv}
     For both SpecGF with $\cT_\beta$ and $\cT$, there exists $\bY$ such that $\bA(t)^\top\bA(t) - \bB(t)\bB(t)^\top$ cannot be an invariant.
\end{proposition}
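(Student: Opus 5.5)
The claim is existential, so it suffices to exhibit a single target $\bY$ and a single initialization for which $\frac{\rd}{\rd t}\big(\bA^\top\bA - \bB\bB^\top\big)\neq \bm 0$ at some time. I would work in the scalar case $m=n=r=1$ with $\bY=\sigma>0$, where $\bA=a$, $\bB=b$ and the quantity of interest is $I(a,b)=a^2-b^2$. Since $\cT$ and $\cT_\beta$ act on $1\times 1$ matrices by $x\mapsto \mathrm{sgn}(x)$ and $x\mapsto x/\sqrt{x^2+\beta}$ respectively, the flow is explicit:
\begin{align*}
\dot a = -f\big(b(ab-\sigma)\big),\qquad \dot b = -f\big(a(ab-\sigma)\big),
\end{align*}
with $f\in\{\mathrm{sgn},\, x\mapsto x/\sqrt{x^2+\beta}\}$. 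Then
\begin{align*}
\dot I = 2a\dot a - 2b\dot b = -2a f\big(b(ab-\sigma)\big) + 2b f\big(a(ab-\sigma)\big).
\end{align*}

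For $\cT$, I would pick a point in the open region where $a,b>0$, $ab<\sigma$ and $a\neq b$, for instance $a=2\epsilon$, $b=\epsilon$ with $2\epsilon^2<\sigma$. There SpecGF with $\cT$ is analytic (the gradients are nonzero), so the solution exists and is unique near this point. Both signs equal $-1$, so $\dot I = 2a-2b = 2\epsilon\neq 0$.

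For $\cT_\beta$, I would evaluate $\dot I$ at the same kind of point. Writing $e=ab-\sigma<0$,
\begin{align*}
\dot I = 2|e|\,ab\left(\frac{1}{\sqrt{b^2e^2+\beta}}-\frac{1}{\sqrt{a^2e^2+\beta}}\right),
\end{align*}
which is strictly positive whenever $a>b>0$, $ab<\sigma$ and $\beta>0$, because $x\mapsto (x^2e^2+\beta)^{-1/2}$ is strictly decreasing in $x>0$. Hence $I$ is not constant along the trajectory starting there. By \Cref{prop:specgf_analytic} this trajectory is unique and $C^1$, so $\dot I(0)\neq 0$ already rules out invariance.

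The only subtlety is that the statement concerns matrices with $r\geq1$. To embed the example in general dimensions, I would take $\bY=\sigma\bu\bv^\top$ with $r=1$ and initialize at $\bA(0)=a_0\bu$, $\bB(0)=b_0\bv^\top$. By the span-invariance argument of the rank-1 analysis (with no $\bz$ component), the dynamics stay on this line and reduce exactly to the scalar system above, and $\bA^\top\bA-\bB\bB^\top = a^2-b^2$.

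For larger $r$, I would use the spectral initialization of \Cref{sec:spec_init}, which decouples the flow into independent scalar pairs, and apply the scalar argument to one pair. There is no real obstacle beyond checking that the chosen initial point lies in the region where the flow is well defined and unique.
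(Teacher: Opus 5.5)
Your proposal is correct and follows essentially the same route as the paper. Both arguments evaluate the time derivative of $\bA^\top\bA-\bB\bB^\top$ at a single decoupled point with unbalanced factors ($|a|\neq|b|$, $ab\neq\sigma$) and check that it is nonzero for both $\cT$ and $\cT_\beta$. The paper works directly in general dimensions, taking the target $\bY$ with $\sigma\bI_{r^\star}$ in its top-left block and the point $\bA=a\bI_r$, $\bB=b\bI_r$ padded with zeros; this is a special case of your spectral-initialization embedding.
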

\begin{proof}
    Fix $\sigma > 0$ and $a, b \ne 0$ with $ab \ne \sigma$, $|a|\ne |b|$. We show that with
    \begin{equation*}
        \bY = \begin{bmatrix}
            \sigma \bI_{r^\star} & \bm 0\\
            \bm 0 & \bm 0
        \end{bmatrix}
    \end{equation*}
    for any $r^\star \geq r$, SpecGF with $\cT_\beta$ cannot have $\bA(t)^\top\bA(t) - \bB(t)\bB(t)^\top$ as its invariant, due to its non-zero time derivative at
    \begin{equation*}
        (\bA, \bB) = \roundb{\begin{bmatrix}
            a \bI_{r} \\
            \bm 0
        \end{bmatrix}, \begin{bmatrix}
            b \bI_{r} & \bm 0\\
        \end{bmatrix}}
    \end{equation*}

    Compute the gradient at $(\bA, \bB)$:
    \begin{align*}
        \nabla_\bA\cL(\bA, \bB) &= \begin{bmatrix}
            b(ab-\sigma) \bI_{r}\\
            \bm 0
        \end{bmatrix} \triangleq \begin{bmatrix}
            a' \bI_{r}\\
            \bm 0
        \end{bmatrix}\\
        \nabla_\bB\cL(\bA, \bB) &= \begin{bmatrix}
            a(ab-\sigma) \bI_{r} & \bm 0
        \end{bmatrix} \triangleq \begin{bmatrix}
            b' \bI_{r} & \bm 0
        \end{bmatrix}.
    \end{align*}
    Then, for SpecGF with $\cT_\beta$
    \begin{align*}
        \dot \bA = -\cT_\beta\roundb{\nabla_\bA\cL(\bA, \bB)} &= \begin{bmatrix}
            -\frac{a'}{\sqrt{a'^2+\beta}} \bI_{r}\\
            \bm 0
        \end{bmatrix}\\
        \dot \bB = -\cT_\beta\roundb{\nabla_\bB\cL(\bA, \bB)} &= \begin{bmatrix}
            -\frac{b'}{\sqrt{b'^2+\beta}} \bI_{r} & \bm 0
        \end{bmatrix}.
    \end{align*}
    Hence, the time derivative of $\bA$ and $\bB$ translate to that of $a$ and $b$, respectively. We obtain
    \begin{align*}
        \frac{\mathrm{d}}{\mathrm{d}t} (\bA^\top\bA - \bB\bB^\top) &= \frac{\mathrm{d}}{\mathrm{d}t} (a^2 - b^2)\bI_r\\
        &= -2\roundb{a\frac{a'}{\sqrt{a'^2+\beta}} - b\frac{b'}{\sqrt{b'^2+\beta}}}\bI_r\\
        &= -2ab(ab-\sigma)\roundb{\frac{1}{\sqrt{b^2(ab-\sigma)^2+\beta}} - \frac{1}{\sqrt{a^2(ab-\sigma)^2+\beta}}}\bI_r \ne \bm 0.
    \end{align*}

    The same setting results in the same result for SpecGF with $\cT$. For SpecGF with $\cT$
    \begin{align*}
        \dot \bA = -\cT\roundb{\nabla_\bA\cL(\bA, \bB)} &= \begin{bmatrix}
            -\mathrm{sgn}(a') \bI_{r}\\
            \bm 0
        \end{bmatrix}\\
        \dot \bB = -\cT\roundb{\nabla_\bB\cL(\bA, \bB)} &= \begin{bmatrix}
            -\mathrm{sgn}(b') \bI_{r} & \bm 0
        \end{bmatrix}.
    \end{align*}
    Again, the time derivative of $\bA$ and $\bB$ translate to that of $a$ and $b$, respectively. We obtain
    \begin{align*}
        \frac{\mathrm{d}}{\mathrm{d}t} (\bA^\top\bA - \bB\bB^\top) &= \frac{\mathrm{d}}{\mathrm{d}t} (a^2 - b^2)\bI_r\\
        &= -2\roundb{a\cdot\mathrm{sgn}(a') - b\cdot\mathrm{sgn}(b')}\bI_r\\
        &= -2\mathrm{sgn}(ab-\sigma)\roundb{a\cdot\mathrm{sgn}(b) - b\cdot\mathrm{sgn}(a)}\bI_r \ne \bm 0. \qedhere
    \end{align*}
\end{proof}

\subsection{Saddle Point Avoidance}

\citet{bah2022learning} characterize the critical points of the loss
\begin{equation}\label{eq:N-layer_linear_network}
    L^N(\bW_1, \ldots, \bW_N) = \frac{1}{2}\normF{\bY - \bW_N\cdots \bW_1\bX}^2,
\end{equation}
where $\bW_j \in \sR^{d_j\times d_{j-1}}$ for $d_0=d_x$, $d_N=d_y$ and $\bX \in \sR^{d_x\times m}$ and $\bY \in \sR^{d_y\times m}$. They analyzed whether the critical points are the minima or strict saddle points through a closely related loss
\begin{equation*}
    L^1(\bW) = \frac{1}{2}\normF{\bY - \bW\bX}^2,
\end{equation*}
where $\bW \in \sR^{d_y\times d_x}$. We summarize their result, especially for $N=2$ case.

\begin{proposition}[\citep{bah2022learning}, Proposition 6.6, 6.9, 6.11] \label{prop:SSP_characterization}
    Let $r \triangleq \min\{d_0, d_1, d_2\}$. Assume that $\bX\bX^\top$ has full rank $d_0$, $\bY$ is of rank $q$, and $\bW_k$ is the global minimum of $L^1$ restricted to a set of matrices whose rank is at most $k$. Let $K$ be the rank of $\bW_2\bW_1$.
    \begin{enumerate}
        \item $\bW_k$ is the best rank-$k$ approximation of $\bY$.
        \item If $K = \min\{r, q\}$, every critical point $(\bW_1, \bW_2)$ that satisfies $\bW_2\bW_1 = \bW_K$ is the global minimum of $L^2$. If $\bW_2\bW_1 \ne \bW_K$, then $(\bW_1, \bW_2)$ is a strict saddle point.
        \item Any other critical point $(\bW_1, \bW_2)$, including the origin, is a strict saddle point.
    \end{enumerate}
\end{proposition}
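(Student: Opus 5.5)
Since this is a specialization of \citet{bah2022learning} to $N=2$, I would reprove it directly in the form we need rather than unpack their general-$N$ machinery.

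\textbf{Step 1: whiten $\bX$.} Because $\bX\bX^\top$ is invertible, write $\bC \coloneqq (\bX\bX^\top)^{1/2}$ and $\widetilde\bY \coloneqq \bY\bX^\top \bC^{-1}$. Expanding the square gives
\begin{equation*}
L^1(\bW) = \tfrac12\normF{\bW\bC - \widetilde\bY}^2 + \mathrm{const}.
\end{equation*}
The substitution $\bW_1 \mapsto \bW_1\bC$ is a linear bijection, so it preserves critical points. Its Hessian transforms by congruence, so by Sylvester's law of inertia it preserves the existence of a negative eigenvalue, i.e.\ it maps strict saddles to strict saddles and global minima to global minima. Hence it suffices to treat $\bX = \bI$ with target $\widetilde\bY$. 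In our application $\bX = \bI$ already, so $\widetilde\bY = \bY$ and item~1 is exactly the Eckart--Young--Mirsky theorem for $\min_{\mathrm{rank}\,\bW \le k}\normF{\bW - \bY}$.

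\textbf{Step 2: structure of critical points.} Set $\bP \coloneqq \bW_2\bW_1$ and $\bE \coloneqq \bP - \bY$. The critical point equations are
\begin{equation*}
\bE\bW_1^\top = \bm 0, \qquad \bW_2^\top\bE = \bm 0.
\end{equation*}
Multiplying the first on the left by $\bW_2$ gives $\bE\bP^\top = \bm 0$, and the second on the right by $\bW_1$ gives $\bP^\top\bE = \bm 0$. Hence $\bP\bP^\top = \bY\bP^\top$ and $\bP^\top\bP = \bP^\top\bY$. From these I would show that $\bP$ is a ``partial SVD'' of $\bY$: for some index set $S$ of singular triples,
\begin{equation*}
\bP = \sum_{i\in S}\sigma_i\bu_i\bv_i^\top.
\end{equation*}
The route is to note that $\bP\bP^\top$ and $\bY\bY^\top$ commute and that $\bP$ acts as $\bY$ on its own row space. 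With repeated singular values the statement becomes ``$\bP$ is the restriction of $\bY$ to an invariant subspace'', which is enough. Then $L^2 = \tfrac12\sum_{i\notin S}\sigma_i^2$, and $K = |S|$. This quantity is minimized over $K \le \min\{r,q\}$ exactly when $S$ consists of the top $\min\{r,q\}$ indices, i.e.\ $\bP = \bW_K$; this gives the global-minimum half of item~2.

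\textbf{Step 3: escape directions (main obstacle).} For every other critical point I must exhibit a direction of negative curvature. Pick $i \notin S$ with $\sigma_i$ larger than some omitted-from-optimum level: either $|S| < \min\{r,q\}$, or $S$ contains some $j$ with $\sigma_j < \sigma_i$. Perturb along
\begin{equation*}
\bW_1 + t\,\bw\bv_i^\top, \qquad \bW_2 + t\,\bu_i\bw'^\top,
\end{equation*}
with hidden vectors $\bw,\bw'$ chosen as follows.
\begin{itemize}[leftmargin=4mm,itemsep=2pt,topsep=0pt,partopsep=0pt]
\item If $K < r$: pick $\bw$ in the kernel of $\bW_2$ and $\bw'$ orthogonal to the range of $\bW_1$, scaled so that $\bw'^\top\bw > 0$. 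These exist because $\bP$ has rank $K < r$, but I expect the key technical work to be showing both choices can be made simultaneously. The cross terms with $\bP$ then vanish to second order, and the loss changes by $-t^2\sigma_i\,(\bw'^\top\bw) + O(t^4)$. Equivalently, the Hessian quadratic form is negative.
\item If $K = r$ but the wrong modes are chosen: rotate mode $j$ into mode $i$. Use $\bw$ and $\bw'$ built from the hidden-space images $\bW_1\bv_j$ and $\bW_2^\top\bu_j$. The second-order term is then $-t^2(\sigma_i - \sigma_j)\cdot(\mathrm{positive}) < 0$.
\end{itemize}
The delicate part is the Hessian computation when $\bW_1$ and $\bW_2$ are unbalanced or rank-deficient in different ways, so that $\ker\bW_2$ and $\mathrm{range}(\bW_1)^\perp$ need not align. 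I would first try using the $\mathsf{GL}(r)$ rescaling symmetry $(\bW_1,\bW_2)\mapsto(\bG\bW_1,\bW_2\bG^{-1})$ to reduce to a canonical form, since critical-point type is invariant under it.

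\textbf{Step 4: remaining critical points (item~3).} The origin and every $\bP \ne \bW_K$ fall under the cases of Step 3. At the origin the $K < r$ construction applies with $\bw = \bw'$ arbitrary, giving second-order term $-t^2\sigma_1\norm{\bw}^2 < 0$. This completes the classification.
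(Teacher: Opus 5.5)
\paragraph{Overall.} The paper does not prove this statement; it imports it from \citet{bah2022learning}, so a self-contained argument like yours is a reasonable route. Your Steps 1--2 and the $K=r$ bullet of Step 3 check out. With $\bw=\bW_1\bv_j$ and $\bw'=\bW_2^\top\bu_j$, the Hessian form $\normF{\bW_2\Delta_1+\Delta_2\bW_1}^2+2\angleb{\bE,\Delta_2\Delta_1}$ equals $2\sigma_j^2-2\sigma_i\sigma_j<0$.

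\paragraph{The gap: the $K<r$ bullet.} From $K<r\le d_1$ you only get that at least one of $\ker\bW_2$ and $\mathrm{range}(\bW_1)^\perp$ is nonzero. Indeed, if $\bW_2$ were injective and $\bW_1$ onto, $K$ would equal $d_1$. A pair $\bw\in\ker\bW_2$, $\bw'\perp\mathrm{range}(\bW_1)$ with $\bw'^\top\bw\neq 0$ exists only if $\ker\bW_2\not\subseteq\mathrm{range}(\bW_1)$, and this fails at genuine non-minimal critical points.

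Here is an example with $\bX=\bI$. Take $d_0=2$, $d_1=d_2=1$, $\bY=[\sigma\;\;0]$ with $\sigma>0$, and $(\bW_1,\bW_2)=([0\;\;1],\,0)$.
\begin{itemize}
\item Then $\bE=[-\sigma\;\;0]$, so $\bW_2^\top\bE=0$ and $\bE\bW_1^\top=0$.
\item Hence this is a critical point with $K=0<r=1$ and loss $\sigma^2/2$.
\item But $\bW_1$ is onto $\sR$, which forces $\bw'=0$ and makes your second-order term vanish.
\end{itemize}
In the paper's own notation this is $\bA=0$ with the row of $\bB$ lying in $\ker\bY$, exactly the LoRA-type configuration the paper cares about.

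Your suggested remedy cannot close this. Under $(\bW_1,\bW_2)\mapsto(\bG\bW_1,\bW_2\bG^{-1})$, both $\ker\bW_2$ and $\mathrm{range}(\bW_1)$ are mapped by $\bG$. So the obstruction $\ker\bW_2\subseteq\mathrm{range}(\bW_1)$ is invariant, and no canonical form removes it.

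\paragraph{The repair.} Stop insisting that the first-order product term cancel exactly; make it small by unbalancing the two factors instead.
\begin{itemize}
\item Pick $i\notin S$ with $\sigma_i>0$.
\item Pick a single nonzero $\bw$ in whichever of $\ker\bW_2$ and $\mathrm{range}(\bW_1)^\perp$ is nontrivial.
\item Set $(\Delta_1,\Delta_2)=(s\,\bw\bv_i^\top,\;s^{-1}\bu_i\bw^\top)$.
\end{itemize}
Now $\Delta_2\Delta_1=\norm{\bw}^2\bu_i\bv_i^\top$ does not depend on $s$, and $\bu_i^\top\bE\bv_i=-\sigma_i$ by your Step 2. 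The Hessian form therefore equals:
\begin{itemize}
\item $s^{-2}\norm{\bW_1^\top\bw}^2-2\sigma_i\norm{\bw}^2$ when $\bw\in\ker\bW_2$;
\item $s^{2}\norm{\bW_2\bw}^2-2\sigma_i\norm{\bw}^2$ when $\bw\perp\mathrm{range}(\bW_1)$.
\end{itemize}
Taking $s$ large in the first case, or small in the second, makes it negative. In the example above, $\Delta_1=s[1\;\;0]$ and $\Delta_2=s^{-1}$ give $s^{-2}-2\sigma$.

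The mechanism is that $\angleb{\bE,\Delta_2\Delta_1}$ is invariant under $(\Delta_1,\Delta_2)\mapsto(s\Delta_1,s^{-1}\Delta_2)$, while the squared first-order term is not. This rescaled single-vector direction handles every critical point with $K<\min\{r,q\}$, including the origin. With it in place of your first bullet, your classification is complete.
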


We say that a critical point $x_c$ of a twice continuously differentiable function $f$ is a \textit{strict saddle point} if the Hessian of $f$ at $x_c$ has a negative eigenvalue. \Cref{prop:SSP_characterization} implies that every critical point of $L^2$ is exactly either a global minimum or a strict saddle point. Notice that if $\bX$ is the identity matrix and $N=2$, then the setting of \citet{bah2022learning} exactly matches with ours. They also proved that the set of initial points such that the corresponding plain gradient flow for \Cref{eq:N-layer_linear_network} converges to a strict saddle point has measure zero \citep{bah2022learning}[Theorem 6.3]. We also show that this holds for SpecGF with $\cT_\beta$. Then, if SpecGF converges, it almost surely converges to the global minima, or the best low-rank approximation of $\bY$.

To this end, we employ a proposition from \citet{cheridito2024gradient} that claims the similar result.

\begin{proposition}[\citep{cheridito2024gradient}, Proposition 2.5]\label{prop:fixed_point_avoid}
    Let $f:\sR^d \to \sR^d$. Suppose that there exist open sets $V \subseteq U \subseteq \sR^d$ such that the complement of $V$ have Lebesgue measure zero and $f$ is continuously differentiable on $U$ with a locally Lipschitz continuous Jacobian, which is non-degenerate on $V$. Let $\cS \subseteq \{x \in U: f(x) = x\}$ and assume for all $x \in \cS$ that $f'(x)$ is symmetric and has an eigenvalue of absolute value strictly greater than 1. Then, the set $\{x \in \sR^d: \lim_{k\to\infty} f^k(x)\in\cS\}$ has Lebesgue measure 0.
\end{proposition}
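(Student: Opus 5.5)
The plan is the standard stable-manifold argument for iterated maps, localized around each point of $\cS$ and globalized by a countable cover.

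\textbf{Step 1 (local structure).} Fix $x^\star\in\cS$. Since $f'(x^\star)$ is symmetric, it is orthogonally diagonalizable with real eigenvalues. This gives a splitting $\sR^d=E_{cs}\oplus E_u$, where $E_u\neq\{0\}$ is spanned by the eigenvectors with $|\lambda|>1$ and $E_{cs}$ by those with $|\lambda|\le 1$.

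I would invoke a local center-stable manifold theorem for $C^1$ maps that need not be invertible near $x^\star$ (e.g.\ Shub, \emph{Global Stability of Dynamical Systems}, Thm.~III.7). This matters because $f'(x^\star)$ may have eigenvalue $0$. The theorem yields a ball $B_{x^\star}\subseteq U$ around $x^\star$ and an embedded disk $W_{x^\star}$, the graph of a Lipschitz map $E_{cs}\to E_u$. It has dimension $\dim E_{cs}<d$, and every $y$ with $f^k(y)\in B_{x^\star}$ for all $k\ge 0$ lies in $W_{x^\star}$. A Lipschitz graph over a proper subspace has Lebesgue measure zero.

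\textbf{Step 2 (countable cover).} The balls $\{B_{x^\star}\}_{x^\star\in\cS}$ cover $\cS$. By Lindel\"of, a countable subfamily $\{B_i\}$ with disks $\{W_i\}$ already covers $\cS$.

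Now suppose $f^k(x)\to x^\star\in\cS$, and pick $i$ with $x^\star\in B_i$. Since $B_i$ is open, there is $K$ such that $f^k(x)\in B_i$ for all $k\ge K$. Step~1 applied to $y=f^K(x)$ gives $f^K(x)\in W_i$. Hence
\begin{equation*}
\{x:\lim_k f^k(x)\in\cS\}\subseteq\bigcup_{i}\bigcup_{K\ge 0} f^{-K}(W_i).
\end{equation*}

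\textbf{Step 3 (preimages of null sets are null).} It suffices to show that $f^{-1}(N)$ is null whenever $N$ is null; induction in $K$ and countable unions then finish the proof. Split
\begin{equation*}
f^{-1}(N)\subseteq \bigl(f^{-1}(N)\cap V\bigr)\cup V^c .
\end{equation*}
The set $V^c$ is null by hypothesis. On $V$ the Jacobian is non-degenerate, so by the inverse function theorem each point of $V$ has an open neighborhood $O$ on which $f$ is a $C^1$ diffeomorphism onto its image. Countably many such $O_j$ cover $V$. On each, $f^{-1}(N)\cap O_j=(f|_{O_j})^{-1}(N\cap f(O_j))$ is the image of a null set under a $C^1$ (locally Lipschitz) map, hence null.

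\textbf{Main obstacle.} The delicate step is Step~1. I need a center-stable manifold statement that (i) allows a non-invertible $f$, since only non-degeneracy on $V$ is assumed, not at $x^\star$, and (ii) works with merely locally Lipschitz $f'$ rather than $C^2$. I would reduce to the cited theorem by a bump-function extension of $f-f'(x^\star)$ so that it is globally small in Lipschitz norm. I would then build $W_{x^\star}$ directly by the Hadamard graph-transform or Lyapunov–Perron fixed-point argument in a norm adapted to the splitting. The expansion factor $|\lambda|>1$ on $E_u$, versus contraction-or-neutral behavior on $E_{cs}$, gives the cone condition that forces orbits staying near $x^\star$ to lie on the graph. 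Symmetry of $f'(x^\star)$ is used only to obtain an orthogonal spectral splitting with no Jordan blocks, which simplifies the cone estimates.
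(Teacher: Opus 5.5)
Your outline is correct and is the standard argument. It combines a local statement (orbits staying near $x^\star$ lie on a lower-dimensional Lipschitz graph), a Lindel\"of reduction to countably many such graphs, and the fact that $f^{-1}$ of a null set is null. The paper gives no proof of this proposition; it imports it from \citet{cheridito2024gradient}. The hypotheses there are exactly what your Steps~3 and~1 consume: a co-null open $V$ on which $f'$ is non-degenerate, and a symmetric $f'(x)$ with an eigenvalue of modulus $>1$ at each $x\in\cS$. Steps~2 and~3 are fine as written, including orbits that pass outside $U$ before settling in some $B_i$. Your pullback argument works on all of $\sR^d$, since whatever lies outside $V$ is absorbed by the null set $V^c$.

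The one thing to fix is the reference in Step~1. Shub's Theorem~III.7 is stated for $C^r$ \emph{local diffeomorphisms}, so it does not cover a singular $f'(x^\star)$. A bump-function modification cannot help, because it leaves the linear part $f'(x^\star)$ unchanged. Drop the citation. The cone argument you allude to closes the step directly, and it needs neither invertibility, nor the Lipschitz Jacobian, nor an actual invariant manifold.

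Set up the argument as follows.
\begin{itemize}
\item Let $A=f'(x^\star)$, and let $P_u,P_{cs}$ be the orthogonal projections onto $E_u,E_{cs}$.
\item Let $\mu>1$ be the smallest modulus among eigenvalues of $A$ exceeding $1$.
\item Take a convex ball $B\subseteq U$ around $x^\star$ on which $\|f'(y)-A\|_2\le\epsilon$, with $2\sqrt2\,\epsilon<\mu-1$.
\end{itemize}

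Now take $y,y'$ with $f^k(y),f^k(y')\in B$ for all $k\ge0$, and set $\Delta_k=f^k(y)-f^k(y')$. The mean-value inequality gives $\|\Delta_{k+1}-A\Delta_k\|\le\epsilon\|\Delta_k\|$. Suppose $\|P_u\Delta_k\|>\|P_{cs}\Delta_k\|$. Then $\|\Delta_k\|<\sqrt2\|P_u\Delta_k\|$, and so
\begin{itemize}
\item $\|P_u\Delta_{k+1}\|\ge(\mu-\sqrt2\epsilon)\|P_u\Delta_k\|$;
\item $\|P_{cs}\Delta_{k+1}\|<(1+\sqrt2\epsilon)\|P_u\Delta_k\|$.
\end{itemize}
Since $\mu-\sqrt2\epsilon>1+\sqrt2\epsilon\ge 1$, the cone is forward-invariant and $\|P_u\Delta_k\|$ grows geometrically. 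This contradicts the boundedness of $B$.

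Hence $\|P_u(y-y')\|\le\|P_{cs}(y-y')\|$ for any two points of the stay-set. The stay-set is therefore a $1$-Lipschitz graph over a subset of $E_{cs}$, and it is Lebesgue-null because $E_u\neq\{0\}$.

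Symmetry enters exactly where you say. It gives $\|Av\|\ge\mu\|v\|$ on $E_u$ and $\|A|_{E_{cs}}\|_2\le 1$ in the Euclidean norm, and it makes $P_u,P_{cs}$ commute with $A$ with norm $1$. Without symmetry you would need an adapted norm.
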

There, $f^k(x)$ for $k \in \mathbb{N}_0$ is inductively defined as $f^0 \equiv \mathrm{id}$ and $f^{k+1} = f \circ f^k$.

\begin{theorem}\label{thm:SpecGF_avoids_SSP}
    SpecGF with $\cT_\beta$ almost surely avoids strict saddle points. Specifically, if $r \leq \mathrm{rank}(\bY)$, then SpecGF with $\cT_\beta$ almost surely converges to the best rank-$r$ approximation of $\bY$ if convergence is guaranteed.
\end{theorem}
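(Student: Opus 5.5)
We reduce the flow statement to the discrete-map result of \Cref{prop:fixed_point_avoid}, applied to the time-$h$ flow map. Let $\Phi_h$ be the time-$h$ map of SpecGF with $\cT_\beta$, where $\bx = (\bA, \bB) \in \sR^{d}$ and $d = mr + rn$. The vector field is
\begin{equation*}
F(\bx) = -\big(\cT_\beta(\nabla_\bA\cL), \cT_\beta(\nabla_\bB\cL)\big).
\end{equation*}
By \Cref{prop:specgf_analytic}, $F$ is analytic and solutions exist globally and uniquely. Hence $\Phi_h$ is an analytic diffeomorphism of $\sR^d$, with inverse $\Phi_{-h}$, which is also globally defined because the speed is bounded. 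We therefore take $U = V = \sR^d$. The Jacobian $D\Phi_h$ is nondegenerate everywhere by Liouville's formula, and it is locally Lipschitz. Let $\cS$ be the set of strict saddle points of $\cL$. Each such point is an equilibrium of $F$, since $\cT_\beta(\bM) = 0$ iff $\bM = 0$, so $\Phi_h(\bx) = \bx$ on $\cS$. If a trajectory $\bx(t)$ converges to $\bar\bx \in \cS$, then $\Phi_h^k(\bx(0)) \to \bar\bx$, so the bad initial set lies inside $\{\bx : \lim_k \Phi_h^k(\bx) \in \cS\}$.

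At an equilibrium $\bar\bx$, $D\Phi_h(\bar\bx) = \exp(h J)$ with $J = DF(\bar\bx)$. To compute $J$, note that the gradient vanishes at $\bar\bx$, and $\cT_\beta(\bM) = (\bM\bM^\top + \beta\bI)^{-1/2}\bM$ has derivative $\bM \mapsto \beta^{-1/2}\bM$ at $\bM = 0$, because the factor $(\bM\bM^\top + \beta\bI)^{-1/2}$ equals $\beta^{-1/2}\bI + O(\norm{\bM}^2)$. By the chain rule,
\begin{equation*}
J = -\beta^{-1/2}\nabla^2\cL(\bar\bx),
\end{equation*}
which is symmetric. Consequently $D\Phi_h(\bar\bx) = \exp(-h\beta^{-1/2}\nabla^2\cL(\bar\bx))$ is symmetric too. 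At a strict saddle, $\nabla^2\cL(\bar\bx)$ has a negative eigenvalue $-\lambda$, so $D\Phi_h(\bar\bx)$ has the eigenvalue $e^{h\lambda/\sqrt\beta} > 1$. All hypotheses of \Cref{prop:fixed_point_avoid} then hold, and the set of initializations converging to $\cS$ has Lebesgue measure zero.

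For the second claim, suppose the trajectory converges (for instance when it stays bounded, by \Cref{prop:main_infdiv_conv}). Its limit $\bar\bx$ is an equilibrium: $\dot\bx(t) \to F(\bar\bx)$, and a nonzero limiting velocity is incompatible with convergence. So $\nabla\cL(\bar\bx) = 0$. By \Cref{prop:SSP_characterization} with $\bX = \bI$ and $r \le \mathrm{rank}(\bY)$, every critical point is either a global minimum, where $\bA\bB$ is the best rank-$r$ approximation of $\bY$, or a strict saddle. Excluding the measure-zero set gives the conclusion.

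We expect the main obstacle to be the linearization step, namely verifying that $\cT_\beta$ is differentiable at $0$ with derivative exactly $\beta^{-1/2}\,\mathrm{id}$, since symmetry of $D\Phi_h$ relies on this. We would check it directly from the expansion of the inverse square root given by \Cref{cor:T_analytic}. A secondary point is to confirm that \Cref{prop:fixed_point_avoid} needs symmetry only at points of $\cS$ and not globally; it does, so no further argument is required.
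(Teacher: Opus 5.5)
Your proposal is correct and follows the paper's proof essentially step for step: you use the time-$h$ flow map, get nondegeneracy of its Jacobian from Liouville's formula, use $D\cT_\beta(0)=\beta^{-1/2}\,\mathrm{id}$ to obtain the symmetric linearization $\exp(-h\beta^{-1/2}\nabla^2\cL(\bar\bx))$ with an eigenvalue exceeding $1$ at strict saddles, and pass from the flow to the iterates via \Cref{prop:fixed_point_avoid}. Your explicit argument that any limit must be an equilibrium, combined with \Cref{prop:SSP_characterization}, spells out the second claim, which the paper's proof leaves to the surrounding discussion.
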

\begin{proof}
    We first verify that the conditions of \Cref{prop:fixed_point_avoid} are met for SpecGF with $\cT_\beta$ and then extend the statement from the discrete to the continuous version.

    For simplicity, let $\bx(t) \triangleq (\bA(t), \bB(t))$ and $f_\beta (\bx) \triangleq \roundb{-\cT_\beta(\nabla_\bA \cL(\bx)), -\cT_\beta(\nabla_\bB \cL(\bx))}$. Denote the solution of
    \begin{equation*}
        \dot \bx(t) = f_\beta(\bx), \quad \bx(0) \triangleq \bx_0.
    \end{equation*}
    as $\phi(t)$; when the initial point matters, we use
    \begin{equation*}
        \varphi_t(\bx_0) = \phi(t).
    \end{equation*}
    We note that $\varphi_t(\bx_0)$, and also $\phi(t)$, satisfy the semigroup property:
    \begin{equation*}
        \varphi_{t'+t}(\bx_0) = \varphi_{t'}(\varphi_t(\bx_0)).
    \end{equation*}

    The Jacobian matrix of $\varphi_t(\bx_0)$, $D\varphi_t(\bx_0)$, satisfies the following with Jacobian matrix of $f_\beta$, $Df_\beta$:
    \begin{equation*}
        \frac{\mathrm{d}}{\mathrm{d}t} D\varphi_t(\bx_0) = Df_\beta\roundb{\varphi_t(\bx_0)} D\varphi_t(\bx_0), \quad D\varphi_0(\bx_0) = \bI.
    \end{equation*}
    Liouville's theorem yields:
    \begin{equation*}
        \abs*{D\varphi_t(\bx_0)} = \abs*{D\varphi_0(\bx_0)}\exp\roundb{\int_0^t \mathrm{tr} Df_\beta\roundb{\varphi_\tau(\bx_0)} \mathrm{d}\tau}.
    \end{equation*}
    Analyticity of $f_\beta$ makes $\abs*{D\varphi_t(\bx_0)}$ nonzero for all $\bx_0$ and all $t \geq 0$, hence the conditions meet with $V = U = \sR^d$. Now take $\cS$ as the set of strict saddle points. It is straightforward to see that $\cT_\beta$ is (Fr\'echet) differentiable at 0 with
    \begin{equation}\label{eq:DTb0}
        D\cT_\beta(0)[\bm\Delta] = \beta^{-1/2}\bm\Delta,
    \end{equation}
    for all matrix $\bm\Delta$. It suffices to prove that
    \begin{equation*}
        \lim_{h\to0} \frac{\normF{\cT_\beta(h\bm\Delta) - \cT_\beta(0) - h\beta^{-\frac{1}{2}}\bm\Delta}}{|h|} = 0,
    \end{equation*}
    or
    \begin{equation*}
        \lim_{h\to0} \normF{\frac{\cT_\beta(h\bm\Delta)}{h} - \beta^{-\frac{1}{2}}\bm\Delta} = 0.
    \end{equation*}
    Notice that, for $s(x) = x^{-\frac{1}{2}}$ on $[\beta, \infty)$ and $c > 0$,
    \begin{equation}\label{eq:isq_mvt}
        |s(x+c) - s(x)| = |s'(x+rc)|c \leq \frac{c}{2\beta^\frac{3}{2}}
    \end{equation}
    for some $r \in (0, 1)$. Applying \Cref{eq:isq_mvt} to the eigenvalues of $\bm\Delta^\top\bm\Delta$ yields
    \begin{align*}
        \normF{\frac{\cT_\beta(h\bm\Delta)}{h} - \beta^{-\frac{1}{2}}\bm\Delta} &= \normF{\bm\Delta\roundb{h^2\bm\Delta^\top\bm\Delta + \beta\bI}^{-\frac{1}{2}} - \beta^{-\frac{1}{2}}\bm\Delta}\\
        &\leq \normF{\bm\Delta}\norm{\roundb{h^2\bm\Delta^\top\bm\Delta + \beta\bI}^{-\frac{1}{2}} - \beta^{-\frac{1}{2}}\bI}_2\\
        &\leq \normF{\bm\Delta}\frac{h^2\norm{\bm\Delta}_2^2}{2\beta^\frac{3}{2}}.
    \end{align*}
    \Cref{eq:DTb0} directly yields that
    \begin{equation}\label{eq:Df_HessL}
        Df_\beta(\bar \bx) = -\beta^{-1/2} \nabla^2\cL(\bar \bx)
    \end{equation}
    and it is symmetric for all $\bar \bx \in \cS$ (in fact, this holds at any critical point). As $\bar\bx$ is an equilibrium point,
    \begin{equation*}
        \frac{\mathrm{d}}{\mathrm{d}t} D\varphi_t(\bar\bx) = Df_\beta\roundb{\varphi_t(\bar\bx)} D\varphi_t(\bar\bx) = Df_\beta\roundb{\bar\bx} D\varphi_t(\bar\bx).
    \end{equation*}
    Thus
    \begin{equation*}
        D\varphi_t(\bar\bx) = e^{tDf_\beta\roundb{\bar\bx}}.
    \end{equation*}
    Since $\bar\bx$ is a strict saddle, the Hessian of $\cL$ at $\bar\bx$ has a negative eigenvalue. By \Cref{eq:Df_HessL}, $Df_\beta\roundb{\bar\bx}$ has a positive eigenvalue, and hence $D\varphi_t(\bar\bx)$ has an eigenvalue whose (absolute) value is strictly greater than 1 at any $t \geq 0$.

    The remaining part, the extension from discrete to continuous, is simple. Fix any time $T > 0$. By the semigroup property,
    \begin{align*}
        \curlyb{\bx_0: \lim_{t\to\infty} \varphi_t(\bx_0) \in \cS} &= \bigcup_{\bar\bx \in \cS} \curlyb{\bx_0: \lim_{t\to\infty} \varphi_t(\bx_0) = \bar\bx}\\
        &\subseteq \bigcup_{\bar\bx \in \cS} \curlyb{\bx_0: \lim_{k\to\infty} \varphi_{kT}(\bx_0) = \bar\bx}\\
        &= \curlyb{\bx_0: \lim_{k\to\infty} (\varphi_{T})^k(\bx_0) \in \cS}.
    \end{align*}
    The proof is concluded since the subset of a Lebesgue measure zero set is again of Lebesgue measure zero.
\end{proof}

\subsection{Stability Analysis of SpecGF}

Recall that, from \Cref{sec:Tb},
\begin{align*}
    \cT_\beta(\bX) &= \bU\bS_\beta\bV^\top, \quad \bS_\beta \triangleq \Diag\curlyb{\frac{\sigma_i} {\sqrt{\sigma_i^2+\beta}}}_{i\in[r]}.
\end{align*}
Thus,
\begin{equation}
    \angleb{\bX, \cT_\beta(\bX)} \geq \sqrt{\beta} \normF{\cT_\beta(\bX)}^2 \quad \Rightarrow \quad \normF{\cT_\beta(\bX)} \leq \beta^{-\frac{1}{2}}\normF{\bX}.
\end{equation}

\begin{proposition}[\citep{bhat2010arc}, Theorem 5.2]\label{prop:Lyap_stab} Consider the system of differential equations
\begin{equation}
    \dot \bx(t) = f(\bx(t))
\end{equation}
with $f: \sR^d \to \sR^d$ that yields a unique $\cC^1$ solution for every initial condition $\bx(0)$. Let $\bx \in \sR^d$, and suppose there exists a continuous function $V: \cV \to \sR$ defined on an open neighborhood $\cV$ of $\bx$, and $c > 0$ such that
\begin{equation}
    c\dot V(\bz) + \norm{f(\bz)} \leq 0
\end{equation}
is satisfied for every $\bz \in \cV$. If $\bx$ is a local minimizer of V, then $\bx$ is a Lyapunov stable equilibrium.
\end{proposition}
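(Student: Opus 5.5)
The plan is to use $V$ as an arc-length bound: the hypothesis $c\dot V(\bz) + \norm{f(\bz)} \le 0$, with $\dot V$ read as the derivative of $V$ along solutions, says that the speed $\norm{f}$ is controlled by the decrease of $V$. Near a local minimizer of $V$, where $V$ can only drop by a small amount, a trajectory therefore has small total length. This is the same first-exit argument used in the proof of \Cref{prop:infdiv_conv}, with $V$ playing the role of $G^{1-b}$.

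\textbf{Step 1 ($\bx$ is an equilibrium).} Pick a ball $B(\bx,\rho)\subset\cV$ on which $V(\bz)\ge V(\bx)$. Let $\phi(t)$ be the solution with $\phi(0)=\bx$. For small $t>0$ the solution stays in $B(\bx,\rho)$, and integrating the hypothesis gives
\[
\int_0^t \norm{f(\phi(s))}\,\rd s \;\le\; c\bigl(V(\bx)-V(\phi(t))\bigr) \;\le\; 0 .
\]
So $\phi$ is constant on $[0,t]$, and hence $f(\bx)=\dot\phi(0)=0$.

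\textbf{Step 2 (stability).} Fix $\epsilon>0$ with $\epsilon\le\rho$. By continuity of $V$ at $\bx$, choose $\delta<\epsilon/2$ such that $V(\bz)-V(\bx)<\epsilon/(2c)$ for all $\bz\in B(\bx,\delta)$. Take an initial point $\bz\in B(\bx,\delta)$ with solution $\phi(t)$, and suppose it leaves $B(\bx,\epsilon)$. Let $t_2$ be the first exit time. On $[0,t_2]$ the trajectory lies in the closure of $B(\bx,\epsilon)$, which is contained in $\cV$ after shrinking $\epsilon$ slightly if needed, and there $V\ge V(\bx)$. Integrating the differential inequality as in Step 1 gives
\[
\norm{\phi(t_2)-\bz}\;\le\;\int_0^{t_2}\norm{f(\phi(s))}\,\rd s\;\le\; c\bigl(V(\bz)-V(\phi(t_2))\bigr)\;\le\; c\bigl(V(\bz)-V(\bx)\bigr)\;<\;\frac{\epsilon}{2}.
\]
Combined with $\norm{\bz-\bx}<\delta<\epsilon/2$, this yields $\norm{\phi(t_2)-\bx}<\epsilon$, contradicting the definition of $t_2$. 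Hence $\phi(t)\in B(\bx,\epsilon)$ for all $t\ge0$. Since every neighborhood $U_\epsilon$ contains such a ball, this is exactly \Cref{def:main_Lyap_stab}.

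\textbf{Main obstacle.} The delicate point is justifying the integration $\int\norm{f(\phi)}\le c\,(V(\bz)-V(\phi(t)))$ when $V$ is only continuous. Then $\dot V$ has to be interpreted as the derivative of $V\circ\phi$ along the $\cC^1$ solution, as Bhat--Bernstein do with an upper Dini derivative. The fact to invoke is that a continuous function whose upper Dini derivative is bounded above by an integrable function $-g$ satisfies $h(t)-h(0)\le-\int_0^t g$. I would apply this to $h(t)=cV(\phi(t))+\int_0^t\norm{f(\phi(s))}\,\rd s$, which is continuous with upper Dini derivative at most $0$ and is therefore nonincreasing. In our application $V=\cL$ is analytic, so this subtlety disappears.
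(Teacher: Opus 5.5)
Your proof is correct and is essentially the arc-length, first-exit argument of Bhat and Bernstein's Theorem 5.2, which the paper cites rather than reproves. The paper's later remark that this proof needs $\normF{\bz-\bx_*}\le\rho/4$ and $|V(\bz)-V(\bx_*)|\le\rho/(4c)$ matches your choice of $\delta$, with $\rho/4$ in place of your $\epsilon/2$.

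One slip in your closing remark: the paper applies the proposition with $V=(\cL-\cL_*)^{1-b}$, not $V=\cL$. That $V$ is not differentiable on $\{\cL=\cL_*\}$, so the subtlety does not disappear. Reading $\dot V$ as a derivative along solutions, as you do, is what gives the hypothesis meaning at those points.
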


We now prove that every (global) minimum of the loss is Lyapunov stable.
\begin{theorem}[Lyapunov stability of minima]\label{thm:SpecGF_Tb_stable}
    Every global minimum of $\cL(\bA, \bB)$ is Lyapunov stable for SpecGF with $\cT_\beta$.
\end{theorem}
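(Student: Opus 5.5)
We apply \Cref{prop:Lyap_stab} with the loss itself as the Lyapunov function: $V \coloneqq \cL$ and $f \coloneqq f_\beta = \roundb{-\cT_\beta(\nabla_\bA\cL), -\cT_\beta(\nabla_\bB\cL)}$. A global minimum $\bar\bx = (\bar\bA,\bar\bB)$ of $\cL$ is in particular a local minimizer of $V$. It is also an equilibrium, since $\nabla\cL(\bar\bx)=0$ and $\cT_\beta(0)=0$. The flow is analytic, so it has unique $\cC^1$ solutions for every initial condition (\Cref{prop:specgf_analytic}). It therefore remains to find an open neighborhood $\cV$ of $\bar\bx$ and a constant $c>0$ with
\begin{equation*}
c\,\dot V(\bz) + \normF{f_\beta(\bz)} \le 0 \quad \text{for all } \bz\in\cV,
\end{equation*}
where $\dot V(\bz) = \angleb{\nabla \cL(\bz), f_\beta(\bz)}$.

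To get this inequality, I use the singular value form computed in \Cref{lem:SpecGF_Tb_dec}. Let $\sigma_i$ range over the singular values of $\nabla_\bA\cL(\bz)$ and $\nabla_\bB\cL(\bz)$ together. Then
\begin{equation*}
-\dot V(\bz) = \sum_i \frac{\sigma_i^2}{\sqrt{\sigma_i^2+\beta}}, \qquad \normF{f_\beta(\bz)}^2 = \sum_i \frac{\sigma_i^2}{\sigma_i^2+\beta}.
\end{equation*}
Write $u_i \coloneqq \sigma_i/\sqrt{\sigma_i^2+\beta}\in[0,1)$, so that $-\dot V = \sum_i \sigma_i u_i$ and $\normF{f_\beta}^2=\sum_i u_i^2$. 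The elementary bound is $\sigma_i \ge \sqrt\beta\, u_i$, which gives $-\dot V \ge \sqrt\beta \sum_i u_i^2 = \sqrt\beta\,\normF{f_\beta}^2$. This is the inequality $\angleb{\bX,\cT_\beta(\bX)}\ge\sqrt\beta\normF{\cT_\beta(\bX)}^2$ recorded just above the theorem.

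This bound alone is quadratic in $\normF{f_\beta}$, while \Cref{prop:Lyap_stab} needs a linear one. This is the main obstacle: near $\bar\bx$ the vector field is small, so $\normF{f_\beta}^2 \ll \normF{f_\beta}$. I would overcome it with a different choice of $V$, namely $V \coloneqq \sqrt{\cL - \cL(\bar\bx)}$ or more generally $(\cL-\cL(\bar\bx))^{1-b}$. This $V$ is continuous, and $\bar\bx$ is still a local minimizer. On the set where $\cL>\cL(\bar\bx)$,
\begin{equation*}
\dot V = (1-b)(\cL-\cL(\bar\bx))^{-b}\dot\cL .
\end{equation*}
The \L{}ojasiewicz inequality (\Cref{lem:Loja_inequ}) at $\bar\bx$ gives $(\cL-\cL(\bar\bx))^{b}\le C\normF{\nabla\cL}$. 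Combined with the bound $-\dot\cL\ge \frac{1}{2r}\normF{\nabla\cL}\normF{f_\beta}$ from \Cref{prop:main_infdiv_conv}, this yields
\begin{equation*}
-\dot V \ge \frac{1-b}{2rC}\,\normF{f_\beta}.
\end{equation*}
So the required inequality holds with $c = 2rC/(1-b)$. At points where $\cL=\cL(\bar\bx)$, the point $\bz$ is itself a global minimum, hence critical, so $f_\beta(\bz)=0$ and the inequality is trivial there. I would interpret $\dot V$ as an upper Dini derivative at such points, checking that this is what \citet{bhat2010arc} allow, or else argue directly along trajectories.

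The last step is to shrink $\cV$ to the \L{}ojasiewicz neighborhood $W$ and invoke \Cref{prop:Lyap_stab}, which gives Lyapunov stability. As a fallback, one can bypass \Cref{prop:Lyap_stab} and replay the trapping argument from the proof of \Cref{prop:infdiv_conv}. There, $\int\normF{\dot\bx}\,\rd t \le \frac{C}{c(1-b)}G(t_1)^{1-b}$. Since $\cL$ is continuous, starting close enough to $\bar\bx$ makes both the initial distance and $G(0)^{1-b}$ small, so the trajectory stays in the ball $B(r)$, which is exactly Lyapunov stability.
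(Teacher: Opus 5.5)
Your proposal is correct and follows essentially the same route as the paper, which also takes $V=(\cL-\cL_*)^{1-b}$ on the \L{}ojasiewicz neighborhood $W$ and invokes \Cref{prop:Lyap_stab}. The only difference is the bound on $-\dot\cL$: the paper uses a gradient bound $C_W$ on $W$ to get $-\dot\cL\ge \normF{\nabla\cL}^2/\sqrt{C_W^2+\beta}$ and then $\normF{f_\beta}\le\beta^{-1/2}\normF{\nabla\cL}$, whereas you reuse the Chebyshev-type bound $-\dot\cL\ge\frac{1}{2r}\normF{\nabla\cL}\normF{f_\beta}$, which gives a $\beta$-independent constant; your treatment of points with $\cL=\cL_*$ (where $f_\beta=0$ and the Dini derivative of $V$ vanishes) also covers a detail the paper's proof leaves implicit.
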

\begin{proof}
    Recall \Cref{lem:Loja_inequ}: at each global minima $\bx_*$ of $\cL$, there exist a neighborhood $W$ of $\bx_*$, constants $C > 0$ and $b \in [\frac12, 1)$ such that for all $\bx \in W$,
    \begin{equation*}
        |\cL(\bx) - \cL(\bx_*)|^b \leq C\|\nabla \cL(\bx)\|.
    \end{equation*}
    Note that $\cL(\bx_*) \triangleq \cL_*$ for any global minima $\bx_*$. Without loss of generalization, take $W$ as an open ball centered at $\bx_*$ and of (finite) positive radius. On $W$, define
    \begin{equation}
        V(\bx) \triangleq \roundb{\cL(\bx) - \cL_*}^{1-b}.
    \end{equation}
    Then $V$ is continuous, nonnegative, minimized at $\bx_*$ and
    \begin{equation*}
        \dot V(\bx) = (1-b)\roundb{\cL(\bx) - \cL_*}^{-b}\dot \cL(\bx).
    \end{equation*}
    On $W$, $\nabla_\bA \cL$ and $\nabla_\bB \cL$ are both bounded; take the bound as $C_W$ so that
    \begin{equation*}
        \normF{\nabla_\bA \cL} \leq C_W, \quad \normF{\nabla_\bB \cL} \leq C_W.
    \end{equation*}
    Observe
    \begin{align*}
        -\dot \cL &= \angleb{\nabla_\bA \cL, \cT_\beta(\nabla_\bA \cL)} + \angleb{\nabla_\bB \cL, \cT_\beta(\nabla_\bB \cL)}\\
        &= \sum_i \frac{\sigma_i^2(\nabla_\bA \cL)}{\sqrt{\sigma_i^2(\nabla_\bA \cL)+\beta}} + \sum_j \frac{\sigma_j^2(\nabla_\bB \cL)}{\sqrt{\sigma_j^2(\nabla_\bB \cL)+\beta}}\\
        &\geq \sum_i \frac{\sigma_i^2(\nabla_\bA \cL)}{\sqrt{C_W^2 + \beta}} + \sum_j \frac{\sigma_j^2(\nabla_\bB \cL)}{\sqrt{C_W^2 + \beta}}\\
        &= \frac{1}{\sqrt{C_W^2 + \beta}}\normF{\nabla \cL}^2.
    \end{align*}
    In addition, for the same $f_\beta$ in the proof of \Cref{thm:SpecGF_avoids_SSP},
    \begin{equation*}
        \normF{f_\beta(\bx)} = \sqrt{\normF{\cT_\beta(\nabla_\bA \cL(\bx))}^2 + \normF{\cT_\beta(\nabla_\bB \cL(\bx))}^2} \leq \sqrt{\beta^{-1} \roundb{\normF{\nabla_\bA \cL(\bx)}^2 + \normF{\nabla_\bB \cL(\bx)}^2}} = \beta^{-\frac{1}{2}}\normF{\nabla \cL(\bx)}.
    \end{equation*}
    Hence, on $W$,
    \begin{align*}
        -\dot V(\bx) &= -(1-b)\roundb{\cL(\bx) - \cL_*}^{-b}\dot \cL(\bx)\\
        &\geq (1-b)\roundb{\cL(\bx) - \cL_*}^{-b} \frac{1}{\sqrt{C_W^2 + \beta}}\normF{\nabla \cL(\bx)}^2\\
        &\geq \frac{1-b}{C\sqrt{C_W^2 + \beta}}\normF{\nabla \cL(\bx)}\\
        &\geq \frac{1-b}{C\sqrt{C_W^2 + \beta}}\beta^{\frac{1}{2}}\normF{f_\beta(\bx)}.
    \end{align*}
    $c = \frac{C\sqrt{C_W^2 + \beta}}{(1-b)\sqrt{\beta}}$ gives the desired inequality.
\end{proof}
Therefore, the a.s. local convergence of SpecGF with $\cT_\beta$ towards global minima is guaranteed.

\paragraph{Conjecture: The basin of attraction is large.}For now, let $\rho_W$ be the radius of $W$. Fix $\epsilon > 0$ and take $\rho = \min\{\rho_W, \epsilon\} > 0$. We will find $\delta > 0$ for the Lyapunov stability of each minimizer $\bx_* = (\bA_*, \bB_*)$.\newline Let $\bz = (\bA, \bB)$ and assume $\normF{\bz - \bx_*}\leq \delta$. We have
\begin{equation*}
    \normF{\Delta \bA} \triangleq \normF{\bA - \bA_*} \leq \delta, \quad \normF{\Delta \bB} \triangleq \normF{\bB - \bB_*} \leq \delta.
\end{equation*}
Then, for some constants $c' > 0$ (may depend on $\bx_*$)
\begin{align*}
    \normF{\bA\bB - \bY} &= \normF{\Delta \bA\bB_* + \bA_*\Delta\bB + \Delta \bA\Delta \bB}\\
    &\leq \normF{\Delta \bA\bB_*} + \normF{\bA_*\Delta\bB} + \normF{\Delta \bA\Delta \bB}\\
    &\leq \delta\roundb{\norm{\bA_*}_2 + \norm{\bB_*}_2} + \delta^2\\
    &\leq c' \max\{\delta, \delta^2\}.
\end{align*}
(e.g., take $c' = \norm{\bA_*}_2 + \norm{\bB_*}_2 + 1$.) Hence,
\begin{equation*}
    \cL(\bz) = \frac{1}{2}\normF{\bA\bB - \bY}^2 \leq \frac{1}{2} c'^2 \max\{\delta^2, \delta^4\} \quad \Rightarrow \quad V(\bz) \leq \roundb{\frac{1}{2} c'^2 \max\{\delta^2, \delta^4\} - \cL_*}^{1-b}.
\end{equation*}
\newline In the proof of \citet[Theorem 5.2]{bhat2010arc}, two conditions are required:
\begin{equation*}
    \normF{\bz - \bx_*}\leq \frac{\rho}{4}, \quad |V(\bz) - V(\bx_*)| \leq \frac{\rho}{4c},
\end{equation*}
where $c = \frac{C\sqrt{C_W^2 + \beta}}{(1-b)\sqrt{\beta}}$. It suffices that
\begin{equation*}
    \delta \leq \frac{\rho}{4}, \quad \roundb{\frac{1}{2} c'^2 \max\{\delta^2, \delta^4\} - \cL_*}^{1-b} \leq \frac{\rho}{4c},
\end{equation*}
which translates to
\begin{equation}
    \delta(\epsilon, \bx_*) = \min\curlyb{\frac{\rho}{4}, \sqrt{\frac{2}{c'^2}\roundb{\roundb{\frac{\rho}{4c}}^{\frac{1}{1-b}} + \cL_*}}, \sqrt[4]{\frac{2}{c'^2}\roundb{\roundb{\frac{\rho}{4c}}^{\frac{1}{1-b}} + \cL_*}}}, \quad \rho = \min\{\rho_W, \epsilon\}.
\end{equation}
We conjecture that $\rho_W$ is sufficiently large for each and every minimizer $\bx_*$. If then, $\delta$ increases as $\epsilon$ increases, and there is a possibility that initial points near (but not exactly) the origin fall in some basin of attraction. We empirically checked that the basin of attraction is sufficiently large.

\begin{figure}[H]
    \centering
    \includegraphics[width=0.48\linewidth]{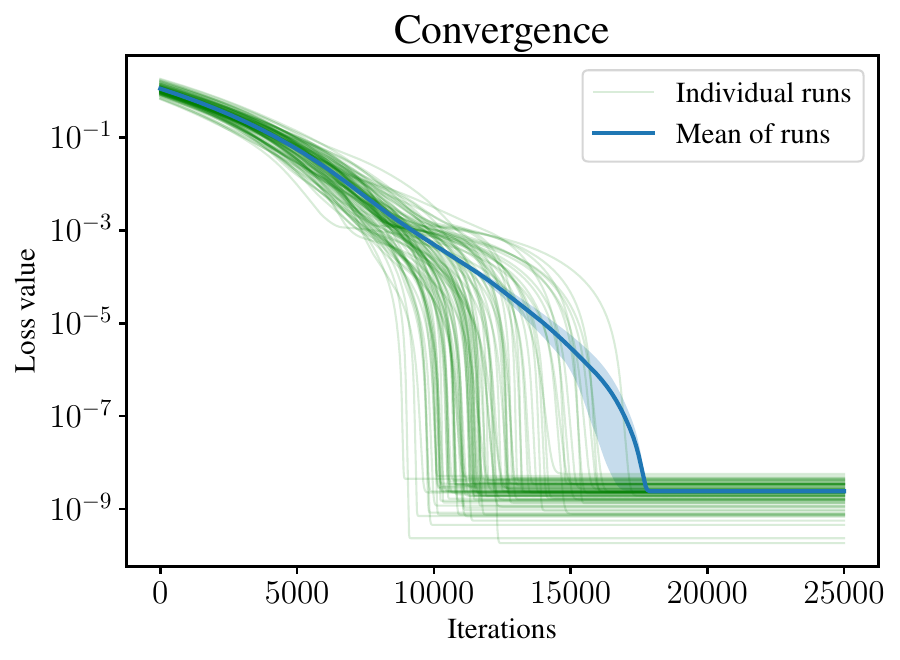}
    \includegraphics[width=0.48\linewidth]{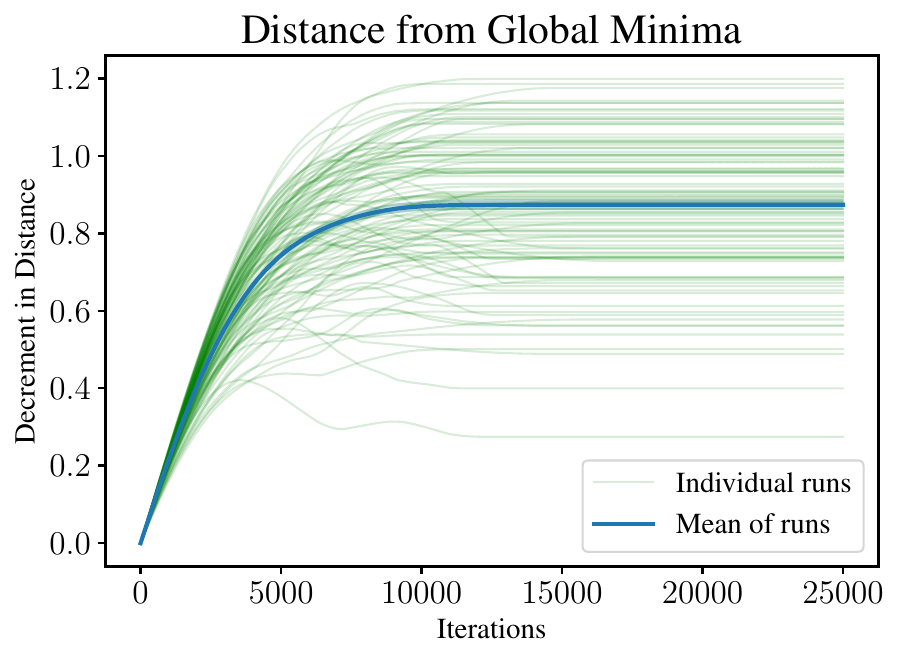}
    \caption{Lyapunov stability of global minima. \textbf{Left}: Loss trajectory. \textbf{Right}: Decrement of the distance from global minima. The attraction of the basin of some global minima is sufficiently large that, empirically, the origin lies in there.}
    \label{fig:conj_origin_attracted}
\end{figure}

We set $\bY \in \sR^{9\times 9}$ with $r = 4$ and $\bm\Sigma = \Diag(1, 0.5, 0.2, 0.05)$, $\bA \in \sR^{9\times 4}$, and $\bB \in \sR^{4\times 9}$. We first sample 20 random initial points $\bB_i(0)$ with i.i.d. Gaussian entries of scale $\gamma = 5\times10^{-4}$; $\bA_i(0) = \bm 0$. We use a fixed learning rate of $\eta = 10^{-4}$. We ran SpecGD with $\cT_\beta$ starting from $(\bA_i(0), \bB_i(0))$ until convergence. For all $i \in [20]$, SpecGD converged to global minima; see the left plot of \Cref{fig:conj_conv_i}. Name those global minima as $(\bA_i^*, \bB_i^*)$. For each $i$, we sample 50 random directions $(\Delta\bA_{ij}, \Delta\bB_{ij})$ from i.i.d. Gaussian entries, with magnitude equal to the Euclidean distance between $(\bA_i(0), \bB_i(0))$ and $(\bA_i^*, \bB_i^*)$. Denote the initial points as $(\bA_{ij}(0), \bB_{ij}(0)) = (\bA_i^*, \bB_i^*) + (\Delta\bA_{ij}, \Delta\bB_{ij})$. We ran $20\times50=1000$ SpecGD experiments and recorded (i) loss trajectory $\cL(t)$ and (ii) decrement of the distance from global minima, $\mathrm{dist}((\bA_{ij}(0), \bB_{ij}(0)), (\bA_i^*, \bB_i^*)) - \mathrm{dist}((\bA_{ij}(t), \bB_{ij}(t)), (\bA_i^*, \bB_i^*))$; $\mathrm{dist}(\bx, \by)$ stands for the Euclidean distance between $\bx$ and $\by$. The recorded values are plotted in \Cref{fig:conj_origin_attracted}.

\begin{figure}
    \centering
    \includegraphics[width=0.48\linewidth]{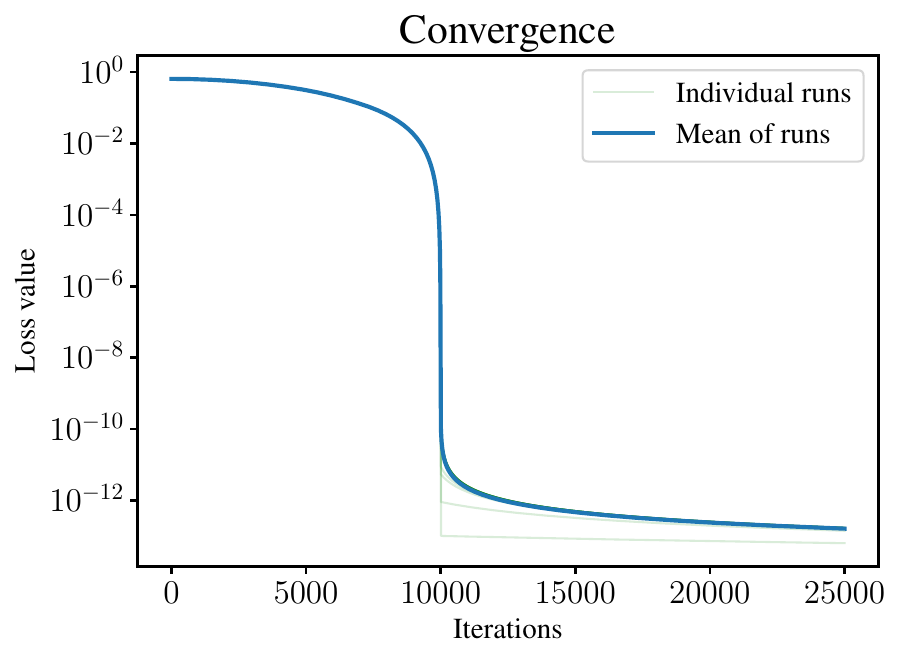}
    \caption{Loss trajectories started at $(\bA_i(0), \bB_i(0))$.}
    \label{fig:conj_conv_i}
\end{figure}

We see that 1) all SpecGF simulation initialized at $(\bA_{ij}(0), \bB_{ij}(0))$ converged to the global minima and 2) the decrement is positive, supporting the conjecture that the near-origin points lies in a basin of attraction of some global minima.

\subsection{Convergence Rate of SpecGF}\label{sec:conv_rate_specgf}
Here, we study the convergence rate of SpecGF with $\cT_\beta$, assuming the convergence itself. This analysis closely follows that of \Cref{prop:infdiv_conv}; SpecGF iterate will get close to some critical point of $\cL$ after a finite time, where the \L{}ojasiewicz gradient inequality holds. On the vicinity of that critical point, the rate of convergence is explicitly derived.

Let $(\bA(t), \bB(t)) \to (\bar\bA, \bar\bB)$. Then $\cT_\beta(\nabla_\bA\cL(t)), \cT_\beta(\nabla_\bA\cL(t)) \to 0$, thus $\nabla_\bA\cL(t), \nabla_\bB\cL(t) \to 0$. This implies that there exists some time $t' > 0$ such that for all $t \geq t'$,
\begin{equation*}
    \norm{\nabla_\bA \cL(t)}_2 \leq \sqrt{\beta}, \quad \norm{\nabla_\bB \cL(t)}_2 \leq \sqrt{\beta}.
\end{equation*}
In addition, let $W$ be the neighborhood of $(\bar\bA, \bar\bB)$ where the \L{}ojasiewicz gradient inequality hold: there also exist constants $C > 0$ and $b \in [\frac12, 1)$ such that, for all $(\bA, \bB) \in W$,
\begin{equation*}
    |\cL(\bA, \bB) - \cL(\bar\bA, \bar\bB)|^b \leq C\|\nabla \cL(\bA, \bB)\|.
\end{equation*}
We know that there exists some time $t'' > 0$ such that for all $t \geq t''$, $(\bA(t), \bB(t)) \in W$. Take $T \triangleq \max\{t', t''\} > 0$. We briefly state the helper lemma for deriving the convergence rate.
\begin{lemma}
    For all matrix $\bM$ such that $\norm{\bM}_2 \leq \sqrt{\beta}$,
    \begin{equation*}
        \angleb{\bM, \cT_\beta(\bM)} \geq \frac{1}{\sqrt{2\beta}}\normF{\bM}^2.
    \end{equation*}
\end{lemma}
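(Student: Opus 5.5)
The plan is to diagonalize through the SVD of $\bM$ and reduce the claim to a scalar inequality for each singular value. Let $\bM = \bU'\bS'\bV'^\top$ with singular values $\sigma_1,\ldots,\sigma_k \ge 0$. As computed in the proof of \Cref{lem:SpecGF_Tb_dec},
\begin{equation*}
    \angleb{\bM, \cT_\beta(\bM)} = \sum_{i} \frac{\sigma_i^2}{\sqrt{\sigma_i^2+\beta}}, \qquad \normF{\bM}^2 = \sum_i \sigma_i^2 .
\end{equation*}
It therefore suffices to show, term by term,
\begin{equation*}
    \frac{\sigma_i^2}{\sqrt{\sigma_i^2+\beta}} \ge \frac{\sigma_i^2}{\sqrt{2\beta}} ,
\end{equation*}
and then sum over $i$.

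The hypothesis $\norm{\bM}_2 \le \sqrt{\beta}$ gives $\sigma_i \le \sigma_1 \le \sqrt{\beta}$ for every $i$. Hence $\sigma_i^2 + \beta \le 2\beta$, so $\sqrt{\sigma_i^2+\beta} \le \sqrt{2\beta}$. This yields the termwise inequality. Indices with $\sigma_i = 0$ contribute zero to both sides.

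There is no real obstacle here. The only point to be careful about is using the exact singular-value form of $\cT_\beta$, namely $\cT_\beta(\bM) = \bU'\bS'_\beta\bV'^\top$ from \Cref{sec:Tb}. Zero singular values are harmless because they add nothing to either side. Summing the termwise bounds gives $\angleb{\bM, \cT_\beta(\bM)} \ge \frac{1}{\sqrt{2\beta}}\normF{\bM}^2$.
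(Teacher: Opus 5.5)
Your proof is correct and essentially the same as the paper's. Both use the SVD identity $\angleb{\bM,\cT_\beta(\bM)} = \sum_i \sigma_i^2/\sqrt{\sigma_i^2+\beta}$ and then bound each denominator by $\sqrt{2\beta}$ using $\sigma_i \le \norm{\bM}_2 \le \sqrt{\beta}$.
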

\begin{proof}
    \begin{equation*}
        \angleb{\bM, \cT_\beta(\bM)} = \sum_i \frac{\sigma_i(\bM)^2}{\sqrt{\sigma_i(\bM)^2 + \beta}} \geq \sum_i \frac{\sigma_i(\bM)^2}{\sqrt{2\beta}} = \frac{1}{\sqrt{2\beta}}\normF{\bM}^2.
    \end{equation*}
\end{proof}

Observe that, for $t \geq T$,
\begin{align*}
    \frac{d}{\mathrm{d}t} \cL(t) &= \angleb{\nabla_\bA \cL(t), -\cT_\beta\roundb{\nabla_\bA \cL(t)}} + \angleb{\nabla_\bB \cL(t), -\cT_\beta\roundb{\nabla_\bB \cL(t)}}\\
    &\leq -\frac{1}{\sqrt{2\beta}}\roundb{\normF{\nabla_\bA \cL(t)}^2 + \normF{\nabla_\bB \cL(t)}^2} = -\frac{1}{\sqrt{2\beta}}\normF{\nabla \cL(t)}^2\\
    &\leq -\frac{1}{C^2\sqrt{2\beta}}(\cL(t) - \cL(\bar\bA, \bar\bB))^{2b}.
\end{align*}
Let $\cE(\bA, \bB) \triangleq \cL(\bA, \bB) - \cL(\bar\bA, \bar\bB) \geq 0$; we use simplifed notation $\cE(t) \triangleq \cL(t) - \cL(\bar\bA, \bar\bB) \geq 0$. Then for $t \geq T$, $\dot\cE(t) \leq -C(\beta) \cE(t)^{2b}$, where $C(\beta) = \frac{1}{C^2\sqrt{2\beta}}$. The convergence rate of $\cE$ differs by $b$. In our case, we established the almost sure convergence towards global minima (under assuming convergence), hence assume $(\bar\bA, \bar\bB)$ is the global minimum. Then, $\cE$ is the Morse-Bott function, and thus the (optimal) exponent $b$ is exactly $\frac{1}{2}$~\citep{feehan2020morse}.
Then,
\begin{equation*}
    \dot\cE(t) \leq -C(\beta) \cE(t),
\end{equation*}
or
\begin{equation*}
    \cE(t) \leq \cE(T)e^{-C(\beta)(t-T)}.
\end{equation*}
Therefore, $\cE(t)$ exponentially decreases to 0. It remains to show that $\cE(\bA, \bB)$ is the Morse-Bott function.
\begin{assumption}
    The nonzero singular values of $\bY$ are mutually distinct.
\end{assumption}

Recall that $\bar\bA\bar\bB$ is the best rank-$r$ approximation of $\bY$~\citep{bah2022learning}. We denote $\bar\bY_r \triangleq \bar\bA\bar\bB$ for simplicity. Furthermore, denote
\begin{equation*}
    \bar\bY_r = \bU_r\bm\Sigma_r\bV_r^\top, \quad \bY - \bar\bY_r \triangleq \bU_\perp\bm\Sigma_\perp\bV_\perp^\top.
\end{equation*}
$[\bU_r | \bU_\perp]$ forms a $m\times m$ orthogonal matrix; similarly, $[\bV_r | \bV_\perp]$ forms a $n\times n$ orthogonal matrix. Note that zero singular values may appear in $\bm\Sigma_\perp$, but never appear in $\bm\Sigma_r$.

The solution set
\begin{equation*}
    S \triangleq \curlyb{\bM\in\sR^{m\times r}, \bN\in\sR^{r\times n}: \bM\bN = \bar\bY_r}
\end{equation*}
is easily identified with $(\bar\bA, \bar\bB)$:
\begin{equation*}
    S = \curlyb{(\bar\bA\bQ, \bQ^{-1}\bar\bB): \bQ \in \mathsf{GL}(r)}.
\end{equation*}
Indeed, for $(\bM, \bN) \in S$, if the left inverse of $\bar\bA$ is $\bL_{\bar\bA}$, then $\bQ = \bL_{\bar\bA}\bM$ gives $\bar\bA\bQ = \bar\bA\bL_{\bar\bA}\bM = \bM$ and $\bQ\bN = \bL_{\bar\bA}\bar\bY_r = \bar\bB$; definitely $\mathrm{rank}(\bQ) = r$ from $\bar\bA\bQ = \bM$.

The mapping
\begin{equation*}
    \Phi:\mathsf{GL}(r) \to \sR^{m\times r}\times \sR^{r\times n}, \quad  \Phi(\bQ) = (\bar\bA\bQ, \bQ^{-1}\bar\bB)
\end{equation*}
is real-analytic and satisfies
\begin{equation*}
    \Phi^{-1}(\bM, \bN) = \bL_{\bar\bA}\bM
\end{equation*}
for a left inverse $\bL_{\bar\bA}$ of $\bar\bA$. Thus $\Phi$ is a real-analytic diffeomorphism from an open subset (hence an analytic submanifold) $\mathsf{GL}(r)$ to $\Phi(\mathsf{GL}(r)) = S$, and $S$ is an analytic submanifold. As $\mathsf{GL}(r)$ has two connected components---matrices with positive and negative determinants---so does $S$. This issue is resolved by shrinking the domain of $S$ to an open neighborhood of $(\bar\bA, \bar\bB)$; then $S$ is connected. It remains to show that $T_{(\bar\bA, \bar\bB)} S = \mathrm{ker}\nabla^2\cE(\bar\bA, \bar\bB)$~\citep[Definition 1.5]{feehan2020morse}; Other conditions for being a Morse-Bott function are automatically satisfied~\citep[Remark 1.6]{feehan2020morse}; $\nabla^2 \cE: (\sR^{m\times r}\times \sR^{r\times n}) \to (\sR^{m\times r}\times \sR^{r\times n})$ is a linear map, thus a Fredholm operator index 0, hence the range condition is automatically satisfied.

Let $\bR(\bA, \bB) \triangleq \bA\bB - \bar\bY_r$ be the residual matrix. The differential of $\bR$ at $(\bar\bA, \bar\bB)$ is
\begin{equation*}
    D\bR(\bar\bA, \bar\bB)[\Delta\bA, \Delta\bB] = \bar\bA\Delta\bB + \Delta\bA\bar\bB.
\end{equation*}
It is straightforward to see that
\begin{equation*}
    \mathrm{ker} D\bR(\bar\bA, \bar\bB) = \curlyb{(\bar\bA \bK, -\bK \bar\bB): \bK \in \sR^{r\times r}}.
\end{equation*}
Indeed, if $\bar\bA\Delta\bB + \Delta\bA\bar\bB = \bm 0$, the right inverse of $\bar\bB$, $\bR_{\bar\bB}$, gives $\bar\bA\Delta\bB\bR_{\bar\bB} + \Delta\bA = \bm 0$, which implies $\Delta\bA = -\bar\bA\Delta\bB\bR_{\bar\bB}$ and $\bar\bA(\Delta\bB - \Delta\bB\bR_{\bar\bB}\bar\bB) = \bm 0 \Rightarrow \Delta\bB = \Delta\bB\bR_{\bar\bB}\bar\bB$.

$T_{(\bar\bA, \bar\bB)} S \subset \mathrm{ker} D\bR(\bar\bA, \bar\bB)$ is trivial; we show the opposite inclusion. For $(\Delta\bA, \Delta\bB) \in \mathrm{ker} D\bR(\bar\bA, \bar\bB)$, we have $(\Delta\bA, \Delta\bB) = (\bar\bA\bK, -\bK\bar\bB)$ for some $\bK$. Define a smooth curve
\begin{equation*}
    \gamma(t) \triangleq (\bar\bA(\bI_r + t\bK), (\bI_r + t\bK)^{-1}\bar\bB).
\end{equation*}
For all $t$ sufficiently close to 0, $\bI_r + t\bK$ is invertible and $\gamma(t) \in S$. Moreover,
\begin{equation*}
    \gamma'(0) = (\bar\bA\bK, -\bK\bar\bB).
\end{equation*}
Hence, every $(\Delta\bA, \Delta\bB) \in \mathrm{ker} D\bR(\bar\bA, \bar\bB)$ is realized as the velocity of a curve in $S$, or $(\Delta\bA, \Delta\bB) \in T_{(\bar\bA, \bar\bB)} S$. We obtain
\begin{equation*}
    T_{(\bar\bA, \bar\bB)} S = \mathrm{ker} D\bR(\bar\bA, \bar\bB).
\end{equation*}
 
The Hessian of $\cE$ at $(\bar\bA, \bar\bB)$ is computed:
\begin{align*}
    &\nabla^2\cE(\bar\bA, \bar\bB)[(\Delta\bA_0, \Delta\bB_0), (\Delta\bA_1, \Delta\bB_1)]\\
    &= \angleb{\Delta\bA_0\bar\bB + \bar\bA\Delta\bB_0, \Delta\bA_1\bar\bB + \bar\bA\Delta\bB_1} + \angleb{\bar\bA\bar\bB - \bY, \Delta\bA_0\Delta\bB_1+\Delta\bA_1\Delta\bB_0}.
\end{align*}
Without loss of generality, take $\bar\bA = \bU_r\bm\Sigma_r^{1/2}$ and $\bar\bB = \bm\Sigma_r^{1/2}\bV_r^\top$. Note that any other $(\hat\bA, \hat\bB) \in S$ must satisfy
\begin{equation*}
    (\hat\bA, \hat\bB) = (\bar\bA\bQ, \bQ^{-1}\bar\bB) \triangleq \phi_\bQ(\bar\bA, \bar\bB)
\end{equation*}
for $\bQ \in \mathsf{GL}(r)$. From $\bar\bA\Delta\bB + \Delta\bA\bar\bB = \bm 0$, we can check that
\begin{equation*}
    (\Delta\bA, \Delta\bB) \in T_{(\bar\bA, \bar\bB)} S \quad \Leftrightarrow \quad \phi_\bQ(\Delta\bA, \Delta\bB) \in T_{\phi_\bQ(\bar\bA, \bar\bB)} S.
\end{equation*}
This implies
\begin{equation*}
    \nabla^2\cE(\bar\bA, \bar\bB)[(\Delta\bA_0, \Delta\bB_0), (\Delta\bA_1, \Delta\bB_1)] = \nabla^2\cE(\phi_\bQ(\bar\bA, \bar\bB))[\phi_\bQ(\Delta\bA_0, \Delta\bB_0), \phi_\bQ(\Delta\bA_1, \Delta\bB_1)],
\end{equation*}
and thus
\begin{equation*}
    (\Delta\bA, \Delta\bB) \in \mathrm{ker}\nabla^2\cE(\bar\bA, \bar\bB) \quad \Leftrightarrow \quad \phi_\bQ(\Delta\bA, \Delta\bB) \in \mathrm{ker}\nabla^2\cE(\hat\bA, \hat\bB).
\end{equation*}
Hence, it suffices to show $\mathrm{ker}\nabla^2\cE(\bar\bA, \bar\bB) = T_{(\bar\bA, \bar\bB)} S$ at a single point $(\bar\bA, \bar\bB) \in S$ to show that $\mathrm{ker}\nabla^2\cE(\bM, \bN) = T_{(\bM, \bN)} S$ for any $(\bM, \bN) \in S$. Moreover, as the critical set $S$ is an analytic manifold of the analytic $\cL$, we have $T_{(\bM, \bN)} S \subset \mathrm{ker}\nabla^2\cE(\bar\bA, \bar\bB)$~\citep[Remark 2.9]{feehan2020morse}, hence what remains is to show $\mathrm{ker}\nabla^2\cE(\bar\bA, \bar\bB) \subset T_{(\bM, \bN)} S$.

Now return to the specific point $\bar\bA = \bU_r\bm\Sigma_r^{1/2}$ and $\bar\bB = \bm\Sigma_r^{1/2}\bV_r^\top$. If
\begin{equation*}
    (\Delta\bA_0, \Delta\bB_0) \in \mathrm{ker}\nabla^2\cE(\bar\bA, \bar\bB),
\end{equation*}
then for all $(\Delta\bA_1, \Delta\bB_1)$, $\nabla^2\cE(\bar\bA, \bar\bB)[(\Delta\bA_0, \Delta\bB_0), (\Delta\bA_1, \Delta\bB_1)] = \bm 0$. Specifically,
\begin{align}
    \Delta\bA_1 = \bm 0 &\Rightarrow \bar\bA^\top(\Delta\bA_0\bar\bB + \bar\bA\Delta\bB_0) + \Delta\bA_0^\top(\bar\bA\bar\bB - \bY) = \bm 0,\label{eq:E_stationary_A1}\\
    \Delta\bB_1 = \bm 0 &\Rightarrow (\Delta\bA_0\bar\bB + \bar\bA\Delta\bB_0)\bar\bB^\top + (\bar\bA\bar\bB - \bY)\Delta\bB_0^\top = \bm 0.\label{eq:E_stationary_B1}
\end{align}

There exists $\bC_r, \bD_r \in \sR^{r\times r}$, $\bC_\perp \in \sR^{(m-r)\times r}$, and $\bD_\perp \in \sR^{r\times (n-r)}$ such that
\begin{equation*}
    \Delta\bA_0 = \bU_r\bC_r + \bU_\perp\bC_\perp, \quad \Delta\bB_0 = \bD_r\bV_r^\top + \bD_\perp\bV_\perp^\top.
\end{equation*}

Plugging into \Cref{eq:E_stationary_A1,eq:E_stationary_B1}, we obtain
\begin{align*}
    \bm\Sigma_r^{1/2}(\bC_r\bm\Sigma_r^{1/2} + \bm\Sigma_r^{1/2}\bD_r)\bV_r^\top + \bm\Sigma_r\bD_\perp\bV_\perp^\top - \bC_\perp^\top\bm\Sigma_\perp\bV_\perp^\top = \bm 0,\\
    \bU_r(\bC_r\bm\Sigma_r^{1/2} + \bm\Sigma_r^{1/2}\bD_r)\bm\Sigma_r^{1/2} + \bU_\perp\bC_\perp\bm\Sigma_r - \bU_\perp\bm\Sigma_\perp\bD_\perp^\top = \bm 0,
\end{align*}
which reduces to 
\begin{align}
    \bC_r\bm\Sigma_r^{1/2} + \bm\Sigma_r^{1/2}\bD_r &= \bm 0,\label{eq:E_stationary_1}\\
    \bm\Sigma_r\bD_\perp - \bC_\perp^\top\bm\Sigma_\perp &= \bm 0,\label{eq:E_stationary_2}\\
    \bC_\perp\bm\Sigma_r - \bm\Sigma_\perp\bD_\perp^\top &= \bm 0.\label{eq:E_stationary_3}
\end{align}

Since no singular values of $\bm\Sigma_r$ equals with that of $\bm\Sigma_\perp$, \Cref{eq:E_stationary_2,eq:E_stationary_3} implies that $\bC_\perp\bm\Sigma_r^2 = \bm\Sigma_\perp^2 \bC_\perp$ thus $\bC_\perp = \bm 0$; $\bD_\perp = \bm 0$ also holds. Moreover, take $\bK = \bm\Sigma_r^{-1/2}\bC_r$; we see that
\begin{align*}
    \bar\bA\bK &= \bU_r\bC_r = \Delta\bA_0,\\
    \bK\bar\bB &= \bm\Sigma_r^{-1/2}\bC_r\bm\Sigma_r^{1/2}\bV_r^\top = \bm\Sigma_r^{-1/2}(-\bm\Sigma_r^{1/2}\bD_r)\bV_r^\top = -\Delta\bB_0.
\end{align*}

Therefore, $\mathrm{ker}\nabla^2\cE(\bar\bA, \bar\bB)$ is a subset of $T_{(\bar\bA, \bar\bB)} S$. This argument analogously holds at any zero of $\bR$, or any $(\bM, \bN) \in S$, so $\cE$ is Morse-Bott.

\paragraph{Finite-time convergence of $\cT$.} We further show that if SpecGF with $\cT$ is well-defined---the gradients of $\cL$ with respect to both $\bA$ and $\bB$ are never rank-deficient---and converges, then the convergence should happen at a finite time.

\begin{proposition}\label{prop:specgf_t_finite_conv}
    Assume that SpecGF with $\cT$ is analytic for all time $t\geq 0$. If SpecGF with $\cT$ converges, then it converges finitely to a critical point.
\end{proposition}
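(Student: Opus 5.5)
The plan is to combine the Łojasiewicz machinery from \Cref{prop:infdiv_conv} with the fact that, under $\cT$, the loss decreases at a rate bounded below by the \emph{first} power of the gradient norm rather than its square. Write $\bx(t)=(\bA(t),\bB(t))\to\bar\bx$. Under the standing assumption that SpecGF with $\cT$ is analytic for all $t\ge 0$, both gradients stay nonzero along the trajectory. Hence \Cref{lem:SpecGF_T_dec} gives
\begin{equation*}
\dot\cL(t) = -\|\nabla_\bA\cL(t)\|_*-\|\nabla_\bB\cL(t)\|_* \le -\normF{\nabla\cL(t)},
\end{equation*}
using $\|\cdot\|_*\ge\normF{\cdot}$ and $\normF{\bM}+\normF{\bN}\ge\normF{(\bM,\bN)}$.

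\emph{Step 1: $\bar\bx$ is a critical point.} Since $\cL(t)$ decreases and is bounded below, $\int_0^\infty \normF{\nabla\cL(t)}\,\mathrm dt<\infty$. The gradient is continuous along the convergent trajectory, so $\normF{\nabla\cL(t)}\to\normF{\nabla\cL(\bar\bx)}$. If this limit were positive, the integral would diverge. Therefore $\nabla\cL(\bar\bx)=0$.

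\emph{Step 2: apply Łojasiewicz at $\bar\bx$.} By \Cref{lem:Loja_inequ} there are a neighborhood $W$, a constant $C>0$ and an exponent $b\in[\tfrac12,1)$ with $(\cL(\bx)-\cL(\bar\bx))^b\le C\normF{\nabla\cL(\bx)}$ on $W$. Pick $T_0$ such that $\bx(t)\in W$ for $t\ge T_0$, and set $G(t)=\cL(t)-\cL(\bar\bx)\ge 0$. For $t\ge T_0$ this yields
\begin{equation*}
\dot G(t)\le -\tfrac1C G(t)^b .
\end{equation*}
If $G(t_1)=0$ for some $t_1$, then $\dot\cL=0$ from then on, so the gradients vanish and the flow stops at $\bar\bx$ in finite time. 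Otherwise $\tfrac{\mathrm d}{\mathrm dt}G^{1-b}\le -\tfrac{1-b}{C}$, because $b<1$. This forces $G^{1-b}$ to reach zero no later than $T_0+\tfrac{C}{1-b}G(T_0)^{1-b}$, which contradicts $G>0$. So there is a finite $t^\ast$ with $\cL(t^\ast)=\cL(\bar\bx)$.

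\emph{Step 3: the trajectory is stationary after $t^\ast$.} Monotonicity and the limit together give $\cL(t)=\cL(\bar\bx)$ for all $t\ge t^\ast$, so $\dot\cL\equiv0$ there. By the equality case of \Cref{lem:SpecGF_T_dec}, $\nabla_\bA\cL=\nabla_\bB\cL=0$ on $[t^\ast,\infty)$. The conclusion should then be $\bx(t)=\bx(t^\ast)=\bar\bx$ for $t\ge t^\ast$.

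The main obstacle is exactly this last step. A vanishing gradient contradicts the standing hypothesis that $\cT$ is analytic, meaning full-rank gradients, for all time. I would resolve it by reading the hypothesis as holding until the critical set is reached. The velocity bound $\normF{\dot\bx}\le\sqrt{2r}$ makes $\bx$ Lipschitz, so $\bx(t^\ast)=\lim_{t\uparrow t^\ast}\bx(t)$ is a critical point. From $t^\ast$ on we continue the flow as constant with $\cT(\mathbf 0)=\mathbf 0$. The finite-length estimate $\int_{T_0}^{t^\ast}\normF{\dot\bx}\,\mathrm dt\le \tfrac{C\sqrt{2r}}{1-b}G(T_0)^{1-b}$ also shows the approach happens within finite arc length, consistent with the argument of \Cref{prop:infdiv_conv}.
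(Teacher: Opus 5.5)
Your proposal is correct and follows essentially the same route as the paper: the first-power dissipation bound $\dot\cL\le-\normF{\nabla\cL}$, combined with the \L{}ojasiewicz inequality at the limit, yields $\frac{\mathrm d}{\mathrm dt}(\cL-\cL_\infty)^{1-b}\le-\frac{1-b}{C}$, so the loss gap must vanish by time $t_W+\frac{C}{1-b}(\cL(t_W)-\cL_\infty)^{1-b}$. Your Steps 1 and 3 make explicit two points the paper's proof leaves implicit: Step 1 shows the limit is critical, which is needed to invoke the lemma as stated, and Step 3 notes that reaching the critical set clashes with the literal full-rank hypothesis, so the flow has to be continued as constant.
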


\begin{proof}
    Let
    \begin{equation*}
        \Delta(t) \triangleq \cL(\bA(t), \bB(t)) - \cL(\bA(\infty), \bB(\infty)) \triangleq \cL(\bA(t), \bB(t)) - \cL_\infty \geq 0.
    \end{equation*}
    For each (open) neighborhood $U$ of $(\bA(\infty), \bB(\infty))$, there exists a time $t_U \geq 0$ such that for all $t \geq t_U$, $(\bA(t), \bB(t)) \in U$. Apply \Cref{lem:Loja_inequ}: there are a neighborhood $W$ of $(\bA(\infty), \bB(\infty))$, constants $C > 0$ and $b \in [\frac12, 1)$ such that for all $x \in W$,
    \begin{equation*}
        |\cL(\bA(t), \bB(t)) - \cL_\infty|^b \leq C\|\nabla \cL(\bA(t), \bB(t))\|.
    \end{equation*}

    Observe that
    \begin{align*}
        \Delta'(t) &= \frac{\mathrm{d}}{\mathrm{d}t} \cL(\bA(t), \bB(t)) = -\roundb{\|\nabla_\bA \cL\|_* + \|\nabla_\bB \cL\|_*} \leq -\roundb{\|\nabla_\bA \cL\|_\mathrm{F} + \|\nabla_\bB \cL\|_\mathrm{F}}\\
        &\leq -\|\nabla \cL(\bA(t), \bB(t))\|_\mathrm{F}\\
        &\leq -\frac{1}{C}\roundb{\cL(\bA(t), \bB(t)) - \cL_\infty}^b = -\frac{1}{C}\Delta(t)^b.
    \end{align*}

    At any time $t \geq t_W$, if $\Delta(t) = 0$, done. Otherwise,
    \begin{align*}
        &\frac{\mathrm{d}}{\mathrm{d}t} \Delta(t)^{1-b} = (1-b)\Delta(t)^{-b}\Delta'(t) \leq -\frac{1-b}{C}\\
        \Rightarrow &\Delta(t)^{1-b} - \Delta(t_W)^{1-b} \leq -\frac{1-b}{C}(t-t_W).
    \end{align*}
    Taking $T = t_W + \frac{C}{1-b}\Delta(t_W)^{1-b}$ gives $\Delta(T) \leq 0$, hence $\Delta(T) = 0$.
\end{proof}

\subsection{SpecGF with \texorpdfstring{$\ell_2$}{l2} Regularization}\label{sec:L2reg}

With such regularization, the global convergence of SpecGF immediately follows. The setup of regularized SpecGF becomes
\begin{itemize}
    \item The target matrix $\bY \in \sR^{m\times n}$ yields a compact SVD $\bY = \bU \bm \Sigma \bV^\top$ where $\bm \Sigma = \Diag(\sigma_1, \sigma_2, \cdots, \sigma_{r^\star})$ with $\sigma_1 \geq \cdots \geq \sigma_{r^\star} > 0$.
    \item The loss function $\cL_\lambda: \sR^{m\times r} \times \sR^{r\times n} \to \sR$ for a fixed $\lambda > 0$ is defined as
    \begin{equation}\label{eq:reg_SpecGF_Tb}
        \cL_\lambda(\bA, \bB) \triangleq \frac{1}{2}\normF{\bA \bB - \bY}^2 + \frac{\lambda}{2}\normF{\bA}^2 + \frac{\lambda}{2}\normF{\bB}^2.
    \end{equation}
    We may abuse the notation and simply state $\cL_\lambda(t)$. $\cL_\lambda$ is also analytic in $\bA$ and $\bB$.
    \item The time derivative of $\bA$ and $\bB$ is governed by either $\cT$ or $\cT_\beta$ applied to the gradient:
    \begin{equation*}
        \dot{\bA}(t) = -\cT\roundb{\nabla_\bA \cL_\lambda(t)}, \quad \dot{\bB}(t) = -\cT\roundb{\nabla_\bB \cL_\lambda(t)}.
    \end{equation*}
\end{itemize}

The following statements are true, because the corresponding statements do not utilize the structure of the loss; only the analyticity of $\cT_\beta$ and monotonicity of the loss matter, if any of those two is required.
\begin{enumerate}
    \item \Cref{lem:SpecGF_T_dec,lem:SpecGF_Tb_dec} readily follows for the regularized SpecGF with $\cT_\beta$. Moreover,
    \begin{equation*}
        \normF{\bA(t)}^2 + \normF{\bB(t)}^2 \leq \frac{2}{\lambda}\cL_\lambda(t) \leq \frac{2}{\lambda}\cL_\lambda(0) = \frac{1}{\lambda}\normF{\bY}^2 + \normF{\bA(0)}^2 + \normF{\bB(0)}^2.
    \end{equation*}
    Thus, $\bA(t)$ and $\bB(t)$ are both bounded for all $t \geq 0$.
    \item \Cref{eq:F_dec} in \Cref{prop:infdiv_conv} also holds for $\cL_\lambda$ (with the same constant), implying that regularized SpecGF with $\cT_\beta$ should either converge or diverge to infinity. However, the latter cannot happen; hence, regularized SpecGF with $\cT_\beta$ always converges.
    \item \Cref{prop:fixed_point_avoid} is applicable for the regularized SpecGF with $\cT_\beta$. That is, regularized SpecGF with $\cT_\beta$ avoids strict saddle points almost surely as SpecGF with $\cT_\beta$ does.
    \item \Cref{thm:SpecGF_Tb_stable} readily follows for the regularized SpecGF with $\cT_\beta$. That is, every global minimum of regularized SpecGF with $\cT_\beta$ is Lyapunov stable.
\end{enumerate}
It remains to characterize the critical points of regularized SpecGF with $\cT_\beta$, especially when LoRA rank $r$ is smaller than the target rank $r^\star$. Although it is specified in \citet{udell2016generalized}, we reproduce their result.

\begin{proposition}[\citep{udell2016generalized}]\label{prop:reg_PCA} Set $r \leq r^\star$. Assume that all singular values of $\bY$ are distinct and $\lambda$ is strictly smaller than the smallest singular value of $\bY$.
    \begin{enumerate}[label={(\roman*)}, ref=\thetheorem.(\roman*), noitemsep]
        \item Let $\Omega \subset [r^\star]$ with $|\Omega| \leq r$. Every stationary point of \Cref{eq:reg_SpecGF_Tb} is of the form
        \begin{equation*}
            \bA = \bU_\Omega\roundb{\bm\Sigma_\Omega - \frac\lambda2 \bI}^{\frac{1}{2}}, \quad \bB = \roundb{\bm\Sigma_\Omega - \frac\lambda2 \bI}^{\frac{1}{2}}\bV_\Omega^\top.
        \end{equation*}
        up to orthogonal transformation by $\bR \in \mathsf{O}(r)$. We denote the submatrix of $\bU$ with columns indexed by $\Omega$ as $\bU_\Omega$, and similarly for $\bm\Sigma$ and $\bV$. \label{prop:reg_PCA_critical}
        \item Every critical point except global minima has a descent direction. \label{prop:SSP_characterization_reg}
    \end{enumerate}
\end{proposition}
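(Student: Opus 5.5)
The plan is to treat part (i) and part (ii) separately. For (i) I would first force the factors to be balanced, then read the SVD of $\bY$ off the stationarity equations. For (ii) I would exhibit explicit descent curves at every non-global stationary point. I assume $\lambda$ is scaled so that the critical values come out as in the statement. Below I write the shift as $\lambda$; with the convention used in the statement it becomes $\tfrac{\lambda}{2}$, and the argument is otherwise unchanged.

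\textbf{Part (i): balancedness.} The stationarity equations of \Cref{eq:reg_SpecGF_Tb} are
\begin{equation*}
(\bA\bB-\bY)\bB^\top+\lambda\bA=\bm 0,\qquad \bA^\top(\bA\bB-\bY)+\lambda\bB=\bm 0 .
\end{equation*}
I would left-multiply the first by $\bA^\top$ and right-multiply the second by $\bB^\top$. Subtracting the results gives $\lambda(\bA^\top\bA-\bB\bB^\top)=\bm 0$, so $\bA^\top\bA=\bB\bB^\top$. Consequently $\bA$ and $\bB$ have compact SVDs of the form $\bA=\bP\bS\bR^\top$ and $\bB=\bR\bS\bQ^\top$, with the same $\bS=\Diag(s_i)$, $s_i>0$, and a shared $\bR$ that may be completed to an element of $\mathsf{O}(r)$.

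\textbf{Part (i): reading off the SVD of $\bY$.} Substituting this form back, the two stationarity equations become
\begin{equation*}
\bY\bQ\bS=\bP\bS(\bS^2+\lambda\bI),\qquad \bY^\top\bP\bS=\bQ\bS(\bS^2+\lambda\bI).
\end{equation*}
Cancelling the invertible $\bS$, each pair $(\bp_i,\bq_i)$ is a left/right singular pair of $\bY$ with singular value $s_i^2+\lambda$. Since the singular values of $\bY$ are distinct, these pairs must be $(\bu_j,\bv_j)$ for distinct indices $j$, up to a common sign. This defines $\Omega$ with $|\Omega|=\mathrm{rank}(\bA)\le r$, and forces $s_i^2=\sigma_j-\lambda$. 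That quantity is positive because $\lambda<\sigma_{\min}$. Sign flips are absorbed into $\bR$. This gives the claimed form up to $\bR\in\mathsf{O}(r)$, with zero columns on the unused directions.

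\textbf{Part (ii): descent directions.} The global minimizers correspond to $\Omega=[r]$. For this I use the classical fact that the loss value $\sum_{j\in\Omega}\big(-\tfrac12(\sigma_j-\lambda)^2\big)+\text{const}$ is minimized by choosing the top $r$ indices. Now suppose $\Omega\neq[r]$. Then either $|\Omega|<r$ with some top index $j\notin\Omega$, or there are $k\in\Omega$ and $j\notin\Omega$ with $\sigma_j>\sigma_k$.

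\emph{Free slot ($|\Omega|<r$).} I would fill an unused column, taking $\bA_t=\bA+t\bu_j\be^\top$ and $\bB_t=\bB+t\be\bv_j^\top$. The new rank-one term is orthogonal to the existing ones, so the loss changes by $-t^2\sigma_j+\lambda t^2+O(t^4)<0$.

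\emph{Swap ($\sigma_j>\sigma_k$).} I would rotate the $k$-th column, writing $c=\sqrt{\sigma_k-\lambda}$. The column of $\bA$ becomes $c(\cos\theta\,\bu_k+\sin\theta\,\bu_j)$ and the matching row of $\bB$ becomes $c(\cos\theta\,\bv_k+\sin\theta\,\bv_j)^\top$. Along this curve the regularizer and $\normF{\bA\bB}$ stay fixed. The cross term $\langle\bA\bB,\bY\rangle$ contains the part $c^2(\cos^2\theta\,\sigma_k+\sin^2\theta\,\sigma_j)$, which strictly increases in $\theta$ near $0$. Hence the loss strictly decreases.

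\textbf{Main obstacle.} I expect the hardest step to be the bookkeeping in (i): handling zero singular values of $\bA$, the sign ambiguity, and the freedom in $\bR$. It must be clean enough that the identification with $\bU_\Omega,\bV_\Omega$ is genuinely forced. In (ii) I would also check that the perturbed cross terms with the other columns vanish, which holds by orthogonality of the $\bu_i$ and of the $\bv_i$.
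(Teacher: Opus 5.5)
Your proof is correct. The swap case in part (ii) uses the same idea as the paper. The paper perturbs the $i$-th column of $\bA$ to $(\bu_i+\epsilon\bu_j)\sqrt{d_i}$ and the matching row of $\bB$ to $\sqrt{d_i}(\bv_i+\epsilon\bv_j)^\top$. That straight line changes the objective by $\epsilon^2 d_i(\sigma_i-\sigma_j)+O(\epsilon^4)$. Your normalized rotation keeps the regularizer and $\normF{\bA\bB}$ fixed and gives the exact decrease $c^2\sin^2\theta\,(\sigma_j-\sigma_k)$.

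Elsewhere your argument is more complete than the paper's.
\begin{itemize}
\item The paper does not prove (i). It cites Udell et al.\ and records only the balancedness $\bA^\top\bA=\bB\bB^\top$, which is your starting point. Your shared-$\bR$ SVD and the distinctness argument identifying $(\bp_i,\bq_i)$ with $\pm(\bu_j,\bv_j)$ supply the missing proof.
\item The paper's (ii) asserts that every non-global critical point has some $i\in\Omega$, $j\notin\Omega$ with $\sigma_j>\sigma_i$. This is false when $\Omega\subsetneq[r]$ consists of top indices, for example the origin with $\Omega=\emptyset$. Your free-slot curve, with a unit $\mathbf{e}$ satisfying $\bA\mathbf{e}=\bm 0$ and $\mathbf{e}^\top\bB=\bm 0$, is exactly what handles those points.
\item Your value formula $\cL_\lambda=\tfrac12\normF{\bY}^2-\tfrac12\sum_{j\in\Omega}(\sigma_j-\lambda)^2$ makes ``global minimum iff $\Omega=[r]$'' explicit, which the paper leaves implicit. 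To close that step, add that $\cL_\lambda$ is coercive. So a global minimizer exists, and being stationary, it is one of the points classified in (i).
\item Both of your curves pass through a critical point with strictly negative second derivative, and the gradient vanishes there. So the Hessian is negative along their tangents, which is the strict-saddle conclusion the paper draws from (ii).
\end{itemize}

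Finally, drop the hedge about rescaling $\lambda$. For $\cL_\lambda$ exactly as defined in \Cref{eq:reg_SpecGF_Tb}, your stationarity computation forces $s_i^2=\sigma_j-\lambda$. The paper's own proof uses the same value, $d_i=\sigma_i-\lambda$. The $\tfrac{\lambda}{2}$ in the displayed statement is inconsistent with that objective; it is not an alternative convention under which your argument is unchanged. You should state the result with shift $\lambda$. Then the hypothesis that $\lambda$ is below the smallest nonzero singular value is exactly what makes every index $j\in[r^\star]$ admissible.
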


\Cref{prop:SSP_characterization_reg} implies that every local minimum is a \textit{balanced} global minimum, and every saddle point is a strict saddle point. Therefore, the a.s. global convergence of regularized SpecGF with $\cT_\beta$ towards global minima is guaranteed.

\begin{proof}[Proof of \Cref{prop:reg_PCA}]
    We omit the proof of \Cref{prop:reg_PCA_critical}; readers may find the proof in \citet[Section 2.3]{udell2016generalized}. Though, we note that at every critical point, $\bA^\top \bA = \bB\bB^\top$ should hold from the first-order optimality condition.
    
    Every critical point, indexed by $\Omega$, should be of the form
    \begin{equation*}
        \bA\bB = \sum_{i\in\Omega} d_i\bu_i\bv_i^\top,
    \end{equation*}
    where $d_i = \sigma_i - \lambda > 0$, $\bu_i$ and $\bv_i$ are the $i$-th column of $\bU$ and $\bV$, respectively. We use the representative element from each orbit described above, so each column of $\bA$ is $\bu_i\sqrt{d_i}$ and each row of $\bB$ is $\sqrt{d_i}\bv_i^\top$.
    
    If a critical point is not the global minimum, then $\sigma_j > \sigma_i$ for some $i \in \Omega$ and $j \not\in \Omega$. With small $\epsilon > 0$, form $\Tilde\bA$ by replacing the column $\bu_i\sqrt{d_i}$ by $(\bu_i + \epsilon \bu_j)\sqrt{d_i}$; form $\Tilde\bB$ by replacing the column $\sqrt{d_i}\bv_i^\top$ by $\sqrt{d_i}(\bv_i + \epsilon \bv_j)^\top$. One can directly compute
    \begin{equation*}
        \lambda\squarb{\roundb{\|\Tilde\bA\|_\mathrm{F}^2 + \|\Tilde\bB\|_\mathrm{F}^2} - \roundb{\normF{\bA}^2 + \normF{\bB}^2}} = 2\lambda d_i\epsilon^2
    \end{equation*}
    and
    \begin{equation*}
        \normF{\Tilde\bA \Tilde\bB - \bY}^2 - \normF{\bA \bB - \bY}^2 = 2\epsilon^2 d_i(d_i-\sigma_j) + \epsilon^4 d_i^2.
    \end{equation*}
    Hence, the net change in the objective is
    \begin{equation*}
        2\lambda d_i\epsilon^2 + 2\epsilon^2 d_i(d_i-\sigma_j) + \epsilon^4 d_i^2 = 2\epsilon^2 d_i(\sigma_i-\sigma_j) + \epsilon^4 d_i^2,
    \end{equation*}
    which can be made (strictly) negative for small $\epsilon$.
\end{proof}
\newpage

\section{Experiments}\label{sec:add_expm}

\subsection{Matrix Factorization Experiments Under Various Settings}~\label{sec:theory_expm}

In this section, we present additional experiments that vary the training setting of \Cref{fig:toy_observation}. In \Cref{fig:specgd_varying_mu}, we vary the momentum parameter $\mu$ in \Cref{eq:muon_update}, where $\mu = 0$ corresponds to SpecGD, while applying exact orthogonalization via $\cT$ at every update.
We observe that the singular values of the product matrix exhibit consistent uniform growth across all choices of $\mu$, indicating that this behavior is robust to the strength of momentum under exact orthogonalization.

In \Cref{fig:muon_varying_mu}, we consider the same range of momentum parameters but replace the exact orthogonalization operator with Newton–Schulz iterations; this setting is identical to the practical Muon optimizer used in real training scenarios. Despite the use of approximate orthogonalization, we again observe uniform growth of singular values, demonstrating that the phenomenon persists under practical implementations of Muon.

In \Cref{fig:specgd_varying_r}, we vary the LoRA rank from $1$ to $4$, which is strictly smaller than the target rank $5 = \mathrm{rank}(\bSigma)$. In all cases, the SpecGD iteration converges, reaching plateaus at the top-$r$ singular values of $\bSigma$, and the resulting solution corresponds to the best rank-$r$ approximation of $\bY$.

Finally, we increase the number of LoRA factors (depth), by reparameterizing the weight $\bW$ as $\bW_L \bW_{L-1} \cdots \bW_1$. All intermediate factors $\bW_l$ lie in $\sR^{r\times r}$, with the first factor $\bW_L$ in $\sR^{m\times r}$, and the last factor $\bW_1$ in $\sR^{r\times n}$. Only the first factor $\bW_L$ is initialized at zero, while the remaining factors are initialized as i.i.d. Gaussian matrices scaled by $\gamma$. In \Cref{fig:specgd_varying_depth}, we observe that the uniform growth of singular values persists as depth increases, and \Cref{fig:specgd_varying_depth_one_over_n} further reveals that the growth rate scales as $\Theta(t^L)$ for depth $L$.

\begin{figure}[ht]
    \centering
    \includegraphics[width=1\linewidth]{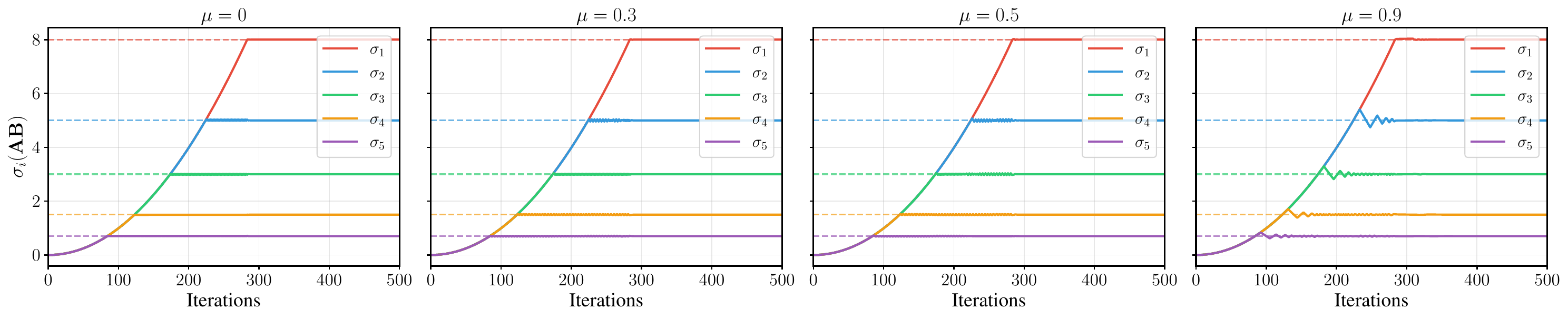}
    \caption{Comparison of singular value evolutions across $\mu \in\{0,0.3,0.5,0.9\}$ from exact orthogonalization $\cT$. Notice the consistent uniform growth.}
    \label{fig:specgd_varying_mu}
\end{figure}

\begin{figure}[ht]
    \centering
    \includegraphics[width=1\linewidth]{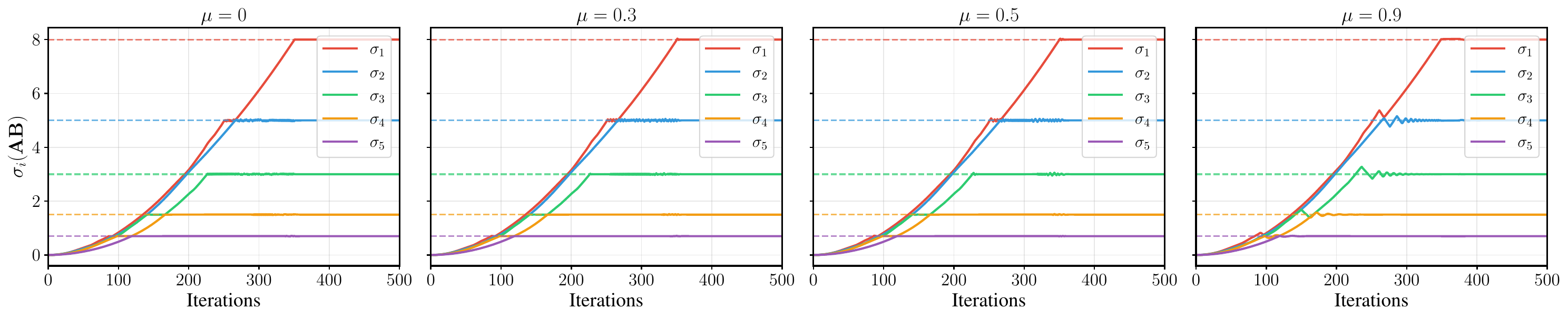}
    \caption{Comparison of singular value evolutions across $\mu \in\{0,0.3,0.5,0.9\}$ from Newton-Schulz iterations. Notice the consistent uniform growth.}
    \label{fig:muon_varying_mu}
\end{figure}

\begin{figure}[ht]
    \centering
    \includegraphics[width=1\linewidth]{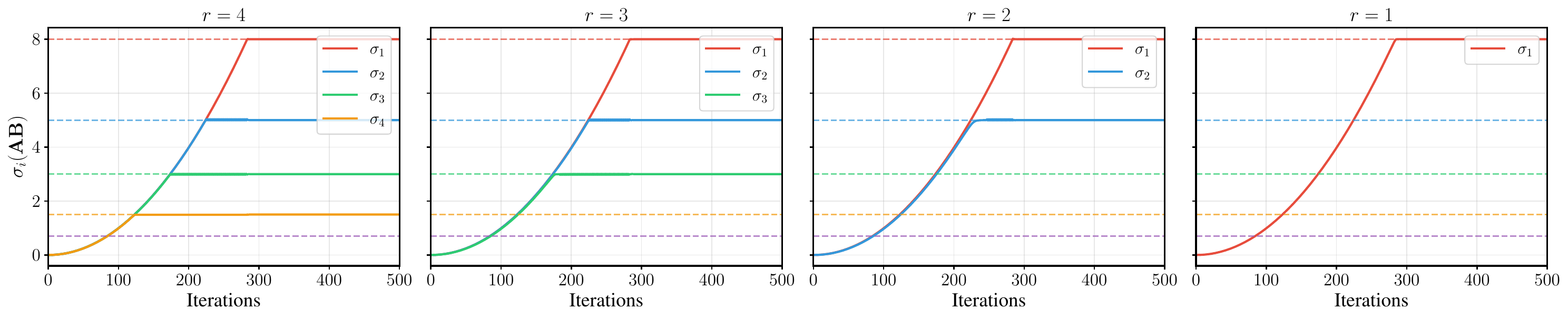}
    \caption{Comparison of singular value evolutions across LoRA rank $r \in \{1,2,3,4\}$. Notice that SpecGF converges toward best rank-$r$ approximation.}
    \label{fig:specgd_varying_r}
\end{figure}

\begin{figure}[ht]
    \centering
    \includegraphics[width=1\linewidth]{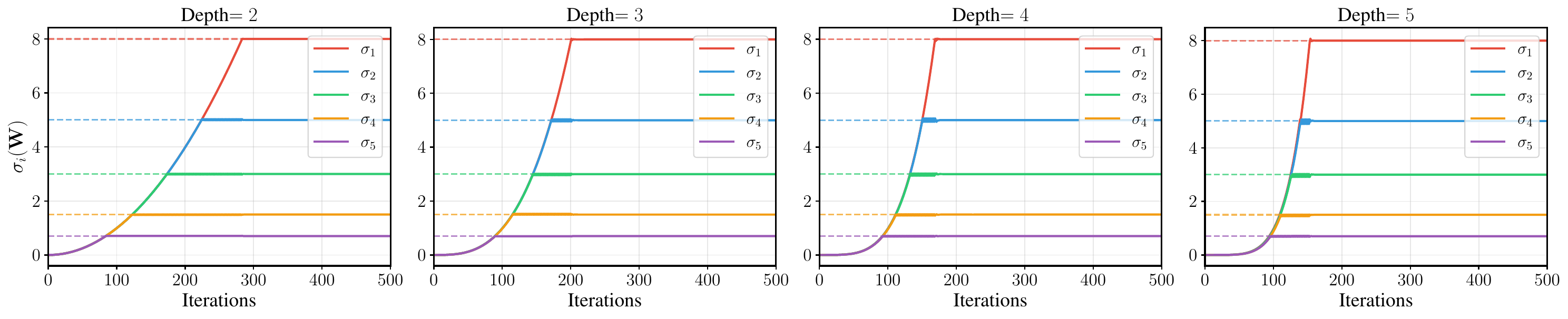}
    \caption{Comparison of singular value evolutions across the depth $L \in \{2,3,4,5\}$. Notice the consistent uniform growth.}
    \label{fig:specgd_varying_depth}
\end{figure}

\begin{figure}[H]
    \centering
    \includegraphics[width=1\linewidth]{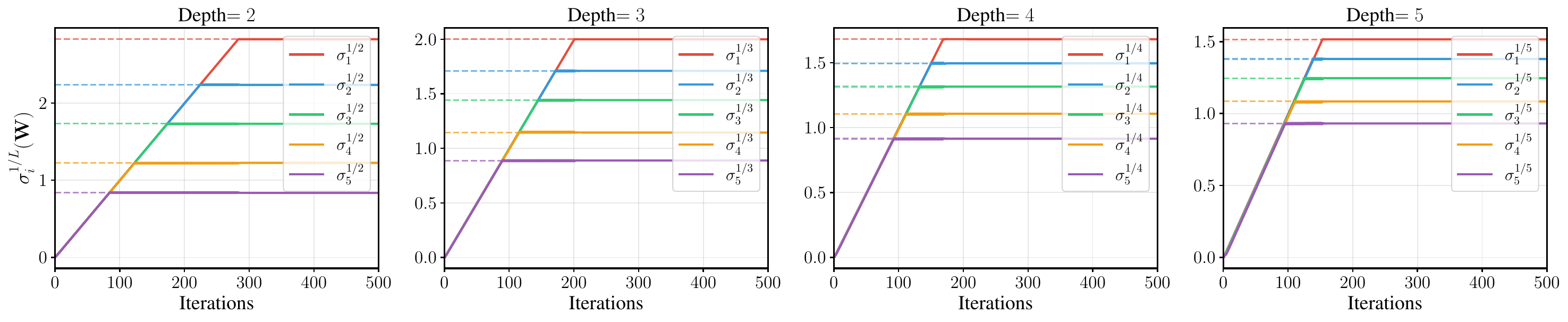}
    \caption{Comparison of the $L$-th root of the singular value evolutions across the depth $L \in \{2,3,4,5\}$. Notice the consistent uniform \emph{linear} growth.}
    \label{fig:specgd_varying_depth_one_over_n}
\end{figure}

\subsection{LLM Experiments}~\label{sec:LLM_expm}
\subsubsection{Experimental Details}
We fine-tuned a pre-trained RoBERTa-base model (125M)~\citep{liu2019robertarobustlyoptimizedbert} on the SST-2 dataset~\citep{wang2018glue} using LoRA adapters applied to the query and key matrices. We set the LoRA rank to $r=8$ and the scaling factor to $\alpha=16$.  We trained the model for 10 epochs with a batch size of 16 using AdamW and Muon optimizers, with a weight decay of 0.01. The learning rate was set to $10^{-4}$ and scheduled using a cosine decay. For larger scale experiments, we train LLaMA-3.2-1B~\cite{grattafiori2024llama} on Alpaca dataset~\citep{taori2023stanford} for supervised fine-tuning. As in RoBERTa-base, we also use the same LoRA configuration, i.e., rank $r = 8$ and the scaling factor $\alpha = 16$. We train $3$ epochs with a batch size $128$ using AdamW and Muon optimizers, with weight decay 0.01. We use cosine learning rate scheduling with $4\%$ warmup steps of total training iterations. 

\subsubsection{Singular Value Dynamics}
We provide the full training dynamics of the singular values of query $Q$ and value $V$ matrices where LoRA is applied. Each line in each subplot stands for a single singular value.

\Cref{fig:roberta_all_sval_v1,fig:roberta_all_sval_v2} show the singular value evolutions for all LoRA $\bA\bB$ layers in RoBERTa-Base trained under Muon and AdamW, respectively; \Cref{fig:llama_all_sval_v1,fig:llama_all_sval_v2} show those in LLaMA-3.2-1B trained under Muon and AdamW, respectively.

The singular values of all LoRA $\bA\bB$ layers trained under Muon~(\Cref{fig:llama_all_sval_v1,fig:roberta_all_sval_v1}) consistently and nearly uniformly evolve. On the other hand, those of all LoRA $\bA\bB$ layers trained under AdamW~(\Cref{fig:llama_all_sval_v2,fig:roberta_all_sval_v2}) vary from matrix to matrix.

\begin{figure*}[hbtp!]
	\centering
	\begin{subfigure}[b]{0.49\textwidth}
        \includegraphics[width=\linewidth]{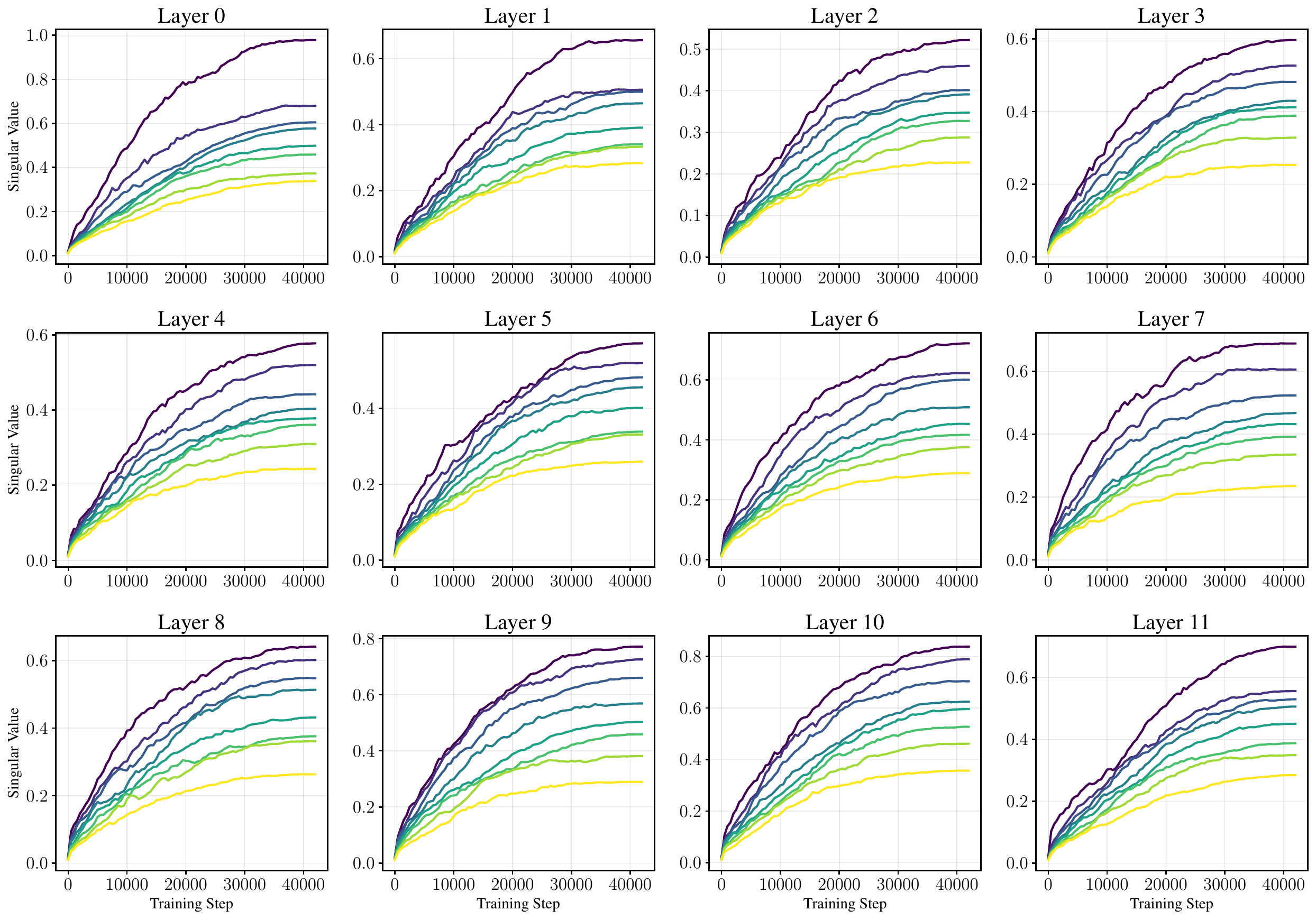}
        \caption{$Q$ matrix}
	\end{subfigure}
    \begin{subfigure}[b]{0.49\textwidth}
        \includegraphics[width=\linewidth]{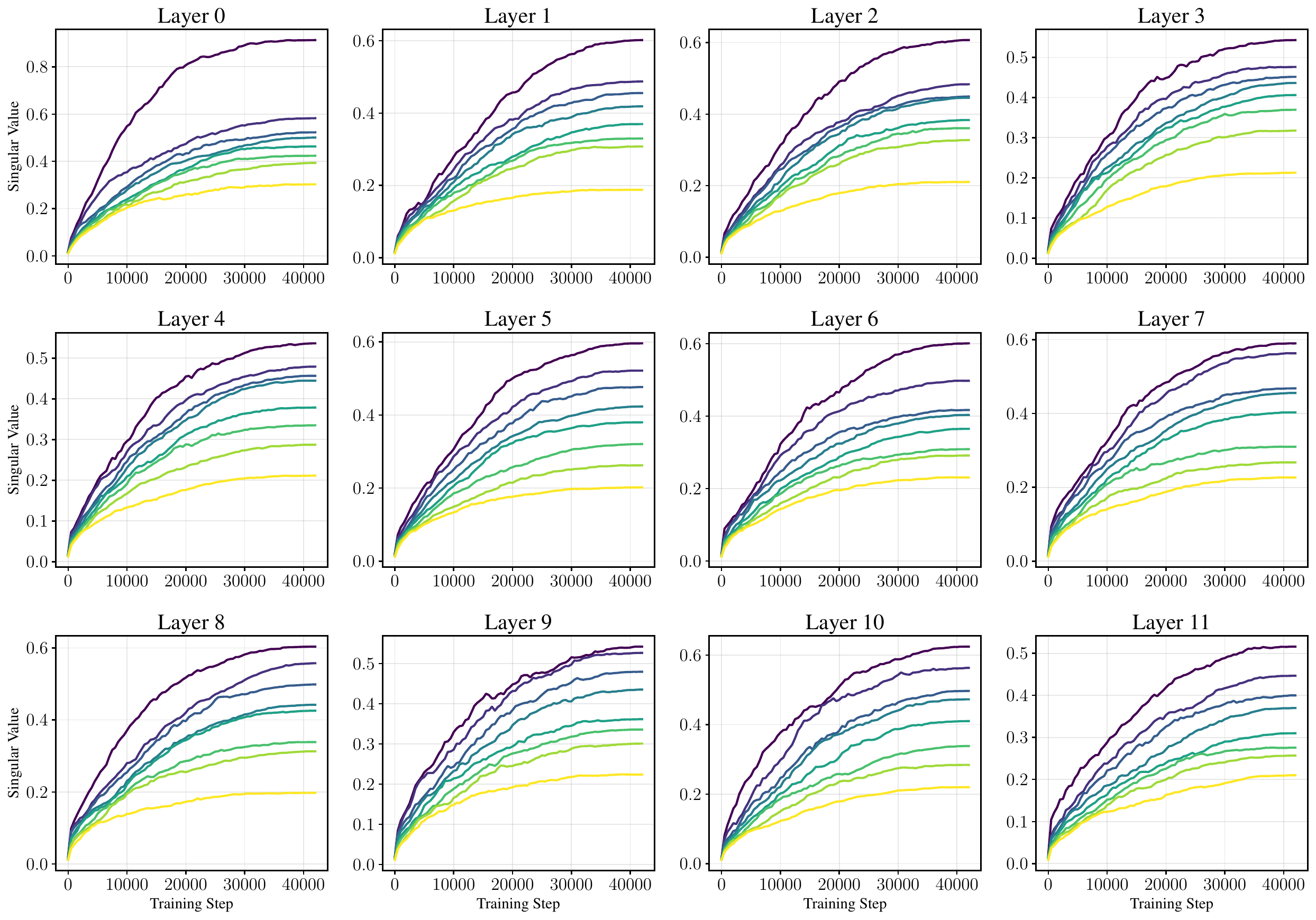}
        \caption{$V$ matrix}
	\end{subfigure}
    \caption{Singular values for all LoRA $\bA\bB$ layers in RoBERTa-Base trained with Muon.}
	\label{fig:roberta_all_sval_v1}
\end{figure*}

\begin{figure*}[hbtp!]
	\centering
	\begin{subfigure}[b]{0.49\textwidth}
        \includegraphics[width=\linewidth]{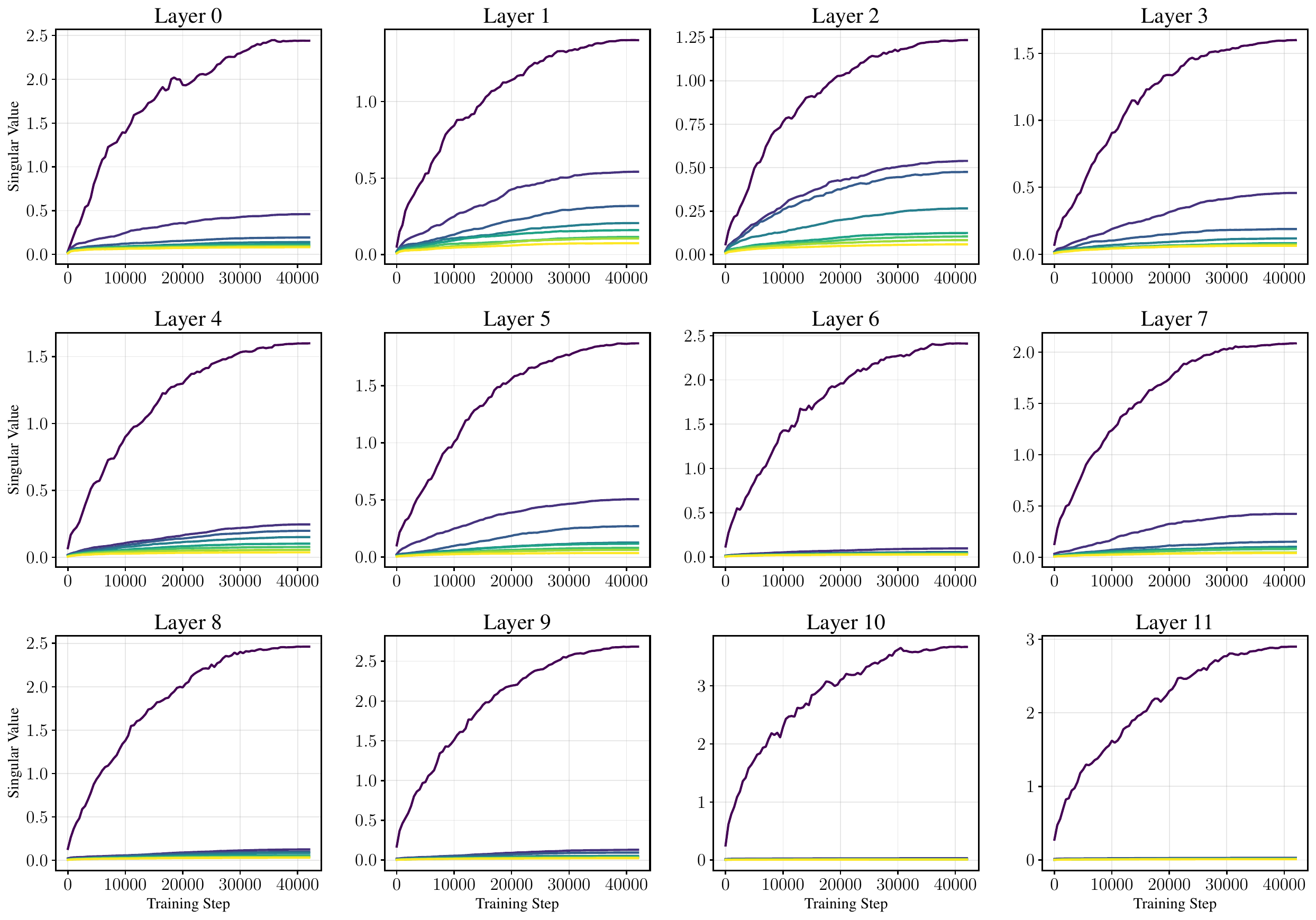}
        \caption{$Q$ matrix}
	\end{subfigure}
    \begin{subfigure}[b]{0.49\textwidth}
        \includegraphics[width=\linewidth]{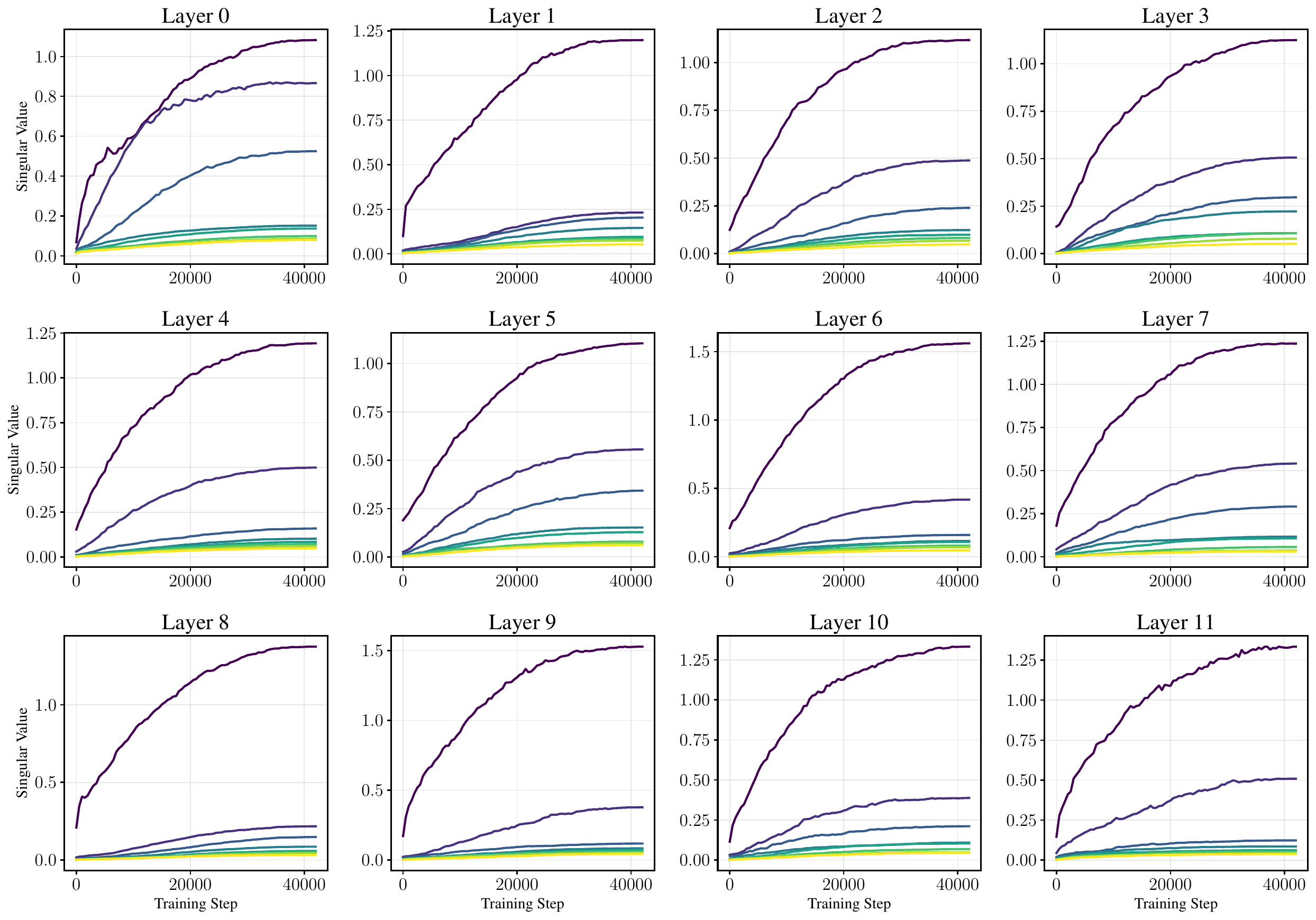}
        \caption{$V$ matrix}
	\end{subfigure}
    \caption{Singular values for all LoRA $\bA\bB$ layers in RoBERTa-Base trained with AdamW.}
	\label{fig:roberta_all_sval_v2}
\end{figure*}

\begin{figure*}[hbtp]
	\centering
	\begin{subfigure}[b]{0.49\textwidth}
        \includegraphics[width=\linewidth]{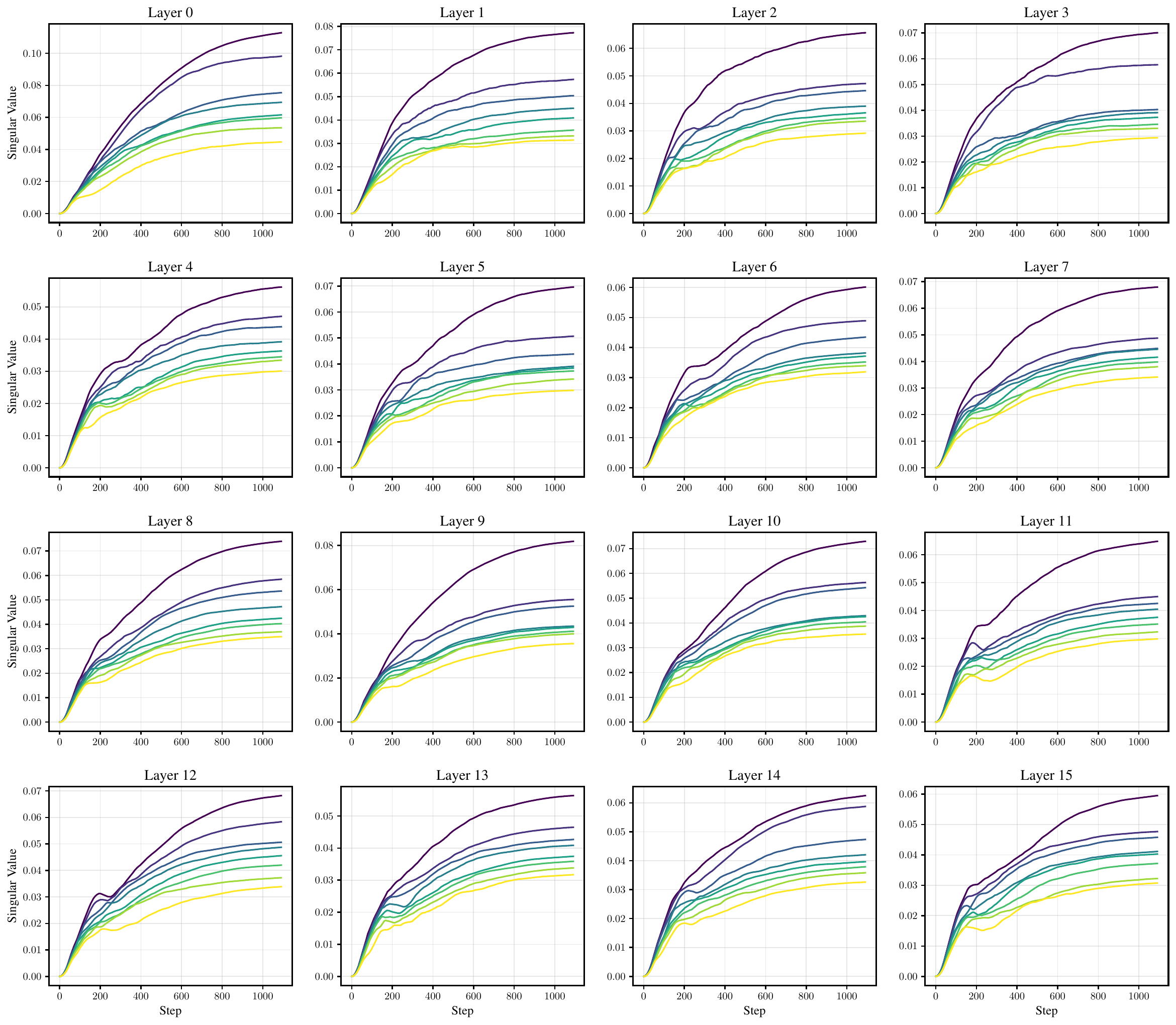}
        \caption{$Q$ matrix}
	\end{subfigure}
    \begin{subfigure}[b]{0.49\textwidth}
        \includegraphics[width=\linewidth]{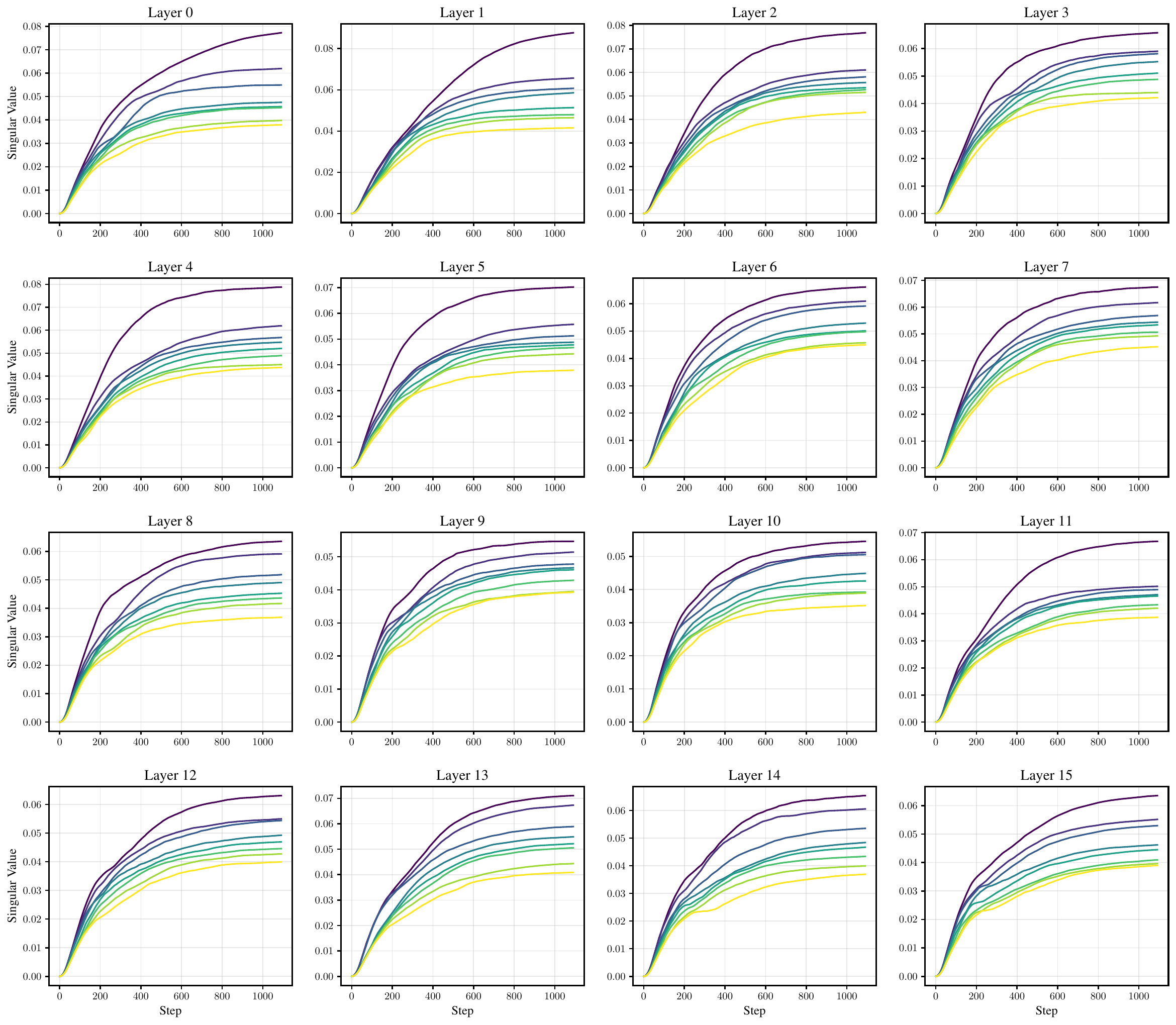}
        \caption{$V$ matrix}
	\end{subfigure}
    \caption{Singular values for all LoRA $\bA\bB$ layers in LLaMA-3.2-1B trained with Muon.}
	\label{fig:llama_all_sval_v1}
\end{figure*}

\begin{figure*}[hbtp!]
	\centering
	\begin{subfigure}[b]{0.49\textwidth}
        \includegraphics[width=\linewidth]{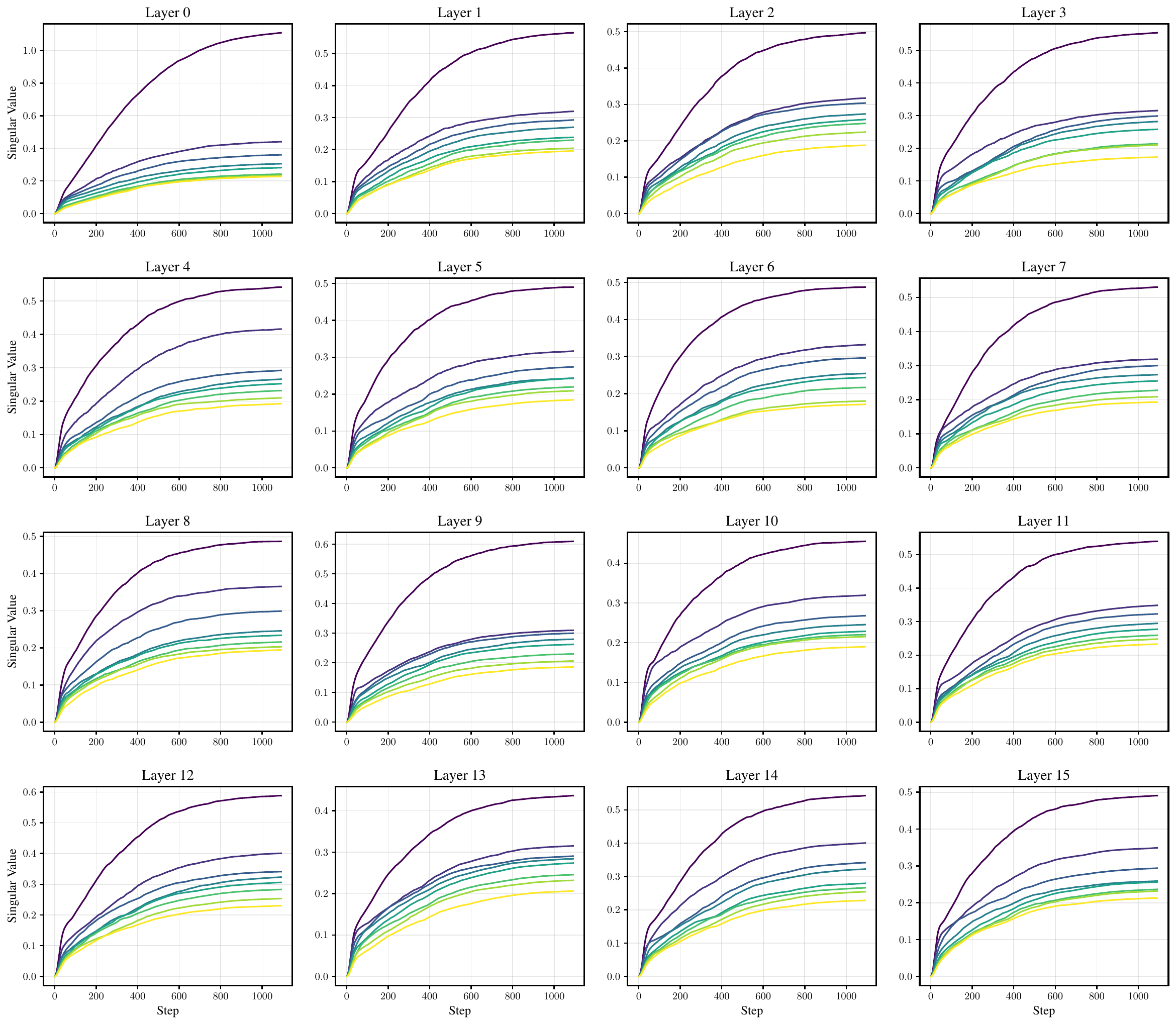}
        \caption{$Q$ matrix}
	\end{subfigure}
    \begin{subfigure}[b]{0.49\textwidth}
        \includegraphics[width=\linewidth]{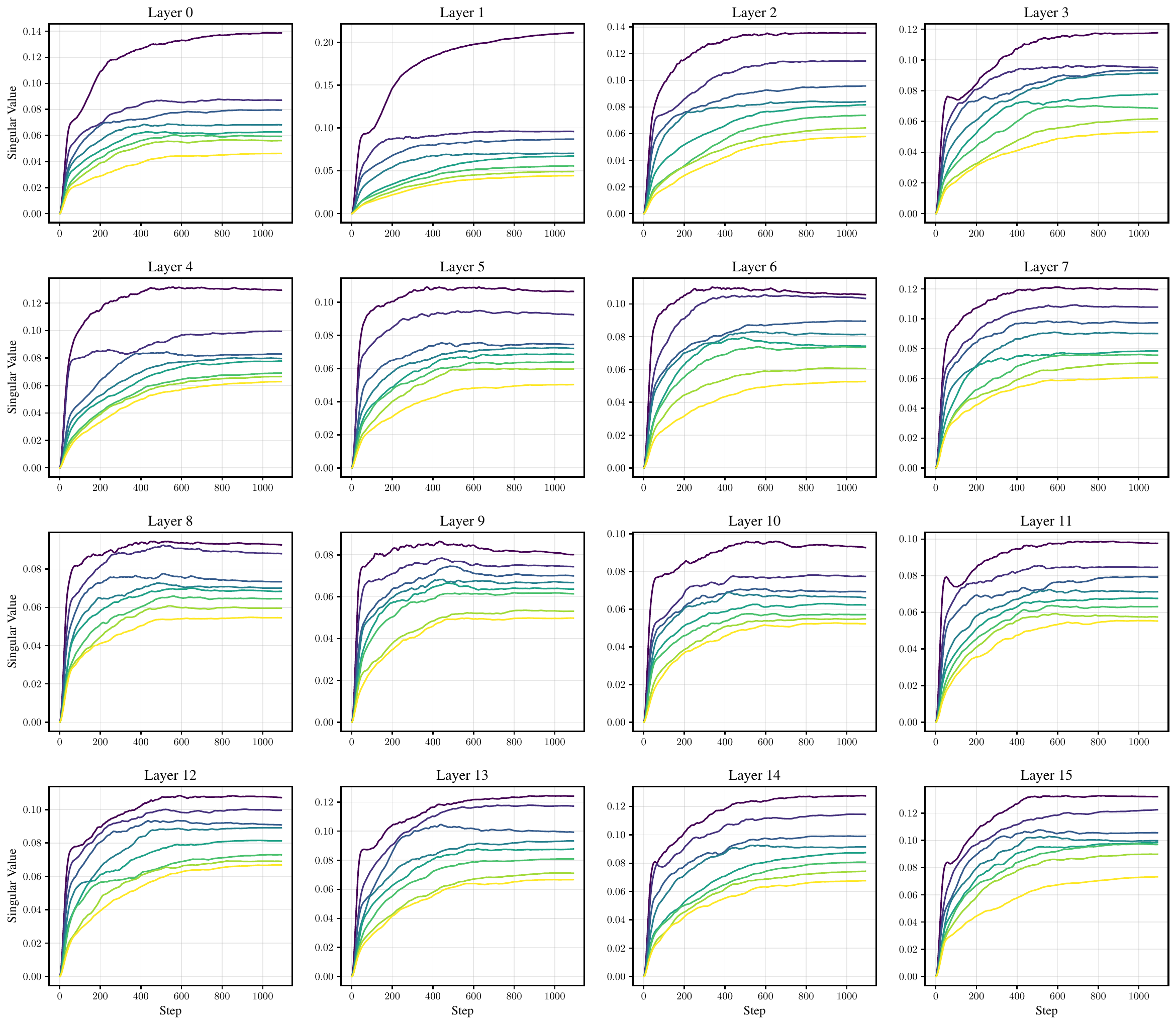}
        \caption{$V$ matrix}
	\end{subfigure}
    \caption{Singular values for all LoRA $\bA\bB$ layers in LLaMA-3.2-1B trained with AdamW.}
	\label{fig:llama_all_sval_v2}
\end{figure*}

\clearpage

\subsubsection{Effective Rank}

We provide the evolution of the effective rank of query $Q$ and value $V$ matrices where LoRA is applied. Blue and red lines are the effective rank of the matrix trained under Muon and Adam W, respectively. 

\Cref{fig:roberta_all_erank} shows the effective rank evolutions for all LoRA $\bA\bB$ layers in LLaMA-3.2-1B; \Cref{fig:llama_all_sval_v4} shows those in RoBERTa-Base. The effective rank of matrices trained under Muon maintains a stable value, close to the LoRA rank $r=8$. That of matrices trained under AdamW exhibits non-trivial evolution.

\begin{figure*}[hbtp]
	\centering
	\begin{subfigure}[b]{0.48\textwidth}
        \includegraphics[width=\linewidth]{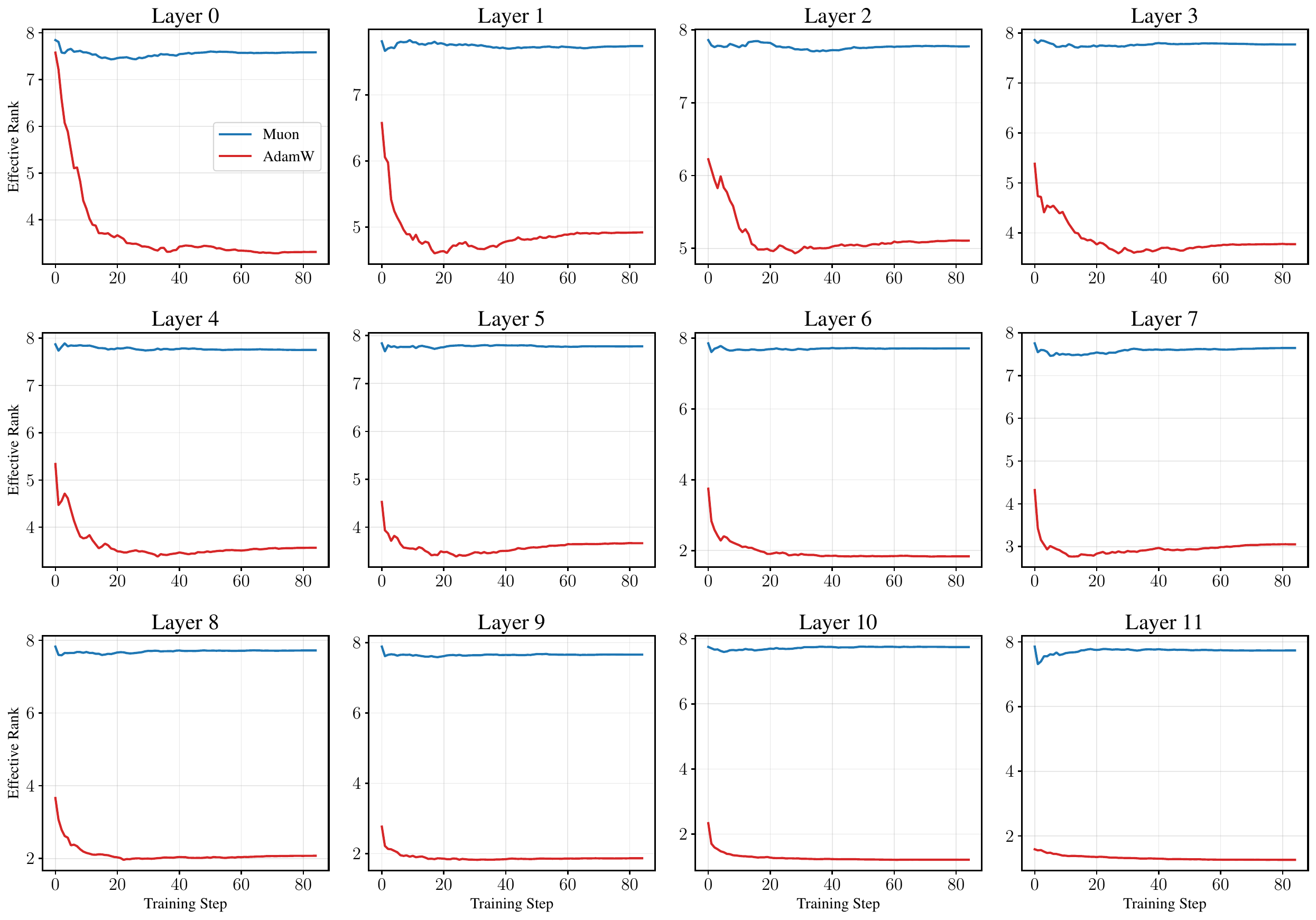}
        \caption{$Q$ matrix}
	\end{subfigure}
    \begin{subfigure}[b]{0.48\textwidth}
        \includegraphics[width=\linewidth]{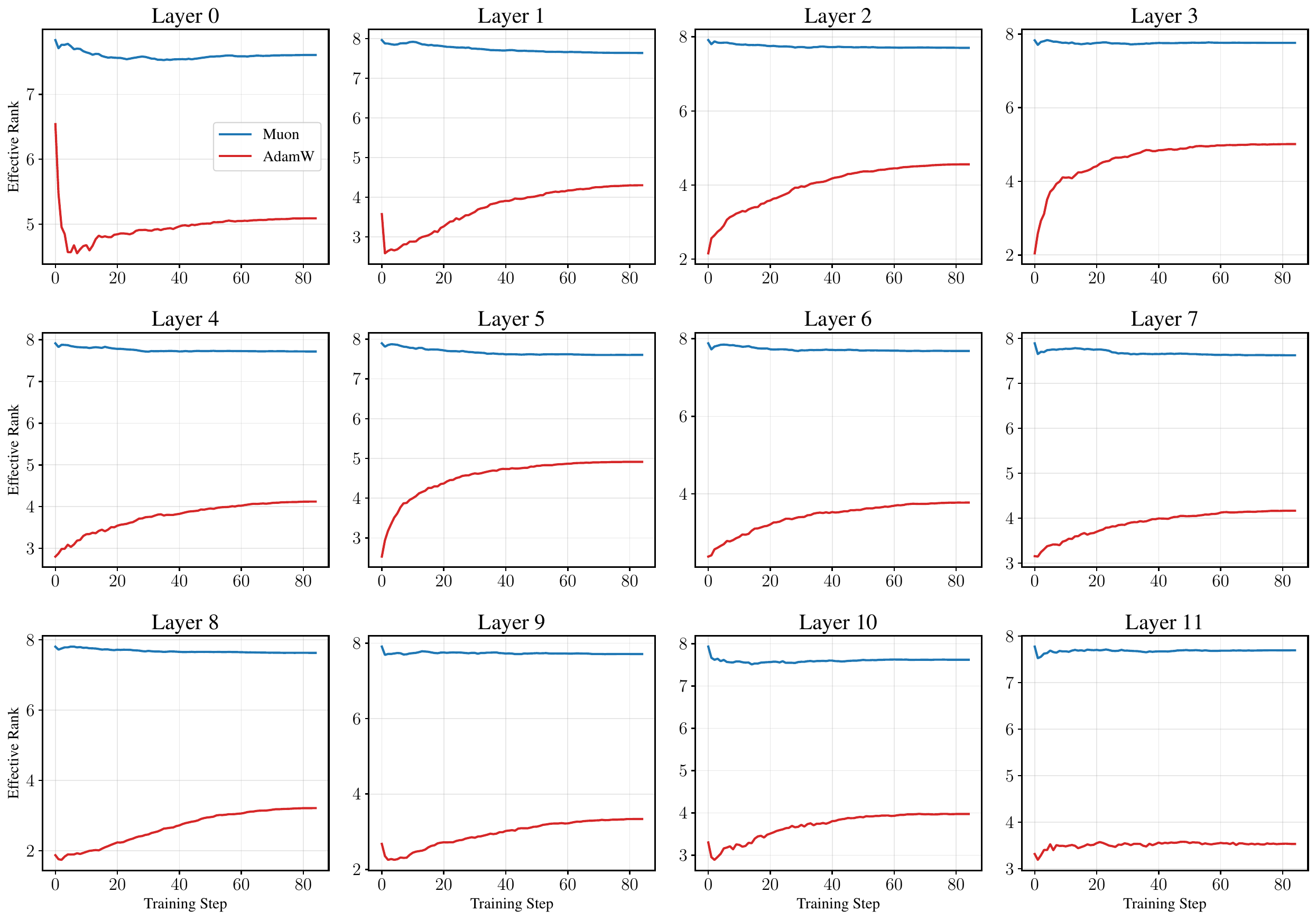}
        \caption{$V$ matrix}
	\end{subfigure}
    \caption{Effective rank of the LoRA $\bA\bB$ layers for the $Q$ and $V$ matrices in RoBERTa-Base.}
    \label{fig:roberta_all_erank}
\end{figure*}

\begin{figure*}[hbtp!]
	\centering
	\begin{subfigure}[b]{0.48\textwidth}
        \includegraphics[width=\linewidth]{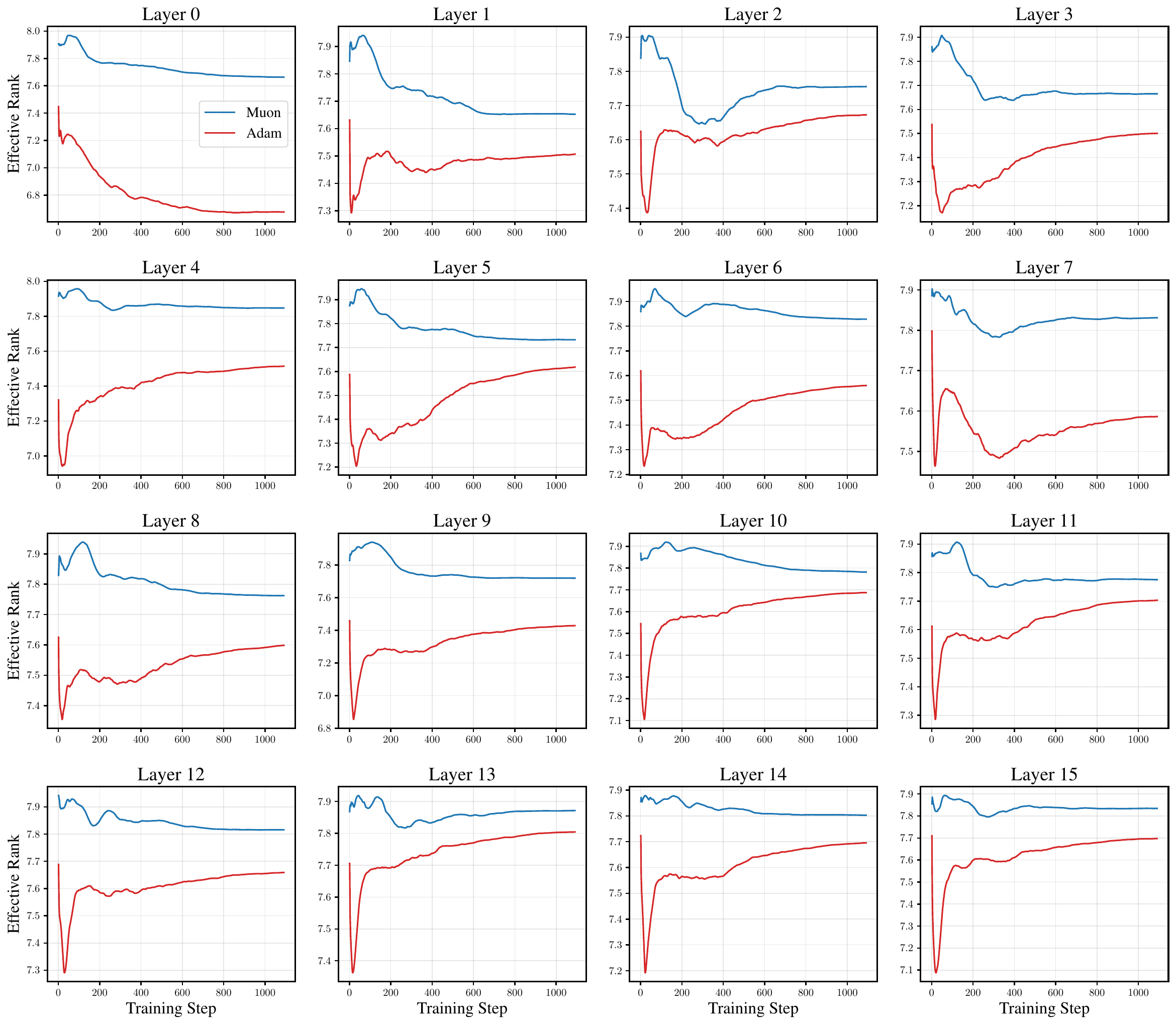}
        \caption{$Q$ matrix}
	\end{subfigure}
    \begin{subfigure}[b]{0.48\textwidth}
        \includegraphics[width=\linewidth]{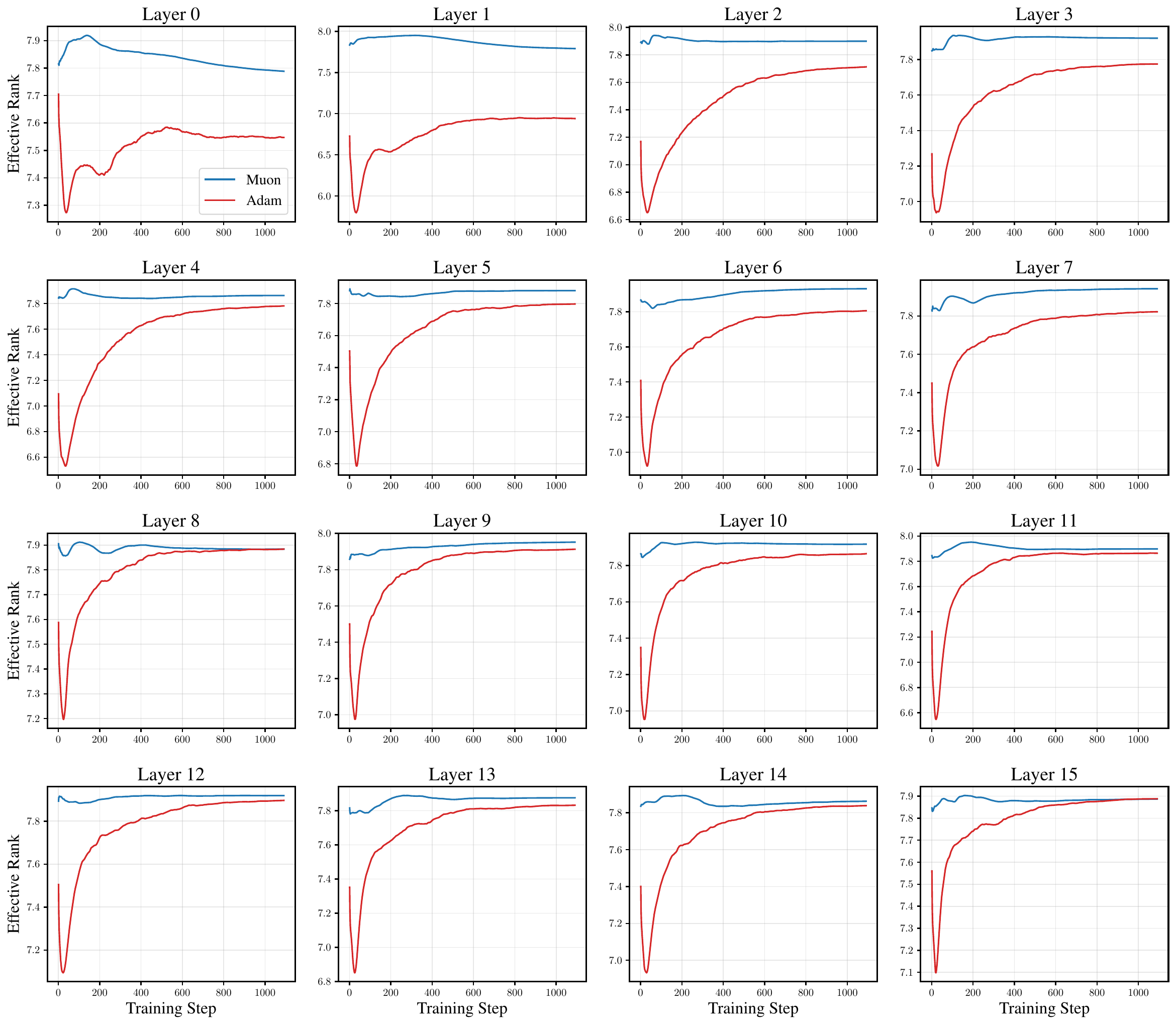}
        \caption{$V$ matrix}
	\end{subfigure}
    \caption{Effective rank of the LoRA $\bA\bB$ matrices for the $Q$ and $V$ in LLaMA-3.2-1B.}
	\label{fig:llama_all_sval_v4}
\end{figure*}


\end{document}